\documentclass{article}

\usepackage{preprint_style,times}

\usepackage[utf8]{inputenc}
\usepackage[T1]{fontenc}
\usepackage{amsmath}
\usepackage{amssymb}
\usepackage{amsthm}
\usepackage{amsfonts}
\usepackage{microtype}
\usepackage{graphicx}
\usepackage{enumitem}
\usepackage{booktabs}
\usepackage{array}
\usepackage{tabularx}
\usepackage{makecell}
\usepackage{wrapfig}
\usepackage{bm}
\usepackage{xcolor}
\usepackage{refcount}
\usepackage{url}
\usepackage{subcaption}
\usepackage{needspace}
\newcommand{\keepwithfloat}[1][0.8]{\Needspace{#1\textheight}}
\usepackage{hyperref}
\usepackage{titletoc}
\usepackage{cleveref}

\usepackage{xcolor}
\definecolor{paperteal}{HTML}{006a98}
\usepackage{hyperref}
\hypersetup{
  colorlinks=true,
  linkcolor=paperteal,
  citecolor=paperteal,
  urlcolor=paperteal
}

\DeclareRobustCommand{\linktoappendixtoc}[1]{{%
  \hypersetup{hidelinks}%
  \hyperlink{appendixcontents}{#1}%
}}
\DeclareRobustCommand{\appendixtocmark}{%
  \nobreak\hspace{0.45em}%
  \hyperlink{appendixcontents}{%
    {\normalfont\scriptsize\itshape\textcolor{black!45}{$\uparrow$\,toc}}%
  }%
}

\makeatletter
\renewcommand\paragraph{\@startsection{paragraph}{4}{\z@}{0pt}{-1em}{\normalsize\bf}}
\makeatother

\newcommand{\refsep}{\nobreak\hskip.125em\relax}
\newcommand{\defrefformats}[8]{%
  \crefformat{#2}{#3#1##2#7##1#8##3}%
  \Crefformat{#2}{#5#1##2#7##1#8##3}%
  \crefrangeformat{#2}{#4#1##3#7##1#8##4 to\nobreakspace##5#7##2#8##6}%
  \Crefrangeformat{#2}{#6#1##3#7##1#8##4 to\nobreakspace##5#7##2#8##6}%
  \crefmultiformat{#2}{#4#1##2#7##1#8##3}{ and\nobreakspace##2#7##1#8##3}{, ##2#7##1#8##3}{ and\nobreakspace##2#7##1#8##3}%
  \Crefmultiformat{#2}{#6#1##2#7##1#8##3}{ and\nobreakspace##2#7##1#8##3}{, ##2#7##1#8##3}{ and\nobreakspace##2#7##1#8##3}%
}
\defrefformats{\refsep}{figure}{Fig.}{Figs.}{Figure}{Figures}{}{}
\defrefformats{\refsep}{equation}{Eq.}{Eqs.}{Eq.}{Eqs.}{(}{)}
\defrefformats{\refsep}{section}{Sec.}{Secs.}{Sec.}{Secs.}{}{}
\defrefformats{\refsep}{subsection}{Sec.}{Secs.}{Sec.}{Secs.}{}{}
\defrefformats{\refsep}{subsubsection}{Sec.}{Secs.}{Sec.}{Secs.}{}{}
\defrefformats{\refsep}{table}{Tab.}{Tabs.}{Tab.}{Tabs.}{}{}
\defrefformats{\refsep}{appendix}{App.}{Apps.}{App.}{Apps.}{}{}
\defrefformats{\refsep}{subappendix}{App.}{Apps.}{App.}{Apps.}{}{}
\defrefformats{\refsep}{subsubappendix}{App.}{Apps.}{App.}{Apps.}{}{}
\defrefformats{\nobreakspace}{proposition}{Proposition}{Propositions}{Proposition}{Propositions}{}{}
\newcommand{\appref}[1]{App.\refsep\hyperref[#1]{\ref*{#1}}}
\newcommand{\apprefpair}[2]{Apps.\refsep\hyperref[#1]{\ref*{#1}} and~\hyperref[#2]{\ref*{#2}}}

\newcommand{\bad}[1]{\textcolor{red}{#1}}
\newcommand{\superbad}[1]{\textcolor{red!80!black}{\textbf{#1}}}

\newcommand{\oursLong}{\underline{G}rassmann \underline{R}estricted \underline{O}ccupied-\underline{O}rbital \underline{T}raining}
\newcommand{\ours}{GROOT}
\newcommand{\ourLossLong}{\underline{R}esidual \underline{O}ptimal-gauge \underline{C}onditioning-aware \underline{K}S-\underline{E}q.\,\underline{T}raining}
\newcommand{\ourLoss}{ROCKET}
\newcommand{\oursOrthonormalLong}{\underline{O}rtho\underline{N}ormal-\underline{Loss}}
\newcommand{\oursOrthonormal}{ON-Loss}

\newcommand{\model}{GSM}

\newcommand{\relsens}[1]{\mathcal{S}_{\mathrm{rel},#1}}

\newcommand{\smashsqrt}[1]{\raisebox{-0.4pt}[0pt][0pt]{$\sqrt{#1}$}}

\newtheoremstyle{propositioninline}
  {\topsep}
  {\topsep}
  {\itshape}
  {}
  {\bfseries}
  {.}
  { }
  {}
\newtheoremstyle{propositionbreak}
  {\topsep}
  {\topsep}
  {\itshape}
  {}
  {\bfseries}
  {.}
  {\newline}
  {}

\theoremstyle{propositioninline}
\newtheorem{proposition}{Proposition}

\theoremstyle{propositionbreak}
\newtheorem{appendixproposition}[proposition]{Proposition}



\newcommand{\proofref}[1]{%
  \ \textnormal{(\hyperref[proof:#1]{proof in App.~\ref*{proof:#1}})}}

\newif\ifpropdisplay
\propdisplayfalse
\newcommand{\propeq}[1]{%
  \ifpropdisplay
    \begin{equation}#1\end{equation}%
  \else
    $#1$%
  \fi}

\newcommand{\propP}{\ifpropdisplay\widetilde{\mathbf{P}}\else\mathbf{P}\fi}
\newcommand{\propR}{\ifpropdisplay\widetilde{\mathbf{R}}\else\mathbf{R}\fi}
\newcommand{\propW}{\ifpropdisplay\widetilde{\mathbf{W}}\else\mathbf{W}\fi}

\newcommand{\propReachabilityBody}{%
Let $\propP_0$ and $\propP_1$ be physical densities with the same rank in the orthonormal basis, $\propP_0,\propP_1 \in \mathcal{M}_{(N_\mathrm{elec}, B)}$.
Then there exists a generator
$\mathbf{X}\in\mathfrak{p}$
such that
\propeq{
\propP_1
=
\exp\!\left(\mathbf{X}\right)
\propP_0
\exp\!\left(-\mathbf{X}\right).
}
Consequently,
$\mathbf{U}:=\exp(\mathbf{X}) \in\mathrm{SO}(B)$,
and $\mathbf{U}$ transports the occupied subspace of
$\propP_0$
to that of
$\propP_1$.%
}

\newcommand{\propGeneralRotationBody}{%
Let $\propW \in \mathrm{SO}(B)$ satisfy
\propeq{
\propP_1
=
\propW
\propP_0
\propW^{\mathsf{T}}.
}
Then
\propeq{
\propW
=
\exp\!\left(\mathbf{X}\right)
\mathbf{V}
}
with
$\mathbf{X}\in\mathfrak{p}$
and a block-diagonal orbital rotation
$\mathbf{V}\in\mathrm{SO}(B)$
that commutes with the reflection,
$\mathbf{V}\propR_0=\propR_0\mathbf{V}$.%
}

\newcommand{\propGrootValidityBody}{%
For an arbitrary matrix
$\hat{\mathbf{M}}\in\mathbb{R}^{B\times B}$,
the generator
$\hat{\mathbf{X}}_\mathrm{\ours}(\hat{\mathbf{M}})$
of \cref{eq:groot_map}
lies in
$\mathfrak{p}$
for every input.%
}

\newcommand{\propReadoutExpressivityBody}{%
Every generator
$\mathbf{X}\in\mathfrak{p}$
can be reached by the \ours{} map,
since
$\hat{\mathbf{X}}_\mathrm{\ours}(\mathbf{X})=\mathbf{X}$.%
}

\title{Physics-Aligned Electronic Ground-State Learning Improves Generalization}

\author{
\textbf{Eike S.~Eberhard}$^{*,1,2,3}$,
\hspace{0.6cm}
\textbf{Xaver Kainz}$^{*,1,2}$,
\\[0.6em]
\hspace{0.1cm}\textbf{Viktor Kotsev}$^{1}$,
\hspace{0.6cm}
\textbf{Abdulrahman Aldossary}$^{4}$,
\hspace{0.6cm}
\textbf{Stephan G\"unnemann}$^{1,2,3}$
\\[0.4em]
\normalfont
\{\texttt{e.eberhard}, \,\texttt{x.kainz}\}\texttt{@tum.de} \\[0.3em]
$^1$Technical University of Munich (TUM) \quad
$^2$Munich Data Science Institute (MDSI) \\
$^3$Munich Center for Machine Learning (MCML) \quad
$^4$NVIDIA \qquad
$^*$Equal contribution \\
}

\hypersetup{
    pdftitle={Physics-Aligned Electronic Ground-State Learning Improves Generalization},
    pdfauthor={Eike S. Eberhard, Xaver Kainz, Viktor Kotsev, Abdulrahman Aldossary, Stephan G\"unnemann},
}

\iclrfinalcopy

\begin{document}
\maketitle
\lhead{Preprint}  %

\begin{abstract}
    \looseness=-1
    Machine-learned interatomic potentials (MLIPs) excel at in-distribution tasks, accelerating drug and material development, yet they struggle to generalize out-of-distribution.
    We propose to push the cost-accuracy Pareto frontier by designing observable-agnostic electronic ground-state descriptor models (\model s) with computational costs situated between MLIPs and Kohn-Sham density functional theory (KS-DFT).
    We align the learning objectives and architectures of \model s with the governing equations of KS-DFT by enforcing physical constraints and removing optimization pressure on unphysical or irrelevant degrees of freedom.
    In our size-extrapolation experiments from QM9 to QM40, our combined contributions \oursOrthonormalLong{} (\oursOrthonormal) and \oursLong{} (\ours) reach a 79.1\% energy and 83.4\% force mean absolute error (MAE) reduction over previous state-of-the-art density \model s.
    For Hamiltonian \model s, \oursOrthonormal{} and \ourLossLong{} (\ourLoss) together reduce the energy and force MAEs of the strongest baseline by 99.8\% and 95.9\%, respectively.
    Using a self-consistency rejection criterion, we filter out extrapolation errors on QMugs, rejecting fewer than 0.4\% of predictions while reaching an energy MAE of 0.07~mHa.
    Finally, we demonstrate the efficiency of label-free self-consistency fine-tuning, and transfer \model s to reactive chemistry in Transition1x, reaching energy errors below chemical accuracy.
\end{abstract}

\begin{figure}[!h]
    \vspace{-0.4cm}
    \includegraphics[width=\linewidth]{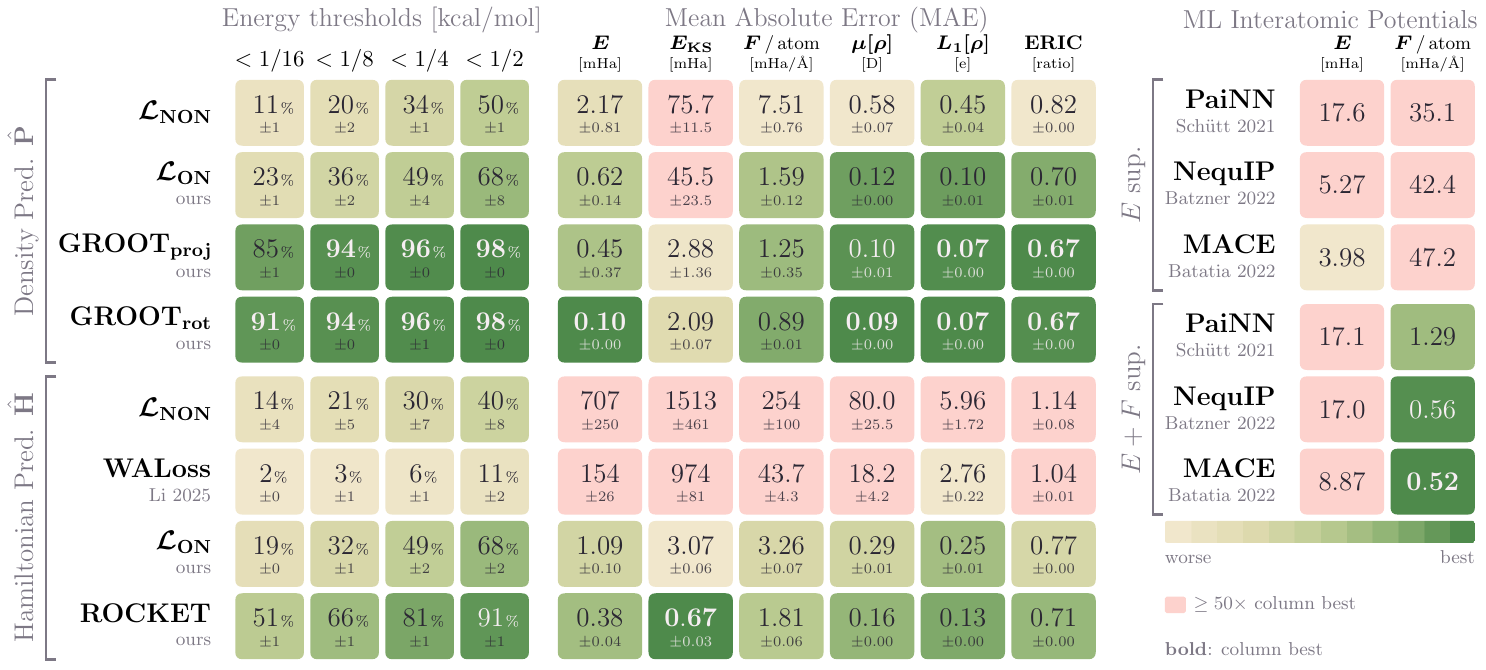}
    \vspace{-0.6cm}
    \centering
    \caption{
    \textbf{\model{} size extrapolation from QM9 to molecules up to $4\times$ larger QM40 molecules.}
    The \ours{} models generalize robustly, \textbf{reaching errors below half chemical accuracy} ($\sim0.8$~mHa) \textbf{on 98\% of the tested molecules.}
    All observables are computed at a budget of exactly one Fock build averaged over 3 training seeds each. \textbf{The \ours{} variants are the best performing models for almost all observables} except for the Kohn--Sham (self-consistency) energy where \ourLoss{} is best. NequIP and MACE trained on forces still outperform even the best \model s on forces, but physically aligned \model s achieve an order of magnitude better energies compared to all trained MLIPs.}
    \label{fig:qm40_size_extrapolation}
    \vspace{-0.3cm}
\end{figure}

\section{Introduction}

\looseness=-1
Many computational chemistry applications aim to accurately approximate the electronic ground state of molecules and materials.
The rapid developments in this field are driven by the abundance of applications enabling the \textit{in silico} prescreening of potential materials and drugs before moving into resource- and time-intensive laboratory experiments or clinical trials.
Since the exact computation of the ground state is intractable for almost all systems of interest, the art of computational chemistry is to find the correct trade-off between accuracy and computational cost for a given application~\citep{szaboModernQuantumChemistry1996}.
Contemporary method development pushes this Pareto frontier as far as possible.
For many applications, KS-DFT \citep{kohnSelfConsistentEquations1965} is the method of choice \citep{antypasNerscWorkloadAnalysis2014, haunschildComprehensiveAnalysisHistory2019}.
At a high level, its computational efficiency can be attributed to its lower-dimensional representation of the ground state based on the electron density $\rho:\mathbb{R}^3\to[0,\infty)$,
compared to wavefunction methods $\psi:\mathbb{R}^{N_\mathrm{elec} \times 3}\to\mathbb{C}$~\citep{cramerEssentialsComputationalChemistry2014}.

\textbf{Ground-state descriptor models (\model s) predicting the Hamiltonian or density matrix} have been investigated as a means to accelerate~\citep{liuTransferableAccelerationMethod2026,eberhardTransferableScfAcceleration2026} or completely circumvent~\citep{schuttUnifyingMachineLearning2019}
the computationally demanding iterative solvers which drive KS-DFT calculations~\citep{lehtolaOverviewSelfConsistent2020}.
These solvers use finite basis sets to discretize the governing equations of KS-DFT,
which yields a generalized eigenvalue problem (gEVP)~\citep{roothaanNewDevelopmentsMolecular1951}.
Present \textbf{state-of-the-art \model s are not well-aligned with the goal of learning physical ground states}, motivating the following contributions:
\begin{enumerate}[leftmargin=2em]
    \looseness=-1
    \item Current literature has converged on supervising \model s on the matrix representations of the gEVP (\cref{par:non-orthonormal-basis})
    measuring matrix errors in a non-orthonormal (NON) basis, whose spectral weighting is ill-aligned with physical observables. We show mathematically and empirically how \textbf{\oursOrthonormalLong{} (\oursOrthonormal)} cures this by measuring errors in an orthonormal (ON) basis.

    \item On the problem side of the gEVP, entire families of Hamiltonian matrices yield the same ground state.
    The established loss formulation views these physically equivalent Hamiltonians as different targets. We systematically treat this invariance w.r.t. changes in the Hamiltonian by introducing \textbf{\ourLossLong{} (\ourLoss{})}.

    \looseness=-1
    \item On the solution side, physically valid density matrices are constrained to a manifold (the \emph{Grassmannian}), but existing readout heads predict in the entire ambient space of symmetric matrices.
    We introduce \textbf{\oursLong{} (\ours{})}, a matrix readout for $SO(3)$-equivariant graph neural networks (GNNs) that restricts predictions to valid densities, removing the leading order energy error term compared to unconstrained baselines.

    \looseness=-1
    \item We develop a \textbf{practical framework} under a \textbf{strict budget of one Fock build}, the cost-dominant operation of KS-DFT.
    For this, we adopt a computationally cheap \textbf{first-order energy correction}, identify a \textbf{\model-native uncertainty measure} for rejecting unreliable predictions, and generalize self-consistency training~\citep{zhangSelfConsistencyTraining2024} for \textbf{label-free fine-tuning of foundational \model s to new chemical domains} such as transition states. These constitute key advantages of \model s over MLIPs that might justify their increased evaluation costs for some applications.
\end{enumerate}
\looseness=-1
At this single Fock-build budget, the evaluation costs of \model s are in between MLIPs and machine-learned (ML) approaches that require multiple Fock builds~\citep{dickHighlyAccurateConstrained2021,
gaoLearningEquivariantNon2024,eberhardDerivativeInformedLearning2026,luiseAccurateScalableExchangecorrelation2026,kimHighOrderEquivariant2025,
kimMachineLearningHamiltonians2026}.
In our size extrapolation experiments on QMugs, we train on molecules with at most 150 electrons and evaluate on up to 660 electrons (\cref{fig:qmugs_ood_extrapolation}).
Here, \oursOrthonormal{} alone reduces energy, force, dipole and density errors by 87--99\% for density \model s and by 93--99.5\% for Hamiltonian \model s.
\ours{}\textsubscript{proj} and \ourLoss{} \textbf{compound} on top of that, reaching 99.6\% and 99.9\% lower energy errors than the NON baselines.
At 0.21 and 0.56~mHa (without rejections), their \emph{mean} energy errors lie below chemical accuracy and are 14 and 5 times lower than the best energy-trained MLIP, while they reject fewer than 0.4\% of predictions.
Using the predicted ground states as self-consistent field (SCF) initial guesses reduces the number of solver iterations needed to converge.
For the QM9-trained models evaluated on QM40, \oursOrthonormal{} lowers the density \model's Effective Relative Iteration Count (ERIC)~\citep{eberhardTransferableScfAcceleration2026} from 0.82 to 0.70, and \ours{} lowers it further to 0.67, a third fewer iterations than the traditional MINAO initial guess.
After label-free self-consistency fine-tuning on Transition1x~\citep{schreinerTransition1xDatasetBuilding2022}, the QMugs-trained \ours{}\textsubscript{proj} and \ourLoss{} reach energy errors of 0.66 and 0.74~mHa, below chemical accuracy, while rejecting fewer than 1\% of predictions.

\section{Background}
\label{sec:background}

The governing equation of KS-DFT in finite basis sets $\{\chi_\mu(\mathbf{r})\}_{\mu=1}^{B}$ is a gEVP of the Hamiltonian $\mathbf{H}\in\mathbb{R}^{B\times B}$ and \textit{overlap} matrix $S_{\mu\nu}=\int_{\mathbb{R}^3}\chi_\mu(\mathbf{r})\chi_\nu(\mathbf{r})\,\mathrm{d}\mathbf{r}$,
with solutions (eigenvectors) $\mathbf{C}\in\mathbb{R}^{B\times B}$
\begin{equation}
    \mathbf{H}\mathbf{C}
    =
    \mathbf{S}\mathbf{C}\operatorname{diag}(\epsilon_i)\:.
    \label{eq:generalized_eigenvalue_problem}
\end{equation}
In this context, the solutions are also known as \emph{molecular orbitals} $\mathbf{C}$.
For further details and the connection of \cref{eq:generalized_eigenvalue_problem} to the underlying continuous theory, we refer to \appref{app:finite-basis-ks} and \citet{lehtolaOverviewSelfConsistent2020}.
By convention, the eigenvalues $\epsilon_i$ (\emph{orbital energies}) are enumerated ascendingly.
The $O=N_\mathrm{elec}/2$ lowest eigenvectors (columns of $\mathbf{C}$) form the \emph{occupied block} $\mathbf{C}_O\in\mathbb{R}^{B\times O}$, while the remaining $V=B-O$ are \emph{virtual}.
The density matrix %
\begin{equation}
    P_{\mu\nu}=(\mathbf{C}_O\mathbf{C}_O^{\mathsf{T}})_{\mu\nu} = \textstyle{\sum_{i=1}^{O}} \: C_{\mu i} C_{\nu i}
    \label{eq:P_from_C}
\end{equation}
\looseness=-1
encodes the electron density in the finite basis.
The Hamiltonian $\mathbf{H}$ depends on the density matrix $\mathbf{P}$, which in turn depends on the coefficients $\mathbf{C}$, so \cref{eq:generalized_eigenvalue_problem} is non-linear.
It can be linearized by iteratively solving for $\mathbf{C}_t$ given $\mathbf H_{t-1} := \mathbf{H}(\mathbf{P}(\mathbf{C}_{t-1}))$, which reduces the original problem to the \textbf{iterative solution of the gEVP} (\cref{eq:generalized_eigenvalue_problem})~\citep{roothaanNewDevelopmentsMolecular1951}.
This approach \textbf{requires an initial guess $\mathbf{P}^{(0)}$} to start the iteration.
The Superposition of Atomic \emph{Densities} (SAD) method and its minimal-basis variant MINAO construct such a guess in density space by combining spherically averaged atomic density matrices~\citep{almlofPrinciplesDirectScf1982,vanlentheStartingScfCalculations2006}.
The ground state $\mathbf{C}^*$ is a fixed point of the iterative solver, and all three matrices $\mathbf{C}^*$, $\mathbf{P}^*$, and $\mathbf{H}^*$ are complete descriptors allowing access to all observables of the ground state.
These matrices are commonly represented in atom-centered basis sets, which are \textbf{non-orthonormal (NON)} $\mathbf{S}\neq\mathbf{I}$.
We can perform a change of basis
$\widetilde{\chi}_{\mu}(\mathbf{r})
=
\sum_{\nu=1}^{B}\chi_{\nu}(\mathbf{r})(\mathbf{S}^{-1/2})_{\nu\mu}$,
to obtain \textbf{orthonormal (ON)} basis representations
\begin{equation}
    \smash{\widetilde{\mathbf{C}}}
    =
    \mathbf{S}^{1/2}\mathbf{C},
    \qquad
    \smash{\widetilde{\mathbf{P}}}
    =
    \mathbf{S}^{1/2}\mathbf{P}\mathbf{S}^{1/2},
    \qquad
    \smash{\widetilde{\mathbf{H}}}
    =
    \mathbf{S}^{-1/2}\mathbf{H}\mathbf{S}^{-1/2},
    \qquad
    \smash{\widetilde{\mathbf{S}}}
    =
    \mathbf{I},
    \label{eq:lowdin_transform}
\end{equation}
which reduce the gEVP to the regular eigenvalue problem $\smash{\widetilde{\mathbf{H}}\widetilde{\mathbf{C}}=\widetilde{\mathbf{C}}\operatorname{diag}(\epsilon_i)}$~\citep{lowdinNonOrthogonalityProblemConnected1950}.

\paragraph{Infinitely many representations $(\mathbf{H}, \mathbf{C})$ encode the same physical ground state.}
Only the density matrix $\mathbf{P}$ is in one-to-one correspondence with this state.
On the problem side of \cref{eq:generalized_eigenvalue_problem}, for Hamiltonians $\mathbf{H}$, one can identify constant shifts
\begin{equation}
    \smash{\widetilde{\mathbf{H}}}
    \mapsto
    \smash{\widetilde{\mathbf{H}}}+c\mathbf{I},
    \qquad
    c \in \mathbb{R}
    \label{eq:hamiltonian_gauge}
\end{equation}
as a degree of freedom, leaving the eigenvectors $\mathbf{C}$ invariant.
Similarly, so-called \emph{virtual level shifts}
$\smash{\widetilde{\mathbf{H}}_\xi
:=
\widetilde{\mathbf{H}}^*
+
\xi(\mathbf{I}-\widetilde{\mathbf{P}}^*)}$,
for an arbitrary $\xi \geq 0$
only increase the eigenvalues of the virtual block $\smash{\widetilde{\mathbf{C}}_V}$ by $\xi$.
Both of these operations leave $\mathbf{C}$ and hence $\mathbf{P}$ unchanged.
On the solution side of \cref{eq:generalized_eigenvalue_problem}, we note that the solution space $\mathbf{C}$ carries considerably more degrees of freedom than the density matrix $\mathbf{P}$ it yields.
In the mapping $\mathbf C \mapsto \mathbf{P}$ in \cref{eq:P_from_C}, the virtual block $\mathbf{C}_V$ never enters and is discarded right away. Additionally, the occupied columns $\smash{\widetilde{\mathbf{C}}_O}$ themselves are not unique and can be multiplied with an arbitrary orthogonal matrix $\mathbf{U}_{OO}\in \mathrm{O}(O)$
without changing the density
\begin{equation}
\widetilde{\mathbf{P}}' = \widetilde{\mathbf{C}}'_O (\widetilde{\mathbf{C}}'_O)^{\mathsf{T}}
\equiv
\big(\widetilde{\mathbf{C}}_O\mathbf{U}_{OO}\big)
\big(\widetilde{\mathbf{C}}_O\mathbf{U}_{OO}\big)^{\mathsf{T}}
=
\widetilde{\mathbf{C}}_O
\mathbf{U}_{OO}\mathbf{U}_{OO}^{\mathsf{T}}
\widetilde{\mathbf{C}}_O^{\mathsf{T}}
=
\widetilde{\mathbf{P}},
\label{eq:gauge_invariance}
\end{equation}
and the same holds for the virtual slice $\smash{\widetilde{\mathbf{C}}_{\mathrm{V}}}$.
\textbf{Whole families of Hamiltonian $\mathbf{H}$ and coefficient matrices $\mathbf C$ collapse onto the same physical density $\mathbf{P}$}.
What distinguishes one ground state from another is the $O$-dimensional subspace spanned by their occupied columns $\mathrm{span}\{\mathbf{c}_i | i=1,2,\ldots,N_\mathrm{elec}/2\}$.
The smallest problem instance $B=3$ and $O=2$ is intuitively accessible. In this case,
the density is a plane through the origin of $\mathbb{R}^{3}$.
Any two eigenvectors $c_1, c_2$ lying in that plane span it equally well.
Predicting a density means predicting this subspace.

To constrain a model, we need to transfer this subspace criterion to the space of density matrices $\mathbf{P}$.
The orthonormality of the occupied orbitals,
$\smash{\widetilde{\mathbf{C}}_O^{\mathsf{T}}
\widetilde{\mathbf{C}}_O=\mathbf{I}_{O}}$, inserted
into \cref{eq:P_from_C}, gives
$
    \widetilde{\mathbf{P}}^{2}
    =
    \widetilde{\mathbf{C}}_O
    (
    \widetilde{\mathbf{C}}_O^{\mathsf{T}}
    \widetilde{\mathbf{C}}_O
    )
    \widetilde{\mathbf{C}}_O^{\mathsf{T}}
    =
    \widetilde{\mathbf{P}},
    \label{eq:orthogonal_physicality}
$
so $\smash{\widetilde{\mathbf{P}}}$ is idempotent with its eigenvalues either 0 or 1.
The trace condition $\mathrm{Tr}(\widetilde{\mathbf{P}})=O$ fixes their multiplicities to $O$ ones and $V$ zeros.
Idempotency is not only a consequence of the construction but a complete test of it, because any \textbf{symmetric idempotent matrix} of the right trace factorizes back into an orthonormal occupied block~\citep{mcweenySomeRecentAdvances1960}.
Thus, the set of valid density matrices is exactly
\begin{equation}
    \mathcal{M}_{(N_\textrm{elec},B)}
    \equiv
    \left\{
    \widetilde{\mathbf{P}}\in\mathbb{R}^{B\times B}
    \ \middle|\
    \widetilde{\mathbf{P}}^{\mathsf{T}}=\widetilde{\mathbf{P}},
    \ \
    \widetilde{\mathbf{P}}^{2}=\widetilde{\mathbf{P}},
    \ \
    \mathrm{Tr}(\widetilde{\mathbf{P}})=N_{\mathrm{elec}}/2
    \right\},
    \label{eq:manifold}
\end{equation}
This set is the \textit{Grassmann manifold}.
Its dimension is the $BO$ entries of $\smash{\widetilde{\mathbf{C}}_O}$, minus $O(O+1)/2$
orthonormality conditions, minus $O(O-1)/2$ mixing directions~\citep{edelmanGeometryAlgorithmsOrthogonality1998,absilOptimizationAlgorithmsMatrix2008}.
Hence, learning valid density matrices $\mathbf P \in \mathbb{R}^{B \times B}$ is an $OV$-dimensional prediction task.

\section{Related Work}
\label{sec:related-work}
\paragraph{ML-surrogates for ground-state observables prediction}
\looseness=-1
based on equivariant GNNs have been a key ingredient for recent advances in computational chemistry~\citep{kulichenkoDataGenerationMachine2024,woodUMAFamilyUniversal2025}.
MLIPs are the most widely adopted of these surrogates. They predict the energy as a function of the molecular geometry $G = \{(Z_a, r_a)\}_{a=1}^{N_\text{atoms}}$, $G\mapsto E(G) \in \mathbb{R}$ in linear time with respect to system size~\citep{schuttSchNetDeepLearning2018}.
These methods have demonstrated high accuracy and generalizability for a wide range of molecular systems~\citep{kovacsEvaluationMACEForce2023,jacobsPracticalGuideMachine2025b,burgerDEQuifyForceField2025,burgerShootHipHessian2025,kokerTrainingFoundationModel2025} and are typically trained on KS-DFT reference data~\citep{kulichenkoDataGenerationMachine2024}.
Another modeling ansatz focuses on \textbf{predicting the fundamental descriptors of KS-DFT},
aiming either at the Hamiltonian $\mathbf H \in \mathbb{R}^{B \times B}$
or the density matrix $\mathbf P \in \mathbb{R}^{B \times B}$, i.e., $G\mapsto \mathbf H(G), \mathbf P(G)$~\citep{schuttUnifyingMachineLearning2019,yuEfficientEquivariantGraph2023}.
\textbf{From these descriptors, the computation of almost all observables,
such as energy, forces, and dipole moments, requires at most one Fock build.} In particular, these models can be used to accelerate SCF calculations used to solve \cref{eq:generalized_eigenvalue_problem}~\citep{liuTransferableAccelerationMethod2026,eberhardTransferableScfAcceleration2026}.
A substantial body of work has built on QH9~\citep{yuQh9QuantumHamiltonian2023}, a QM9-based~\citep{ramakrishnanQuantumChemistryStructures2014} benchmark for molecular Hamiltonian prediction. However, with recent models reporting energy errors more than two orders of magnitude below chemical accuracy~\citep{kimMachineLearningHamiltonians2026} the benchmark has reached saturation. QH9's restriction to very small molecules limits its ability to assess size extrapolation to larger molecules, a shortcoming highlighted by recent studies~\citep{liEnhancingScalabilityApplicability2025,eberhardTransferableScfAcceleration2026}.
\citet{kaniselvanLearningElectronicStructure2026} have connected \model s to MLIPs, using Hamiltonians for pretraining, and finetuning with an energy readout head.
\citet{zhangSymmetrypreservingTransferableRepresentation2026} propose to evaluate observables directly from the predicted density.
\citet{kimMachineLearningHamiltonians2026} do the same for the Hamiltonian, matching the force accuracy of MLIPs on in-distribution MD-trajectories albeit at higher computational cost, requiring two Fock builds, one for $\mathbf H(\mathbf P_{\mathrm{MINAO}})$ $\Delta$-learning~\citep{eberhardTransferableScfAcceleration2026} and a second for the observable evaluation.
\textbf{We compare the observables of all \model s at a one Fock-build budget} starting from the molecular geometry.

\paragraph{Predicting and supervising the electronic ground state in the NON basis representation is the de-facto
standard approach of the field}
\label{par:non-orthonormal-basis}
\looseness=-1
~\citep{schuttUnifyingMachineLearning2019,unkeSE3equivariantPredictionMolecular2021,liDeepLearningDensity2022,liEnhancingScalabilityApplicability2025,shmilovichOrbitalMixerUsing2022,gongGeneralFrameworkE2023,yuQh9QuantumHamiltonian2023,yuEfficientEquivariantGraph2023,zhongTransferableEquivariantGraph2023,hazraPredictingOneParticle2024,wangUniversalMaterialsModel2024,zhangSelfConsistencyTraining2024,febrerGraph2matUniversalGraph2025,kimHighOrderEquivariant2025,kimMachineLearningHamiltonians2026,luoEfficientScalableDensity2025,zhouyinLearningLocalEquivariant2025,eberhardTransferableScfAcceleration2026,kaniselvanLearningElectronicStructure2026,liuTransferableAccelerationMethod2026,qianEquivariantElectronicHamiltonian2026}.
Predicting in the NON basis is well-motivated by its spatial localization relative to the ON alternative in \cref{eq:lowdin_transform}~\citep{liDeepLearningDensity2022,kaniselvanLearningElectronicStructure2026}, although this argument is mainly valid for absolute Hamiltonian predictions, less so for $\Delta$-learning, and not at all for densities  (See \appref{app:matrix-localization}).
Supervising model outputs in the NON basis lacks a similar justification,
and \citet{kaniselvanLearningElectronicStructure2026} report that matrix-element errors correlate only weakly with resulting eigenvalue errors in the NON basis, and we observe the same for downstream observables (\appref{app:frobenius-vs-observables}).
In the few works that \emph{do} supervise in the ON basis~\citep{nigamEquivariantRepresentationsMolecular2022,cignoniElectronicExcitedStates2024,zhangSymmetrypreservingTransferableRepresentation2026}, the implications of ON supervision for physical observables remain poorly understood~\citep{nigamEquivariantRepresentationsMolecular2022}, which might explain the lack of adoption.
More recently, \citet{liEnhancingScalabilityApplicability2025} introduce an ON Hamiltonian loss-term,
but perturb it (\appref{app:waloss}), and combine it with conventional NON-supervision.
We aim to close this gap by directly comparing \oursOrthonormal{} and the NON-baseline \textbf{analytically via worst-case observable error bounds} (\cref{sec:loewdin-loss}) and \textbf{empirically via correlations between loss residuals and observable errors} (\cref{sec:exp_alignment}),
providing conclusive evidence in favor of ON-basis supervision.

\looseness=-1
Unrestricted density prediction in the space of symmetric matrices provides no guarantee for the physicality of $\smash{\hat{\mathbf{P}}} \notin \mathcal{M}_{(N_\textrm{elec},B)}$.
One way to restore the validity is to purify $\smash{\hat{\mathbf{P}}}$ post-hoc~\citep{lehtolaAssessmentInitialGuesses2019}, either by projecting onto $\mathcal{M}_{(N_\textrm{elec}, B)}$~\citep{zhangSymmetrypreservingTransferableRepresentation2026} or by charge rescaling (charge conservation) and McWeeny purification~\citep{yescasramosAcceleratedScfWorkflows2026}, which only works within a basin of attraction~\citep{mcweenySomeRecentAdvances1960}.
\textbf{Neither incorporates the constraint into the model}, correcting only at evaluation time~\citep{zhangSymmetrypreservingTransferableRepresentation2026} or adding trace and idempotency loss terms~\citep{yescasramosAcceleratedScfWorkflows2026}.
In contrast, \textbf{\ours{}\textsubscript{rot} learns to transport an initial guess to the ground state $\smash{\hat{\mathbf{P}}^* \in \mathcal{M}_{(N_\textrm{elec}, B)}}$ end-to-end} (\cref{sec:methods}), while \textbf{\ours{}\textsubscript{proj}
includes the manifold projection in its parameter gradient computations}.
The Hamiltonian-side degree of freedom of the gEVP~\eqref{eq:generalized_eigenvalue_problem} has been addressed for periodic structures~\citep{wangUniversalMaterialsModel2024,yinAdvancingUniversalDeep2025}.
The molecular Hamiltonian losses in previous work leave this degree of freedom untreated, such that the model allocates expressivity to fit a component that no derived observable constrains.
\textbf{\ourLoss{} combines $\Delta$-learning with a per-sample shift-invariant loss for molecular Hamiltonians} (\cref{sec:rocket}).

\section{Methods}
  \label{sec:methods}
  \paragraph{\oursOrthonormal{}.}
\label{sec:loewdin-loss}
\begin{wraptable}[23]{R}{0.37\textwidth}
    \vspace{-13pt}
    \centering
    \footnotesize
    \setlength{\tabcolsep}{3pt}
    \caption{\textbf{Worst-case leading-order observable errors in terms of matrix errors.}
    \(\lambda_{\min}\) and \(\lambda_{\max}\) are the extreme eigenvalues of the overlap matrix $\mathbf{S}$ for a given molecule, and \(\Delta\epsilon_{\mathrm{HL}}\) its HOMO-LUMO gap.
    The overlap factors are the worst-case conversion factors from non-orthonormal to orthonormal matrix errors, attained along the extreme overlap eigenvectors.
    Derivations are in \appref{app:loewdin-observable-bounds}.}
    \label{tab:loewdin-observable-scaling}
    \vspace{-5pt}
    \begin{tabular}{@{}l@{\hspace{8pt}}c@{}r@{}}
    \toprule
    \textbf{Observable}
    &
    \multicolumn{2}{r@{}}{\textbf{Leading order in $\|\cdot\|_F$}}
    \\
    \midrule
    \multicolumn{3}{@{}l}{\textbf{Unconstrained $\Delta \mathbf{P}$}} \\
    $E_\text{KS}$ & \(\propto\) & \(\lambda_{\max}\|\Delta\mathbf{P}\|_F\) \\
    \(\mu_\rho/L_\rho^1/L_\rho^2\) & \(\propto\) & \(\lambda_{\max}\|\Delta\mathbf{P}\|_F\) \\
    \addlinespace
    \multicolumn{3}{@{}l}{\textbf{Manifold Constrained $\Delta \mathbf{P}$}} \\
    $E_\text{KS}[\rho]$ & \(\propto\) & \(\lambda_{\max}^2\|\Delta\mathbf{P}\|_F^2\) \\
    \(\mu_\rho/L_\rho^1/L_\rho^2\) & \(\propto\) & \(\lambda_{\max}\|\Delta\mathbf{P}\|_F\) \\
    \addlinespace
    \multicolumn{3}{@{}l}{\textbf{Unconstrained $\Delta \mathbf{H}$}} \\
    $E_\text{KS}$ & \(\propto\) & \(\Delta\epsilon_{\mathrm{HL}}^{-2}\lambda_{\min}^{-2}\|\Delta\mathbf{H}\|_F^2\) \\
    \(\mu_\rho/L_\rho^1/L_\rho^2\) & \(\propto\) & \(\Delta\epsilon_{\mathrm{HL}}^{-1}\lambda_{\min}^{-1}\|\Delta\mathbf{H}\|_F\) \\
    \bottomrule
    \end{tabular}
\end{wraptable}
When learning ground-state descriptors \(\mathbf{X}\in\{\mathbf{H},\mathbf{P}\}\), we aim to train on a loss that closely tracks the observable errors in the vicinity of the ground state.
The tightest target-specific supervision is achieved by directly training on the observables. Although suitable for task-specific readouts used in MLIPs (\cref{sec:related-work}), this ansatz deviates from the goal of learning general descriptors that support observables that are unspecified at training time.
Instead, we propose to \textbf{minimize the Frobenius norm of matrix errors \(\Delta\mathbf{X}=\smash{\hat{\mathbf{X}}-\mathbf{X}^*}\) in their ON-basis representation} (\cref{eq:lowdin_transform})
\vspace{-0.1cm}
\begin{equation}
    \mathcal{L}_\mathrm{ON}(\Delta\mathbf{X}, \mathbf{S}) \equiv \mathcal{L}_\mathrm{ON}(\Delta\widetilde{\mathbf{X}})
    :=\frac{\|\Delta\widetilde{\mathbf{X}}\|_F}{B} \:.
\label{eq:loss}
\vspace{-0.1cm}
\end{equation}
This contrasts the standard approach of prior work, which directly supervises $\Delta\mathbf{X}$ in the non-orthonormal \textbf{NON}-basis (\cref{sec:related-work}).
The overlap matrix $\mathbf{S}$ mediates between matrix-errors $\Delta X$ and observable-errors (\cref{tab:loewdin-observable-scaling}).
Its eigenvalues satisfy \(0 < \lambda_{\min}\leq1\leq\lambda_{\max}\) (\cref{eq:overlap_matrix_properties}) and act as a proxy for the relative importance of each direction w.r.t. physical observables, such that the
\textbf{spectrum $\lambda(\mathbf{S})$ emphasizes some matrix-error directions more than others}.
These spectral factors only vanish in ON-bases $\mathbf{S} = \mathbf{I}$ where the loss$\rightarrow$observable error-propagation becomes approximately isotropic, which we also confirm empirically in \cref{sec:experiments}.

  \paragraph{Constraining density \model s to the manifold}
\looseness=-1
$\smash{\mathcal{M}_{(N_\mathrm{elec},B)}}$ guarantees the physical validity of their predictions $\smash{\hat{\mathbf P}}$ and further tightens the observable error bounds.
While the dipole, \(L_1\) and \(L_2\) real-space density errors remain linear, the total-energy error becomes quadratic in $\smash{\|\Delta\widetilde{\mathbf{P}}\|_F'}$ (\cref{tab:loewdin-observable-scaling}).
A priori, neither the initial guess nor the learned predictions are restricted to the Grassmannian.
We therefore introduce \oursLong{} (\ours{}) to predict densities that lie on the Grassmannian manifold by construction.
Following \citet{lehtolaAssessmentInitialGuesses2019}, we define an idempotent projection map $\mathcal{R}: \mathbf{P} \mapsto \mathbf{C}^{(\mathbf{P})}_O \mathbf{C}^{(\mathbf{P})\mathsf{T}}_O \in \mathcal{M}_{(N_\textrm{elec},B)}$ that purifies any candidate density matrix.
Given an arbitrary trial density $\mathbf{P}$, we diagonalize it in the orthonormal basis to extract its eigenvectors $\mathbf{C}^{(\mathbf{P})}$, retaining the $O$ eigenvectors $\mathbf{C}^{(\mathbf{P})}_O$ corresponding to the largest eigenvalues.
This yields the valid density matrix closest to $\mathbf{P}$ in the Frobenius norm \citep{eckartApproximationOneMatrix1936}. \textbf{We evaluate two approaches for predicting valid density matrices (\ours{}\textsubscript{rot}, \ours{}\textsubscript{proj})}, using $\mathcal{R}$ either as a pre- or postprocessing step.
\textbf{\ours{}\textsubscript{rot}} is a one-step flow map~\citep{boffiFlowMapMatching2025} that first projects the initial guess $\mathbf{P}_0 \equiv \mathcal{R}(\mathbf{P}_\textrm{MINAO})$ and then learns the constant velocity $\hat{\mathbf{X}}$, \cref{eq:groot_map}, of the flow
\begin{equation}
    \frac{\mathrm{d}}{\mathrm{d}t}\mathbf{P}(t)
    =
    \left[\hat{\mathbf{X}}, \mathbf{P}(t)\right]
    =
    \hat{\mathbf{X}}\mathbf{P}(t) - \mathbf{P}(t)\hat{\mathbf{X}},
    \quad
    \mathbf{P}(0) = \mathbf{P}_0.\label{eq:groot_flow}
\end{equation}
This flow equation has a closed-form solution $\smash{\mathbf{P}(t) = \exp(t\hat{\mathbf{X}})\,\mathbf{P}_0\exp(-t\hat{\mathbf{X}})}$ that transports the initial projection $\mathbf{P}_0$ to the predicted density $\mathbf{P}(1) = \smash{\hat{\mathbf{P}}}$ \citep{sakuraiModernQuantumMechanics2021}. The underlying transformation of the initial guess corresponds to a \textbf{rotation} $\smash{\hat{\mathbf{U}}(t) = \exp(t\hat{\mathbf{X}}) \in SO(B)}$ and preserves the Grassmannian constraints of \cref{eq:manifold}. These transformations $\mathbf{U}$ are also referred to as \textit{orbital rotations} in this context.
\citet{wongDifferentialGeometryGrassmann1967} showed that any two densities on $\smash{\mathcal{M}_{(N_\textrm{elec},B)}}$ are connected by pure mixing rotations. At the initial guess $\mathbf{P}_0$ these are generated by skew-symmetric matrices $\smash{\hat{\mathbf{X}}^{\mathsf{T}} = -\hat{\mathbf{X}}}$ that \emph{anti}commute with the reflection $\mathbf{R}_0=2\mathbf{P}_0-\mathbf{I}$, i.e. $\smash{\hat{\mathbf{X}}\mathbf{R}_0=-\mathbf{R}_0\hat{\mathbf{X}}}$. We map any equivariant readout-head output $\smash{\hat{\mathbf{M}}}$ to satisfy these conditions by applying
$
\hat{\mathbf{X}}\big(\hat{\mathbf{M}}\big)
    =
    \mathbf{P}_0
    \hat{\mathbf{M}}
    \mathbf{\Pi}_0
    -
    \mathbf{\Pi}_0
    \hat{\mathbf{M}}^{\mathsf{T}}
    \mathbf{P}_0
$, where $\mathbf{\Pi}_0 \equiv \mathbf{I}-\mathbf{P}_0$ is the projector onto the virtual orbitals. Since the unit-time map $\mathbf{P}(1)$ is available in closed form, we can bypass numerical ODE integration by directly supervising the prediction $\smash{\hat{\mathbf{P}}}$ against the target density.
(For details, see \appref{app:rotational_groot}.)
\textbf{\ours{}\textsubscript{proj}} enforces validity by projecting the symmetrized matrix readout $\smash{\hat{\mathbf{M}}}$ to the Grassmannian manifold $
\smash{
    \hat{\mathbf{P}} = \mathcal{R}(\mathbf{P}_\textrm{MINAO}
    + \tfrac{1}{2}(\hat{\mathbf{M}} + \hat{\mathbf{M}}^{\mathsf{T}}))
}
$
with the \textbf{parameter gradients flowing through $\mathcal{R}$}.

  \paragraph{\ourLoss{}}
\label{sec:rocket} is a loss function for Hamiltonian learning.
Unlike \ours{} which \emph{enforces} additional \emph{physical} constraints, \ourLoss{} \emph{lifts}
optimization pressure from \emph{unphysical or irrelevant} degrees of freedom.
\textbf{\ourLoss{} systematically treats \emph{gauge freedoms} of $\smash{\hat{\mathbf{H}}}$},
meaning transformations of $\smash{\hat{\mathbf{H}}}$ which leave the coefficients
$\smash{\hat{\mathbf{C}} := \mathbf{C}(\hat{\mathbf{H}})}$ unchanged.
In the ON-basis, one such transformation is $\smash{\tilde{\mathbf{H}}\mapsto\tilde{\mathbf{H}}+c\mathbf{I}}$ which only shifts the \emph{eigenvalues} $\epsilon_i$ but does not affect the eigenvectors $\smash{\tilde{\mathbf{C}}}$.
We transfer this invariance to \oursOrthonormal{} (\cref{eq:loss}) using a closed-form minimizer
$c^*=\sum_{\mu}^{B} \Delta \widetilde{H}_{\mu\mu} / B$ and obtain
\begin{equation}
    \label{eq:rocket_stage_1}
    \mathcal{L}^\textrm{(1st stage)}_\textrm{\ourLoss}(\Delta\widetilde{\mathbf{H}})
    :=
    \min_{c}\,
    \mathcal{L}_\mathrm{ON}(\Delta\widetilde{\mathbf{H}}-c\mathbf{I})\:.
\end{equation}
This leaves a flat $c\,\mathbf{S}$ loss-direction for every molecule.
A pure data-normalization ansatz with a training-set-optimal gauge $c_\textrm{t}$ and a standard matrix loss would strictly penalize gauge variation between molecules, while \cref{eq:rocket_stage_1} permits arbitrary gauge variation. In between these extremes
\begin{align}
    \label{eq:rocket_stage_2}
    \mathcal{L}^\textrm{(2nd stage)}_\textrm{\ourLoss}(\Delta\widetilde{\mathbf{H}})
    :=
    \mathcal{L}^\textrm{(1st stage)}_\textrm{\ourLoss}(\Delta\widetilde{\mathbf{H}})
    +
    \alpha_{\textrm{\ourLoss}}\,\left(c^* - \bar{c}\right)^2
\end{align}
\looseness=-1
\textbf{softly restrains cross-molecule variation along the gauge direction}, which performs best on our validation ablation (\appref{app:rocket-ablations}). The reference gauge $\bar{c}$ in \cref{eq:rocket_stage_2} is an exponential moving average (EMA) of the per-molecule optimal shifts $c^*$.
Additionally, \textbf{we remove supervision from Hamiltonian directions that do not affect the lowest $O+k$ eigenvectors $\smash{\mathbf{C}_{O+k}(\hat{\mathbf{H}})}$, leaving $\smash{\mathbf{P}(\mathbf{C}(\hat{\mathbf{H}}}))$ unchanged.}
We define the matrix $\widetilde{\mathbf{Q}}_{O+k}$ that projects onto the lowest $O+k$ reference eigenvectors as
$
    \widetilde{\mathbf{Q}}_{O+k}
    :=
    \widetilde{\mathbf{C}}^*_{O+k}
    \widetilde{\mathbf{C}}_{O+k}^{*\top}
$,
and mask out errors entirely within its orthogonal complement
\begin{equation}
    \smash{
    \Delta\widetilde{\mathbf{H}}_{O+k}
    :=
    \Delta\widetilde{\mathbf{H}}
    -
    (\mathbf{I}-\widetilde{\mathbf{Q}}_{O+k})
    \Delta\widetilde{\mathbf{H}}
    (\mathbf{I}-\widetilde{\mathbf{Q}}_{O+k}).
    }
\end{equation}
Plugging this into \cref{eq:rocket_stage_2} yields the 3rd and final stage of \ourLoss{}
\begin{equation}
    \label{eq:rocket_stage_3}
    \mathcal{L}^\textrm{(3rd stage)}_\textrm{\ourLoss}(\Delta\widetilde{\mathbf{H}})
    :=
    \mathcal{L}^\textrm{(2nd stage)}_\textrm{\ourLoss}(\Delta\widetilde{\mathbf{H}}_{O+k})\:.
\end{equation}
The lowest $O$ eigenvectors determine the density matrix, while \textbf{retaining an additional $k$ eigenvectors increases the robustness to prediction errors by separating the occupied states from the first unsupervised eigenstate.}
This allows for a controlled tradeoff between learning a smooth but higher-dimensional Hamiltonian target vs learning a discontinuous but lower-dimensional density target. We use $k=1$ by default and ablate larger $k$ in \appref{app:rocket-ablations}.

\section{Experiments}
\label{sec:experiments}
We assess all models on the same equivariant GNN backbone (\appref{app:model-implementation}), varying only the matrix readout.
For the Hamiltonian ansatz, we obtain the density by solving the gEVP (\cref{eq:generalized_eigenvalue_problem}), which is computationally cheap compared to the Fock-build~\citep{eberhardTransferableScfAcceleration2026}.
\textbf{To evaluate size extrapolation, we benchmark our models across two complementary dataset regimes.}
In the first regime, we train on equilibrium geometries from \textbf{QM9}~\citep{ramakrishnanQuantumChemistryStructures2014}, restricted to molecules with at most 20 atoms and elements H, C, N, O, and F (\cref{tab:train-dataset}).
We then evaluate these models on the CHNOF subset of \textbf{QM40}~\citep{madushankaQm40RealisticQuantum2024} with up to 37 heavy atoms (\cref{tab:eval-dataset}), with molecules up to four times larger than those of the training distribution.
In the second regime, we train on QMugs~\citep{isertQmugsQuantumMechanical2021}, testing extrapolation on conformer diversity searched at $300\,\mathrm{K}$ across a wider chemical space (\cref{tab:qmugs-dataset}).
Here, we \textbf{train on all QMugs molecules with up to 150 electrons and evaluate on systems with up to 660 electrons} (\cref{tab:qmugs-dataset}).
Across all datasets, we generate reference density and Hamiltonian matrices with restricted KS-DFT at the B3LYP/def2-SVP level of theory~\citep{stephensAbInitioCalculation1994,weigendBalancedBasisSets2005} using GPU-accelerated \texttt{GPU4PySCF}~\citep{liIntroducingGpuAcceleration2025,sunRecentDevelopmentsPyscf2020} (\appref{app:dft-reference}).
\textbf{To obtain the strongest possible baselines} and accurately characterize relative performances, we perform extensive ablations of both absolute and $\Delta$-learning approaches~(\appref{app:loss-vs-observables}) in our initial training on QM9, and select learning rates and weight decay for each method separately based on a 75-molecule QM40 validation set~(\appref{app:training-optimizer}).
All other hyperparameters are kept constant, and we observe no difference in training cost across all ablated ground-state predictors and loss combinations.
We avoid the hidden Fock-build $\mathbf H (\mathbf P_\textrm{MINAO})$ for Hamiltonian $\Delta$-learning ~\citep{eberhardTransferableScfAcceleration2026} by using the superposition of atomic potentials (SAP) ansatz $\mathbf{H}_\textrm{SAP}$~\citep{lehtolaAssessmentInitialGuesses2019}, and additionally ablate training-set-optimal gauge shifts ($c_t \mathbf S$) for the Hamiltonian baseline.
For $\Delta$-learning in density space, we use $\mathbf{P}_\textrm{MINAO}$~\citep{lehtolaAssessmentInitialGuesses2019}.
When evaluating the energies from the predicted $\hat{\mathbf{P}}$ or $\mathbf{P} (\hat{\mathbf{H}})$ density matrix, we get the \textbf{first order energy correction in the self-consistency error} $\Delta^\text{SC} \mathbf{P} := \mathbf{P}(\mathbf{H}(\hat{\mathbf{P}}))-\hat{\mathbf{P}}$ to the ground-state energy expression of KS-DFT $E_{KS}$ essentially for free,
such that our predictions are given by $\hat E = E_\textrm{KS} (\hat P) + \sum_{\mu \nu} H_{\mu \nu}(\hat P)\Delta^\textrm{SC} P_{\mu \nu}$~\citep{harrisSimplifiedMethodCalculating1985,foulkesTightbindingModelsDensityfunctional1989}.

\begin{figure}[t]
    \vspace{-0.1cm}
    \centering
    \includegraphics[width=\linewidth]{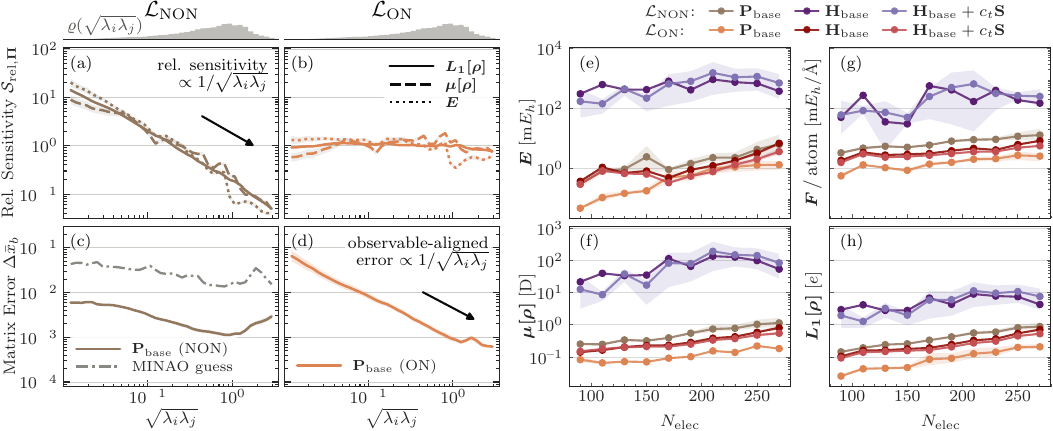}
    \vspace{-0.5cm}
    \caption{
    \textbf{\oursOrthonormal{} is "observable-isotropic" and improves size extrapolation.}
    \textbf{(a,b,c,d)} Values log-binned by $\smashsqrt{\lambda_i\lambda_j}$, matching the directions $\boldsymbol{\Pi}_{ij}$ of QM40 molecules (\cref{tab:eval-dataset}); shading indicates the standard deviation, histograms show the direction-density $\varrho$.
    \textbf{(a,b)} Mean-normalized relative sensitivity for the $L^1$ density, dipole $\mu$, and energy $E$ errors (line styles): \textbf{(a)} $\mathcal{L}_\mathrm{NON}$ falls as $1/\smashsqrt{\lambda_i\lambda_j}$, whereas \textbf{(b)} $\mathcal{L}_\mathrm{ON}$ stays approximately constant for all three observables.
    \textbf{(c,d)} Matrix error $\Delta\bar{x}_b \equiv \sqrt{\langle\mathrm{tr}(\boldsymbol{\Pi}_{ij}\Delta\mathbf{X})^2\rangle_b}$ of the trained network, with $P_\textrm{MINAO}$ for reference: \textbf{(c)} $\mathcal{L}_\mathrm{NON}$ leaves error that follows $\varrho$ rather than observable impact, while \textbf{(d)} $\mathcal{L}_\mathrm{ON}$ concentrates it in the high-impact directions.
    \textbf{(e--h)} Size extrapolation experiments from \textbf{QM9 to QM40}, featuring the same observables MAEs
    plus the force $F$ acting on the atoms plotted against the electron count $N_\mathrm{elec}$. We compare \model s under both $\mathcal{L}_\mathrm{NON}$ and $\mathcal{L}_\mathrm{ON}$ loss functions, as well as Hamiltonian variants with a training-set-optimal constant gauge shift $c_t$.
    Markers are means and bands the minimum and maximum across identical runs; \textbf{every $\mathcal{L}_\mathrm{ON}$ model beats its $\mathcal{L}_\mathrm{NON}$ counterpart.}}
    \label{fig:loss_anisotropy}
    \vspace{-0.6cm}
\end{figure}
\paragraph{Loss Observable Alignment.}
\label{sec:exp_alignment}
\looseness=-1
We compare the alignment of $\mathcal{L}_\mathrm{ON}$ and $\mathcal{L}_\mathrm{NON}$ by
investigating their agreement with key observables in the vicinity of the target density $\mathbf P^*$.
To that end, we define the \emph{sensitivity} $\mathcal{S}_{\boldsymbol{\Pi}}(\mathcal{O})\in \mathbb{R}$ of an observable $\mathcal{O}$ (or Loss $\mathcal{L}$) w.r.t. a perturbation direction $\boldsymbol{\Pi} \in \mathbb{R}^{B \times B}$ as the absolute directional derivative (\appref{app:absolute-directional-derivative}).
The \textbf{\textit{relative directional sensitivity}}~(\appref{app:loewdin-sensitivity})
$
    \relsens{\boldsymbol{\Pi}}(\mathcal{L},\mathcal{O}) := \mathcal{S}_{\boldsymbol{\Pi}}(\mathcal{L})\:/\:\mathcal{S}_{\boldsymbol{\Pi}}(\mathcal{O})
$
captures if the loss changes more (or less) rapidly than the actual observable along a given direction.
We analyze the sensitivities in a basis of symmetric matrices built from the eigenvectors $\mathbf{u}_k$ of the overlap $\smash{\mathbf{S} = \sum_k \mathbf{u}_k \mathbf{u}^{\mathsf{T}} \lambda_k}$, with normalized elements $\smash{\boldsymbol{\Pi}_{ij} \propto \mathbf{u}_i \mathbf{u}_j^{\mathsf{T}} + \mathbf{u}_j \mathbf{u}_i^{\mathsf{T}}}$.
Our experiments show that the optimization process prioritizes directions according to their loss sensitivity $\mathcal{S}_{\boldsymbol{\Pi}}(\mathcal{L}_\mathrm{NON}) = 1 / B$ and $\mathcal{S}_{\boldsymbol{\Pi}}(\mathcal{L}_\mathrm{ON}) = \smashsqrt{\lambda_i\lambda_j}/B$ for density matrices.
We demonstrate this by comparing the matrix error components in the overlap basis, $\Delta\bar{x}_b := \sqrt{\langle\mathrm{tr}(\boldsymbol{\Pi}_{ij}\Delta\mathbf{X})^2\rangle_b}$, binned by $\smashsqrt{\lambda_i\lambda_j}$ (\cref{fig:loss_anisotropy}~(c,d), and \appref{app:loewdin-sensitivity}).
An ideal loss function should have direction-independent (isotropic) relative sensitivities $\relsens{\boldsymbol{\Pi}}(\mathcal{L},\mathcal{O}) \approx c^{(\mathcal{O})} \: \forall \:\boldsymbol{\Pi}\,\forall \: \mathcal{O}$ across a diverse set of observables.
\Cref{fig:loss_anisotropy}~(a,b) shows that this is approximately achieved by \oursOrthonormal{} and violated by the NON baseline.
Training on \oursOrthonormal{} strictly outperforms this baseline on QM40  across model configurations on all tested observables (see \cref{fig:qm40_size_extrapolation} and \cref{fig:loss_anisotropy}~(e-h)).
Across all evaluated models, we observe higher mutual information between the \oursOrthonormal{} residual and downstream observable errors (\cref{tab:frobenius_panel_stats}), independent of the training loss that produced the model weights.

\begin{figure}[t]
    \vspace{-0.1cm}
    \centering
    \includegraphics[width=\linewidth]{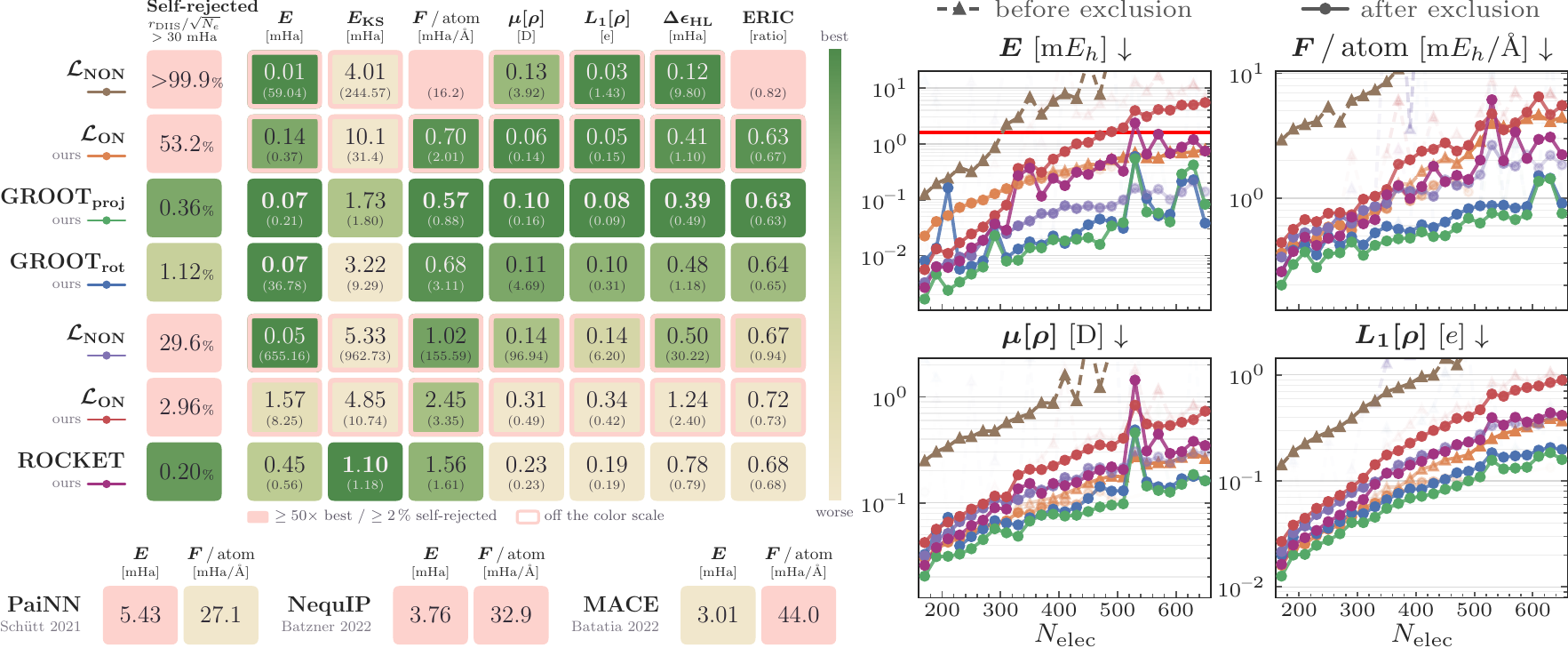}
    \vspace{-0.4cm}
    \caption{
    \looseness=-1
    \textbf{\ours{} and \ourLoss{} show robust extrapolation, and can be equipped with \model s-specific error indicators} (\cref{tab:ablation_qmugs_ood160_r100}).
    \model s and MLIPs trained on QMugs molecules with at most 150 electrons and evaluated on systems with up to 660 electrons.
    \textbf{The table} shows the fraction of molecules rejected by the self-consistency criterion and the observable errors of the remaining molecules, with the MAE including the rejected predictions in brackets below.
    Compared to linear scaling MLIPs like PaiNN~\citep{schuttEquivariantMessagePassing2021}, NequIP~\citep{batznerE3equivariantGraphNeural2022}, and MACE~\citep{batatiaMACEHigherOrder2022} trained on energies only (meaning at equal reference compute budget), \ours{}\textsubscript{proj} achieves an order of magnitude better energies, rejecting less than 1\% of its own predictions, characterizing the achievable cost-accuracy trade-off of \model s compared to MLIPs.
    The ERIC~\citep{eberhardTransferableScfAcceleration2026} relative to MINAO shows that our density models yield additional SCF acceleration over the baselines.
    \textbf{Observable MAEs vs. electron count.} \ours{} and \ourLoss{} stay orders of magnitude below the $\mathcal{L}_\mathrm{NON}$ baselines with the margin widening as the system size increases.}
    \label{fig:qmugs_ood_extrapolation}
    \vspace{-0.7cm}
\end{figure}

\paragraph{Size-Extrapolation.}
\label{sec:exp_size_extrapolation}
Reference calculations for KS-DFT ground states scale with the 3rd or 4th power with system size, depending on the level of theory.
This restricts training data for ML surrogates to small to mid-sized molecules.
Their practical utility, therefore, depends in part on their ability to accurately extrapolate to larger systems.
In our extrapolation experiments on equilibrium structures from QM9 and QM40 (\cref{fig:qm40_size_extrapolation}, \cref{fig:loss_anisotropy}~(e-h), \appref{app:training-on-qm9}),
the $\mathcal{L}_\mathrm{NON}$ baselines generalize poorly to larger molecules, whereas \oursOrthonormal{} consistently reduces  prediction errors across both density and Hamiltonian representations (\cref{tab:ablation_qm40_r100}).
\ourLoss{} and \ours{} compound on top of these improvements:
Although \ourLoss{} already outperforms \emph{all} tested Hamiltonian \model s including the new \oursOrthonormal{} on \emph{all} tested observables,
it is improved upon by \ours{}\textsubscript{rot} on all observables except the Kohn--Sham energy, while \ours{}\textsubscript{proj} performs worse than \ourLoss{} on both energies but surpasses it on the remaining observables (see \cref{fig:qm40_size_extrapolation}).
Excluding the manifold projection from the training gradients and applying it only post hoc to the \oursOrthonormal{} baseline rejects $66\%$ of the molecules as unreliable, compared to $0.6\%$ for \ours{}\textsubscript{proj} (\cref{tab:ablation_qm40_r100}).

Both versions of \ours{} achieve energy errors of less than half \emph{chemical accuracy} (1 kcal/mol) for $98\%$ of the QM40 molecules in the test set
with the vast majority of errors passing the even stricter threshold of $1 / 16 \,\mathrm{kcal/mol}$.
This motivates the search for an \textbf{uncertainty measure that quantifies prediction confidence}.
We identify the residual term
$
    r = \sqrt{8}\, \lVert \tilde{H}\tilde{P} - \tilde{P}\tilde{H} \rVert_F
$,
commonly used in convergence acceleration methods for KS-DFT calculations~\citep{pulayConvergenceAccelerationIterative1980,pulayImprovedScfConvergence1982}, as a suitable measure to equip \model s with the approximately \textbf{size-intensive rejection criterion}
$
    \lVert r \rVert_F / \sqrt{N_{\mathrm{elec}}} > \varepsilon_\text{SC}
$.
Ideally, the self-consistency threshold $\varepsilon_\text{SC}$ could be calibrated on a large validation set per model.
For our experiments, we settled on uniform thresholds for all models, which we estimated based on the error distribution of a small large-molecule validation set to $\varepsilon_\text{SC} = 30\,\mathrm{mE_h}$.
\textbf{Besides size extrapolation} we also isolated the conformational generalization capabilities of \model s by training on QMugs and evaluating on the 3BPA~\citep{kovacsLinearAtomicCluster2021} torsional surface (\cref{fig:3bpa_and_transition1x}, \cref{tab:ablation_3bpa_dih}).
We observe excellent performance for all our \model s: compared to PaiNN, NequIP, and MACE, \ourLoss{} reduces energy errors by roughly an order of magnitude ($8$--$12\times$) and force errors by $5$--$7\times$, while the \oursOrthonormal{}-trained density \model{} $\mathbf{P}_\mathrm{base}$ (ON) reduces them by $148$--$234\times$ and $20$--$29\times$, respectively.

\paragraph{Towards foundational \model s.}
\looseness=-1
Our evaluations suggest that \emph{well-designed} \model s can achieve significantly improved energies compared to MLIPs, both trained on energies only and trained on energies and forces, while force-trained MLIPs still lead in force accuracy~(\cref{fig:qm40_size_extrapolation}, \cref{fig:qmugs_ood_extrapolation}).
Despite their higher inference time costs compared to MLIPs, viewed from an \textbf{equal reference compute perspective}, \model s offer an interesting trade-off well suited for foundation models.
Not only do \model s outperform energy-only trained MLIPs in terms of force accuracy by $\sim \times 30-50$, \cref{fig:qm40_size_extrapolation}, but they also offer access to observables undefined at training time.
\textbf{Pretrained \model s are conveniently adaptable to new chemistry by combining the idea of label-free Self-Consistency Training (SCT)} introduced by \citet{zhangSelfConsistencyTraining2024} with the occupied--virtual (OV) gradient objective of \citet{eberhardTransferableScfAcceleration2026} (\appref{app:sct}).
Depending on the application the resulting amortization cost may be favorable for research applications with low total evaluation counts since it incurs a much lower up front fine-tuning cost.
We finetune our QMugs-trained models on the Transition1x dataset~\citep{schreinerTransition1xDatasetBuilding2022}, containing molecular geometries along reaction pathways.
Models pretrained with the NON loss fail under the OV objective, where the Hamiltonian model diverges and $\mathbf{P}_\mathrm{base}$ (NON) rejects all predictions.
While stationarity keeps the first-order corrected energy error of $\mathbf{P}_\mathrm{base}$ (ON) at $0.52$~mHa, it rejects $25\%$ of test frames and incurs a self-consistent KS energy error of $21.1$~mHa on all frames because unconstrained densities leave the Grassmannian~(\cref{tab:loewdin-observable-scaling}), which \ours{} avoids by construction.
In contrast, \ours{} and \ourLoss{} adapt reliably and reach energy errors well below chemical accuracy on all accepted predictions (\cref{fig:3bpa_and_transition1x}, \cref{tab:ablation_transition1x_sct_test}).

\begin{figure}[t]
    \vspace{-0.1cm}
    \centering
    \includegraphics[width=\linewidth]{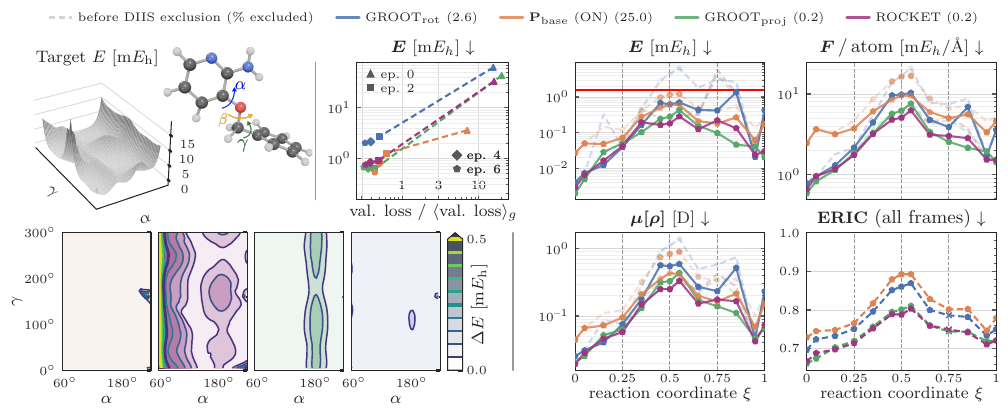}
    \vspace{-0.6cm}
    \caption{
        \looseness=-1
        \textbf{Beyond size extrapolation: generalization across conformations and label-free adaptation to new
        chemistry.}
        QMugs-trained models generalize to the $\beta=120^\circ$ cut of the 3BPA backbone-dihedral energy surface
        (left panel, \cref{tab:ablation_3bpa_dih}): the ground truth $(\alpha,\gamma)$ energy-surface, and the
        energy deviations of \model s, with median (MAE-optimal) constant offset removed per model.
        \textbf{Label-free self-consistency training (SCT) on Transition1x.}
        The upper-right panel shows the validation loss against the self-consistency residual over 0, 2,
        4, and 6 epochs of cosine-annealed SCT, suggesting that shorter training budgets might be feasible.
        The observable MAEs are resolved along the reaction coordinate $\xi$, with the transition state at $\xi=0.5$.
        The energies of the finetuned \model s consistently achieve chemical accuracy (solid red line) on the accepted predictions, with \ours{}\textsubscript{proj} and \ourLoss{}
        rejecting the fewest predictions ($0.2\%$, \cref{tab:ablation_transition1x_sct_test}).
    }
    \label{fig:3bpa_and_transition1x}
    \vspace{-0.3cm}
\end{figure}

\section{Discussion}
\label{sec:discussion}
\looseness=-1
We have aligned \model s with the governing equations of Kohn-Sham Density Functional Theory (KS-DFT) through \oursOrthonormal{}, \ours{}, and \ourLoss{}.
To this end, we performed a theoretical and empirical analysis of representational choices of $\mathbf{P},\mathbf{H}$ for \model{} training
and their effects on downstream observables, finding strong evidence in favor of orthonormal (ON) basis representations (\oursOrthonormal{}).
Unconstrained density readouts can leave the manifold of valid density matrices, which can violate charge conservation and lead to fractional occupations, a regime where KS-DFT is known to misbehave~\citep{perdewDensityFunctionalTheory1982,morisanchezLocalizationDelocalizationErrors2008}.
\ours{} enforces these constraints by construction and thus predicts physical density matrices.
The densities obtained from Hamiltonian \model s are inherently physical, but current loss functions overconstrain them.
\ourLoss{} reduces this excess supervision.
In our QM9 to QM40 \textbf{size extrapolation} experiments, these contributions compound relative to state-of-the-art \model s (\cref{fig:qm40_size_extrapolation}).
Alongside improvements across other observables, \oursOrthonormal{} and \ours{} reduce the energy error of the previous state-of-the-art density \model{} by $79.1\%$, and \oursOrthonormal{} and \ourLoss{} reduce that of the strongest Hamiltonian baseline by $99.8\%$.
Under \ours{}, $98\%$ of molecules achieve energy errors below $0.5\times$ chemical accuracy.
On QMugs systems up to 660 electrons (\cref{tab:ablation_qmugs_ood160_r100}), \ours{}\textsubscript{proj} and \ourLoss{} reduce energy errors by $99.6\%$ and $99.9\%$, respectively, relative to the corresponding NON baselines.
Evaluating \model s on energy observables requires a Fock build incurring
significantly higher inference cost than MLIPs, but it yields an \textbf{inherent uncertainty measure} through the self-consistency residual.
This self-rejection can filter large extrapolation errors while maintaining high yield.
Our \model{} framework adopts a generalization of \textbf{label-free fine-tuning} via Self-Consistency Training~\citep{zhangSelfConsistencyTraining2024} to density \model s.
When transferring QMugs-pretrained models to the reactive chemistry of Transition1x~\citep{schreinerTransition1xDatasetBuilding2022}, this reaches energy errors well below chemical accuracy on the accepted predictions across the entire reaction coordinate (\cref{fig:3bpa_and_transition1x}, \cref{tab:ablation_transition1x_sct_test}).

\paragraph{Limitations \& Future Work.}
\looseness=-1
Our evaluations are restricted to closed-shell Kohn-Sham. Although a generalization to unrestricted systems is straightforward, this has not been covered by our evaluations.
All experiments have been performed at the B3LYP/def2-SVP level of theory. Transfer to other functionals and basis sets remains untested.
The unique capabilities of \model s could enable active-learning pipelines for MLIP reference data generation on large molecules~\citep{kulichenkoDataGenerationMachine2024}.
For each molecule, \ours{} can predict the ground state at the cost of a single Fock build.
If the self-consistency residual falls below the rejection threshold, the pipeline accepts the one-iteration observables.
When the residual exceeds the threshold, the calculation falls back to iterative SCF, where the prediction can serve as an initial guess that accelerates convergence.
The resulting solutions can continually update the \model{}.
The combination of this ansatz with ML exchange-correlation functionals~\citep{gaoLearningEquivariantNon2024,luiseAccurateScalableExchangecorrelation2026,eberhardDerivativeInformedLearning2026} could provide exceptional cost-accuracy trade-offs.

\subsection*{Reproducibility statement}

We provide proofs for all theoretical claims in the appendix, including the reachability, geometric validity, and expressivity of \ours{} on the Grassmann manifold (\appref{app:rotational_groot}) as well as the observable error bounds and quadratic energy scaling for valid density projectors (\appref{app:loewdin-observable-bounds}).
Across all benchmarks, we document dataset sources, reference calculation protocols, and evaluation splits in \appref{app:experimental-configuration}, and specify model architectures and training hyperparameters in \apprefpair{app:model-architectures}{app:training-optimizer}.

The code will be released with the final publication at \href{https://github.com/ESEberhard/ML4DFT}{github.com/ESEberhard/ML4DFT}.

\ificlrfinal
\subsubsection*{Acknowledgments}
We thank Luca A.~Thiede for insightful discussions.
We thank the Vector Institute as well as the Matter Lab (Al{\'a}n Aspuru-Guzik's group)
for co-hosting Eike S.~Eberhard during a research stay where part of this work was conducted.
\fi

\bibliographystyle{preprint_style}
\bibliography{references}

\newpage
\appendix
\captionsetup[table]{font=footnotesize,skip=\dimexpr\abovecaptionskip-0.2cm\relax}
\captionsetup[figure]{font=footnotesize,skip=\dimexpr\abovecaptionskip-0.2cm\relax}

\startcontents[appendix]
\section*{\hypertarget{appendixcontents}{Appendix Contents}}
\begingroup
\linespread{0.85}\selectfont
\renewcommand{\addvspace}[1]{\vspace{0.15em}}
\printcontents[appendix]{}{1}{\setcounter{tocdepth}{2}}
\endgroup
\makeatletter
\let\gr@typeset@sect\H@old@sect
\long\def\H@old@sect#1#2#3#4#5#6[#7]#8{%
  \ifnum#2<4\relax
    \gr@typeset@sect{#1}{#2}{#3}{#4}{#5}{#6}[{#7}]{\linktoappendixtoc{#8}\appendixtocmark}%
  \else
    \gr@typeset@sect{#1}{#2}{#3}{#4}{#5}{#6}[{#7}]{#8}%
  \fi
}
\renewcommand{\@seccntformat}[1]{%
  \protect\linktoappendixtoc{\csname the#1\endcsname}\hskip 1em
}
\makeatother
\newpage

\section{Kohn--Sham Theory in a Finite Basis}
\label{app:finite-basis-ks}

We provide a brief overview of the mathematical fundamentals underlying KS-DFT, aimed at a broader audience at the intersection of quantum chemistry and machine learning.
Hence, we refrain from any explicit references to electronic spin degrees of freedom.
This spin-free treatment is sufficient for so-called closed-shell systems where $\rho^{(\text{up})} = \rho^{(\text{down})}$, requiring an even number of electrons. For further details and generalization to \textit{open-shell} systems, we refer to \citet{szaboModernQuantumChemistry1996} and \citet{lehtolaOverviewSelfConsistent2020}.

In their seminal work, \citet{kohnSelfConsistentEquations1965} showed that the ground state electron density $\rho(\mathbf{r})$ can be constructed from eigenfunctions $\varphi_i(\mathbf{r}): \mathbb{R}^3\to\mathbb{R}$ which are solutions of the Kohn-Sham equations
\begin{equation}
    \left(-\frac{1}{2}\nabla^2 - \sum_{A=1}^{N_\textrm{nuc}} \frac{Z_A}{|\mathbf{r}-\mathbf{R}_A|} + \int_{\mathbb{R}^3}\frac{\rho(\mathbf{r'})}{|\mathbf{r}-\mathbf{r'}|} \,\mathrm{d}\mathbf{r'} + v_\textrm{xc}[\rho](\mathbf{r})\right)\varphi_i(\mathbf{r}) = \epsilon_i\varphi_i(\mathbf{r}),
    \label{eq:kohn_sham_equations}
\end{equation}
which are the governing equations of KS-DFT.
The first term is the kinetic energy operator, the second term the potential due to the atomic nuclei of the system, the third term the electron-electron potential, the fourth term the so-called \textit{exchange-correlation} potential, and $\epsilon_i$ is the eigenvalue of $\varphi_i(\mathbf{r})$.
The eigenfunctions $\varphi_i(\mathbf{r})$ are commonly referred to as molecular orbitals (MOs) and relate to the density via
\begin{equation}
    \rho(\mathbf{r}) = 2 \sum_{i}^{O} |\varphi_i(\mathbf{r})|^2 \:,
    \label{eq:density_from_orbitals}
\end{equation}
where $O = N_\mathrm{elec} / 2$ and $V = B - O$ are the numbers of \textit{occupied} and \textit{virtual orbitals}, respectively, and the eigenfunctions are indexed lowest to highest with respect to their eigenvalues.
By construction, the $\varphi_i(\mathbf{r})$ are orthonormal
$
    \int_{\mathbb{R}^3} \varphi_i(\mathbf{r}) \varphi_j(\mathbf{r}) \,\mathrm{d}\mathbf{r} = \delta_{ij}
$
and hence
$
    \int_{\mathbb{R}^3} \rho(\mathbf{r}) \,\mathrm{d}\mathbf{r} = N_\mathrm{elec}
$.
\cref{eq:kohn_sham_equations} can be discretized by representing the $\varphi_i(\mathbf{r})$ in a \textbf{finite basis}
\begin{equation}
    \left\{\chi_\mu(\mathbf{r}) \Big| \chi_\mu: \mathbb{R}^3 \mapsto \mathbb{R} \: \text{and} \: \int_{\mathbb{R}^3}\chi_\mu(\mathbf{r})^2\,\mathrm{d}\mathbf{r}=1\right\}_{\mu=1}^{B}
\end{equation}
with coefficients $\mathbf C \in \mathbb{R}^{B \times B}$ such that $\varphi_i(\mathbf{r}) = \sum_{\mu=1}^{B} C_{\mu i} \chi_\mu(\mathbf{r})$~\citep{lehtolaOverviewSelfConsistent2020}. Inserting this into \cref{eq:density_from_orbitals} yields
\begin{equation}
    \rho(\mathbf{r}) = 2 \sum_{i}^{O} \sum_{\mu \nu}^{B} C_{\mu i} C_{\nu i} \chi_\mu(\mathbf{r}) \chi_\nu(\mathbf{r}) = 2 \sum_{\mu \nu}^{B} \sum_{i}^{O} C_{\mu i} C_{\nu i} \chi_\mu(\mathbf{r}) \chi_\nu(\mathbf{r}) =: 2 \sum_{\mu\nu}^{B} P_{\mu\nu} \chi_\mu(\mathbf{r}) \chi_\nu(\mathbf{r}),
    \label{eq:density_matrix_def}
\end{equation}
which defines the \textbf{density matrix} $\mathbf P \in \mathbb{R}^{B \times B}$. For molecules, basis functions are commonly constructed from atom-centered Gaussians aligned with the Cartesian axes, whereas periodic chemistry codes often use plane waves. This allows for an efficient computation of many integrals involved in the computation, for example, the integral over the density in \cref{eq:density_from_orbitals} becomes
\begin{equation}
   N_\mathrm{elec} =\int_{\mathbb{R}^3} \rho(\mathbf{r}) \,\mathrm{d}\mathbf{r} = 2 \sum_{\mu \nu}^{B} P_{\mu\nu} \int_{\mathbb{R}^3} \chi_\mu(\mathbf{r}) \chi_\nu(\mathbf{r}) \,\mathrm{d}\mathbf{r} =: 2 \sum_{\mu \nu}^{B} P_{\mu\nu} S_{\mu\nu}
    \label{eq:charge_conservation}
\end{equation}
with the overlap matrix $\mathbf S\in\mathbb{R}^{B \times B}$.
Notably, this integral tensor $S_{\mu\nu}$ is a constant for a given basis set such that it only needs to be computed once per molecule.
This is especially convenient since it can be shown that in finite basis sets \cref{eq:kohn_sham_equations} is equivalent to solving \cref{eq:generalized_eigenvalue_problem}.
Notably, the overlap matrix $\mathbf{S}$ is a \textbf{positive-definite Gram matrix} of the atomic orbital (AO) basis functions, defined as
\begin{equation}
    S_{\mu\nu}
    :=
    \int_{\mathbb{R}^3}
    \chi_\mu(\mathbf{r})
    \chi_\nu(\mathbf{r})
    \,\mathrm{d}\mathbf{r}.
\end{equation}
The inner product is symmetric,
so $S_{\mu\nu}=S_{\nu\mu}$ and $\mathbf{S}=\mathbf{S}^{\mathsf{T}}$.
For any nonzero $\mathbf{x}\in\mathbb{R}^B$,
$\mathbf{x}^{\mathsf{T}}\mathbf{S}\mathbf{x}
=
\sum_{\mu,\nu}x_\mu x_\nu S_{\mu\nu}
=
\bigl\|\sum_\mu x_\mu\chi_\mu\bigr\|^2$.
Linear independence of the AO basis implies
$\sum_\mu x_\mu\chi_\mu\neq 0$ for all $\mathbf{x}\neq 0$,
and therefore $\mathbf{x}^{\mathsf{T}}\mathbf{S}\mathbf{x}>0$.
The overlap matrix is thus symmetric positive definite,
$\mathbf{S}\succ 0$.
Hence $\mathbf{S}$ is invertible.
Normalization of the individual AO basis functions implies $S_{\mu\mu}=1$,
and therefore $\operatorname{tr}(\mathbf{S})=\sum_{\mu=1}^B S_{\mu\mu}=B$.
The $B$ eigenvalues of $\mathbf{S}$ are strictly positive and sum to $B$
s.t.
\begin{equation}
    0
    <
    \lambda_{\min}(\mathbf{S})
    \leq
    1
    \leq
    \lambda_{\max}(\mathbf{S})
    \leq
    B,
\label{eq:overlap_matrix_properties}
\end{equation}
such that \textbf{if any eigenvalue differs from $1$, the remaining eigenvalues compensate with the opposite total deviation.}

\section{Rotational \ours{}: Transport on the Grassmannian}
\label{app:rotational_groot}
\label{app:readout_expressivity}
\label{app:decomposition}

This appendix gives the full construction of \ours{}\textsubscript{rot}: why an
occupied--virtual rotation suffices to reach any target density, how the generators of
such rotations are characterized, and how an unconstrained decoder output is mapped into
them.

This section proves that \ours{} can connect any two physical densities with the same number of occupied orbitals.
The argument has two logically separate parts.
First,
we study the geometry:
does there exist an occupied--virtual rotation that transports the initial density to the target,
does its generator follow a Grassmann geodesic,
and how does it relate to nonminimal rotations that produce the same density?
Second,
we study the readout:
does every model output produce such a valid generator,
and can the readout reach every generator required by the geometry?

We answer these questions in four propositions.
The first two concern only rotations on the Grassmannian and are independent of the neural network.
The third proves that the \ours{} head is valid for every input matrix.
The fourth proves surjectivity,
which is the statement that the head itself introduces no expressivity bottleneck.

\ours{}\textsubscript{rot} is a one-step flow map~\citep{boffiFlowMapMatching2025}, which transports a computationally cheap initial density $\mathbf{P}_0\in \mathcal{M}_{(N_\mathrm{elec},B)}$ to a final density $\widehat{\mathbf{P}}$.\footnote{
    All matrices in this appendix are in the orthonormal basis introduced in \cref{eq:lowdin_transform}. We omit tildes to avoid cluttering the equations.
}
Rotating with an orthogonal matrix $\mathbf{U} \in SO(B)$ preserves these Grassmannian constraints and is therefore the transformation of choice
\begin{equation}
    \widehat{\mathbf{P}} := \mathbf{U} \mathbf{P}_0 \mathbf{U}^{\mathsf{T}}, \qquad
    \mathrm{tr}(\widehat{\mathbf{P}}) = \mathrm{tr}(\mathbf{P}_0 \mathbf{U}^{\mathsf{T}} \mathbf{U}) = \mathrm{tr}(\mathbf{P}_0), \qquad
    \widehat{\mathbf{P}}^2 = \mathbf{U} \mathbf{P}_0^2 \mathbf{U}^{\mathsf{T}} = \widehat{\mathbf{P}}.
\end{equation}
After dropping all orbital rotations that leave $\mathbf{P}$ unchanged,
cf. \cref{fig:coefficient_matrix_gauge_freedom} and \cref{eq:gauge_invariance},
any two valid densities are connected by a matrix
$\mathbf{U}\in SO(B)$
that mixes the occupied and virtual subspaces.
We formalize this by defining the projector into the virtual subspace $\mathbf{\Pi} := \mathbf{I}-\mathbf{P}$ and the reflection matrix $\mathbf{R}:=\mathbf{P}-\mathbf{\Pi}$.
When evaluated at a specific density matrix, such as $\mathbf{P}_0$, we denote the corresponding operators with a subscript ($\mathbf{\Pi}_0, \mathbf{R}_0$).
The reflection $\mathbf{R}$ flips the virtual orbitals around the origin while not affecting the occupied orbitals, i.e. $\mathbf{R}\mathbf{C}_\mathrm{V} = -\mathbf{C}_\mathrm{V}$, $\mathbf{R}\mathbf{C}_\mathrm{O} = \mathbf{C}_\mathrm{O}$.
Rotations that commute with the reflection, $\mathbf{R}\mathbf{U}=\mathbf{U}\mathbf{R}$, are those that are block-diagonal in the molecular orbitals,
mixing occupied with occupied and virtual with virtual, and thus span the subspace of redundant orbital-rotation directions that leave $\mathbf{P}$ unchanged, as shown in \cref{eq:gauge_invariance}.
Complementarily, there also exists a canonical purely mixing rotation between the occupied and the virtual orbitals $\mathbf{U} \equiv \exp(\mathbf{X})\in SO(B)$ generated by a skew-symmetric matrix $\mathbf{X} \in \mathfrak{p}$ that \emph{anti}commutes with the reflection,
\begin{figure}[t]
    \centering
    \includegraphics[width=\linewidth]{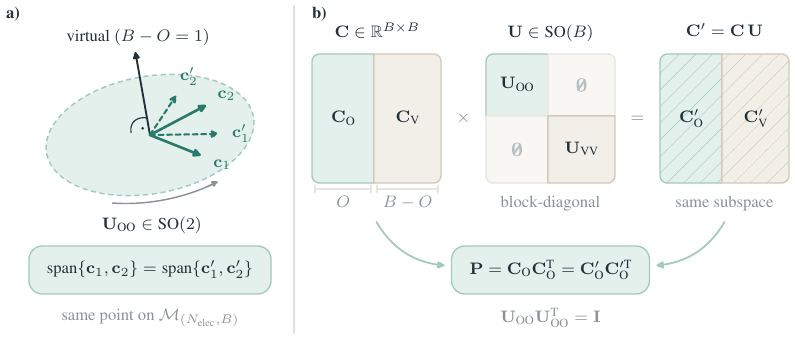}
    \caption{\textbf{Many coefficient matrices, one density.}
    \textbf{(a)}
    The smallest case, $B=3$ and $O=2$: the occupied orbitals $c_1, c_2$ span a plane which determines the density.
    Rotating the arrows within the plane by $\mathbf{U}_{OO}$ changes the orbitals,
    but not the plane.
    \textbf{(b)}
    For arbitrary $O, B \in \mathbb{N}$ with $O < B$.
    The coefficients $\mathbf{C}\in\mathbb{R}^{B\times B}$ split into an
    occupied block $\mathbf{C}_O\in\mathbb{R}^{B\times O}$ and a virtual
    block $\mathbf{C}_{\mathrm{V}}\in\mathbb{R}^{B\times V}$. Only the occupied
    block determines the density matrix as seen in \cref{eq:P_from_C}.
    Multiplication from the right by a block-diagonal orthogonal matrix $\mathbf{U}$ does not change the density matrix, see \cref{eq:gauge_invariance}.
    }
    \label{fig:coefficient_matrix_gauge_freedom}
\end{figure}
\begin{equation}
\mathfrak{p}
\equiv
\left\{
\mathbf{X}\in\mathbb{R}^{B\times B}
\ \middle|\
\mathbf{X}^{\mathsf{T}}=-\mathbf{X},
\ \
\mathbf{X}\mathbf{R}
=
-\mathbf{R}\mathbf{X}
\right\}, \quad
\text{with } \quad \mathbf{R} = 2\mathbf{P}-\mathbf{I}.
\label{eq:grassmann_tangent_generators}
\end{equation}
Throughout this section all matrices are represented in the orthonormal basis (\cref{eq:lowdin_transform}).
Let
$\widetilde{\mathbf{P}}_0$
and
$\widetilde{\mathbf{P}}_1$
be physical densities with the same rank,
and define
\begin{equation}
\widetilde{\mathbf{\Pi}}_a
=
\mathbf{I}-\widetilde{\mathbf{P}}_a,
\qquad
\widetilde{\mathbf{R}}_a
=
\widetilde{\mathbf{P}}_a
-
\widetilde{\mathbf{\Pi}}_a,
\qquad
a\in\{0,1\}.
\label{eq:oao_endpoint_reflections}
\end{equation}
Each reflection is symmetric and involutory,
\begin{equation}
\left(\widetilde{\mathbf{R}}_a\right)^{\mathsf{T}}
=
\widetilde{\mathbf{R}}_a,
\qquad
\left(\widetilde{\mathbf{R}}_a\right)^2
=
\mathbf{I},
\end{equation}
and acts as
$+1$
on the occupied subspace and as
$-1$
on the virtual subspace.
\citet{wongDifferentialGeometryGrassmann1967} showed that these pure mixing rotations suffice to connect any two densities on $\mathcal{M}_{(N_\textrm{elec},B)}$.
\begin{proposition}[Reachability by an occupied--virtual rotation]
\label{prop:decomposition}
\propReachabilityBody
\end{proposition}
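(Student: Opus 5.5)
I would prove \cref{prop:decomposition} by an explicit principal-angle (CS) decomposition, building $\mathbf{X}$ directly rather than invoking abstract Lie theory. Here $\mathfrak{p}$ is the set \cref{eq:grassmann_tangent_generators} taken at $\mathbf{R}_0$.

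\textbf{Step 1: adapted basis and principal angles.} Choose orthonormal $\mathbf{C}_O\in\mathbb{R}^{B\times O}$ and $\mathbf{C}_V\in\mathbb{R}^{B\times V}$ spanning the ranges of $\mathbf{P}_0$ and $\mathbf{\Pi}_0$. In the basis $[\mathbf{C}_O,\mathbf{C}_V]$, the condition $\mathbf{X}\mathbf{R}_0=-\mathbf{R}_0\mathbf{X}$ together with skew-symmetry says exactly that
\[
\mathbf{X}=\begin{pmatrix}\mathbf{0}&-\mathbf{K}^{\mathsf{T}}\\ \mathbf{K}&\mathbf{0}\end{pmatrix}
\]
for an arbitrary $\mathbf{K}\in\mathbb{R}^{V\times O}$. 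Let $\mathbf{D}_O\in\mathbb{R}^{B\times O}$ be an orthonormal basis of the range of $\mathbf{P}_1$. Take SVDs $\mathbf{C}_O^{\mathsf{T}}\mathbf{D}_O=\mathbf{A}\cos\boldsymbol{\Theta}\,\mathbf{W}^{\mathsf{T}}$ and $\mathbf{C}_V^{\mathsf{T}}\mathbf{D}_O=\mathbf{Z}\sin\boldsymbol{\Theta}\,\mathbf{W}^{\mathsf{T}}$, with principal angles $\theta_i\in[0,\pi/2]$. The same right factor $\mathbf{W}$ can be used in both because the two blocks have Gram matrices summing to $\mathbf{I}_O$.

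\textbf{Step 2: construct $\mathbf{K}$.} Set $\mathbf{K}:=\mathbf{Z}\boldsymbol{\Theta}\mathbf{A}^{\mathsf{T}}$, where $\mathbf{Z}$ is padded or truncated to $V\times O$ as needed. One checks directly that
\[
\exp(\mathbf{X})\begin{pmatrix}\mathbf{A}\\ \mathbf{0}\end{pmatrix}=\begin{pmatrix}\mathbf{A}\cos\boldsymbol{\Theta}\\ \mathbf{Z}\sin\boldsymbol{\Theta}\end{pmatrix},
\]
which is the coefficient representation of $\mathbf{D}_O\mathbf{W}$. Hence $\exp(\mathbf{X})$ maps the range of $\mathbf{P}_0$ onto the range of $\mathbf{P}_1$, and therefore $\exp(\mathbf{X})\mathbf{P}_0\exp(-\mathbf{X})=\mathbf{P}_1$.

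\textbf{Step 3: the remaining claims.} Since $\mathbf{X}$ is skew-symmetric, $\exp(\mathbf{X})$ is orthogonal. It also satisfies $\det\exp(\mathbf{X})=e^{\mathrm{tr}\,\mathbf{X}}=1$, so $\exp(\mathbf{X})\in\mathrm{SO}(B)$. The transport statement is the subspace reformulation of Step 2.

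\textbf{Main obstacle.} The delicate point is the SVD bookkeeping when $V<O$, or when some $\theta_i=\pi/2$. In those cases $\mathbf{Z}$ is only partially determined by the second SVD. I would handle this by completing $\mathbf{Z}$ to have orthonormal columns inside the range of $\mathbf{C}_V$ on the needed components. This works because whenever $\sin\theta_i\neq0$ the columns of $\mathbf{Z}\sin\boldsymbol{\Theta}$ determine the corresponding columns of $\mathbf{Z}$. When $\sin\theta_i=0$, the corresponding entry of $\boldsymbol{\Theta}$ is zero, so the column of $\mathbf{Z}$ there is irrelevant to $\mathbf{K}$. The argument also requires the number of angles exceeding zero to be at most $\min(O,V)$, which follows from rank considerations.
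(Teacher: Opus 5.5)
Your proof is correct and is essentially the paper's argument. Both bring the two occupied subspaces into principal-angle normal form: the paper through the full CS decomposition of $\widetilde{\mathbf{Q}}=\widetilde{\mathbf{O}}_0^{\mathsf{T}}\widetilde{\mathbf{O}}_1$, you through its thin version, the SVD of $\mathbf{C}_O^{\mathsf{T}}\mathbf{D}_O$. Both then place $\boldsymbol{\Theta}$ in the occupied--virtual off-diagonal block of the generator, so your $\mathbf{K}=\mathbf{Z}\boldsymbol{\Theta}\mathbf{A}^{\mathsf{T}}$ is the paper's $\widetilde{\mathbf{O}}'_0\widetilde{\mathbf{K}}(\widetilde{\mathbf{O}}'_0)^{\mathsf{T}}$ written in the $\mathbf{P}_0$-adapted basis, with $\mathbf{A}$ and $\mathbf{Z}$ playing the role of the blocks of $\widetilde{\mathbf{L}}$.

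Working with the thin $B\times O$ frames lets you handle $O>V$ directly through the rank bound instead of passing to complementary projectors. One small correction: the only undetermined columns of $\mathbf{Z}$ are those with $\theta_i=0$. At $\theta_i=\pi/2$ it is a column of $\mathbf{A}$ that is free, not of $\mathbf{Z}$. Either way this is harmless, for exactly the reason you give.
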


\begin{proof}
If \(O>V\), consider instead the complementary projectors
\(\widetilde{\mathbf{\Pi}}_a=\mathbf{I}-\widetilde{\mathbf{P}}_a\),
which have rank \(V\) and codimension \(O\).
Their reflections are
\(-\widetilde{\mathbf{R}}_a\), so they define the same anticommuting generator space
\(\mathfrak{p}\).
Moreover, a rotation transports
\(\widetilde{\mathbf{\Pi}}_0\) to \(\widetilde{\mathbf{\Pi}}_1\) if and only if it transports
\(\widetilde{\mathbf{P}}_0\) to \(\widetilde{\mathbf{P}}_1\).
It therefore suffices to prove the case \(O\le V\).

Since the physical densities in the orthonormal basis are themselves orthogonal projectors, we can always find
\(\widetilde{\mathbf{O}}_0,\widetilde{\mathbf{O}}_1\in O(B)\) such
that
\(
\widetilde{\mathbf{P}}_a
=\widetilde{\mathbf{O}}_a
\widetilde{\mathbf{P}}_\mathrm{ref}
\widetilde{\mathbf{O}}_a^{\mathsf{T}}
\),
with
\(a\in\{0,1\}\) and
\(\widetilde{\mathbf{P}}_\mathrm{ref}
=\operatorname{diag}\!\left(
\mathbf{I}_{O},
\mathbf{0}_{V}
\right)\).
Express the second MO frame in the
first through
\begin{equation}
\widetilde{\mathbf{Q}}
=\widetilde{\mathbf{O}}_0^{\mathsf{T}}\widetilde{\mathbf{O}}_1,
\qquad
\widetilde{\mathbf{O}}_1
=\widetilde{\mathbf{O}}_0\widetilde{\mathbf{Q}}.
\label{eq:relative_mo_rotation}
\end{equation}
Consequently, in the \(\widetilde{\mathbf{O}}_0\)-MO basis,
\begin{equation}
\widetilde{\mathbf{O}}_0^{\mathsf{T}}
\widetilde{\mathbf{P}}_0
\widetilde{\mathbf{O}}_0
=\widetilde{\mathbf{P}}_\mathrm{ref},
\qquad
\widetilde{\mathbf{O}}_0^{\mathsf{T}}
\widetilde{\mathbf{P}}_1
\widetilde{\mathbf{O}}_0
=\widetilde{\mathbf{Q}}
\widetilde{\mathbf{P}}_\mathrm{ref}
\widetilde{\mathbf{Q}}^{\mathsf{T}}.
\label{eq:projectors_mo1_basis}
\end{equation}

The general cosine-sine (CS) decomposition states that for every orthogonal
\(\widetilde{\mathbf{D}}\in\mathrm{SO}(B)\),
there exist block-diagonal orthogonal matrices
\(
\widetilde{\mathbf{L}}
=\operatorname{diag}(
\widetilde{\mathbf{L}}_\mathrm{o},
\widetilde{\mathbf{L}}_\mathrm{v}
)
\)
and
\(
\widetilde{\mathbf{S}}
=\operatorname{diag}(
\widetilde{\mathbf{S}}_\mathrm{o},
\widetilde{\mathbf{S}}_\mathrm{v}
)
\),
as well as principal angles
\(0\le\theta_i\le\frac{\pi}{2}\),
such that
\(
\widetilde{\mathbf{D}}
=\widetilde{\mathbf{L}}
\widetilde{\mathbf{G}}
\widetilde{\mathbf{S}}^{\mathsf{T}}
\),
where, in the
\(O+O+(V-O)\)
partition,
\begin{equation}
\widetilde{\mathbf{G}}
=
\begin{bmatrix}
\cos\bm{\Theta}&\sin\bm{\Theta}&\mathbf{0}\\
-\sin\bm{\Theta}&\cos\bm{\Theta}&\mathbf{0}\\
\mathbf{0}&\mathbf{0}&\mathbf{I}_{V-O},
\end{bmatrix}
\qquad
\text{with}
\qquad
\bm{\Theta}
=\operatorname{diag}(\theta_1,\ldots,\theta_{O}).
\label{eq:cs-decomp}
\end{equation}

We proceed to apply the general CS decomposition to \(\widetilde{\mathbf{Q}}\).
Because \(\widetilde{\mathbf{L}}\) and \(\widetilde{\mathbf{S}}\) preserve the occupied--virtual splitting,
they are block-diagonal orbital rotations that leave
\(\widetilde{\mathbf{P}}_\mathrm{ref}\)
unchanged and can be freely exchanged for one another in
\begin{equation}
\widetilde{\mathbf{L}}^{\mathsf{T}}
\widetilde{\mathbf{P}}_\mathrm{ref}
\widetilde{\mathbf{L}}
=\widetilde{\mathbf{P}}_\mathrm{ref}
=\widetilde{\mathbf{S}}^{\mathsf{T}}
\widetilde{\mathbf{P}}_\mathrm{ref}
\widetilde{\mathbf{S}}.
\label{eq:cs_gauge_invariance}
\end{equation}

Still working in the \(\widetilde{\mathbf{O}}_0\)-MO basis, we obtain
\begin{equation}
\widetilde{\mathbf{O}}_0^{\mathsf{T}}
\widetilde{\mathbf{P}}_1
\widetilde{\mathbf{O}}_0
=
(
\widetilde{\mathbf{L}}
\widetilde{\mathbf{G}}
\widetilde{\mathbf{L}}^{\mathsf{T}}
)
\widetilde{\mathbf{P}}_\mathrm{ref}
(
\widetilde{\mathbf{L}}
\widetilde{\mathbf{G}}^{\mathsf{T}}
\widetilde{\mathbf{L}}^{\mathsf{T}}
).
\label{eq:active_transport_mo_basis}
\end{equation}
Returning to the orthonormal basis,
we absorb the block-diagonal factor into the MO frame by defining
\(
\widetilde{\mathbf{O}}'_0
\equiv
\widetilde{\mathbf{O}}_0\widetilde{\mathbf{L}}
\).
This gives
\begin{equation}
\begin{aligned}
\widetilde{\mathbf{P}}_0
=
\widetilde{\mathbf{O}}'_0
\widetilde{\mathbf{P}}_\mathrm{ref}
(\widetilde{\mathbf{O}}'_0)^{\mathsf{T}},
\quad
\widetilde{\mathbf{P}}_1
=
\bigl(
\widetilde{\mathbf{O}}'_0
\widetilde{\mathbf{G}}
(\widetilde{\mathbf{O}}'_0)^{\mathsf{T}}
\bigr)
\widetilde{\mathbf{P}}_0
\bigl(
\widetilde{\mathbf{O}}'_0
\widetilde{\mathbf{G}}^{\mathsf{T}}
(\widetilde{\mathbf{O}}'_0)^{\mathsf{T}}
\bigr).
\end{aligned}
\label{eq:active_transport_oao_basis}
\end{equation}

As with 2D rotations, we express the rotation in the MO basis as the matrix exponential
\(\widetilde{\mathbf{G}}=\exp(\widetilde{\mathbf{K}})\)
of the skew-symmetric matrix
\begin{equation}
\widetilde{\mathbf{K}}
=
\begin{bmatrix}
\mathbf{0}&\bm{\Theta}&\mathbf{0}\\
-\bm{\Theta}&\mathbf{0}&\mathbf{0}\\
\mathbf{0}&\mathbf{0}&\mathbf{0}
\end{bmatrix},
\label{eq:principal_generator}
\end{equation}
which, in the orthonormal basis, yields the generator
\(\mathbf{X}
=\widetilde{\mathbf{O}}'_0
\widetilde{\mathbf{K}}
(\widetilde{\mathbf{O}}'_0)^{\mathsf{T}}\)
such that the principal rotation is
\begin{equation}
\exp(\mathbf{X})
\widetilde{\mathbf{P}}_0
\exp(-\mathbf{X})
=\widetilde{\mathbf{P}}_1.
\end{equation}

It remains to verify the second defining anticommutation condition of
\(\mathfrak{p}\).
In the MO basis, the reflection operator similarly becomes diagonal and takes the form
\(
\widetilde{\mathbf{R}}_\mathrm{ref}
=\operatorname{diag}\!\left(
\mathbf{I}_{O},
-\mathbf{I}_{O},
-\mathbf{I}_{V-O}
\right)
\).
Hence, the block forms of \(\widetilde{\mathbf{K}}\) and
\(\widetilde{\mathbf{R}}_\mathrm{ref}\) imply
\(\widetilde{\mathbf{K}}\widetilde{\mathbf{R}}_\mathrm{ref}
+\widetilde{\mathbf{R}}_\mathrm{ref}\widetilde{\mathbf{K}}
=\mathbf{0}\).
Since commutation relations are basis-independent, this statement carries over into the orthonormal basis such that
\begin{equation}
\mathbf{X}\widetilde{\mathbf{R}}_0
+\widetilde{\mathbf{R}}_0\mathbf{X}
=\mathbf{0}.
\end{equation}
Finally, since \(\mathbf{X}\) is real and skew-symmetric,
\(\mathbf{U}
=\exp(\mathbf{X})\in\mathrm{SO}(B)\),
which proves the claim.
\end{proof}

For the same density pair, any other transporting rotation differs from a pure mixing rotation only by a block-diagonal orbital rotation that leaves the density unchanged~\citep{edelmanGeometryAlgorithmsOrthogonality1998}.
\begin{proposition}[Decomposition of a general transporting rotation]
\label{prop:general_rotation_decomposition}
\propGeneralRotationBody
\end{proposition}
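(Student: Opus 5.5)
The plan is to reduce the statement to Proposition~\ref{prop:decomposition} and then show that the leftover factor stabilizes $\propP_0$.

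\textbf{Step 1 (invoke reachability).} Apply Proposition~\ref{prop:decomposition} to the pair $\propP_0,\propP_1$. This gives $\mathbf{X}\in\mathfrak{p}$ with $\exp(\mathbf{X})\propP_0\exp(-\mathbf{X})=\propP_1$.

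\textbf{Step 2 (define the residual factor).} Set $\mathbf{V}:=\exp(-\mathbf{X})\propW$, so that $\propW=\exp(\mathbf{X})\mathbf{V}$ holds by construction. Since $\exp(-\mathbf{X})\in\mathrm{SO}(B)$ and $\propW\in\mathrm{SO}(B)$, the product $\mathbf{V}$ also lies in $\mathrm{SO}(B)$.

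\textbf{Step 3 (show $\mathbf{V}$ fixes $\propP_0$).} Using $\exp(-\mathbf{X})^{\mathsf{T}}=\exp(\mathbf{X})$, compute
\[
\mathbf{V}\propP_0\mathbf{V}^{\mathsf{T}}
=\exp(-\mathbf{X})\,\propW\propP_0\propW^{\mathsf{T}}\exp(\mathbf{X})
=\exp(-\mathbf{X})\,\propP_1\exp(\mathbf{X})
=\propP_0 ,
\]
where the last equality inverts the transport relation from Step 1. Hence $\mathbf{V}\propP_0=\propP_0\mathbf{V}$, and therefore $\mathbf{V}\propR_0=\propR_0\mathbf{V}$ with $\propR_0=2\propP_0-\mathbf{I}$.

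\textbf{Step 4 (block-diagonal form).} Commuting with $\propR_0$ means $\mathbf{V}$ preserves the $\pm1$ eigenspaces of $\propR_0$, namely the occupied and virtual subspaces. In the MO frame $\widetilde{\mathbf{O}}'_0$ of the previous proof, $\mathbf{V}$ is therefore $\operatorname{diag}(\mathbf{V}_\mathrm{o},\mathbf{V}_\mathrm{v})$ with orthogonal blocks. This is exactly a block-diagonal orbital rotation in the sense of \cref{eq:gauge_invariance}.

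I expect no step to be a serious obstacle. The main subtlety is that the statement only asserts $\mathbf{V}\in\mathrm{SO}(B)$, so the determinant of $\mathbf{V}$ as a whole is what matters; the two blocks need not individually have determinant $+1$. That global determinant condition follows directly from the group closure argument in Step 2.
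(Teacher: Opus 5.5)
Your proposal is correct and is essentially the paper's proof. The paper also takes $\mathbf{U}=\exp(\mathbf{X})$ from Proposition~\ref{prop:decomposition}, sets $\mathbf{V}=\mathbf{U}^{\mathsf{T}}\mathbf{W}$, and shows that $\mathbf{V}$ stabilizes the initial state and so commutes with $\mathbf{R}_0$; the only cosmetic difference is that it works with the reflections ($\mathbf{V}\mathbf{R}_0\mathbf{V}^{\mathsf{T}}=\mathbf{U}^{\mathsf{T}}\mathbf{R}_1\mathbf{U}=\mathbf{R}_0$) rather than the projectors, which is equivalent since $\mathbf{R}=2\mathbf{P}-\mathbf{I}$.
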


\begin{proof}
Define
$\mathbf{V}
=
\mathbf{U}^{\mathsf{T}}\widetilde{\mathbf{W}}$.
It lies in
$\mathrm{SO}(B)$
and gives
$\widetilde{\mathbf{W}}=\mathbf{U}\mathbf{V}$.
Both rotations transport the initial reflection to the target reflection,
so
\begin{equation}
\mathbf{V}
\widetilde{\mathbf{R}}_0
\mathbf{V}^{\mathsf{T}}
=
\mathbf{U}^{\mathsf{T}}
\widetilde{\mathbf{R}}_1
\mathbf{U}
=
\widetilde{\mathbf{R}}_0.
\end{equation}
Since \(\mathbf{V}\) is orthogonal, this is equivalent to
\(\mathbf{V}\widetilde{\mathbf{R}}_0
=
\widetilde{\mathbf{R}}_0\mathbf{V}\).
Thus \(\mathbf{V}\) is a block-diagonal orbital rotation that leaves the density unchanged.
\end{proof}

Rather than direct density prediction, \ours{}\textsubscript{rot} builds on equivariant matrix decoders~\citep{yuEfficientEquivariantGraph2023,febrerGraph2matUniversalGraph2025} to construct only an occupied-virtual rotation that moves a computationally cheap density toward the target density.
For every decoder output $\hat{\mathbf{M}}\in\mathbb{R}^{B\times B}$, we construct
\begin{equation}
    \hat{\mathbf{X}}_\mathrm{\ours}\left(\hat{\mathbf{M}}\right)
    =
    \mathbf{P}_0
    \hat{\mathbf{M}}
    \mathbf{\Pi}_0
    -
    \mathbf{\Pi}_0
    \hat{\mathbf{M}}^{\mathsf{T}}
    \mathbf{P}_0
\label{eq:groot_map}
\end{equation}
which lies in $\mathfrak{p}$, as shown in \appref{app:readout_expressivity}, Proposition~\ref{prop:groot_validity}.
The \ours{}\textsubscript{rot} map $\hat{\mathbf{M}}\mapsto\hat{\mathbf{X}}_\mathrm{\ours}(\hat{\mathbf{M}})$ is surjective onto $\mathfrak{p}$ because every valid $\mathbf{X}\in\mathfrak{p}$ is a fixed point,
i.e. choosing $\hat{\mathbf{M}}=\mathbf{X}$ yields $\hat{\mathbf{X}}_\mathrm{\ours}(\hat{\mathbf{M}})=\mathbf{X}$, as shown in \appref{app:readout_expressivity}, Proposition~\ref{prop:readout_expressivity}.

\begin{appendixproposition}[Validity of the \ours{} map]
\label{prop:groot_validity}
\propGrootValidityBody
\end{appendixproposition}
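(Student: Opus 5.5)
The plan is to check the two defining conditions of $\mathfrak{p}$ in \cref{eq:grassmann_tangent_generators} directly for $\hat{\mathbf{X}} := \hat{\mathbf{X}}_\mathrm{\ours}(\hat{\mathbf{M}})$ from \cref{eq:groot_map}. The reflection is taken at the initial density, $\mathbf{R}_0 = 2\mathbf{P}_0-\mathbf{I} = \mathbf{P}_0-\mathbf{\Pi}_0$. Throughout I use only that $\mathbf{P}_0\in\mathcal{M}_{(N_\mathrm{elec},B)}$, i.e. $\mathbf{P}_0$ is a symmetric idempotent. From this it follows that $\mathbf{\Pi}_0$ is also a symmetric idempotent, and that $\mathbf{P}_0\mathbf{\Pi}_0=\mathbf{\Pi}_0\mathbf{P}_0=\mathbf{0}$. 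No property of $\hat{\mathbf{M}}$ is needed, which is exactly why the claim holds for every input.

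\textbf{Skew-symmetry.} I transpose term by term, using $\mathbf{P}_0^{\mathsf{T}}=\mathbf{P}_0$ and $\mathbf{\Pi}_0^{\mathsf{T}}=\mathbf{\Pi}_0$. This gives $(\mathbf{P}_0\hat{\mathbf{M}}\mathbf{\Pi}_0)^{\mathsf{T}}=\mathbf{\Pi}_0\hat{\mathbf{M}}^{\mathsf{T}}\mathbf{P}_0$, so the two terms of \cref{eq:groot_map} are transposes of each other. Their difference $\mathbf{A}-\mathbf{A}^{\mathsf{T}}$, with $\mathbf{A}:=\mathbf{P}_0\hat{\mathbf{M}}\mathbf{\Pi}_0$, is therefore skew-symmetric.

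\textbf{Anticommutation.} I compute $\mathbf{R}_0\mathbf{A}$ and $\mathbf{A}\mathbf{R}_0$ separately. For the left product, $\mathbf{R}_0\mathbf{P}_0=\mathbf{P}_0^2-\mathbf{\Pi}_0\mathbf{P}_0=\mathbf{P}_0$. For the right product, $\mathbf{\Pi}_0\mathbf{R}_0=\mathbf{\Pi}_0\mathbf{P}_0-\mathbf{\Pi}_0^2=-\mathbf{\Pi}_0$. Hence $\mathbf{R}_0\mathbf{A}=\mathbf{A}$ and $\mathbf{A}\mathbf{R}_0=-\mathbf{A}$. The same reasoning gives $\mathbf{R}_0\mathbf{A}^{\mathsf{T}}=-\mathbf{A}^{\mathsf{T}}$ and $\mathbf{A}^{\mathsf{T}}\mathbf{R}_0=\mathbf{A}^{\mathsf{T}}$. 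Summing these,
\begin{equation*}
\hat{\mathbf{X}}\mathbf{R}_0+\mathbf{R}_0\hat{\mathbf{X}}=(-\mathbf{A}-\mathbf{A}^{\mathsf{T}})+(\mathbf{A}+\mathbf{A}^{\mathsf{T}})=\mathbf{0}.
\end{equation*}
In words, $\hat{\mathbf{X}}$ contains only the off-diagonal occupied--virtual blocks, and the reflection has opposite signs on those two blocks.

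There is no real obstacle here. The only point needing care is to make explicit that $\mathbf{P}_0$ is an exact orthogonal projector, which requires the initial projection $\mathbf{P}_0=\mathcal{R}(\mathbf{P}_\mathrm{MINAO})$ in the orthonormal basis. The anticommutation step relies on exact idempotency: it fails for the raw MINAO guess, which is not idempotent.
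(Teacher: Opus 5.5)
Your proof is correct and follows the paper's argument. Both set $\mathbf{A}=\mathbf{P}_0\hat{\mathbf{M}}\mathbf{\Pi}_0$, obtain skew-symmetry from $\hat{\mathbf{X}}=\mathbf{A}-\mathbf{A}^{\mathsf{T}}$, and derive anticommutation from $\mathbf{R}_0\mathbf{P}_0=\mathbf{P}_0$ and $\mathbf{\Pi}_0\mathbf{R}_0=-\mathbf{\Pi}_0$ together with the mirror identities. The only cosmetic difference is that the paper shows $\mathbf{R}_0\hat{\mathbf{X}}\mathbf{R}_0=-\hat{\mathbf{X}}$ and then invokes $\mathbf{R}_0^2=\mathbf{I}$, whereas you add the one-sided products $\mathbf{R}_0\hat{\mathbf{X}}$ and $\hat{\mathbf{X}}\mathbf{R}_0$ directly.
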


\begin{proof}
Let
$\widetilde{\mathbf{A}}
=
\widetilde{\mathbf{P}}_0
\hat{\mathbf{M}}
\widetilde{\mathbf{\Pi}}_0$.
Since the projectors are symmetric,
$\hat{\mathbf{X}}_\mathrm{\ours}(\hat{\mathbf{M}})
=
\widetilde{\mathbf{A}}-\widetilde{\mathbf{A}}^{\mathsf{T}}$
is skew-symmetric. Moreover, since
\(\widetilde{\mathbf{R}}_0\widetilde{\mathbf{P}}_0
=\widetilde{\mathbf{P}}_0\) and
\(\widetilde{\mathbf{R}}_0\widetilde{\mathbf{\Pi}}_0
=-\widetilde{\mathbf{\Pi}}_0\), together with the corresponding
right-multiplication identities, we have
\(\widetilde{\mathbf{R}}_0\widetilde{\mathbf{A}}\widetilde{\mathbf{R}}_0
=-
\widetilde{\mathbf{A}}\) and
\(\widetilde{\mathbf{R}}_0\widetilde{\mathbf{A}}^{\mathsf{T}}\widetilde{\mathbf{R}}_0
=-
\widetilde{\mathbf{A}}^{\mathsf{T}}\). Therefore,
\begin{equation}
\widetilde{\mathbf{R}}_0
\hat{\mathbf{X}}_\mathrm{\ours}(\hat{\mathbf{M}})
\widetilde{\mathbf{R}}_0
=
-\hat{\mathbf{X}}_\mathrm{\ours}(\hat{\mathbf{M}}).
\end{equation}
Since \(\widetilde{\mathbf{R}}_0^2=\mathbf{I}\), this is equivalent to
the required anticommutation relation. Thus
\(\hat{\mathbf{X}}_\mathrm{\ours}(\hat{\mathbf{M}})\) lies in
\(\mathfrak{p}\).
\end{proof}

\begin{appendixproposition}[Surjectivity of the \ours{} map]
\label{prop:readout_expressivity}
\propReadoutExpressivityBody
\end{appendixproposition}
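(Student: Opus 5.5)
The plan is to verify the fixed-point identity $\hat{\mathbf{X}}_\mathrm{\ours}(\mathbf{X})=\mathbf{X}$ for every $\mathbf{X}\in\mathfrak{p}$ directly. Surjectivity onto $\mathfrak{p}$ then follows immediately, since Proposition~\ref{prop:groot_validity} already places the image inside $\mathfrak{p}$. All matrices are in the orthonormal basis, with $\mathbf{P}_0$ the projected initial guess, $\mathbf{\Pi}_0=\mathbf{I}-\mathbf{P}_0$ and $\mathbf{R}_0=\mathbf{P}_0-\mathbf{\Pi}_0$.

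First, I translate the anticommutation condition into block language. Since $\mathbf{P}_0=\tfrac12(\mathbf{I}+\mathbf{R}_0)$ and $\mathbf{\Pi}_0=\tfrac12(\mathbf{I}-\mathbf{R}_0)$, the relation $\mathbf{X}\mathbf{R}_0=-\mathbf{R}_0\mathbf{X}$ gives
\begin{equation*}
\mathbf{X}\mathbf{P}_0=\mathbf{\Pi}_0\mathbf{X},\qquad \mathbf{X}\mathbf{\Pi}_0=\mathbf{P}_0\mathbf{X}.
\end{equation*}
Consequently $\mathbf{P}_0\mathbf{X}\mathbf{P}_0=\mathbf{P}_0\mathbf{\Pi}_0\mathbf{X}=\mathbf{0}$, and similarly $\mathbf{\Pi}_0\mathbf{X}\mathbf{\Pi}_0=\mathbf{\Pi}_0\mathbf{P}_0\mathbf{X}=\mathbf{0}$. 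So $\mathbf{X}$ has vanishing occupied--occupied and virtual--virtual blocks.

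Second, I insert $\mathbf{I}=\mathbf{P}_0+\mathbf{\Pi}_0$ on both sides:
\begin{equation*}
\mathbf{X}=(\mathbf{P}_0+\mathbf{\Pi}_0)\mathbf{X}(\mathbf{P}_0+\mathbf{\Pi}_0)=\mathbf{P}_0\mathbf{X}\mathbf{\Pi}_0+\mathbf{\Pi}_0\mathbf{X}\mathbf{P}_0 .
\end{equation*}
Skew-symmetry $\mathbf{X}^{\mathsf{T}}=-\mathbf{X}$ lets me rewrite the second term as $\mathbf{\Pi}_0\mathbf{X}\mathbf{P}_0=-\mathbf{\Pi}_0\mathbf{X}^{\mathsf{T}}\mathbf{P}_0$. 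Comparing with \cref{eq:groot_map} at $\hat{\mathbf{M}}=\mathbf{X}$ gives exactly $\mathbf{P}_0\mathbf{X}\mathbf{\Pi}_0-\mathbf{\Pi}_0\mathbf{X}^{\mathsf{T}}\mathbf{P}_0=\mathbf{X}$. Every $\mathbf{X}\in\mathfrak{p}$ is therefore attained, and the map $\hat{\mathbf{M}}\mapsto\hat{\mathbf{X}}_\mathrm{\ours}(\hat{\mathbf{M}})$ is surjective onto $\mathfrak{p}$.

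There is no real obstacle here. The only point needing care is the sign bookkeeping in turning the anticommutation relation into the block-vanishing statements, and in using skew-symmetry correctly for the transposed term. I would also note that $\mathfrak{p}$ is defined relative to $\mathbf{R}_0$, the same reflection used in the \ours{} map, so the projectors match. Combined with Proposition~\ref{prop:decomposition}, the \ours{} head can reach every target density.
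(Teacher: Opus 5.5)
Your proof is correct and follows the paper's argument: choose $\hat{\mathbf{M}}=\mathbf{X}$, use skew-symmetry to rewrite the transposed term as $\mathbf{\Pi}_0\mathbf{X}\mathbf{P}_0$, and recognize $\mathbf{P}_0\mathbf{X}\mathbf{\Pi}_0+\mathbf{\Pi}_0\mathbf{X}\mathbf{P}_0=\mathbf{X}$. You also spell out a step the paper leaves implicit: anticommutation with $\mathbf{R}_0$, together with idempotency of $\mathbf{P}_0$ (so that $\mathbf{P}_0\mathbf{\Pi}_0=\mathbf{0}$), makes the occupied--occupied and virtual--virtual blocks of $\mathbf{X}$ vanish.
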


\begin{proof}
Let $\mathbf{X}
\in
\mathfrak{p}$.
Choose \(\hat{\mathbf{M}}=\mathbf{X}\). Since
\(\mathbf{X}^{\mathsf{T}}=-\mathbf{X}\), the
readout gives
\begin{equation}
\hat{\mathbf{X}}_\mathrm{\ours}(\mathbf{X})
=
\widetilde{\mathbf{P}}_0
\mathbf{X}
\widetilde{\mathbf{\Pi}}_0
+
\widetilde{\mathbf{\Pi}}_0
\mathbf{X}
\widetilde{\mathbf{P}}_0
=
\mathbf{X}.
\end{equation}
Thus every valid generator has a preimage,
and the map is surjective.
\end{proof}

The first two propositions show that any target occupied subspace is geometrically reachable and that non-minimal transporting rotations differ only by block-diagonal orbital rotations that leave the endpoint densities unchanged.
\Cref{prop:groot_validity,prop:readout_expressivity} show that the \ours{} map reaches the entire space of minimal geodesic generators while producing a valid generator for every possible neural-network output.

The \ours{}\textsubscript{rot} readout must also preserve the $SO(3)$-equivariance of the underlying matrix decoder.
A spatial rotation induces a corresponding $\mathbf{U}\in SO(B)$ in the orthonormal basis, transforming both $\mathbf{P}_0$ and $\hat{\mathbf{M}}$ as $\mathbf{Y}\rightarrow\mathbf{U}\mathbf{Y}\mathbf{U}^{\mathsf{T}}$, so \cref{eq:groot_map} gives $\hat{\mathbf{X}}_\mathrm{\ours}\rightarrow\mathbf{U}\hat{\mathbf{X}}_\mathrm{\ours}\mathbf{U}^{\mathsf{T}}$.
Hence, \ours{}\textsubscript{rot}'s covariant generator can learn to produce a rotation to any valid target density $\mathbf{\hat{P}}\approx \mathbf{P}^*$ from $\mathbf{P}_0$ (Proposition~\ref{prop:decomposition})
\begin{equation}
    \mathbf{\hat{P}}=\exp\left(\hat{\mathbf{X}}_\mathrm{\ours}(\hat{\mathbf{M}})\right)\mathbf{P}_0\exp\left(-\hat{\mathbf{X}}_\mathrm{\ours}(\hat{\mathbf{M}})\right).
    \label{eq:groot_density}
\end{equation}

\section{Observable Bounds for the Orthonormal Loss}
\label{app:loewdin-observable-bounds}

We derive the observable-error bounds summarized in~\cref{tab:loewdin-observable-scaling} at a fixed molecular geometry.
Density errors give linear bounds for one-body observables, including the dipole and real-space density.
For valid density projectors, stationarity makes the energy error quadratic.
Hamiltonian predictions inherit these bounds through their occupied eigenspaces, with an additional dependence on the HOMO--LUMO gap.

We use real matrices and the closed-shell, spin-free convention $\widetilde{\mathbf{P}}^*=\widetilde{\mathbf{C}}_{\mathrm{o}}^*\widetilde{\mathbf{C}}_{\mathrm{o}}^{*\mathsf{T}}$, where $\operatorname{tr}(\widetilde{\mathbf{P}}^*)=O$ and $O$ is the number of occupied orbitals.
The physical electron density includes an occupation factor of two.
We write $\mathbf{D}=\hat{\widetilde{\mathbf{P}}}-\widetilde{\mathbf{P}}^*$ for the density error and assume that the overlap matrix $\mathbf{S}$ is positive definite.
For the energy bounds, we assume that $E_m$ is three times continuously differentiable near the reference, with locally bounded third derivative, and stationary there under rotations between occupied and virtual orbitals.
All asymptotic statements concern small errors for a fixed sample $m$.
The bounds use unnormalized Frobenius norms, so substitution of $\|\Delta\widetilde{\mathbf{X}}\|_F=B\mathcal{L}_{\mathrm{ON}}$ gives the corresponding bounds for the loss in~\cref{eq:loss}.

\subsection{AO and Orthonormal Frobenius Errors}

The overlap eigenvalues determine how the AO and orthonormal norms weight each error direction.
Write
\begin{align}
    \mathbf{S}&=\mathbf{U}\operatorname{diag}(\lambda_1,\ldots,\lambda_B)\mathbf{U}^{\mathsf{T}},
    &0<\lambda_{\min}\leq\lambda_i\leq\lambda_{\max}.
    \label{eq:app-overlap-spectrum}
\end{align}
In this eigenbasis, let $\Delta\bar{\mathbf{X}}=\mathbf{U}^{\mathsf{T}}\Delta\mathbf{X}\mathbf{U}$.
The density transformation in~\cref{eq:lowdin_transform} gives $\|\Delta\widetilde{\mathbf{P}}\|_F^2=\sum_{i,j}\lambda_i\lambda_j|\Delta\bar{P}_{ij}|^2$, whereas the Hamiltonian transformation gives $\|\Delta\widetilde{\mathbf{H}}\|_F^2=\sum_{i,j}|\Delta\bar{H}_{ij}|^2/(\lambda_i\lambda_j)$.
Bounding each weight by its extreme values yields
\begin{align}
    \lambda_{\min}\|\Delta\mathbf{P}\|_F
    &\leq\|\Delta\widetilde{\mathbf{P}}\|_F
    \leq\lambda_{\max}\|\Delta\mathbf{P}\|_F,
    \label{eq:app-density-ao-bound}\\
    \lambda_{\max}^{-1}\|\Delta\mathbf{H}\|_F
    &\leq\|\Delta\widetilde{\mathbf{H}}\|_F
    \leq\lambda_{\min}^{-1}\|\Delta\mathbf{H}\|_F.
    \label{eq:app-hamiltonian-ao-bound}
\end{align}
Errors of the form $\mathbf{u}_i\mathbf{u}_i^{\mathsf{T}}$ attain these endpoints when $\mathbf{u}_i$ is an extreme overlap eigenvector.
The constants are therefore sharp for unrestricted symmetric errors.
Restrictions to projector tangent directions or occupied--virtual Hamiltonian blocks can lower the constants for a particular sample.

Relative to the orthonormal norm, the plain AO density norm gives more weight to directions with small overlap eigenvalues and less weight to directions with large eigenvalues.
For Hamiltonian errors, the weighting reverses.
\Cref{eq:app-density-ao-bound,eq:app-hamiltonian-ao-bound} supply the worst-case AO conversion factors in~\cref{tab:loewdin-observable-scaling}.
They do not establish that the orthonormal bound is tighter for every observable and error direction.

\subsection{Choice of Orthonormal Basis}

Every full-rank orthonormalization gives the same Frobenius loss.
To see this, take any square matrix $\mathbf{W}$ satisfying $\mathbf{W}^{\mathsf{T}}\mathbf{S}\mathbf{W}=\mathbf{I}$ and set $\mathbf{Q}=\mathbf{S}^{1/2}\mathbf{W}$.
Then $\mathbf{Q}^{\mathsf{T}}\mathbf{Q}=\mathbf{I}$ and $\mathbf{W}=\mathbf{S}^{-1/2}\mathbf{Q}$.
The resulting Hamiltonian and density representations are $\mathbf{Q}^{\mathsf{T}}\widetilde{\mathbf{H}}\mathbf{Q}$ and $\mathbf{Q}^{\mathsf{T}}\widetilde{\mathbf{P}}\mathbf{Q}$, respectively.
Orthogonal invariance gives $\|\mathbf{Q}^{\mathsf{T}}\Delta\widetilde{\mathbf{X}}\mathbf{Q}\|_F=\|\Delta\widetilde{\mathbf{X}}\|_F$ for either matrix.
The symmetric L\"owdin choice additionally minimizes the sum of squared distances between the original basis functions and their orthonormal counterparts~\citep{lowdinNonOrthogonalityProblemConnected1950,carlsonOrthogonalizationProcedures1957}.

Observable-specific bounds also depend on the observable operator.
For example, an AO one-body operator $\mathbf{M}=\mathbf{S}^{1/2}\widetilde{\mathbf{M}}\mathbf{S}^{1/2}$ satisfies $|\operatorname{tr}(\mathbf{M}\Delta\mathbf{P})|\leq\|\mathbf{M}\|_F\|\Delta\mathbf{P}\|_F$, with
\begin{align*}
    \lambda_{\min}\|\widetilde{\mathbf{M}}\|_F
    \leq\|\mathbf{M}\|_F
    \leq\lambda_{\max}\|\widetilde{\mathbf{M}}\|_F.
\end{align*}
Thus, a direct AO bound can have a smaller or larger constant than its orthonormal counterpart.

\subsection{General Density Errors}

An unconstrained density prediction can incur a first-order energy error even at a stationary reference.
Under our occupation convention, $\nabla E_m[\widetilde{\mathbf{P}}^*]=2\widetilde{\mathbf{H}}^*$, so Taylor expansion gives
\begin{align*}
    E_m[\widetilde{\mathbf{P}}^*+\mathbf{D}]-E_m[\widetilde{\mathbf{P}}^*]
    &=2\langle\widetilde{\mathbf{H}}^*,\mathbf{D}\rangle_F
    +\mathcal{O}(\|\mathbf{D}\|_F^2).
\end{align*}
Cauchy--Schwarz bounds the magnitude of the linear term by $2\|\widetilde{\mathbf{H}}^*\|_F\|\mathbf{D}\|_F$.
Stationarity makes $\widetilde{\mathbf{H}}^*$ block diagonal in the occupied--virtual basis, so only the occupied--occupied and virtual--virtual blocks of $\mathbf{D}$ contribute to this term.
An unconstrained predictor can change these blocks at first order through occupation or charge errors.

One-body observables depend linearly on the density matrix.
For an observable with orthonormal-basis matrix $\widetilde{\mathbf{M}}$, including its fixed charge and occupation factors, Cauchy--Schwarz gives
\begin{align*}
    |\Delta M|
    =|\operatorname{tr}(\widetilde{\mathbf{M}}\mathbf{D})|
    \leq\|\widetilde{\mathbf{M}}\|_F\|\mathbf{D}\|_F.
\end{align*}
For the dipole, we apply this inequality to each Cartesian component, square the three inequalities, and add them.
With $C_{\mu,m}^2=\sum_{\alpha\in\{x,y,z\}}\|\widetilde{\boldsymbol{\mu}}_\alpha\|_F^2$, the result is $\|\Delta\boldsymbol{\mu}\|_2\leq C_{\mu,m}\|\mathbf{D}\|_F$.
This proves the dipole row of~\cref{tab:loewdin-observable-scaling}.
The nuclear contribution cancels because we compare predictions at the same geometry.

The real-space electron density is another linear function of the density matrix.
For real orthonormal basis functions $\widetilde{\chi}_\mu$, define $\Phi_{\mu\nu}(\mathbf{r})=2\widetilde{\chi}_\mu(\mathbf{r})\widetilde{\chi}_\nu(\mathbf{r})$ and $\Delta\rho(\mathbf{r})=\sum_{\mu,\nu}D_{\mu\nu}\Phi_{\mu\nu}(\mathbf{r})$.
Its $L_1$ norm measures the integrated absolute density error and satisfies
\begin{align*}
    \|\Delta\rho\|_{L_1}
    \leq 2\sum_{j=1}^{B}|d_j|
    \leq 2\sqrt{B}\|\mathbf{D}\|_F,
\end{align*}
where $d_j$ are the eigenvalues of $\mathbf{D}$.
To obtain the first inequality, diagonalize $\mathbf{D}$ and rotate the basis functions by the same orthogonal matrix.
Then $\Delta\rho=2\sum_j d_j\psi_j^2$ with $\int\psi_j^2=1$, so the triangle inequality applies term by term.
The second inequality is Cauchy--Schwarz.
Taking $\mathbf{D}=\mathbf{I}$ attains both inequalities, so the induced Frobenius-to-$L_1$ constant for unrestricted symmetric errors is $C_{1,m}=2\sqrt{B}$.

The $L_2$ density error depends on the spatial overlap of the basis-function products.
Assuming these products are square-integrable, define $\Gamma_{(\mu\nu),(\kappa\ell)}=\int\Phi_{\mu\nu}(\mathbf{r})\Phi_{\kappa\ell}(\mathbf{r})\,\mathrm{d}\mathbf{r}$.
Expanding the squared density error gives
\begin{align*}
    \|\Delta\rho\|_{L_2}^2
    &=\operatorname{vec}(\mathbf{D})^{\mathsf{T}}\boldsymbol{\Gamma}\operatorname{vec}(\mathbf{D})
    \leq\lambda_{\max}(\boldsymbol{\Gamma})\|\mathbf{D}\|_F^2.
\end{align*}
Thus $\|\Delta\rho\|_{L_2}\leq C_{2,m}\|\mathbf{D}\|_F$, where $C_{2,m}=\sqrt{\lambda_{\max}(\boldsymbol{\Gamma})}$.
The $L_2$ row in~\cref{tab:loewdin-observable-scaling} refers to the norm itself, whose bound is linear in $\|\mathbf{D}\|_F$.
Its square has a quadratic bound.
Substituting~\eqref{eq:app-density-ao-bound} into these observable bounds supplies one factor of $\lambda_{\max}$ for each power of the density error.

\subsection{Grassmann-Constrained Density Errors}

For a valid rank-$O$ projector, the leading density error only mixes occupied and virtual orbitals.
This is the constraint that removes the first-order energy term.
In the occupied--virtual basis, write $\widetilde{\mathbf{P}}^*=\operatorname{diag}(\mathbf{I}_O,\mathbf{0})$ and parameterize a nearby projector by
\begin{align*}
    \mathbf{K}(\boldsymbol{X})
    &=\begin{pmatrix}\mathbf{0}&\boldsymbol{X}^{\mathsf{T}}\\-\boldsymbol{X}&\mathbf{0}\end{pmatrix},
    &\widetilde{\mathbf{P}}(\boldsymbol{X})
    &=e^{\mathbf{K}(\boldsymbol{X})}\widetilde{\mathbf{P}}^*e^{-\mathbf{K}(\boldsymbol{X})}.
\end{align*}
Here $\boldsymbol{X}\in\mathbb{R}^{(B-O)\times O}$ contains the occupied--virtual rotation parameters.
Because $\mathbf{K}$ is skew-symmetric, this construction preserves rank and idempotency.
Expansion around $\boldsymbol{X}=\mathbf{0}$ gives
\begin{align}
    \mathbf{D}
    &=\begin{pmatrix}\mathbf{0}&-\boldsymbol{X}^{\mathsf{T}}\\-\boldsymbol{X}&\mathbf{0}\end{pmatrix}
    +\begin{pmatrix}-\boldsymbol{X}^{\mathsf{T}}\boldsymbol{X}&\mathbf{0}\\\mathbf{0}&\boldsymbol{X}\boldsymbol{X}^{\mathsf{T}}\end{pmatrix}
    +\mathcal{O}(\|\boldsymbol{X}\|_F^3).
    \label{eq:app-grassmann-expansion}
\end{align}
The diagonal blocks begin at second order.
The squared error satisfies $\|\mathbf{D}\|_F^2=2\|\boldsymbol{X}\|_F^2+\mathcal{O}(\|\boldsymbol{X}\|_F^4)$.

To make the energy constant precise, let $\mathbf{x}=\operatorname{vec}(\boldsymbol{X})$ and let $\mathbf{A}_m$ be the Hessian of $f_m(\mathbf{x})=E_m[\widetilde{\mathbf{P}}(\boldsymbol{X})]$ at $\mathbf{x}=\mathbf{0}$.
Stationarity gives $\nabla f_m(\mathbf{0})=\mathbf{0}$, hence $f_m(\mathbf{x})-f_m(\mathbf{0})=\mathbf{x}^{\mathsf{T}}\mathbf{A}_m\mathbf{x}/2+\mathcal{O}(\|\mathbf{x}\|_2^3)$.
Define $h_m=\|\mathbf{A}_m\|_2/2$, where $\|\cdot\|_2$ denotes the spectral norm.
The factor of two accounts for the two off-diagonal blocks in the Frobenius norm of a tangent perturbation.
We obtain
\begin{align}
    \big|E_m[\hat{\widetilde{\mathbf{P}}}]-E_m[\widetilde{\mathbf{P}}^*]\big|
    &\leq\frac{h_m}{2}\|\mathbf{D}\|_F^2
    +\mathcal{O}(\|\mathbf{D}\|_F^3).
    \label{eq:app-groot-energy-bound}
\end{align}
This local bound holds for nearby rank-$O$ projectors, including predictions from both GROOT constructions.
The dipole and real-space density bounds remain linear because their maps are exactly linear in $\mathbf{D}$.

The projector error also has an exact geometric interpretation.
For principal angles $\theta_i$ between the predicted and reference occupied subspaces,
\begin{align}
    \|\mathbf{D}\|_F^2
    =2\sum_{i=1}^{O}\sin^2\theta_i
    =2\bigl(O-\operatorname{tr}(\hat{\widetilde{\mathbf{P}}}\widetilde{\mathbf{P}}^*)\bigr).
    \label{eq:app-principal-angle-distance}
\end{align}
\cref{eq:app-principal-angle-distance} connects the Frobenius error to the orbital-projection score without choosing individual occupied orbitals.

\subsection{Hamiltonian-Induced Density Errors}

A Hamiltonian prediction determines a valid density projector through its lowest $O$ eigenvectors.
Write $\hat{\widetilde{\mathbf{H}}}=\widetilde{\mathbf{H}}^*+\mathbf{V}$ and let $g=\Delta\epsilon_{\mathrm{HL}}=\epsilon_{\mathrm{LUMO}}^*-\epsilon_{\mathrm{HOMO}}^*>0$.
We assume that $\widetilde{\mathbf{P}}^*$ projects onto the lowest $O$ eigenvectors of $\widetilde{\mathbf{H}}^*$ and that $\|\mathbf{V}\|_2<g/2$, which keeps the occupied and virtual eigenvalues separated.
In the reference eigenbasis, first-order expansion of the projector and eigenvalue equations gives
\begin{align*}
    \mathbf{D}
    &=\begin{pmatrix}\mathbf{0}&\boldsymbol{T}^{\mathsf{T}}\\\boldsymbol{T}&\mathbf{0}\end{pmatrix}
    +\mathcal{O}\bigl(\|\mathbf{V}\|_F^2/g^2\bigr),
    &T_{ai}&=-\frac{V_{ai}}{\epsilon_a^*-\epsilon_i^*}.
\end{align*}
Indices $i$ and $a$ denote occupied and virtual orbitals, respectively, and the matrix remainder is in Frobenius norm.
The first-order diagonal blocks vanish by idempotency, and the occupied--virtual eigenvalue equation is $(\epsilon_a^*-\epsilon_i^*)T_{ai}+V_{ai}=0$.
Thus
\begin{align*}
    \|\mathbf{D}\|_F^2
    &=2\sum_{i\in\mathrm{o},\,a\in\mathrm{v}}\frac{|V_{ai}|^2}{(\epsilon_a^*-\epsilon_i^*)^2}
    +\mathcal{O}\bigl(\|\mathbf{V}\|_F^3/g^3\bigr)\\
    &\leq\frac{\|\mathbf{V}\|_F^2}{g^2}
    +\mathcal{O}\bigl(\|\mathbf{V}\|_F^3/g^3\bigr).
\end{align*}
The inequality uses $\epsilon_a^*-\epsilon_i^*\geq g$ and $2\sum_{i,a}|V_{ai}|^2\leq\|\mathbf{V}\|_F^2$ for symmetric $\mathbf{V}$.
Equivalently, $\|\mathbf{D}\|_F\leq\|\mathbf{V}\|_F/g+\mathcal{O}(\|\mathbf{V}\|_F^2/g^2)$, which gives the inverse-gap factor for the dipole and density bounds.

For the energy rebuilt from this induced density, substitution into~\eqref{eq:app-groot-energy-bound} gives
\begin{align}
    \big|E_m[\hat{\widetilde{\mathbf{P}}}]-E_m[\widetilde{\mathbf{P}}^*]\big|
    &\leq\frac{h_m}{2g^2}\|\mathbf{V}\|_F^2
    +\mathcal{O}\bigl(\|\mathbf{V}\|_F^3/g^3\bigr).
    \label{eq:app-hamiltonian-energy-bound}
\end{align}
The curvature $h_m$ and the remainder constants depend on the sample and energy functional.
\cref{eq:app-hamiltonian-energy-bound} gives the inverse-gap-squared factor in~\cref{tab:loewdin-observable-scaling} without assuming how these constants vary across molecules.
Using~\eqref{eq:app-hamiltonian-ao-bound} adds a factor of $\lambda_{\min}^{-1}$ to linear bounds and $\lambda_{\min}^{-2}$ to quadratic bounds in the AO Hamiltonian error.

For the fixed independent-particle functional $E_{0,m}[\mathbf{P}]=\operatorname{tr}(\widetilde{\mathbf{H}}^*\mathbf{P})$, we can retain each occupied--virtual gap to obtain a sharper energy bound.
Substituting $\boldsymbol{X}=-\boldsymbol{T}$ to first order into~\eqref{eq:app-grassmann-expansion} gives
\begin{align}
    E_{0,m}[\hat{\widetilde{\mathbf{P}}}]-E_{0,m}[\widetilde{\mathbf{P}}^*]
    &=\sum_{i\in\mathrm{o},\,a\in\mathrm{v}}\frac{|V_{ai}|^2}{\epsilon_a^*-\epsilon_i^*}
    +\mathcal{O}\bigl(\|\mathbf{V}\|_F^3/g^2\bigr)\nonumber\\
    &\leq\frac{\|\mathbf{V}\|_F^2}{2g}
    +\mathcal{O}\bigl(\|\mathbf{V}\|_F^3/g^2\bigr).
    \label{eq:app-independent-particle-energy}
\end{align}
For this functional, $h_{0,m}=\max_{i,a}(\epsilon_a^*-\epsilon_i^*)$.
The generic curvature bound would have a quadratic coefficient larger by $h_{0,m}/g$.
The inverse-gap bound in~\eqref{eq:app-independent-particle-energy} follows from keeping the individual gaps and does not require $h_{0,m}$ to scale with $g$.
An explicit occupation factor of two doubles the energy and its bound.
Evaluating energy directly from the predicted Hamiltonian eigenvalues can instead introduce a linear error through their first-order shifts.

\subsection{Rotation Covariance and the Entrywise Term}

Both AO and orthonormal Frobenius losses are invariant under physical rotations for the orthonormal spherical-harmonic shell convention used here.
Such rotations act through an orthogonal matrix $\mathbf{R}$, so $\|\mathbf{R}^{\mathsf{T}}\Delta\mathbf{X}\mathbf{R}\|_F=\|\Delta\mathbf{X}\|_F$.
The symmetric L\"owdin transformation also preserves the same rotation law for the matrix representations because
\begin{align}
    (\mathbf{R}^{\mathsf{T}}\mathbf{S}\mathbf{R})^{\pm1/2}
    =\mathbf{R}^{\mathsf{T}}\mathbf{S}^{\pm1/2}\mathbf{R}.
    \label{eq:app-lowdin-rotation-covariance}
\end{align}
Order-dependent orthogonalizers, such as Cholesky or Gram--Schmidt, need not satisfy the covariance law in~\eqref{eq:app-lowdin-rotation-covariance}, although they give the same invariant Frobenius loss.
General nonorthogonal AO coordinate changes can alter the raw AO Frobenius norm.

An entrywise matrix $L_1$ term does not share this orthogonal invariance.
Writing $\|\mathbf{A}\|_{1,1}=\sum_{i,j}|A_{ij}|$ and $\|\mathbf{A}\|_1=\max_j\sum_i|A_{ij}|$, we obtain
\begin{align*}
    \|\Delta\widetilde{\mathbf{P}}\|_{1,1}
    &\leq\|\mathbf{S}^{1/2}\|_1^2\|\Delta\mathbf{P}\|_{1,1},
    &\|\Delta\widetilde{\mathbf{H}}\|_{1,1}
    &\leq\|\mathbf{S}^{-1/2}\|_1^2\|\Delta\mathbf{H}\|_{1,1}.
\end{align*}
These inequalities follow from vectorization and $\|\mathbf{A}\otimes\mathbf{A}\|_1=\|\mathbf{A}\|_1^2$.
The constants depend on the overlap eigenvectors as well as the eigenvalues.
The overlap bounds and orthogonal invariance proved above therefore apply to the Frobenius loss in~\cref{eq:loss}.
Adding an entrywise term does not preserve those guarantees for the full loss.

\section{Directional Sensitivity in the Overlap Spectrum}
\label{app:loewdin-sensitivity}

This appendix formalizes the relative sensitivity analysis of \cref{sec:experiments}. We formally introduce the sensitivity of a variable as its absolute positive directional derivative. We construct the eigenbasis from the overlap matrix and analytically derive the derivatives of $\mathcal{L}_\mathrm{NON}$ and $\mathcal{L}_\mathrm{ON}$.

\subsection{Sensitivity as One-Sided Directional Derivatives}
\label{app:absolute-directional-derivative} %
Let \(\mathcal{O}\) be an observable evaluated at the converged reference ground state
\(\mathbf{X}^*\in\{\mathbf{H}^*,\mathbf{P}^*\}\), and let
\(\boldsymbol{\Pi}\in\mathbb{R}^{B\times B}\) be a symmetric perturbation direction with
\(\left\|\boldsymbol{\Pi}\right\|_F=1\).
We measure the sensitivity of \(\mathcal{O}\) along \(\boldsymbol{\Pi}\) by the one-sided directional
derivative
\begin{equation}
    D_{\boldsymbol{\Pi}}\mathcal{O}(\mathbf{X}^*)
    :=
    \lim_{\epsilon\rightarrow0^+}
    \frac{\mathcal{O}(\mathbf{X}^*+\epsilon\boldsymbol{\Pi})-\mathcal{O}(\mathbf{X}^*)}{\epsilon}.
\label{eq:app-directional-derivative}
\end{equation}
The one-sided limit is required because the quantities we compare are not all differentiable at the reference. The \(L^2\) real-space density error and the Frobenius matrix norms
entering both losses vanish at \(\mathbf{X}^*\) and grow linearly in \(\epsilon\) in either direction, so their two-sided derivatives do not exist while \cref{eq:app-directional-derivative} is well defined and positive.
The \emph{sensitivity} of \(\mathcal{O}\) along \(\boldsymbol{\Pi}\) and the \emph{relative
directional sensitivity} of a loss \(\mathcal{L}\) with respect to \(\mathcal{O}\) are then
\begin{equation}
    \mathcal{S}_{\boldsymbol{\Pi}}(\mathcal{O})
    :=
    \left|D_{\boldsymbol{\Pi}}\mathcal{O}(\mathbf{X}^*)\right|,
    \qquad
    \relsens{\boldsymbol{\Pi}}(\mathcal{L},\mathcal{O})
    :=
    \frac{\mathcal{S}_{\boldsymbol{\Pi}}(\mathcal{L})}{\mathcal{S}_{\boldsymbol{\Pi}}(\mathcal{O})}.
\label{eq:app-sensitivity-definition}
\end{equation}

\subsection{The Overlap-Spectrum Basis of Directions}
Due to its importance in deriving the Lipschitz %
continuity relationships %
between the NON loss and the physical observables we probe the sensitivity in a basis derived from the overlap $\mathbf{S}$.
We write the overlap matrix in its eigenbasis,
\begin{equation}
    \mathbf{S}
    =
    \sum_{k=1}^{B}\lambda_k\mathbf{u}_k\mathbf{u}_k^{\mathsf{T}}
    =
    \mathbf{U}\operatorname{diag}(\lambda_1,\ldots,\lambda_B)\mathbf{U}^{\mathsf{T}},
    \qquad
    0<\lambda_{\min}\leq\lambda_k\leq\lambda_{\max},
\label{eq:app-sens-overlap-spectrum}
\end{equation}
consistent with \cref{eq:app-overlap-spectrum}.
Because the matrix-error bounds of \appref{app:loewdin-observable-bounds} are
mediated by \(\mathbf{S}\), we resolve the analysis in the directions built from its
eigenvectors,
\begin{equation}
    \boldsymbol{\Pi}_{ij}
    :=
    \begin{cases}
        \mathbf{u}_i\mathbf{u}_i^{\mathsf{T}}, & i=j,\\[4pt]
        \dfrac{1}{\sqrt{2}}
        \left(
            \mathbf{u}_i\mathbf{u}_j^{\mathsf{T}}
            +
            \mathbf{u}_j\mathbf{u}_i^{\mathsf{T}}
        \right), & i<j,
    \end{cases}
\label{eq:app-spectrum-basis}
\end{equation}
where the prefactors normalize each direction to \(\left\|\boldsymbol{\Pi}_{ij}\right\|_F=1\).
The \(B(B+1)/2\) matrices \(\{\boldsymbol{\Pi}_{ij}\}_{i\leq j}\) are mutually orthogonal
under the Frobenius trace product
\(\operatorname{tr}(\mathbf{A}^{\mathsf{T}}\mathbf{B})\)
and therefore form a complete orthonormal basis of the space of real symmetric
\(B\times B\) matrices, which is exactly the space in which a symmetric matrix error
\(\Delta\mathbf{X}\) lives.

In addition to probing orthonormal perturbations the chosen basis decomposes the total Frobenius matrix error in its components. Since the total Frobenius %
error is itself invariant under orthonormal basis transformations we write
\begin{equation}
    ||\Delta \mathbf{X}||_F^2 = \mathrm{tr}(\Delta \mathbf{X}^2) = \mathrm{tr}((\mathbf{U}^{\mathsf{T}}\Delta \mathbf{X}\mathbf{U})^2) = \mathrm{tr}(\Delta \bar{\mathbf{X}}^2) = ||\Delta \bar{\mathbf{X}} ||_F^2.
\end{equation}
The total Frobenius error is then the sum of the squared components in the overlap basis, which we project out using the previously defined projectors,
\begin{align}
    ||\Delta \bar{\mathbf{X}}||_F^2
    &=
    \sum_{i}\left(\Delta\bar{\mathbf{X}}\right)_{ii}^2
    +
    2\sum_{i<j}\left(\Delta\bar{\mathbf{X}}\right)_{ij}^2
    \nonumber\\
    &=
    \sum_{i}\left(\mathbf{u}_i^{\mathsf{T}}\Delta\mathbf{X}\,\mathbf{u}_i\right)^2
    +
    2\sum_{i<j}\left(\mathbf{u}_i^{\mathsf{T}}\Delta\mathbf{X}\,\mathbf{u}_j\right)^2
    \nonumber\\
    &=
    \sum_{i\leq j}\mathrm{tr}\!\left(\boldsymbol{\Pi}_{ij}\Delta\mathbf{X}\right)^2
    \equiv
    \sum_{i\leq j}\Delta\bar{x}_{ij}^2 .
\label{eq:app-frobenius-components}
\end{align}
This decomposition leads us to define the binned matrix error along a direction as the
root mean square of its elements.
Indexing the logarithmically %
spaced bins in \(\sqrt{\lambda_i\lambda_j}\) with \(b\), the binned error is given by
\begin{equation}
    \Delta\bar{x}_b
    \equiv
    \sqrt{\left\langle \Delta\bar{x}_{ij}^2 \right\rangle_b}
    =
    \left(
        \frac{1}{N_b}
        \sum_{(m,i,j)\in b}
        \left(\Delta\bar{x}^{(m)}_{ij}\right)^2
    \right)^{1/2},
    \quad
    N_b
    \equiv
    \left|\left\{(m,i,j)\,:\,\sqrt{\lambda^{(m)}_i\lambda^{(m)}_j}\in b\right\}\right|.
\label{eq:app-binned-matrix-error}
\end{equation}
Here the superscript \(m\) indexes molecules.
Squaring makes the bins additive and connects the binned errors back to total prediction error over the dataset
\begin{equation}
    \sum_b N_b\,\Delta\bar{x}_b^2
    =
    \sum_m\left\|\Delta\mathbf{X}^{(m)}\right\|_F^2 .
\label{eq:app-binned-budget}
\end{equation}
After picking the basis $\boldsymbol{\Pi}_{ij}$ in \cref{eq:app-spectrum-basis}, we use the shorthands $D_{ij} \equiv D_{\boldsymbol{\Pi}_{ij}}$, $\mathcal{S}_{ij} \equiv \mathcal{S}_{\boldsymbol{\Pi}_{ij}}$ and $\relsens{ij} \equiv \relsens{\boldsymbol{\Pi}_{ij}}$.

\subsection{Loss Sensitivities}
The reason \cref{eq:app-spectrum-basis} is the natural basis for this analysis is that it diagonalizes the overlap matrix $\mathbf{S}\mathbf{u}_i = \lambda_i \mathbf{u}_i$ and therefore the transformation of both the non-orthonormal %
and the orthonormal loss functions $\mathcal{L}_\mathrm{NON}$, $\mathcal{L}_\mathrm{ON}$. Applying the orthonormalizing basis transformations on the perturbation %
basis yields
\begin{equation}
    \mathbf{S}^{1/2}\boldsymbol{\Pi}_{ij}\mathbf{S}^{1/2}
    =
    \sqrt{\lambda_i\lambda_j}\,\boldsymbol{\Pi}_{ij},
    \qquad
    \mathbf{S}^{-1/2}\boldsymbol{\Pi}_{ij}\mathbf{S}^{-1/2}
    =
    \frac{1}{\sqrt{\lambda_i\lambda_j}}\,\boldsymbol{\Pi}_{ij}.
\label{eq:app-congruence-eigenvalue}
\end{equation}
Both losses are the normalized Frobenius loss \(\|\Delta\mathbf{X}\|_F/B\) of
\cref{eq:loss}, evaluated in the non-orthonormal and the orthonormal basis respectively.
Perturbing along a single direction leaves \(\Delta\mathbf{X}=\epsilon\boldsymbol{\Pi}_{ij}\),
so \cref{eq:app-directional-derivative} together with \cref{eq:app-congruence-eigenvalue}
gives
\begin{equation}
    \mathcal{S}_{ij}(\mathcal{L}_\mathrm{NON})
    =
    \frac{\left\|\boldsymbol{\Pi}_{ij}\right\|_F}{B}
    =
    \frac{1}{B},
    \qquad
    \mathcal{S}_{ij}(\mathcal{L}_\mathrm{ON})
    =
    \frac{\left\|\mathbf{S}^{s/2}\boldsymbol{\Pi}_{ij}\mathbf{S}^{s/2}\right\|_F}{B}
    =
    \frac{\left(\lambda_i\lambda_j\right)^{s/2}}{B},
\label{eq:app-sens-losses}
\end{equation}
with \(s=+1\) for \(\mathbf{X}=\mathbf{P}\) and \(s=-1\) for \(\mathbf{X}=\mathbf{H}\)
(\cref{eq:lowdin_transform}).
The non-orthonormal loss weights every direction equally, while the orthonormal loss
carries the factor $(\lambda_i\lambda_j)^{s/2}$ with which the direction enters the
orthonormal representation, the same factor that mediates the observable bounds of
\appref{app:loewdin-observable-bounds}.
\subsection{Observable Sensitivities}

\textbf{Energy} The derivative of the energy $E$ with respect to the density at the ground state is per definition the Hamiltonian matrix
\begin{equation}
\frac{\partial E}{\partial \mathbf{P}} = \mathbf{H}.
\end{equation}
Hence the sensitivity, i.e. the directional derivative of energy with respect to a perturbation, %
is given by
\begin{equation}
    \mathcal{S}_{ij}(E)
    =
    \left|\mathrm{tr}\!\left(\mathbf{H}^*\boldsymbol{\Pi}_{ij}\right)\right|
    =
    \sqrt{2-\delta_{ij}}
    \left|\left(\mathbf{U}^{\mathsf{T}}\mathbf{H}^*\mathbf{U}\right)_{ij}\right|
\label{eq:app-sens-energy}
\end{equation}
so it reduces to rotating $\mathbf{H}^*$ into the overlap eigenbasis with $\mathbf{U}$ and reading off its $(i,j)$ entry. Note that this is the unconstrained linear response with the Hamiltonian matrix fully pointing out of the Grassmannian manifold.

\textbf{Dipole} Since the dipole is linear in the density, $\mu_\alpha(\mathbf{P}) = \mathrm{tr}(\mathbf{P}\,\boldsymbol{\mu}_\alpha)$ with the dipole-integral matrices $\boldsymbol{\mu}_\alpha$, $\alpha\in\{x,y,z\}$, so is the dipole error with respect to a perturbation %
$\epsilon \boldsymbol{\Pi}_{ij}$
\begin{equation}
    \Delta\mu_\alpha = \epsilon \mathrm{tr}(\boldsymbol{\mu}_\alpha \boldsymbol{\Pi}_{ij}) = \epsilon\sqrt{2-\delta_{ij}}
    \left(\mathbf{U}^{\mathsf{T}}\boldsymbol{\mu}_\alpha\mathbf{U}\right)_{ij}
\end{equation}

The reported observable is the error magnitude $\|\Delta\boldsymbol{\mu}\|_2$. Making use of the componentwise error linearity we obtain
\begin{equation}
    \mathcal{S}_{ij}\!\left(\left\|\Delta\boldsymbol{\mu}\right\|_2\right)=
    \sqrt{2-\delta_{ij}}
    \left(
        \sum_{\alpha\in\{x,y,z\}}
        \left(\mathbf{U}^{\mathsf{T}}\boldsymbol{\mu}_\alpha\mathbf{U}\right)_{ij}^2
    \right)^{1/2}.
\label{eq:app-sens-dipole}
\end{equation}

\textbf{$L^1$ density error} The real-space density is linear in the density matrix, $\rho(\mathbf{r}) = \sum_{\mu\nu}P_{\mu\nu}\chi_\mu(\mathbf{r})\chi_\nu(\mathbf{r})$, so with the eigenvector orbitals $\bar{u}_i(\mathbf{r}) := \sum_\mu (\mathbf{u}_i)_\mu \chi_\mu(\mathbf{r})$ the perturbation is again exact in $\epsilon$, $\Delta\rho(\epsilon,\mathbf{r}) = \epsilon\sqrt{2-\delta_{ij}}\,\bar{u}_i(\mathbf{r})\bar{u}_j(\mathbf{r})$, and the one-sided limit gives
\begin{equation}
    \mathcal{S}_{ij}\!\left(\left\|\Delta\rho\right\|_{L^1}\right)
    =
    \sqrt{2-\delta_{ij}}
    \left\|\bar{u}_i\bar{u}_j\right\|_{L^1}
    =
    \sqrt{2-\delta_{ij}}
    \int
    \left|
        \bar{u}_i(\mathbf{r})
        \bar{u}_j(\mathbf{r})
    \right|
    \,\mathrm{d}\mathbf{r},
\label{eq:app-sens-density-l1}
\end{equation}
the absolute charge displaced along the direction.
On the diagonal the prefactor is $\sqrt{2-\delta_{ii}}=1$ and the integrand $\bar{u}_i^2$ is already non-negative, so the absolute value drops and the integral collapses onto the overlap itself,
\begin{equation}
    \mathcal{S}_{ii}\!\left(\left\|\Delta\rho\right\|_{L^1}\right)
    =
    \int\bar{u}_i(\mathbf{r})^2\,\mathrm{d}\mathbf{r}
    =
    \mathbf{u}_i^{\mathsf{T}}\mathbf{S}\mathbf{u}_i
    =
    \lambda_i .
\label{eq:app-sens-density-l1-diagonal}
\end{equation}

\textbf{$L^2$ density error} Measuring the same perturbation in the $L^2$ norm instead gives
\begin{equation}
    \mathcal{S}_{ij}\!\left(\left\|\Delta\rho\right\|_{L^2}\right)
    =
    \sqrt{2-\delta_{ij}}
    \left\|\bar{u}_i\bar{u}_j\right\|_{L^2}
    =
    \sqrt{2-\delta_{ij}}
    \left(
        \int
        \bar{u}_i(\mathbf{r})^2
        \bar{u}_j(\mathbf{r})^2
        \,\mathrm{d}\mathbf{r}
    \right)^{1/2},
\label{eq:app-sens-density}
\end{equation}
the spatial overlap of the two orbital densities that span the direction.

\section{A Unifying Perspective on Hamiltonian and Density Learning}
\label{app:unifying_H_P_learning}

Naively, examining the inverse HOMO-LUMO gap scaling in \cref{tab:loewdin-observable-scaling} invites the question whether artificially increasing the gap could be utilized to reduce observable errors to zero.
\textbf{Combining \ourLoss{} with the level shifts formally unifies Hamiltonian and density matrix learning problems through a continuous interpolation parameter $\xi$.}
Assume we learn a level-shifted Hamiltonian
\begin{equation}
    \mathbf{\widetilde{H}}_\xi
    = \mathbf{\widetilde{H}} + \xi(\mathbf{I}-\mathbf{\widetilde{P}})
    = \xi \mathbf{I} + (\mathbf{\widetilde{H}} - \xi \mathbf{\widetilde{P}})\:.
    \label{eq:level_shifted_hamiltonian}
\end{equation}
Since any rescaling of the matrix target is absorbed by data normalization and $\mathcal{L}_\textrm{\ourLoss}$ is invariant under matrix shifts $\xi \mathbf{I}$ we formalize an equivalence relation for Hamiltonian learning targets
\begin{equation}
    \left[\mathbf{\widetilde{H}}_1\right] = \left[\mathbf{\widetilde{H}_2}\right] \Leftrightarrow \left( \exists \, c \in \mathbb{R}\,, \,\sigma > 0:\, \mathbf{\widetilde{H}}_1 = \sigma\big(\mathbf{\widetilde{H}_2} + c\mathbf{I}\big) \right)\:.
\end{equation}
Such that we can rewrite \cref{eq:level_shifted_hamiltonian} to obtain
$
[\mathbf{\widetilde{H}}_\xi] = [\mathbf{\widetilde{H}} - \xi \mathbf{\widetilde{P}}].
$
Dividing by $\xi$ and taking the limit $\xi \to \infty$ gives
$
   [\mathbf{\widetilde{H}}_\xi] = [-\mathbf{\widetilde{P}}],
$
meaning that in the limit of large level shifts, Hamiltonian learning reduces to density learning.
This says nothing about the optimality of the level shift, but it makes the tradeoff between learning a smooth but higher-dimensional Hamiltonian target vs learning a discontinuous but lower-dimensional density target explicit.

\section{Relation of the Orthonormal Loss to the Wavefunction Alignment Loss}
\label{app:waloss}

\citet{liEnhancingScalabilityApplicability2025} introduce
Wavefunction Alignment Loss (WALoss) for Hamiltonian prediction.
Their construction is closely related to our orthonormal Hamiltonian
supervision, but differs in several important respects.
We distinguish the mathematical loss presented in the paper from the
loss in the released implementation.\footnote{
Paper: \url{https://arxiv.org/pdf/2502.19227}, Sec.~3, Eqs.~(2)--(3).
Code:
\url{https://github.com/microsoft/SPHNet/blob/main/src/training/module.py\#L409-L421}
and
\url{https://github.com/microsoft/SPHNet/blob/main/src/training/module.py\#L441-L455}.
}
The generalized eigenvectors of \eqref{eq:generalized_eigenvalue_problem} satisfy
\begin{equation}
    \mathbf H^\ast \mathbf C^\ast
    =
    \mathbf S \mathbf C^\ast \boldsymbol{\epsilon}^\ast,
    \qquad
    {\mathbf C^\ast}^{\mathsf T}
    \mathbf S
    \mathbf C^\ast
    =
    \mathbf I.
\end{equation}
We further
define
$\Delta\mathbf H=\widehat{\mathbf H}-\mathbf H^\ast$.
\textbf{The initial WALoss formulation in Eq.~(2) of
\citet{liEnhancingScalabilityApplicability2025} is}
\begin{equation}
    \mathcal L_{\mathrm{WA}}^{(2)}
    := \left\|
        {\mathbf C_i^\ast}^{\mathsf T}
    \widehat{\mathbf H}_i
    \mathbf C_i^\ast
    -
    \boldsymbol{\varepsilon}_i^\ast
    \right\|_F^2
    \equiv \left\|
        {\mathbf C_i^\ast}^{\mathsf T}
    \widehat{\mathbf H}_i
    \mathbf C_i^\ast
    -
    {\mathbf C_i^\ast}^{\mathsf T}
    \mathbf H_i^\ast
    \mathbf C_i^\ast
    \right\|_F^2
    =
    \left\|
        {\mathbf C^\ast}^{\mathsf T}
        \Delta\mathbf H
        \mathbf C^\ast
    \right\|_F^2 \:.
    \label{eq:waloss-full}
\end{equation}
This expression has a direct relation to \oursOrthonormal{}.
With
$\widetilde{\mathbf H}
=
\mathbf S^{-1/2}\mathbf H\mathbf S^{-1/2}$
and
$\widetilde{\mathbf C}^\ast
=
\mathbf S^{1/2}\mathbf C^\ast$,
the matrix $\widetilde{\mathbf C}^\ast$ is orthogonal and
$
    {\mathbf C^\ast}^{\mathsf T}
    \Delta\mathbf H
    \mathbf C^\ast
    =
    {\widetilde{\mathbf C}^{\ast \mathsf T}}
    \Delta\widetilde{\mathbf H}
    \widetilde{\mathbf C}^\ast \:,
$
s.t. the invariance of the norm gives
\begin{equation}
    \left\|
        {\mathbf C^\ast}^{\mathsf T}
        \Delta\mathbf H
        \mathbf C^\ast
    \right\|_F
    =
    \left\|
        \Delta\widetilde{\mathbf H}
    \right\|_F .
    \label{eq:waloss-lowdin-equivalence}
\end{equation}
Hence, if all entries are weighted uniformly, \cref{eq:waloss-full}
uses exactly the same matrix geometry as our Löwdin formulation, up to
normalization and the use of a squared rather than unsquared norm.
The reference MO basis is simply a different choice of an orthonormal basis.
\textbf{However, the final WALoss equation presented in the paper is different.}
Eq.~(3) of \citet{liEnhancingScalabilityApplicability2025} assigns weights $\rho\gg\xi$ to the occupied orbitals and
LUMO, but evaluates each state separately,
\begin{equation}
    \mathcal L_{\mathrm{WA}}^{(3)}
    =
    \rho
    \sum_{j\in\mathcal I}
    \left(
        {\mathbf c_j^\ast}^{\mathsf T}
        \Delta\mathbf H
        \mathbf c_j^\ast
    \right)^2
    +
    \xi
    \sum_{j\notin\mathcal I}
    \left(
        {\mathbf c_j^\ast}^{\mathsf T}
        \Delta\mathbf H
        \mathbf c_j^\ast
    \right)^2 .
    \label{eq:waloss-paper}
\end{equation}
Thus \cref{eq:waloss-paper} contains only the diagonal of
${\mathbf C^\ast}^{\mathsf T}\Delta\mathbf H\mathbf C^\ast$.
These terms are the first-order orbital-energy perturbations.
In particular, the off-diagonal occupied--virtual couplings that drive
first-order orbital rotations are absent.
This contrasts with the preceding Eq.~(2), whose full Frobenius norm
does contain them.

\textbf{The released implementation differs again.}
It explicitly forms both the diagonal and strict off-diagonal entries
and, for its mean-squared-error (MSE) variant, uses
\footnote{
The implementation constructs the transformed residual at
\url{https://github.com/microsoft/SPHNet/blob/main/src/training/module.py\#L409-L421}
and combines the two sectors at
\url{https://github.com/microsoft/SPHNet/blob/main/src/training/module.py\#L441-L455}.
The default orbital-energy metric is MSE in
\url{https://github.com/microsoft/SPHNet/blob/main/src/utility/hydra_config.py}.
}

\begin{equation}
    \mathcal L_{\mathrm{WA}}^{\mathrm{code}}
    =
    \frac{1}{B}
    \sum_i T_{ii}^2
    +
    \frac{1}{B(B-1)}
    \sum_{i\neq j}T_{ij}^2,
    \qquad
    \mathbf T
    =
    {\mathbf C^\ast}^{\mathsf T}
    \Delta\mathbf H
    \mathbf C^\ast .
    \label{eq:waloss-code}
\end{equation}

\cref{eq:waloss-code} is therefore not the uniformly weighted
Frobenius norm in \cref{eq:waloss-lowdin-equivalence}.
The separate means give each diagonal entry $B-1$ times the weight of
each off-diagonal entry.
Moreover, the released code does not apply the occupied--LUMO
$\rho/\xi$ weighting of \cref{eq:waloss-paper}.

\textbf{A further difference is that none of these flavors of WALoss is used as a standalone orthonormal loss, marking a significant gap in the literature w.r.t. the understanding of the proposed objective.}
The final training objective in App.~K, Eq.~(19) of
\citet{liEnhancingScalabilityApplicability2025} combines it with
elementwise MSE and MAE of the Hamiltonian in the original
non-orthogonal representation.\footnote{
See \url{https://arxiv.org/pdf/2502.19227}, App.~K, Eq.~(19).
The released implementation analogously adds \texttt{HamiltonianError}
and \texttt{OrbitalEnergyError} as separate training losses:
\url{https://github.com/microsoft/SPHNet/blob/main/src/training/module.py\#L555-L559}.
}
The AO Frobenius term retains the overlap-dependent anisotropy removed
by \cref{eq:waloss-lowdin-equivalence}, while the entrywise MAE is
not invariant even under a general orthogonal change of basis.
Finally, WALoss does not account for the Hamiltonian gauge freedom.
For any $c\in\mathbb R$,

\begin{equation}
    \mathbf H
    \mapsto
    \mathbf H+c\mathbf S,
    \qquad
    \widetilde{\mathbf H}
    \mapsto
    \widetilde{\mathbf H}+c\mathbf I,
\end{equation}

leaves all generalized eigenvectors and therefore the density unchanged.
Both the paper WALoss and its implementation nevertheless penalize this
physically irrelevant direction through their diagonal terms.
ROCKET instead quotients it out explicitly,

\begin{equation}
    \mathcal L_{\mathrm{ROCKET}}
    =
    \min_{c\in\mathbb R}
    \mathcal L_{\mathrm{L\ddot{o}wdin}}
    \left(
        \Delta\widetilde{\mathbf H}-c\mathbf I
    \right),
\end{equation}

before introducing any further conditioning or occupied-subspace
structure. Thus, the occupied--LUMO emphasis of WALoss and our later
ROCKET constructions should not be conflated: WALoss reweights a fixed
reference Hamiltonian, whereas ROCKET first identifies Hamiltonians
that represent the same physical ground state.

\newpage
\section{AO versus L\"owdin Spatial Decay}
\label{app:matrix-localization}
\begin{figure}[h]
  \vspace{-0.6cm}
  \centering
  \includegraphics[width=0.97\linewidth]{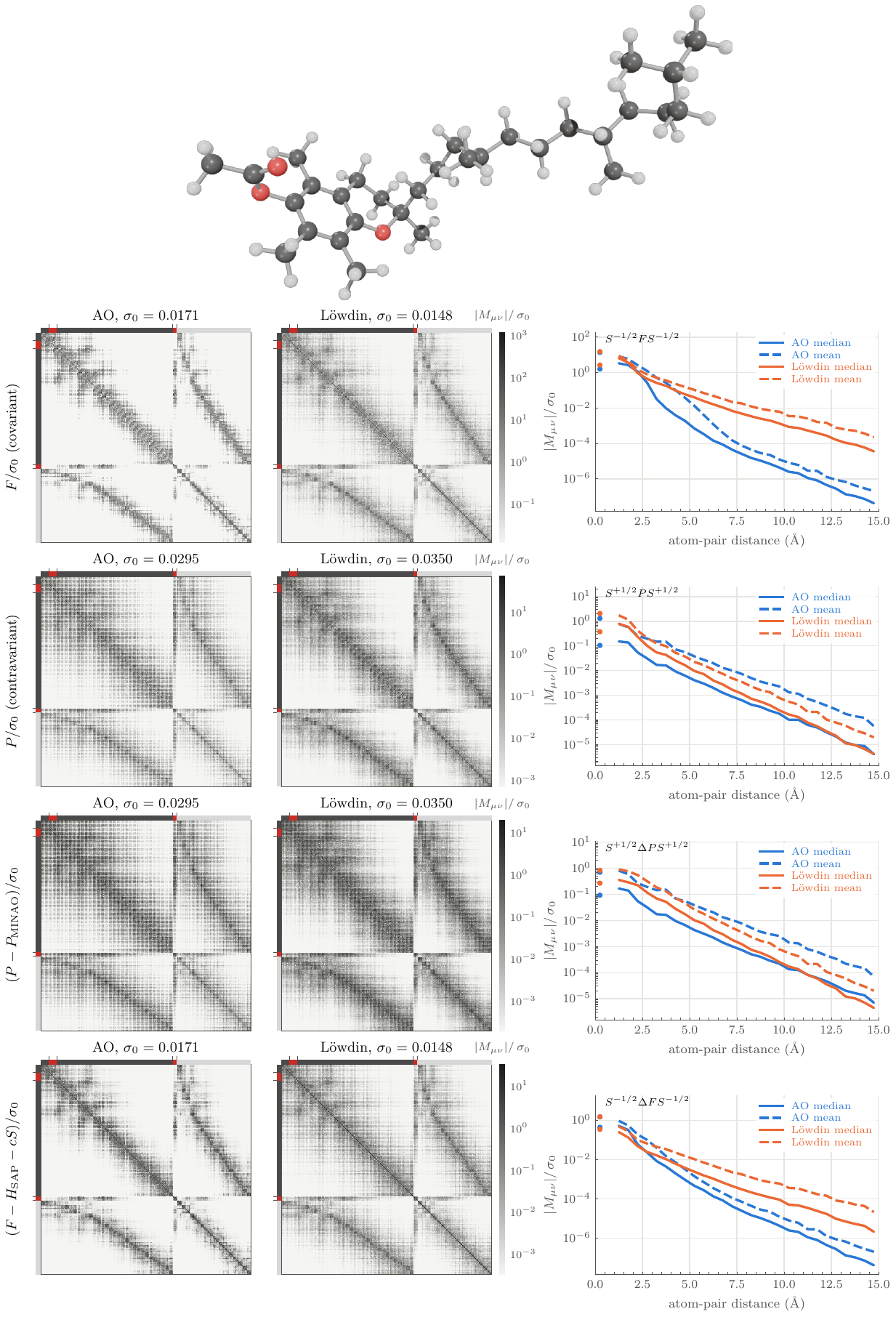}
  \vspace{-0.1cm}
  \caption{\textbf{AO versus L\"owdin spatial decay for QM40 molecule 68311 at B3LYP/def2-SVP.}
  Columns are $\mathbf{H}$, $\mathbf{P}$, $\mathbf{P}-\mathbf{P}_{\textrm{MINAO}}$, and $\mathbf{H}-\mathbf{H}_{\textrm{SAP}}-c\mathbf{S}$,
  each in the AO (a.k.a. NON) basis and after L\"owdin orthogonalization, scaled by the dataset root-mean-square (RMS) $\sigma_0$.
  AO entries decay with atom-pair distance.
  L\"owdin entries remain large after the AO entries have dropped.}
\end{figure}

\newpage
\section{Datasets}
\label{app:experimental-configuration}

We evaluate on five datasets.
QM9 is the training set of the first regime, and QM40 tests the size extrapolation of the QM9-trained models.
QMugs is the training set of the second regime and tests size extrapolation up to 660 electrons.
3BPA and Transition1x test the transfer of the QMugs-trained models to conformational and reactive chemistry.
\Cref{tab:train-dataset,tab:eval-dataset,tab:qmugs-dataset,tab:3bpa-dataset,tab:transition1x-dataset} provide one table per dataset.
The upper block of each table describes the release and our copy of it.
Each lower block covers one split, the rule that produced it, and the compute stages of \cref{tab:compute-stages} we ran on it.

\newcommand{\splithead}[1]{\midrule\multicolumn{3}{@{}l}{\textbf{#1}}\\\addlinespace[2pt]}

\paragraph{Shared dataset conventions.}
From every release we take only the atomic numbers and Cartesian coordinates.
Released energies, forces and other properties do not enter any loss or metric.
We recompute every electronic-structure quantity with the pipeline of \cref{tab:dft-reference}.
All molecules are neutral and closed-shell.
A checkpoint's element embedding is fixed by its training set.
QM9-trained checkpoints know H, C, N, O and F, so we evaluate them only on datasets restricted to these elements.
QMugs-trained checkpoints know H, C, N, O, F, P, S and Cl, so we can evaluate them on every dataset without Br or I.
Each evaluation subset is a frozen list of sample identifiers, drawn with NumPy's \texttt{default\_rng} under the seed given in the tables.
Where a table stratifies by size, we assign each molecule to the electron bin $\lfloor N_\mathrm{elec}/20\rfloor\cdot 20$ and draw the same number of molecules from every bin, so each size carries equal weight.

\paragraph{DFT reference.}
\label{app:dft-reference}
All reference labels use restricted Kohn--Sham DFT at B3LYP/def2-SVP.
We run it with GPU4PySCF~\citep{liIntroducingGpuAcceleration2025}, the GPU backend of PySCF~\citep{sunRecentDevelopmentsPyscf2020}, with the settings in \cref{tab:dft-reference}.

\begin{table}[h]
\caption{\textbf{DFT reference pipeline (GPU4PySCF 1.8.1, B3LYP/def2-SVP, RI-JK).}
This pipeline produces the evaluation references for all five datasets, with the sample counts of each split given in \cref{tab:train-dataset,tab:eval-dataset,tab:qmugs-dataset,tab:3bpa-dataset,tab:transition1x-dataset}.
It also produces the training labels of the QMugs-trained checkpoints.
The QM9 training labels come from a separate store computed with identical settings except for a looser convergence threshold (\cref{tab:train-dataset}).
Transition1x self-consistency training uses no labels.}
\label{tab:dft-reference}
\centering
\footnotesize
\begin{tabularx}{\linewidth}{@{}llX@{}}
\toprule
\textbf{Parameter} & \textbf{Value} & \textbf{Notes} \\
\midrule
Basis set & def2-SVP & Ahlrichs split-valence double-zeta basis with polarization functions~\citep{weigendBalancedBasisSets2005}. \\
XC functional & B3LYP & Three-parameter hybrid combining exact exchange~\citep{beckeDensityFunctionalThermochemistry1993} with LYP~\citep{leeDevelopmentColleSalvetti1988} and VWN local correlation~\citep{voskoAccurateSpinDependent1980} in the parameterization of~\citet{stephensAbInitioCalculation1994}; we use the VWN-RPA form, as Gaussian does. \\
Density fitting & RI-J, RI-K & Density fitting of both the Coulomb~\citep{vahtrasIntegralApproximationsLcao1993} and the exact-exchange term~\citep{weigendFullyDirectRIHF2002}. \\
Auxiliary basis & def2-universal-JKFIT & One auxiliary basis serves both the RI-J and the RI-K contractions~\citep{weigendHartreeFockExchange2008}. \\
Spin treatment & Restricted & Closed-shell restricted Kohn--Sham. \\
Quadrature grid & Level 3 & PySCF grid level for the XC energy and potential~\citep{beckeMulticenterNumericalIntegration1988,sunRecentDevelopmentsPyscf2020}. \\
Initial guess & MINAO & Superposition of atomic density matrices in a minimal ANO basis with spherically averaged occupations, projected onto def2-SVP~\citep{almlofPrinciplesDirectScf1982,vanlentheStartingScfCalculations2006,lehtolaAssessmentInitialGuesses2019}. MINAO starts every reference SCF and is the base of the $\Delta\mathbf{P}$ readouts. \\
Hamiltonian base & SAP & $\mathbf{H}_\mathrm{SAP}=\mathbf{T}+\mathbf{V}_\mathrm{SAP}$: kinetic matrix plus a superposition of pretabulated screened atomic potentials, integrated on the level-3 grid~\citep{lehtolaAssessmentInitialGuesses2019,lehtolaEfficientImplementationSuperposition2020}. It is the base of the $\Delta\mathbf{H}$ readouts and does not start the reference SCF. \\
SCF convergence & $10^{-10}\,E_h$ & Converged when the energy changes by less than $10^{-10}\,E_h$ between cycles and the orbital-gradient norm is below $10^{-5}$ (GPU4PySCF default $\sqrt{10^{-10}}$); at most 100 cycles. \\
\bottomrule
\end{tabularx}
\end{table}

\paragraph{Compute stages.}
We evaluate each prediction in up to three stages of increasing cost (\cref{tab:compute-stages}).
The cheap stage holds every metric available at the cost of one Fock build, the budget at which we compare all models.
The SCF and force stages cost considerably more on large molecules.
On the larger test sets we therefore run them on a nested subset only, drawn equally from every electron bin so that it keeps the size distribution of the full set.
The stage rows of each test set in the dataset tables give the number of molecules per bin on which each stage ran.

\begin{table}[h]
\caption{\textbf{Compute stages, costs, and evaluated quantities for model predictions.}
Each split in \cref{tab:train-dataset,tab:eval-dataset,tab:qmugs-dataset,tab:3bpa-dataset,tab:transition1x-dataset} states the sample count evaluated at each stage.
All errors are measured relative to the reference calculations of \cref{tab:dft-reference}.}
\label{tab:compute-stages}
\centering
\footnotesize
\begin{tabularx}{\linewidth}{@{}llX@{}}
\toprule
\textbf{Stage} & \textbf{Cost} & \textbf{Computed quantities} \\
\midrule
Cheap & One Fock build & The predicted density matrix $\hat{\mathbf{P}}$ is the model output for density readouts and the density matrix after diagonalization of the predicted $\hat{\mathbf{H}}$ (\cref{eq:generalized_eigenvalue_problem}) for Hamiltonian readouts. From $\hat{\mathbf{P}}$ we build $\mathbf{F}[\hat{\mathbf{P}}]$ once. Every quantity that needs no further Fock build and no nuclear gradient counts as cheap. \\
SCF & Up to $2\times$ reference & A full SCF started from $\hat{\mathbf{P}}$ with the settings of \cref{tab:dft-reference}. A run is counted as not converged after twice the cycle count of the MINAO-started reference SCF. \\
Forces & One nuclear gradient & Nuclear forces evaluated at the density matrix $\mathbf{P}(\mathbf{F}[\hat{\mathbf{P}}])$, compared with the reference forces $-\partial E/\partial\mathbf{R}$. \\
\bottomrule
\end{tabularx}
\end{table}

\begin{table}[p]
\caption{\textbf{QM9 training splits and evaluation sample counts~\citep{ramakrishnanQuantumChemistryStructures2014}.}
The split is a deterministic size cut on the total atom count including hydrogens and involves no random draw.
Compute stages are defined in \cref{tab:compute-stages}.
The electron-count distribution is shown in \cref{fig:qm9-electron-histogram}.}
\label{tab:train-dataset}
\centering
\footnotesize
\begin{tabularx}{\linewidth}{@{}llX@{}}
\toprule
\textbf{Parameter} & \textbf{Value} & \textbf{Notes} \\
\midrule
Release & $133{,}885$ molecules & Small organic molecules with up to 9 heavy atoms (C, N, O, F), enumerated from GDB-17, with equilibrium geometries relaxed at B3LYP/6-31G(2df,p). \\
Our copy & $130{,}831$ molecules & We drop the $3{,}054$ molecules that the authors flag as uncharacterized. \\
Geometries & 0\,K minima & Only the released thermochemistry refers to 298.15\,K. \\
Elements & H, C, N, O, F & $1{,}923$ molecules contain F. \\
Size range & & 3--29 atoms, 1--9 heavy atoms, 10--74 electrons, 24--226 basis functions. \\
Training labels & $10^{-9}\,E_h$ & The QM9 checkpoints are trained on labels converged to $10^{-9}\,E_h$ and otherwise identical to \cref{tab:dft-reference}. The total energies agree with the $10^{-10}\,E_h$ evaluation reference to a median of $2.5\times10^{-10}\,E_h$, with a maximum of $1.5\times10^{-5}\,E_h$. \\
\splithead{Training split}
Rule & $N_\mathrm{atoms}\le 20$ & Molecules with at most 20 atoms. \\
Size & $103{,}790$ & All $1{,}923$ F-containing molecules have 5--20 atoms and fall into this split, so the validation and test splits contain no F. \\
\splithead{Validation split}
Rule & $N_\mathrm{atoms}\in\{21,22\}$ & Molecules with 21 or 22 atoms. \\
Elements & H, C, N, O & No molecule contains F. \\
Size & $17{,}594$ & The validation loss during training uses a fixed random subset of $5{,}000$ molecules (seed 0), which also serves for checkpoint selection. \\
\splithead{Test split}
Rule & $N_\mathrm{atoms}\ge 23$ & Uniform draw without replacement from the $9{,}447$ molecules with 23--29 atoms (seed 0). \\
Elements & H, C, N, O & No molecule contains F. \\
Size & $2{,}000$ & $68\%$ of the draw has 23 atoms; $1{,}960$ molecules have 9 heavy atoms, 37 have 8 and 3 have 7. Used in \cref{tab:ablation_qm9_test2000}. \\
Cheap & All & \\
SCF & All & \\
Forces & All & \\
\bottomrule
\end{tabularx}
\end{table}

\begin{figure}[h]
  \centering
  \includegraphics[width=0.5\linewidth]{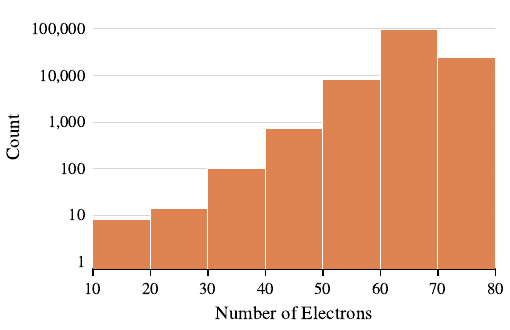}
  \caption{\textbf{Electron-count histogram of QM9.}
  The vertical axis is the molecule count on a logarithmic scale.
  QM9 contains only H, C, N, O, and F.}
  \label{fig:qm9-electron-histogram}
\end{figure}

\begin{table}[p]
\caption{\textbf{QM40 size-extrapolation dataset splits and evaluation sample counts~\citep{madushankaQm40RealisticQuantum2024}.}
These checkpoints never see QM40 during training.
To match their element vocabulary, we remove every molecule containing S or Cl.
Compute stages are defined in \cref{tab:compute-stages}.
The electron-count distribution is shown in \cref{fig:qm40-electron-histogram}.}
\label{tab:eval-dataset}
\centering
\footnotesize
\begin{tabularx}{\linewidth}{@{}llX@{}}
\toprule
\textbf{Parameter} & \textbf{Value} & \textbf{Notes} \\
\midrule
Release & $162{,}954$ molecules & Drug-like molecules from ZINC14 with 10--40 heavy atoms (C, N, O, S, F, Cl), covering the size range of 88\% of FDA-approved drugs. Geometries are pre-optimized with GFN2-xTB and relaxed at B3LYP/6-31G(2df,p), the level of theory of QM9; molecules with imaginary frequencies are removed. \\
Geometries & 0\,K minima & \\
Element filter & $99{,}303$ molecules & H, C, N, O, F only, 78--272 electrons. No molecule with 20 or 21 heavy atoms passes the filter, since all of them contain S or Cl. \\
Electron bins & $[80,280)$ & Ten bins of 20 electrons. The two molecules with 78 electrons fall below the first bin. The sparsest bin, 80--100 electrons, holds 311 molecules. \\
\splithead{Validation (hyperparameter selection)}
Rule & 50 regular, 25 hard & Used to select learning rate and weight decay (\appref{app:training-optimizer}). We remove them from the pool before drawing the test sets. \\
Elements & H, C, N, O, F & Drawn from the element-filtered pool. \\
Size & 75 & \\
Cheap & All & Energy, dipole and $L_1$ error in \cref{tab:prod_observables}, alongside the validation loss. \\
\splithead{Test split}
Rule & 100 per bin & Nested in a parent draw of 200 molecules per bin from the remaining $99{,}228$ molecules (seed 0, bins in ascending order): a first draw of 50 per bin, extended by 50 more per bin (seed 0). \\
Elements & H, C, N, O, F & Drawn from the element-filtered pool. \\
Size & $1{,}000$ & Used in \cref{tab:ablation_qm40_r100} and all QM40 figures. \\
Cheap & 100 per bin & All $1{,}000$ molecules. \\
SCF & 20 per bin & 200 molecules: 10 per bin from each of the two draws of 50 (seed 0). \\
Forces & 20 per bin & The same 200 molecules. \\
\bottomrule
\end{tabularx}
\end{table}

\begin{figure}[h]
  \centering
  \includegraphics[width=0.65\linewidth]{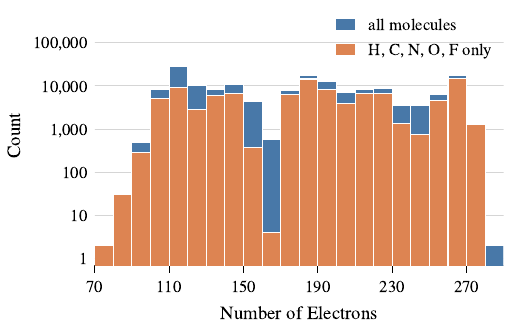}
  \caption{\textbf{Electron-count histogram of QM40.}
  The vertical axis is the molecule count on a logarithmic scale.
  The two series are the full dataset and its H, C, N, O, F subset.}
  \label{fig:qm40-electron-histogram}
\end{figure}

\begin{table}[p]
\caption{\textbf{QMugs dataset splits and evaluation sample counts for pre-training and size extrapolation~\citep{isertQmugsQuantumMechanical2021}.}
Every sample is a single conformer, and every split is disjoint by molecule.
Sets with one conformer per molecule fix that choice in a single draw (seed 0).
Compute stages are defined in \cref{tab:compute-stages}.
The electron-count distribution is shown in \cref{fig:qmugs-electron-histogram}.}
\label{tab:qmugs-dataset}
\centering
\footnotesize
\begin{tabularx}{\linewidth}{@{}llX@{}}
\toprule
\textbf{Parameter} & \textbf{Value} & \textbf{Notes} \\
\midrule
Release & \makecell[tl]{$1{,}992{,}984$ conformers\\of $665{,}911$ molecules} & Neutral, closed-shell drug-like molecules from ChEMBL~27 with 3--100 heavy atoms (H, C, N, O, F, P, S, Cl, Br, I). Each molecule has three conformers, selected by clustering GFN2-xTB metadynamics snapshots and then optimized with GFN2-xTB. \\
Geometries & 0\,K minima & GFN2-xTB local minima; 300\,K is only the temperature of the conformer search. \\
Our copy & $657{,}142$ molecules & All conformers via OpenQDC, 22--850 electrons. Molecules are identified by SMILES, which merges some ChEMBL entries. \\
\splithead{Training split}
Rule & $N_\mathrm{elec}\le 150$ & Every conformer with at most 150 electrons ($184{,}583$ conformers of $60{,}367$ molecules; 3--22 heavy atoms, 4--54 atoms) that is not in the validation split; all conformers kept. \\
Elements & H, C, N, O, F, P, S, Cl & Molecules with Br or I are excluded. \\
Size & $175{,}256$ conformers & $57{,}349$ molecules. \\
\splithead{Validation split}
Rule & Largest 5\% & The largest 5\% of the molecules with at most 150 electrons, ranked by basis-set size, then electron count: 130--150 electrons, 16--22 heavy atoms, 383--436 basis functions. \\
Elements & H, C, N, O, F, P, S, Cl & Molecules with Br or I are excluded. \\
Size & $9{,}327$ conformers & $3{,}018$ molecules; used for checkpoint selection. \\
\splithead{Test split, QMugs-trained checkpoints}
Rule & $N_\mathrm{elec}\in[160,660)$ & 25 bins of 20 electrons, one conformer per molecule; 100 molecules per bin, a first draw of 50 per bin extended by 50 more (seed 0). Disjoint from training because all conformers of a molecule share the electron count. \\
Elements & H, C, N, O, F, P, S, Cl & Molecules with Br or I are excluded. \\
Size & $2{,}500$ & 21--90 heavy atoms, 32--196 atoms. Used in \cref{tab:ablation_qmugs_ood160_r100}. \\
Cheap & 100 per bin & All $2{,}500$ molecules. \\
SCF & 20 per bin & 500 molecules: 10 per bin from each of the two draws of 50 (seed 0). \\
Forces & 20 per bin & The same 500 molecules. \\
\splithead{Test split, QM9-trained checkpoints}
Rule & $N_\mathrm{elec}\in[280,660)$ & 19 bins of 20 electrons, one conformer per molecule; 50 molecules per bin, nested in a parent draw of 200 per bin (seed 0). \\
Elements & H, C, N, O, F & The element vocabulary of the QM9-trained checkpoints. \\
Size & 950 & Used in \cref{tab:ablation_qmugs_r50}. \\
Cheap & 50 per bin & All 950 molecules. \\
SCF & 10 per bin & 190 molecules (seed 0). \\
Forces & 10 per bin & The same 190 molecules. \\
\bottomrule
\end{tabularx}
\end{table}

\begin{figure}[h]
  \centering
  \includegraphics[width=\linewidth]{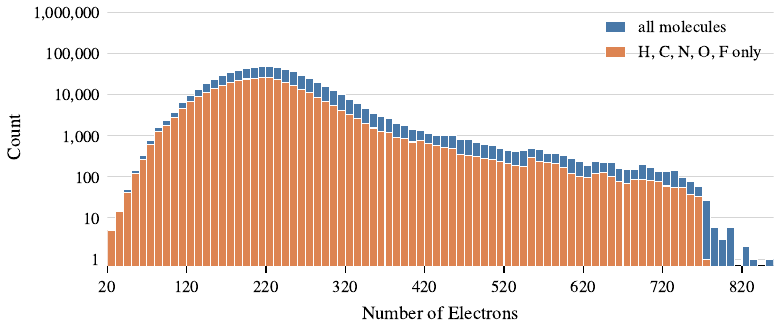}
  \caption{\textbf{Electron-count histogram of QMugs.}
  The vertical axis is the molecule count on a logarithmic scale.
  The two series are the full set and the H, C, N, O, F subset~\citep{isertQmugsQuantumMechanical2021}.}
  \label{fig:qmugs-electron-histogram}
\end{figure}

\begin{table}[p]
\caption{\textbf{3BPA conformational transfer dataset and scan sample counts~\citep{kovacsLinearAtomicCluster2021}.}
These checkpoints never see 3BPA during training.
We score each fixed-dihedral scan in full, without subsampling, because a subset of a potential-energy cut would be a different curve.
Compute stages are defined in \cref{tab:compute-stages}.}
\label{tab:3bpa-dataset}
\centering
\footnotesize
\begin{tabularx}{\linewidth}{@{}llX@{}}
\toprule
\textbf{Parameter} & \textbf{Value} & \textbf{Notes} \\
\midrule
Release & $13{,}993$ frames & A single flexible molecule, 3-(benzyloxy)pyridin-2-amine, with three consecutive rotatable bonds. The MD frames come from 25\,ps Langevin trajectories at 300, 600 and 1200\,K (1\,fs time step) started in five conformational pockets; the dihedral scans are constrained DFT geometry optimizations. The released labels are $\omega$B97X/6-31G(d), computed with ORCA. \\
Molecule & C$_{12}$H$_{12}$N$_2$O & 27 atoms, 106 electrons, 270 basis functions. \\
Used subsets & Dihedral scans & $7{,}047$ frames. We do not use the two training sets (500 frames each) or the MD test sets ($1{,}669$ / $2{,}138$ / $2{,}139$ frames at 300 / 600 / 1200\,K). \\
Dihedrals & $\alpha,\beta,\gamma$ & Three consecutive rotatable bonds of the benzyl-ether linker: $\alpha$ about the pyridine C--O bond, $\beta$ about the O--CH$_2$ bond, and $\gamma$ about the CH$_2$--phenyl bond. A fourth constraint fixes the amino group. \\
Grid & $47\times 50$ & $\alpha=55^\circ,\dots,239^\circ$ in steps of $4^\circ$; $\gamma=5^\circ,\dots,299^\circ$ in steps of $6^\circ$. \\
Geometries & 0\,K & Constrained minima at each $(\alpha,\gamma)$ with $\beta$ fixed. \\
\splithead{Test split}
Rule & Three full scans & $\beta=120^\circ$: $2{,}347$ frames. The release lacks the three grid points $(\alpha,\gamma)=(239^\circ, 155^\circ/161^\circ/167^\circ)$. $\beta=150^\circ$ and $\beta=180^\circ$: complete grids of $2{,}350$ frames. Each scan is scored as its own set. Used in \cref{tab:ablation_3bpa_dih} and \cref{fig:3bpa_and_transition1x}. \\
Size & $7{,}047$ frames & \\
Cheap & All & Since every frame is the same molecule, we remove a constant energy offset per scan and model, the median signed error. \\
Forces & All & \\
\bottomrule
\end{tabularx}
\end{table}

\begin{table}[p]
\caption{\textbf{Transition1x reactive chemistry dataset splits and sample counts for self-consistency training~\citep{schreinerTransition1xDatasetBuilding2022}.}
We inherit the release split, which assigns each molecular formula to one split, so our training, validation and test sets share no formula and no reaction.
Within each reaction we keep the reactant (R), transition state (TS) and product (P) and a window of path images around the TS.
We sign each image's Kabsch root-mean-square deviation (RMSD) to the TS by the side of the TS it lies on.
We divide $[-0.5,0.5]$\,\AA{} of this signed distance into 20 bins of 0.05\,\AA{} and keep the image nearest each bin center, dropping empty bins and geometric duplicates.
Compute stages are defined in \cref{tab:compute-stages}.}
\label{tab:transition1x-dataset}
\centering
\footnotesize
\begin{tabularx}{\linewidth}{@{}llX@{}}
\toprule
\textbf{Parameter} & \textbf{Value} & \textbf{Notes} \\
\midrule
Release & $9{,}654{,}813$ frames & Configurations on and around the reaction paths of $10{,}073$ reactions from \citet{grambowReactantsProductsTransition2020}, each with up to seven heavy atoms (C, N, O). The frames are images of nudged-elastic-band and climbing-image NEB runs, labeled at $\omega$B97x/6-31G(d). \\
Elements & H, C, N, O & 4--23 atoms. \\
Geometries & Reaction paths & Non-equilibrium configurations: many lie on or near reaction barriers. \\
Release split & \makecell[tl]{$9{,}561$ / 225 / 287\\reactions} & Training / validation / test reactions of the release. \\
Reaction coordinate & $\xi\in[0,1]$ & $\xi=\tfrac12(1-d_\mathrm{TS}/d_\mathrm{R,TS})$ on the reactant side and $\xi=\tfrac12(1+d_\mathrm{TS}/d_\mathrm{P,TS})$ on the product side, where $d$ is the Kabsch RMSD; R, TS and P sit at $\xi=0$, $0.5$ and $1$. \\
\splithead{Training split (label-free)}
Rule & All reactions & R, TS, P and the TS window for every training reaction. \\
Size & $164{,}187$ frames & 8--23 frames per reaction (median 17), 56--178 basis functions. \\
\splithead{Validation split}
Rule & 100 reactions & One reaction per equal-size stratum of the 225 validation reactions, ranked by atom count (seed 0). Per reaction we keep R, TS, P and the two window images nearest $\pm0.25$\,\AA. \\
Size & 500 frames & Used to select epoch and learning rate by the self-consistency error alone, which needs no reference. \\
\splithead{Test split}
Rule & 100 reactions & The validation draw rule applied to the 287 test reactions, with the full TS window of the training split. \\
Size & $1{,}715$ frames & 5--14 atoms, 61--124 basis functions. \\
Cheap & All & \\
SCF & All & \\
Forces & All & \\
\bottomrule
\end{tabularx}
\end{table}

\begin{table}[h]
    \centering
    \caption{\textbf{Output-scale normalization constants for different model types.}
    $\sigma_0$ is the element-wise root-mean-square of the supervision target in the representation the readout emits into, using the L\"owdin frame for the L\"owdin-basis heads and the AO basis for the AO-basis heads.
    We fit all constants on the full training split ($103{,}790$ molecules) and verify them end-to-end through the training data pipeline on $20$ random training molecules with agreement within $2.7\%$.
    The readout applies the constant internally.
    The additive heads compose $\mathrm{init} + \sigma_0\,\mathrm{sym}(\hat{\mathbf{M}})$, while the rotation head scales the decoder output before generator construction ($\hat{\mathbf{X}}$ is linear in $\hat{\mathbf{M}}$), keeping the network output at unit scale while losses and gauge constants retain their physical units.
    $\bar c = 0.404129$\,Ha is the training-set constant gauge.
    $\bar c_{\mathrm{AO}} = 0.382862$\,Ha is the corresponding constant optimal in the AO Frobenius metric.
    The two metrics give different constants with spreads of $37.3$ and $41.0$\,mHa.
    Because $\sigma_0$ anticorrelates with system size ($r=-0.86$ against the basis dimension), constants must be fitted on size-representative samples.
    For the rotation head, the decoder output $\hat{\mathbf{M}}$ enters through the projector sandwich $\hat{\mathbf{X}}(\hat{\mathbf{M}})$ (\cref{eq:groot_map}), which pins only the occupied--virtual block of $\hat{\mathbf{M}}$ at unit weight ($\mathbf{P}_0\mathbf{S}$ is idempotent) and annihilates the rest.
    The value $0.0110$ is the first-order pinned occupied--virtual block rms, measured against the purified \textsc{minao} projector.
    The true Grassmann-log value runs about $7\%$ higher ($\approx 0.0117$), and we use the first-order convention because the fit and the configuration must share one convention.
    Under the Grassmann interpretation, $\sigma_0$ is a rotation-angle scale rather than a matrix-element scale.
    The gap-shifted target admits a closed-form constant because the shift lies in the span of $\{\mathbf{I}, \boldsymbol{\Pi}^{\mathrm{ref}}\}$, giving $\sigma_0^2(\xi) = \sigma_{0,\Delta H}^2 + 2 b_1 \xi + b_2 \xi^2$ with $b_1 = \mathbb{E}\!\left[\mathrm{tr}\,\Delta - \mathrm{tr}(\Delta\Pi)\right]/B^2 = 2.03\times10^{-3}$ and geometric $b_2 = \mathbb{E}\!\left[B - N\right]/B^2 = 4.70\times10^{-3}$.
    No re-measurement is needed for a new shift ($\sigma_0 = 0.0378 / 0.0409 / 0.0471$ at $\xi = 0.05 / 0.1 / 0.2\,\mathrm{Ha}$).}
    \label{tab:delta_normalization}
    \footnotesize
    \begin{tabularx}{\textwidth}{lXrr}
        \toprule
        model type & supervision target & $\sigma_0 [\text{e}/\text{Ha}/-]$ & $1/\sigma_0$ \\
        \midrule
        $\Delta \mathbf{P}_\textrm{MINAO}$ & $\mathbf{S}^{1/2}\!\left(\mathbf{P}^{\mathrm{ref}} - \mathbf{P}^{\textsc{minao}}\right)\!\mathbf{S}^{1/2}$ & $0.0350$ & $28.6$ \\
        $\Delta \mathbf{P}_\textrm{MINAO}$ (AO) & $\mathbf{P}^{\mathrm{ref}} - \mathbf{P}^{\textsc{minao}}$ & $0.0295$ & $33.9$ \\
        $\Delta \mathbf{H}_\textrm{SAP}$ & $\mathbf{S}^{-1/2}\!\left(\mathbf{H}^{\mathrm{ref}} - \mathbf{H}^{\textsc{sap}}\right)\!\mathbf{S}^{-1/2}$ & $0.0349$ & $28.7$ \\
        $\Delta \mathbf{H}_\textrm{SAP}$ (AO) & $\mathbf{H}^{\mathrm{ref}} - \mathbf{H}^{\textsc{sap}}$ & $0.0471$ & $21.2$ \\
        $\Delta \mathbf{H}_\textrm{SAP}$ fitted gauge & $\mathbf{S}^{-1/2}\!\left(\mathbf{H}^{\mathrm{ref}} - \mathbf{H}^{\textsc{sap}} - \bar c\,\mathbf{S}\right)\!\mathbf{S}^{-1/2}$ & $0.0148$ & $67.6$ \\
        $\Delta \mathbf{H}_\textrm{SAP}$ fitted gauge (AO) & $\mathbf{H}^{\mathrm{ref}} - \mathbf{H}^{\textsc{sap}} - \bar c_{\mathrm{AO}}\,\mathbf{S}$ & $0.0171$ & $58.5$ \\
        \ours{} & $\hat{\mathbf{M}}:\; \hat{\mathbf{X}}(\hat{\mathbf{M}}) = \mathbf{P}_0\hat{\mathbf{M}}\boldsymbol{\Pi}_0 - \boldsymbol{\Pi}_0\hat{\mathbf{M}}^{\mathsf T}\mathbf{P}_0$ & $0.0110$ & $90.9$ \\
        \bottomrule
    \end{tabularx}
\end{table}
\clearpage

\newpage
\keepwithfloat[0.45]
\section{Model Hyperparameters}
\label{app:model-architectures}
\vspace{-0.2cm}

\keepwithfloat[0.45]
\subsection{Ground-State Descriptor Models}
\vspace{-0.2cm}
    \begin{table}[h]
    \caption{\textbf{QHFlow2 backbone \citep{kimMachineLearningHamiltonians2026}.} SO(2)-equivariant edgewise--nodewise updates
    in the style of eSEN \citep{fuLearningSmoothExpressive2025}, with the generative flow replaced by the \ours{} readout.}
    \label{tab:qhflow-backbone}
    \centering
    \footnotesize
    \begin{tabularx}{\textwidth}{llX}
    \toprule
    \textbf{Hyperparameter} & \textbf{Value} & \textbf{Notes} \\
    \midrule
    Number of layers & 4 & \\
    Sphere channels & 192 & Shared by backbone and decoder. \\
    Maximum degree $\ell_{\max}$ & 4 & def2-SVP off-diagonal $2\otimes2$ requires $\ell\le4$. \\
    Edge channels & 128 & \\
    Node hidden channels & 128 & \\
    Radial basis & 128 &fixed Gaussians on $[0,5]$\,\AA \\
    Cutoff radius & 5.0\,\AA & Envelope only. \\
    Decoder degree $\ell_{\max}$ & 4 & Same irreps as the backbone. \\
    Decoder channels & $192 \to 64$ & Self and pair layers at 192 per degree, then an equivariant linear to 64 per degree before the AO-block expansion. \\
    Parameter count & 81.8\,M & Identical across all model types. \\
    Pair-stream blocks & 2 & Off-diagonal head. \\
    Decoder self/pair layers & 2 each & \\
    $S^2$ grid resolution & $12\times25$ & \\
    Decoder pair radial basis & 32 & exp.-Bernstein on $[0,15]$\,\AA \\
    AO block irreps & $3\times0e+2\times1e+1\times2e$ & def2-SVP on H, C, N, O, F. \\
    \bottomrule
    \end{tabularx}
    \end{table}

\keepwithfloat[0.3]
\subsection{Machine-Learned Interatomic Potential (MLIP) Reference Models}
\vspace{-0.2cm}

\begin{table}[h]
    \centering
    \caption{\textbf{Architecture of the PaiNN, NequIP and MACE baselines in the heat maps.}
    All three models sum per-atom energies over the molecule; the PaiNN and MACE MLPs use SiLU.
    MACE hidden features are $256{\times}(0e{+}1o{+}2e)$, and its edge spherical harmonics reach $l=3$.
    The NequIP per-element energy scale is the per-element force RMS of the QM9 training split for the E and E+F models, and 1 for QMugs, which has no forces.
    The NequIP per-element shift is a least-squares fit to the training energies.
    MACE uses linear read-outs after interaction layers 1 and 2 and an MLP after layer 3, summed over layers.}
    \label{tab:mlip-architecture}
    \footnotesize
    \begin{tabularx}{\textwidth}{l>{\centering\arraybackslash}X>{\centering\arraybackslash}X>{\centering\arraybackslash}X}
    \toprule
     & PaiNN & NequIP & MACE \\
    \midrule
    Interaction layers & 3 & 4 & 3 \\
    Features / channels & 128 & 128 per $l$ & 256 \\
    Max.\ angular order & $l=1$ (vectors) & $l_{\max}=3$ & $L_{\max}=2$ \\
    Correlation order & -- & -- & 3 (body order 4) \\
    Parity & -- & off & -- \\
    Radial basis & Gaussian, 20 & Bessel, 8 & Bessel, 8 \\
    Cutoff [\AA], function & 5.0, cosine & 5.0, polynomial ($p=6$) & 5.0, polynomial ($p=5$) \\
    Read-out & MLP 128--64--1 & linear, scale, shift & linear; MLP 256--128--1 \\
    Trainable parameters & 0.6\,M & 33.5\,M &
      \begin{tabular}[t]{@{}c@{}}18.5\,M (E, E+F)\\ 20.4\,M (QMugs)\end{tabular} \\
    \bottomrule
    \end{tabularx}
    \end{table}

\newpage
\section{Model Implementation Details}
\label{app:model-implementation}

Following \citet{kimMachineLearningHamiltonians2026}, we combine the eSCNMD GNN backbone by \citet{woodUMAFamilyUniversal2025}, which builds on eSEN and eSCN~\citep{fuLearningSmoothExpressive2025,passaroReducingSO3Convolutions2023}, with equivariant matrix encoders for the initial-guess and overlap matrices and a QHNet readout~\citep{yuEfficientEquivariantGraph2023}.
From the atomic numbers $Z_a$, coordinates $\mathbf{R}_a$, and initial-guess sub-matrices $\mathbf{X}_a$ centered at each atom $a$, the backbone constructs spherical-harmonic node features through $\mathrm{SO}(2)$-equivariant message-passing layers.
The QHNet readout constructs the target matrix $\mathbf{X}$ blockwise.
Each atom pair $(i,j)$ contributes a block $\mathbf{X}_{ij}\in\mathbb{R}^{B_{Z_i}\times B_{Z_j}}$, formed by a self-tensor product of the node features for $i=j$ and a filtered tensor product across the two nodes for $i\ne j$.
Depending on the ansatz, $\mathbf{X}$ is the Hamiltonian $\mathbf{H}$, the density $\mathbf{P}$, or the \ours{}\textsubscript{rot} input $\hat{\mathbf{M}}$.
For the Hamiltonian ansatz, we obtain the density by solving \cref{eq:generalized_eigenvalue_problem}.
For the Hamiltonian and density ansätze, we symmetrize the decoded block matrix as $\mathbf{X}\leftarrow(\mathbf{X}+\mathbf{X}^{\mathsf{T}})/2$.
 The \ours{}\textsubscript{rot} readout instead uses the raw $\hat{\mathbf{M}}$.
Since we parametrize the generator in the same ON basis as the Hamiltonian and density matrices, these readout heads remain interchangeable.

Our JAX port~\citep{babuschkinDeepmindJaxEcosystem2020} corrects two bugs in the official reference implementation at commit \texttt{d82fa3a1} from 2026-08-09.
The first bug occurs in \href{https://github.com/seongsukim-ml/QHFlow2/blob/d82fa3a137d92a45a0918bdc24d30d85cb949f82/src/models/modules/escn_backbone_v4_no_t.py#L576-L584}{\texttt{escn\_backbone\_v4\_no\_t.py}, lines 576 to 584}, where the model adds contracted overlap and initial-guess blocks to the node representations.
The contraction yields one scalar feature vector per atom, but the reference code reshapes this atom-indexed tensor as if its leading dimension counted graphs in \href{https://github.com/seongsukim-ml/QHFlow2/blob/d82fa3a137d92a45a0918bdc24d30d85cb949f82/src/models/QHFlow_so2_v5_1_no_t.py#L347-L348}{\texttt{QHFlow\_so2\_v5\_1\_no\_t.py}, lines 347 to 348} and then gathers from it with the graph assignment vector of the batch.
Consequently, every atom of the $g$-th molecule in a batch receives the feature of the $g$-th atom of the batch instead of its own, so for a single molecule all atoms receive the feature of the first atom and the injected features depend on the atom ordering.
Our port keeps the atom index and, going beyond the reference, contracts each block to features of degree $\ell=0,1,2$ that it adds to the matching channels of its own node.
In our tests, this correction reduces the relative rotation-equivariance error from $10^{-3}$ to $10^{-6}$.
The second bug appears where the backbone hands off features to the tensor-product decoder at \href{https://github.com/seongsukim-ml/QHFlow2/blob/d82fa3a137d92a45a0918bdc24d30d85cb949f82/src/models/QHFlow_so2_v5_1_no_t.py#L356}{\texttt{QHFlow\_so2\_v5\_1\_no\_t.py}, line 356} for node features and \href{https://github.com/seongsukim-ml/QHFlow2/blob/d82fa3a137d92a45a0918bdc24d30d85cb949f82/src/models/QHFlow_so2_v5_1_no_t.py#L365}{line 365} for pair features.
The backbone stores features as $(N,(\ell_{\max}+1)^2,C)$ tensors, in which the magnetic index precedes the channel index within each degree $\ell$, whereas the \texttt{e3nn} irreps of the decoder hold $C$ copies of each degree with the channel index first.
The reference code flattens the tensor with a raw reshape that omits the required blockwise transposition, so every $\ell\ge1$ block reaches the decoder with its components misordered.
Our port converts between the two memory layouts explicitly.
With identical parameters and the same activation-normalization constant, our JAX decoder matches the PyTorch decoder to double-precision round-off.

\newpage
\section{Training Costs}
\label{app:training-costs}

\begin{figure}[h]
  \centering
  \includegraphics[width=\linewidth]{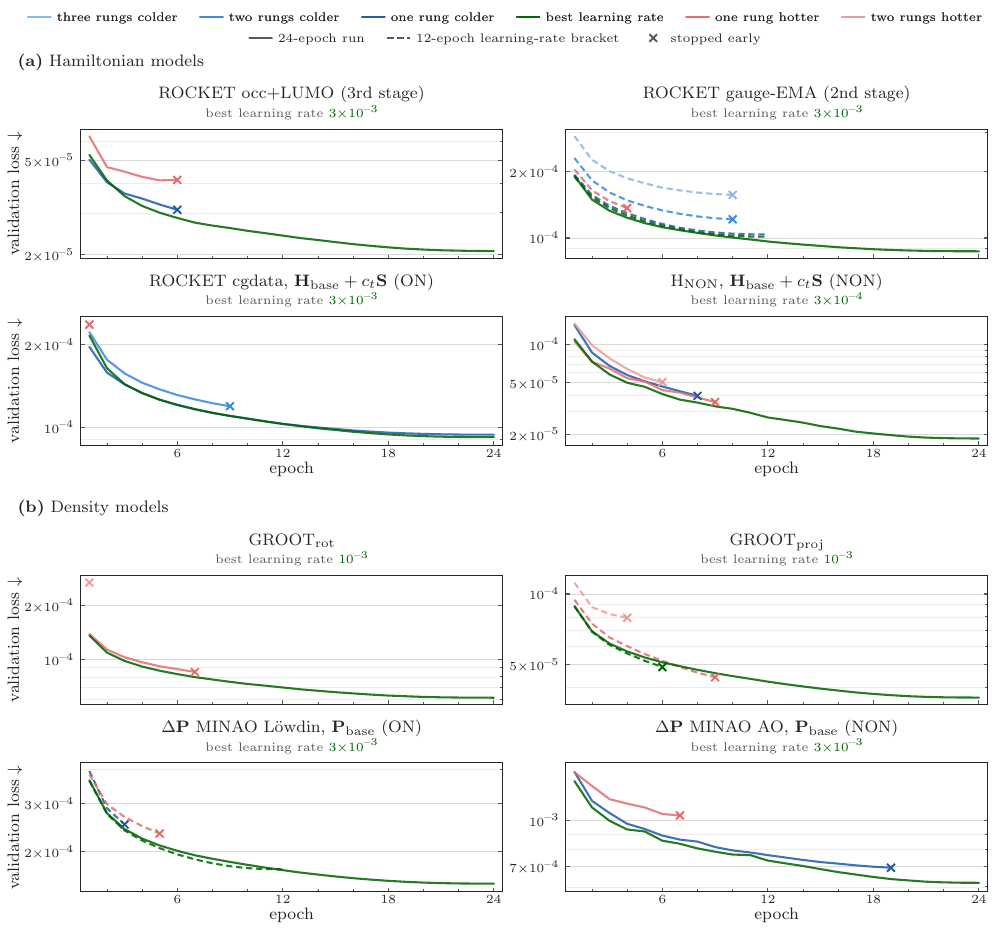}
  \caption{\textbf{Validation-loss trajectories for learning-rate selection on QMugs.}
  \textbf{(a)} Hamiltonian models.
  \textbf{(b)} Density models.
  Each panel reports the selected learning rate above the axes and colors the remaining rates by their position on the sweep.
  Solid lines show 24-epoch runs, dashed lines show 12-epoch brackets, and crosses mark runs that stopped early.
  Completed 24-epoch runs take 35--39 h on one NVIDIA H200 GPU, or about 1.5 h per epoch.}
  \label{fig:qmugs-training-curves}
\end{figure}

\newpage
\keepwithfloat[0.75]
\section{Training and Optimization}
\label{app:training-optimizer}
\vspace{-0.2cm}

In the tables of this appendix, bold and underlined entries are the lowest and second-lowest values in each column, respectively, and red marks an error above the thresholds of \cref{tab:prod_observables} (bold red marks a catastrophic error).

\begin{table}[h]

    \caption{\textbf{Training and optimization hyperparameters.}
    Muon optimizer configuration.
    The optimizer configuration is shared by all reported runs.
    Family-specific deviations are stated in the notes.
    Model-init and training-PRNG seeds are drawn from OS entropy per run and logged.}

    \label{tab:training-hyperparameters}

    \centering
    \footnotesize
    \begin{tabularx}{\textwidth}{ll>{\raggedright\arraybackslash}X}

    \toprule
    \textbf{Hyperparameter} & \textbf{Value} & \textbf{Notes} \\
    \midrule
    Batch size
    & $16$
    & \\

    Optimizer
    & Muon
    & Muon on the 2D hidden weight matrices, AdamW on all remaining
    parameters.
    Both optimizer branches use
    Nesterov momentum, following the Optax (contrib) defaults~\citep{babuschkinDeepmindJaxEcosystem2020}. We also tried the same momentum as used for standard Adam but observed no improvements.\\
    Muon momentum & $0.95$ & Default~\citep{jordanMuonOptimizerHidden2024}. \\
    Muon Newton--Schulz steps& $5$ & Default~\citep{jordanMuonOptimizerHidden2024}. \\

    Adam $(\beta_1,\beta_2)$ & $(0.9,\,0.999)$ & Default~\citep{kingmaAdamMethodStochastic2015}.\\

    Muon consistent RMS
    & $0.2$
    & Each orthogonalized $m\times n$ Muon update is scaled by
    $0.2\sqrt{\max(m,n)}$, giving approx. shape-independent
    update RMS~\citep{liuMuonScalableLlm2025}. \\

    Weight decay $\lambda_{\mathrm{wd}}$
    &
    & Tried \{3e-4, 1e-3, 3e-3, 1e-2, 3e-2\}.
    Same weight decay on the Muon and AdamW branches.
    Decay is masked and applied only to the 2D weight matrices and tensor-product path weights.
    Biases, normalizations and embeddings are undecayed. \\

    Base learning rate (lr)
    & -
    & retuned for each model family\\

    Warmup schedule & Linear & \\
    Warmup steps & $2{,}000$ & \\

    Decay schedule & Cosine & With minimum learning rate of 1e-5 $\times$ base lr\\
    Decay steps
    & $101{,}776$
    & Total $103{,}776$ steps $=16$ epochs at $6{,}486$ steps/epoch,
    batch size $16$. \\

    Training epochs
    & $16$
    & Tried $\{12,\,16,\,20\}$ tuned in favor of the baseline methods.
    Longer runs mainly improve in distribution, with no gains on the size extrapolation val sets.\\

    Parameter EMA decay
    & $0.995$
    & Tried $\{0.99,\,0.995\}$ \\

    Consecutive NaN skips
    & $10$
    & Maximum consecutive non-finite updates skipped before the run aborts. \\

    \bottomrule
    \end{tabularx}
\end{table}

\begin{table}[h]
    \caption{\textbf{\ourLoss{} restraint hyperparameters.}
    The restraint weight $\alpha_{\textrm{\ourLoss}}$ is the $3\sigma$
    weight $\alpha/9$ with $\alpha=\mathbb{E}[\mathcal{L}_{\mathrm{min}}]/\sigma_c^2$
    fitted on the full training split.
    A three-standard-deviation gauge offset then costs as much as the mean
    shift-minimized loss.
    See \cref{sec:rocket}.}
    \label{tab:rocket-hyperparameters}
    \centering
    \footnotesize
    \begin{tabularx}{\textwidth}{ll>{\raggedright\arraybackslash}X}
    \toprule
    \textbf{Hyperparameter} & \textbf{Value} & \textbf{Notes} \\
    \midrule
    Optimal shift std $\sigma_c$
    & $37.3\,\mathrm{mHa}$
    & Standard deviation of the $L_2$-optimal gauge shifts over the
    training split. \\

    Restraint weight $\alpha$
    & $1.161$
    & Calibrated as
    $\alpha_{\textrm{\ourLoss}}=
    \mathbb{E}[\mathcal{L}_{\mathrm{min}}
    (\Delta\widetilde{\mathbf H})]/\sigma_c^2$
    over the full training split ($103{,}790$ molecules). \\

    EMA decay
    & $0.99$
    & Decay of the EMA of the optimal overlap shift center.
    Initialized to training split optimal $0.404$. \\
    \bottomrule
    \end{tabularx}
\end{table}

\keepwithfloat[0.52]
\subsection{Density Models}
\label{app:training-optimizer-density}
\vspace{-0.2cm}

\keepwithfloat[0.52]
\subsubsection{NON Loss}
\vspace{-0.2cm}

\begin{table}[h]
  \centering
  \caption{\textbf{Validation and probe loss of density models trained with the NON loss across learning rates, weight decays, and seeds.}
  Masked decay with $\sigma_0$ output norm.
  Target is $\Delta$-MINAO except for the absolute-readout block.
  Validation and QM40 loss are the AO objective (masked Frobenius root-mean-square error, RMSE) on the
  QM9 validation split at the final epoch and on the fixed probe molecules (reg.\,50 / hard\,25).
  $\|W\|$ columns are per-entry RMS of the final EMA weights by optimizer branch.
  \oursOrthonormal{} models are in \cref{tab:new_generation_fb1_density}.
  }
  \label{tab:new_generation_fb1_density_ao}
  \footnotesize
  \begin{tabular*}{\textwidth}{@{\extracolsep{\fill}}cccrrrrrr@{}}
    \toprule
    & & & & &
    \multicolumn{2}{c}{QM40 loss} &
    \multicolumn{2}{c}{$\|W\|_{\mathrm{rms}}$} \\
    \cmidrule(lr){6-7}\cmidrule(lr){8-9}
    ep & lr & wd & seed & val loss & reg.\,50 & hard\,25 & Muon & Adam \\
    \midrule
    \multicolumn{9}{l}{\textit{abs $P$-learning} (no-rescale)} \\
    16 & 3e-4 & 1e-2 & 995756 & 5.73 & 16.3 & 29.6 & 0.16 & 0.55 \\
    16 & 1e-3 & 1e-2 & 995756 & 4.82 & 14.8 & \underline{27.8} & 0.17 & 0.44 \\
    \textbf{16} & \textbf{3e-3} & \textbf{1e-2} & 995756 & 4.48 & \textbf{13.8} & \textbf{26.0} & 0.26 & 0.40 \\
    " & " & " & 684851 & \underline{4.40} & \underline{14.1} & 36.9 & 0.25 & 0.41 \\
    " & " & " & 240706 & \textbf{4.38} & 18.0 & \bad{384} & 0.26 & 0.41 \\
    16 & 3e-3 & 3e-3 & 995756 & 4.66 & 14.6 & 52.9 & 0.42 & 0.50 \\
    16 & 1e-2 & 1e-2 & 995756 & 5.31 & 15.0 & \textbf{26.0} & 0.32 & 1.03 \\
    \multicolumn{9}{l}{\textit{abs $P$-learning}} \\
    \textbf{16} & \textbf{3e-3} & \textbf{1e-2} & 995756 & 4.49 & 14.8 & \bad{77.9} & 0.25 & 0.42 \\
    " & " & " & 684851 & \textbf{4.41} & \underline{14.1} & \underline{27.2} & 0.25 & 0.42 \\
    " & " & " & 240706 & \underline{4.43} & \textbf{13.9} & \textbf{26.5} & 0.25 & 0.42 \\
    \addlinespace
    \multicolumn{9}{l}{\textit{NON loss, $\Delta$-Learning with $P_\mathrm{MINAO}$}} \\
    16 & 1e-3 & 3e-3 & 982933 & 5.18 & 15.0 & 25.8 & 0.22 & 0.55 \\
    16 & 3e-3 & 3e-3 & 982933 & 4.84 & 14.5 & \underline{25.6} & 0.41 & 0.50 \\
    \textbf{16} & \textbf{3e-3} & \textbf{1e-2} & 982933 & 4.67 & 14.3 & \textbf{24.6} & 0.25 & 0.41 \\
    " & " & " & 709626 & \underline{4.51} & \underline{13.8} & 34.8 & 0.26 & 0.42 \\
    " & " & " & 744226 & \textbf{4.50} & \textbf{13.0} & 25.7 & 0.26 & 0.41 \\
    16 & 1e-2 & 3e-3 & 982933 & 7.80 & 16.9 & 28.6 & 0.79 & 1.36 \\
    \bottomrule
  \end{tabular*}
\end{table}

\keepwithfloat[0.45]
\subsubsection{\oursOrthonormal{}}
\vspace{-0.2cm}
\begin{table}[h]
  \centering
  \caption{\textbf{Validation loss of density models trained with the L\"owdin-frame \oursOrthonormal{}.}
  Masked decay with $\sigma_0$ output norm.
  Target is $\Delta$-MINAO.
  Validation is the training objective (masked Frobenius RMSE, Löwdin picture) on the
  QM9 validation split at the final epoch.
  The QM40-loss columns are the \emph{same
  objective} evaluated on the fixed probe molecules (reg.\,50 / hard\,25, same units).
  The density family ranks models (Spearman $+0.87/+0.79$ vs $E$).
  $\|W\|$ columns are per-entry RMS of the final EMA weights by optimizer branch.
  NON-loss models are in \cref{tab:new_generation_fb1_density_ao}.
  \ours{} models are in \cref{tab:new_generation_fb1_density_groot}.
  Prediction-frame and auxiliary-supervision models are in \cref{tab:new_generation_fb1_density_lowdin_variants}.
  }
  \label{tab:new_generation_fb1_density}
  \footnotesize
  \begin{tabular*}{\textwidth}{@{\extracolsep{\fill}}cccrrrrrr@{}}
    \toprule
    & & & & &
    \multicolumn{2}{c}{QM40 loss} &
    \multicolumn{2}{c}{$\|W\|_{\mathrm{rms}}$} \\
    \cmidrule(lr){6-7}\cmidrule(lr){8-9}
    ep & lr & wd & seed & val loss & reg.\,50 & hard\,25 & Muon & Adam \\
    \midrule
    \multicolumn{9}{l}{\textit{\oursOrthonormal{}}} \\
    12 & 1e-3 & 1e-2 & 0 & 1.46 & 7.54 & 9.06 & 0.17 & 0.48 \\
    " & " & " & 749811 & 1.47 & 7.10 & 9.12 & 0.17 & 0.47 \\
    12 & 1e-3 & 3e-2 & 0 & 1.40 & 7.99 & 9.51 & 0.10 & 0.35 \\
    12 & 3e-3 & 1e-3 & 0 & 1.43 & 6.03 & \underline{8.60} & 0.42 & 0.59 \\
    12 & 3e-3 & 3e-3 & 0 & 1.43 & 6.39 & 8.90 & 0.37 & 0.51 \\
    12 & 3e-3 & 1e-2 & 0 & 1.39 & 6.67 & 9.09 & 0.24 & 0.39 \\
    12 & 3e-3 & 3e-2 & 0 & 1.35 & 6.93 & 9.24 & 0.11 & 0.44 \\
    12 & 1e-2 & 1e-2 & 0 & 1.40 & 7.34 & 9.43 & 0.33 & 0.73 \\
    \addlinespace
    16 & 3e-3 & 3e-4 & 882901 & 1.33 & 6.13 & 9.47 & 0.50 & 0.62 \\
    16 & 3e-3 & 1e-3 & 118790 & 1.35 & 6.89 & 10.7 & 0.47 & 0.58 \\
    \textbf{16} & \textbf{3e-3} & \textbf{3e-3} & 154091 & \underline{1.30} & \underline{5.93} & 8.97 & 0.40 & 0.48 \\
    " & " & " & 406157 & 1.32 & \textbf{5.75} & \textbf{8.28} & 0.38 & 0.49 \\
    " & " & " & 145343 & 1.35 & 6.73 & 9.25 & 0.40 & 0.48 \\
    16 & 3e-3 & 1e-2 & 0 & \textbf{1.29} & 6.36 & 9.17 & 0.24 & 0.40 \\
    " & " & " & 261042 & \textbf{1.29} & 5.96 & 9.23 & 0.24 & 0.40 \\
    \bottomrule
  \end{tabular*}
\end{table}

\begin{table}[!h]
  \centering
  \caption{\textbf{Validation and probe loss of \ours{} density models trained with \oursOrthonormal{}.}
  Masked decay with $\sigma_0$ output norm and a $\Delta$-MINAO target.
  Validation is the training objective on the QM9 validation split at the final epoch.
  The QM40-loss columns evaluate that objective on the fixed probe molecules (reg.\,50 / hard\,25).
  $\|W\|$ columns are per-entry RMS of the final EMA weights by optimizer branch.}
  \label{tab:new_generation_fb1_density_groot}
  \footnotesize
  \begin{tabular*}{\textwidth}{@{\extracolsep{\fill}}cccrrrrrr@{}}
    \toprule
    & & & & &
    \multicolumn{2}{c}{QM40 loss} &
    \multicolumn{2}{c}{$\|W\|_{\mathrm{rms}}$} \\
    \cmidrule(lr){6-7}\cmidrule(lr){8-9}
    ep & lr & wd & seed & val loss & reg.\,50 & hard\,25 & Muon & Adam \\
    \midrule
    \multicolumn{9}{l}{\textit{+ \ours{}$_{\mathrm{proj}}$}} \\
    16 & 1e-3 & 3e-2 & 193360 & \textbf{0.38} & 1.24 & \underline{2.48} & 0.09 & 0.32 \\
    16 & 3e-3 & 3e-3 & 193360 & \underline{0.39} & 2.33 & 2.52 & 0.36 & 0.47 \\
    \textbf{16} & \textbf{3e-3} & \textbf{1e-2} & 193360 & \underline{0.39} & 1.17 & 2.97 & 0.22 & 0.36 \\
    " & " & " & 141727 & \underline{0.39} & \textbf{1.06} & \textbf{2.44} & 0.21 & 0.35 \\
    " & " & " & 899526 & \underline{0.39} & \underline{1.12} & 2.72 & 0.22 & 0.35 \\
    16 & 3e-3 & 3e-2 & 193360 & \underline{0.39} & 1.39 & 2.69 & 0.10 & 0.41 \\
    16 & 1e-2 & 3e-2 & 193360 & 0.44 & 1.18 & 2.60 & 0.11 & 0.83 \\
    \addlinespace
    \multicolumn{9}{l}{\textit{+ \ours{}$_{\mathrm{rot}}$}} \\
    16 & 1e-3 & 3e-3 & 628859 & 0.57 & 1.59 & 9.99 & 0.22 & 0.56 \\
    16 & 3e-3 & 3e-3 & 628859 & 0.56 & 1.54 & 11.8 & 0.39 & 0.48 \\
    16 & 3e-3 & 1e-2 & 628859 & \textbf{0.54} & 1.54 & 11.7 & 0.24 & 0.38 \\
    16 & 3e-3 & 3e-2 & 628859 & \underline{0.55} & 1.63 & 10.1 & 0.10 & 0.46 \\
    \textbf{16} & \textbf{1e-2} & \textbf{1e-2} & 628859 & 0.57 & 1.54 & \underline{9.82} & 0.30 & 0.77 \\
    " & " & " & 506493 & 0.60 & 1.53 & 9.90 & 0.28 & 0.74 \\
    " & " & " & 900833 & 0.58 & \underline{1.50} & \textbf{9.74} & 0.29 & 0.75 \\
    16 & 1e-2 & 3e-2 & 628859 & 0.60 & \textbf{1.47} & 9.89 & 0.11 & 1.02 \\
    " & " & " & 798434 & \bad{NaN} & \bad{NaN} & \bad{NaN} & \bad{NaN} & \bad{NaN} \\
    " & " & " & 400920 & \bad{NaN} & \bad{NaN} & \bad{NaN} & \bad{NaN} & \bad{NaN} \\
    \bottomrule
  \end{tabular*}
\end{table}

\keepwithfloat[0.3]
\subsubsection{Prediction Frame and Auxiliary Supervision}
\vspace{-0.2cm}

\begin{table}[h]
  \centering
  \caption{\textbf{Prediction-frame and auxiliary-supervision ablations for density and Hamiltonian models under \oursOrthonormal{}.}
  Density target is $\Delta$-MINAO and Hamiltonian target is $\Delta$-SAP with the training-set optimal constant gauge.
  Parenthesized validation is the model's own training objective when that objective differs from the table metric.
  The QM40-loss columns use the Löwdin objective on the fixed probe molecules (reg.\,50 / hard\,25).
  The Hamiltonian AO-frame run was killed before the QM40 probe.}
  \label{tab:new_generation_fb1_density_lowdin_variants}
  \footnotesize
  \begin{tabular*}{\textwidth}{@{\extracolsep{\fill}}cccrrrrrr@{}}
    \toprule
    & & & & &
    \multicolumn{2}{c}{QM40 loss} &
    \multicolumn{2}{c}{$\|W\|_{\mathrm{rms}}$} \\
    \cmidrule(lr){6-7}\cmidrule(lr){8-9}
    ep & lr & wd & seed & val loss & reg.\,50 & hard\,25 & Muon & Adam \\
    \midrule
    \multicolumn{9}{l}{\textit{Density \ours{}$_{\mathrm{proj}}$, pre/post supervision}} \\
    16 & 3e-3 & 1e-2 & 193360 & (2.17) & 3.75 & 5.77 & 0.23 & 0.38 \\
    \addlinespace
    \multicolumn{9}{l}{\textit{Density \ours{}$_{\mathrm{proj}}$, AO-frame prediction}} \\
    16 & 3e-3 & 1e-2 & 193360 & 0.58 & 1.39 & 3.03 & 0.21 & 0.34 \\
    \addlinespace
    \multicolumn{9}{l}{\textit{Hamiltonian \ourLoss{}, AO-frame prediction}} \\
    16 & 3e-4 & 3e-3 & 443905 & (25.9) & -- & -- & -- & -- \\
    \bottomrule
  \end{tabular*}
\end{table}

\keepwithfloat[0.68]
\subsection{Hamiltonian Models}
\label{app:training-optimizer-hamiltonian}
\vspace{-0.2cm}

\keepwithfloat[0.68]
\subsubsection{Non-Orthonormal Loss}
\vspace{-0.2cm}
\begin{table}[h]
  \centering
  \caption{\textbf{Validation loss of Hamiltonian models trained with the NON loss, with and without a constant gauge shift.}
  Masked decay with $\sigma_0$ output norm.
  Target is $\Delta$-SAP, with a training-set constant gauge in the second block only.
  Validation is the NON training objective on the
  QM9 validation split at the final epoch.
  The QM40-loss columns evaluate that objective on the fixed probe molecules (reg.\,50 / hard\,25).
  The covariant loss is gauge-dominated and nearly flat across
  models, quoted for completeness, not for ranking.
  $\|W\|$ columns are per-entry RMS of the final EMA weights by optimizer branch.
  \oursOrthonormal{} models are in \cref{tab:new_generation_fb1_hamiltonian_lowdin}.
}
  \label{tab:new_generation_fb1_hamiltonian}
  \footnotesize
  \begin{tabular*}{\textwidth}{@{\extracolsep{\fill}}cccrrrrrr@{}}
    \toprule
    & & & & &
    \multicolumn{2}{c}{QM40 loss} &
    \multicolumn{2}{c}{$\|W\|_{\mathrm{rms}}$} \\
    \cmidrule(lr){6-7}\cmidrule(lr){8-9}
    ep & lr & wd & seed & QM9 val & reg.\,50 & hard\,25 & Muon & Adam \\
    \midrule
    \multicolumn{9}{l}{\textit{abs $H$-learning}} \\
    16 & 1e-4 & 1e-3 & 684464 & 0.78 & 3.95 & 12.2 & 0.18 & 0.62 \\
    \addlinespace
    \multicolumn{9}{l}{\textit{$\Delta H_\mathrm{SAP}$-learning}} \\
    16 & 3e-6 & 1e-3 & 684464 & 2.12 & 8.7 & 18.4 & 0.18 & 0.63 \\
    16 & 1e-5 & 1e-3 & 684464 & 1.12 & 5.3 & 15.1 & 0.18 & 0.63 \\
    \textbf{16} & \textbf{3e-5} & \textbf{1e-3} & 684464 & 0.66 &  \textbf{4.0} & \textbf{13.7} & 0.18 & 0.63 \\
    16 & 1e-4 & 3e-4 & 684464 & 0.39 & \textbf{4.0} & \underline{13.8} & 0.18 & 0.62 \\
    16 & 1e-4 & 1e-3 & 684464 & 0.39 &  \textbf{4.0} & \underline{13.8} & 0.18 & 0.62 \\
    16 & 1e-4 & 3e-3 & 684464 & 0.39 & \textbf{4.0} & 14.0 & 0.18 & 0.62 \\
    16 & 3e-4 & 1e-3 & 684464 & \underline{0.27} & \bad{50.2} & \bad{119} & 0.18 & 0.62 \\
    16 & 1e-3 & 1e-3 & 684464 & \textbf{0.26} & \bad{33.8} & \bad{81.2} & 0.20 & 0.60 \\
    \addlinespace
    \multicolumn{9}{l}{\textit{$\Delta (H_\mathrm{SAP} + c_\textrm{train-set-fitted} S)$-learning}} \\
    16 & 1e-6 & 3e-3 & 684464 & 6.94 & 11.9 & 25.0 & 0.18 & 0.63 \\
    16 & 3e-6 & 1e-3 & 684464 & 1.65 & 4.8 & 15.3 & 0.18 & 0.63 \\
    16 & 3e-6 & 3e-3 & 684464 & 1.65 & 4.8 & 15.3 & 0.18 & 0.63 \\
    \textbf{16} & \textbf{1e-5} & \textbf{1e-3} & 684464 & 0.78 & \textbf{3.4} & \textbf{13.4} & 0.18 & 0.63 \\
    " & " & " & 896521 & 0.78 & \underline{3.5} & 13.7 & 0.18 & 0.63 \\
    " & " & " & 405602 & 0.92 & 5.0 & 15.2 & 0.18 & 0.63 \\
    " & " & " & 748187 & 0.79 & \underline{3.5} & \underline{13.6} & 0.18 & 0.63 \\
    16 & 3e-5 & 1e-3 & 684464 & 0.45 & 3.6 & 13.9 & 0.18 & 0.63 \\
    16 & 1e-4 & 1e-3 & 684464 & 0.27 & 4.0 & 15.3 & 0.18 & 0.63 \\
    16 & 1e-4 & 3e-3 & 684464 & 0.27 & 4.0 & 15.4 & 0.18 & 0.62 \\
    16 & 3e-4 & 1e-3 & 684464 & \underline{0.20} & \bad{29.0} & \bad{65.8} & 0.18 & 0.62 \\
    16 & 3e-4 & 3e-3 & 684464 & \underline{0.20} & \bad{28.1} & \bad{64.4} & 0.17 & 0.60 \\
    16 & 1e-3 & 1e-3 & 684464 & \textbf{0.19} & \bad{26.1} & \bad{259} & 0.20 & 0.60 \\
    16 & 3e-3 & 1e-3 & 684464 & 0.21 & \bad{2\,608} & \bad{35\,112} & 0.36 & 0.58 \\
    \bottomrule
  \end{tabular*}
\end{table}

\keepwithfloat[0.36]
\subsubsection{WALoss}
\vspace{-0.2cm}

\begin{table}[h]
  \centering
  \caption{\textbf{Validation and probe loss of Hamiltonian models trained with WALoss.}
  Target is $\Delta$-SAP, with a training-set constant gauge in the second block only.
  Validation is the WALoss training objective on the
  QM9 validation split at the final epoch.
  The QM40-loss columns evaluate each run's own WALoss objective on the fixed probe molecules (reg.\,50 / hard\,25).
  $\|W\|$ columns are per-entry RMS of the final EMA weights by optimizer branch.}
  \label{tab:new_generation_fb1_hamiltonian_waloss}
  \footnotesize
  \begin{tabular*}{\textwidth}{@{\extracolsep{\fill}}cccrrrrrr@{}}
    \toprule
    & & & & &
    \multicolumn{2}{c}{QM40 loss} &
    \multicolumn{2}{c}{$\|W\|_{\mathrm{rms}}$} \\
    \cmidrule(lr){6-7}\cmidrule(lr){8-9}
    ep & lr & wd & seed & QM9 val & reg.\,50 & hard\,25 & Muon & Adam \\
    \midrule
    \multicolumn{9}{l}{\textit{$\Delta H_\mathrm{SAP}$-learning}} \\
    \textbf{16} & \textbf{1e-4} & \textbf{1e-3} & 684464 & 11.8 & \bad{12.1} & \bad{56} & 0.18 & 0.63 \\
    \addlinespace
    \multicolumn{9}{l}{\textit{$\Delta (H_\mathrm{SAP} + c_\textrm{train-set-fitted} S)$-learning}} \\
    16 & 3e-6 & 1e-3 & 684464 & 36.0 & \bad{31.6} & \bad{87} & 0.18 & 0.63 \\
    16 & 1e-5 & 1e-3 & 684464 & 9.17 & \bad{12.1} & \bad{110} & 0.18 & 0.63 \\
    16 & 3e-5 & 1e-3 & 684464 & 3.59 & 7.18 & \bad{339} & 0.18 & 0.63 \\
    \textbf{16} & \textbf{1e-4} & \textbf{1e-3} & 684464 & \underline{2.37} & \textbf{5.01} & \textbf{20} & 0.18 & 0.63 \\
    " & " & " & 485210 & 2.77 & 5.82 & \bad{56.6} & 0.18 & 0.63 \\
    " & " & " & 710469 & 3.10 & 5.55 & 24.3 & 0.18 & 0.63 \\
    16 & 3e-4 & 1e-3 & 684464 & \textbf{1.91} & \underline{5.19} & \underline{22} & 0.18 & 0.64 \\
    16 & 1e-3 & 1e-3 & 684464 & 3.93 & 10.0 & \bad{62} & 0.21 & 0.73 \\
    \bottomrule
  \end{tabular*}
\end{table}

\keepwithfloat[0.55]
\subsubsection{Orthonormal Loss}
\vspace{-0.2cm}

\begin{table}[h]
  \centering
  \caption{\textbf{Validation and probe loss of Hamiltonian models trained with \oursOrthonormal{}.}
  Target is $\Delta$-SAP, with a training-set constant gauge in the
  $\Delta (H_\mathrm{SAP}+cS)$ block only.
  Validation is the training objective on the QM9 validation split at the final epoch.
  The QM40-loss columns evaluate that objective on the fixed probe molecules (reg.\,50 / hard\,25).
  $\|W\|$ columns are per-entry RMS of the final EMA weights by optimizer branch.}
  \label{tab:new_generation_fb1_hamiltonian_lowdin}
  \footnotesize
  \begin{tabular*}{\textwidth}{@{\extracolsep{\fill}}cccrrrrrr@{}}
    \toprule
    & & & & &
    \multicolumn{2}{c}{QM40 loss} &
    \multicolumn{2}{c}{$\|W\|_{\mathrm{rms}}$} \\
    \cmidrule(lr){6-7}\cmidrule(lr){8-9}
    ep & lr & wd & seed & QM9 val & reg.\,50 & hard\,25 & Muon & Adam \\
    \midrule
    \addlinespace
    \multicolumn{9}{l}{\textit{$\Delta H_\mathrm{SAP}$-learning}} \\
    \textbf{16} & \textbf{3e-5} & \textbf{3e-3} & 670079 & 1.60 & \textbf{3.6} & \textbf{9.7} & 0.18 & 0.63 \\
    " & " & " & 435126 & 1.61 & \underline{3.7} & 9.9 & 0.18 & 0.62 \\
    " & " & " & 301732 & 1.63 & \underline{3.7} & 10.0 & 0.18 & 0.63 \\
    \addlinespace
    \multicolumn{9}{l}{\textit{$\Delta (H_\mathrm{SAP} + c_\textrm{train-set-fitted} S)$-learning}} \\
    12 & 3e-5 & 1e-2 & 0 & 1.68 & 3.8 & 9.9 & 0.18 & 0.62 \\
    12 & 1e-4 & 1e-2 & 0 & 1.13 & 4.0 & 10.7 & 0.17 & 0.61 \\
    12 & 3e-4 & 1e-2 & 0 & 0.87 & 4.5 & 13.3 & 0.16 & 0.57 \\
    12 & 1e-3 & 1e-2 & 0 & 0.78 & 13.7 & 33.2 & 0.17 & 0.47 \\
    12 & 3e-3 & 1e-2 & 0 & \underline{0.73} & 17.5 & 33.4 & 0.23 & 0.37 \\
    \addlinespace
    16 & 1e-5 & 3e-3 & 670079 & 2.20 & 4.2 & 9.9 & 0.18 & 0.63 \\
    \textbf{16} & \textbf{3e-5} & \textbf{3e-3} & 670079 & 1.51 & 3.8 & \underline{9.7} & 0.18 & 0.62 \\
    " & " & " & 435126 & 1.52 & \underline{3.7} & \underline{9.7} & 0.18 & 0.62 \\
    " & " & " & 301732 & 1.53 & \textbf{3.6} & \textbf{9.6} & 0.18 & 0.63 \\
    16 & 3e-5 & 1e-2 & 670079 & 1.51 & 3.8 & 9.8 & 0.18 & 0.62 \\
    16 & 1e-4 & 3e-3 & 670079 & 1.04 & 3.8 & 10.1 & 0.18 & 0.62 \\
    16 & 1e-4 & 1e-2 & 670079 & 1.03 & 3.8 & 10.3 & 0.17 & 0.60 \\
    16 & 1e-4 & 3e-2 & 670079 & 1.02 & 4.0 & 10.9 & 0.15 & 0.56 \\
    16 & 3e-4 & 1e-2 & 670079 & 0.80 & 4.5 & 13.4 & 0.16 & 0.56 \\
    16 & 1e-3 & 3e-3 & 940789 & \textbf{0.71} & 22.0 & 48.3 & 0.21 & 0.55 \\
    16 & 1e-3 & 1e-2 & 0 & \underline{0.73} & 17.3 & 40.4 & 0.17 & 0.44 \\
    \bottomrule
  \end{tabular*}
\end{table}

\keepwithfloat[0.78]
\subsection{\ourLoss-Loss Ablations}
\label{app:rocket-ablations}
\vspace{-0.2cm}

\begin{table}[h]
  \centering
  \caption{\textbf{Validation and probe loss of Hamiltonian models trained with the \ourLoss{} loss.}
  Target is $\Delta$-SAP with a training-set constant gauge.
  Validation is the model's training objective on the QM9 validation split at the final epoch.
  The QM40-loss columns evaluate that objective on the fixed probe molecules (reg.\,50 / hard\,25).
  $\|W\|$ columns are per-entry RMS of the final EMA weights by optimizer branch.}
  \label{tab:rocket-ablations}
  \footnotesize
  \begin{tabular*}{\textwidth}{@{\extracolsep{\fill}}cccrrrrrr@{}}
    \toprule
    & & & & &
    \multicolumn{2}{c}{QM40 loss} &
    \multicolumn{2}{c}{$\|W\|_{\mathrm{rms}}$} \\
    \cmidrule(lr){6-7}\cmidrule(lr){8-9}
    ep & lr & wd & seed & QM9 val & reg.\,50 & hard\,25 & Muon & Adam \\
    \midrule
    \multicolumn{9}{l}{\textit{1st stage (unrestrained gauge)}} \\
    16 & 1e-4 & 1e-2 & 443905 & 0.96 & 41.0 & 32.5 & 0.17 & 0.60 \\
    \textbf{16} & \textbf{3e-4} & \textbf{1e-2} & 443905 & \underline{0.75} & \textbf{14.2} & \textbf{24.5} & 0.16 & 0.56 \\
    16 & 1e-3 & 1e-2 & 443905 & \textbf{0.67} & \underline{24.6} & \underline{26.8} & 0.17 & 0.44 \\
    \addlinespace
    \multicolumn{9}{l}{\textit{2nd stage (restrained gauge)}} \\
    16 & 1e-4 & 1e-2 & 586405 & 1.38 & 3.6 & 7.8 & 0.17 & 0.60 \\
    16 & 1e-4 & 3e-3 & 586405 & 1.36 & 3.5 & 7.7 & 0.18 & 0.62 \\
    \textbf{16} & \textbf{3e-4} & \textbf{3e-3} & 443905 & 0.76 & \underline{2.4} & \underline{7.5} & 0.18 & 0.60 \\
    " & " & " & 452826 & 0.77 & \textbf{2.3} & \textbf{7.1} & 0.18 & 0.60 \\
    " & " & " & 862718 & 0.77 & 2.5 & 8.2 & 0.18 & 0.61 \\
    16 & 3e-4 & 1e-2 & 443905 & 0.76 & 2.5 & 8.0 & 0.16 & 0.56 \\
    16 & 1e-3 & 1e-2 & 779430 & \underline{0.73} & 2.5 & 8.9 & 0.17 & 0.44 \\
    16 & 3e-3 & 1e-2 & 958391 & \textbf{0.70} & 3.5 & 14.9 & 0.24 & 0.37 \\
    \addlinespace
    \multicolumn{9}{l}{\textit{2nd stage (10 fold rel restrained gauge)}} \\
    \textbf{16} & \textbf{3e-4} & \textbf{3e-3} & 443905 & 0.87 & 2.1 & 6.31 & 0.18 & 0.60 \\
    \addlinespace
    \multicolumn{9}{l}{\textit{3rd stage (supervise occ+LUMO only; $k=1$)}} \\
    16 & 3e-4 & 3e-3 & 443905 & 0.46 & 2.2 & 3.8 & 0.18 & 0.61 \\
    16 & 3e-4 & 1e-2 & 443905 & 0.48 & 2.3 & 4.2 & 0.16 & 0.56 \\
    \textbf{16} & \textbf{1e-3} & \textbf{3e-3} & 443905 & 0.27 & \underline{1.4} & \underline{3.3} & 0.21 & 0.56 \\
    " & " & " & 617814 & 0.43 & 2.2 & 3.9 & 0.21 & 0.56 \\
    " & " & " & 307265 & \textbf{0.25} & \textbf{1.3} & \textbf{3.2} & 0.21 & 0.56 \\
    16 & 1e-3 & 1e-3 & 443905 & 0.33 & 1.7 & 3.5 & 0.23 & 0.60 \\
    16 & 3e-3 & 3e-3 & 443905 & \underline{0.26} & \underline{1.4} & 4.1 & 0.36 & 0.47 \\
    \addlinespace
    \multicolumn{9}{l}{\textit{3rd stage (supervise occ+LUMO+1; $k=2$)}} \\
    16 & 1e-3 & 3e-3 & 443905 & 0.25 & 1.1 & 3.3 & 0.21 & 0.56 \\
    \addlinespace
    \multicolumn{9}{l}{\textit{3rd stage (supervise occ+LUMO+3; $k=4$)}} \\
    16 & 1e-3 & 3e-3 & 443905 & 0.34 & 0.8 & 3.1 & 0.21 & 0.56 \\
    \addlinespace
    \multicolumn{9}{l}{\textit{2nd stage + gap shifts (50 mHa, 100 mHa, 200 mHa, 800 mHa)}} \\
    \textbf{16} & \textbf{3e-4} & \textbf{3e-3} & 443905 & \textbf{0.83} & \textbf{2.4} & \underline{7.9} & 0.18 & 0.60 \\
    \textbf{16} & \textbf{3e-4} & \textbf{3e-3} & 443905 & \underline{0.92} & \underline{2.6} & \textbf{7.7} & 0.18 & 0.60 \\
    \textbf{16} & \textbf{3e-4} & \textbf{3e-3} & 443905 & 0.99 & 3.0 & 8.0 & 0.18 & 0.60 \\
    \textbf{16} & \textbf{3e-4} & \textbf{3e-3} & 443905 & 3.08 & 7.1 & 9.6 & 0.18 & 0.60 \\
    \addlinespace
    \multicolumn{9}{l}{\textit{3rd stage + gap shift 100 mHa}} \\
    \textbf{16} & \textbf{3e-4} & \textbf{3e-3} & 443905 & 0.86 & 3.1 & 4.5 & 0.18 & 0.61 \\
    \bottomrule
  \end{tabular*}
\end{table}

\newpage
\subsection{Self-Consistency-Training Learning-Rate Selection}
\label{app:sct-lr-selection}

Self-consistency training needs no reference labels, so we also select its learning rate without
them.
\Cref{tab:transition1x_sct_val_selection} reports the training objective on the Transition1x
validation split for every learning rate and epoch.
At each scored epoch we take the learning rate with the lowest residual; the resulting checkpoints
are evaluated on the test split in \cref{tab:ablation_transition1x_sct_test}.

\begin{table}[h]
\centering
\footnotesize
\setlength{\tabcolsep}{3pt}
\renewcommand{\arraystretch}{1.05}
\caption{\textbf{Validation loss across learning rates and epochs for Transition1x self-consistency training (\cref{tab:transition1x-dataset}).}
Entries report the mean training objective residual over the Transition1x validation split in units of $10^{-4}$, used to select checkpoints for \cref{tab:ablation_transition1x_sct_test}.
For ROCKET 3rd stage, values omit the gauge-restraint term.
LR denotes the peak learning rate of the warmup-cosine schedule.
Underlined values mark the lowest validation loss per configuration and epoch column, identifying the selected checkpoint.
$^\dagger$ Epochs evaluated in the Transition1x test evaluation table (\cref{tab:ablation_transition1x_sct_test}).
$^{\ast}$ Runs stopped before configured completion.
Each block reports a distinct model residual and cannot be compared across blocks.}
\label{tab:transition1x_sct_val_selection}
\begin{tabular*}{\linewidth}{@{\extracolsep{\fill}}lrrrrrrrrrrrr}
\toprule
LR & \multicolumn{12}{c}{Validation self-consistency residual after epoch} \\
 & 1$^\dagger$ & 2$^\dagger$ & 3 & 4$^\dagger$ & 5 & 6 & 7 & 8 & 9 & 10 & 11 & 12$^\dagger$ \\
\midrule
\multicolumn{13}{@{}l@{}}{\makebox[0pt][l]{\textbf{GROOT\textsubscript{rot}} -- density residual, $10^{-4}$ (dimensionless)}} \\
$3{\times}10^{-5}$$^{\ast}$ & \phantom{00}12.4\phantom{0} & \phantom{000}9.10 & 8.47 & 7.59 & 7.00 & 6.98 &  &  &  &  &  &  \\
$10^{-4}$ & \phantom{000}9.08 & \phantom{000}\underline{7.41} & 7.00 & 6.65 & 6.20 & 5.33 & 5.26 & 4.98 & 4.88 & 4.84 & 4.81 & 4.81 \\
$3{\times}10^{-4}$ & \phantom{000}\underline{8.13} & \phantom{000}7.42 & \underline{6.71} & \underline{5.97} & \underline{5.86} & \underline{5.25} & \underline{5.01} & \underline{4.66} & \underline{4.34} & 4.28 & 4.23 & 4.19 \\
$10^{-3}$ & \phantom{00}10.6\phantom{0} & \phantom{000}8.75 & 8.45 & 7.38 & 6.36 & 5.50 & 5.19 & 4.71 & 4.41 & \underline{4.15} & \underline{4.07} & \underline{4.04} \\
$3{\times}10^{-3}$ & \phantom{00}13.1\phantom{0} & \phantom{00}10.4\phantom{0} & 8.87 & 8.01 & 7.02 & 6.85 & 5.78 & 5.43 & 5.07 & 4.75 & 4.58 & 4.52 \\
\midrule
\multicolumn{13}{@{}l@{}}{\makebox[0pt][l]{\textbf{GROOT\textsubscript{proj}} -- density residual, $10^{-4}$ (dimensionless)}} \\
$3{\times}10^{-5}$$^{\ast}$ & \phantom{000}9.83 & \phantom{000}7.86 & 7.30 & 6.77 &  &  &  &  &  &  &  &  \\
$10^{-4}$ & \phantom{000}\underline{7.32} & \phantom{000}\underline{6.11} & \underline{5.72} & 5.52 & \underline{5.08} & 4.86 & 4.81 & 4.72 & 4.62 & 4.58 & 4.55 & 4.54 \\
$3{\times}10^{-4}$ & \phantom{000}7.61 & \phantom{000}6.46 & 6.06 & \underline{5.42} & 5.18 & \underline{4.77} & \underline{4.66} & \underline{4.27} & \underline{4.12} & 4.11 & 4.01 & 4.02 \\
$10^{-3}$ & \phantom{000}8.45 & \phantom{000}7.06 & 7.04 & 6.11 & 5.80 & 4.85 & 4.79 & 4.37 & 4.17 & \underline{3.99} & \underline{3.92} & \underline{3.91} \\
$3{\times}10^{-3}$ & \phantom{000}9.93 & \phantom{000}8.06 & 7.57 & 6.93 & 6.58 & 5.85 & 5.58 & 4.82 & 4.61 & 4.41 & 4.33 & 4.26 \\
\midrule
\multicolumn{13}{@{}l@{}}{\makebox[0pt][l]{\textbf{$\mathbf{P}_\mathrm{base}$ (ON)} -- raw-readout density residual, $10^{-4}$ (dimensionless)}} \\
$10^{-4}$$^{\ast}$ & \phantom{000}8.52 & \phantom{000}6.99 & 6.27 & 5.84 & 5.58 & 5.40 &  &  &  &  &  &  \\
$3{\times}10^{-4}$$^{\ast}$ & \phantom{000}6.68 & \phantom{000}5.63 & 5.10 & 4.80 & 4.61 & 4.40 & 4.29 & 4.19 & 4.14 & 4.11 &  &  \\
$10^{-3}$ & \phantom{000}\underline{6.52} & \phantom{000}\underline{5.57} & \underline{5.03} & \underline{4.72} & \underline{4.43} & \underline{4.16} & \underline{3.94} & \underline{3.77} & \underline{3.66} & \underline{3.60} & \underline{3.56} & \underline{3.55} \\
$3{\times}10^{-3}$$^{\ast}$ & \phantom{000}8.86 & \phantom{000}7.04 & 7.49 &  &  &  &  &  &  &  &  &  \\
$10^{-2}$$^{\ast}$ & 1377\phantom{.}\phantom{00} & 2037\phantom{.}\phantom{00} &  &  &  &  &  &  &  &  &  &  \\
\midrule
\multicolumn{13}{@{}l@{}}{\makebox[0pt][l]{\textbf{ROCKET 3rd stage} -- gauge-free occ+LUMO Hamiltonian residual, $10^{-4}$ Ha}} \\
$10^{-4}$$^{\ast}$ & \phantom{000}4.08 & \phantom{000}3.24 & 2.85 & 2.60 &  &  &  &  &  &  &  &  \\
$3{\times}10^{-4}$ & \phantom{000}2.97 & \phantom{000}\underline{2.54} & \underline{2.29} & \underline{2.11} & \underline{1.99} & 1.90 & 1.85 & 1.75 & 1.72 & 1.70 & 1.69 & 1.69 \\
$10^{-3}$ & \phantom{000}\underline{2.97} & \phantom{000}2.61 & 2.38 & 2.14 & 2.00 & \underline{1.80} & \underline{1.68} & \underline{1.60} & \underline{1.52} & \underline{1.49} & \underline{1.46} & \underline{1.45} \\
$3{\times}10^{-3}$$^{\ast}$ & \phantom{000}3.68 & \phantom{000}2.96 & 2.78 & 2.48 &  &  &  &  &  &  &  &  \\
$10^{-2}$$^{\ast}$ & \phantom{000}5.06 & \phantom{000}8.35 &  &  &  &  &  &  &  &  &  &  \\
\bottomrule
\end{tabular*}
\end{table}

\newpage
\section{Self-Consistency Training}\label{app:sct}
Self-consistency training (SCT) fine-tunes a pretrained \model{} on molecular geometries alone~\citep{zhangSelfConsistencyTraining2024}.
Instead of evaluating errors against converged electronic reference labels, each training step performs a single Fock build and penalizes deviation from self-consistency.
We compare two such objectives on Transition1x~(\cref{tab:ablation_transition1x_sct_test}).

\paragraph{Hamiltonian and density matrix residuals.}
\citet{zhangSelfConsistencyTraining2024} introduce SCT for Hamiltonian prediction, penalizing the Frobenius difference between the predicted Hamiltonian and the Hamiltonian reconstructed from its own eigenvectors.
We extend this residual to density \model s and express both targets as
\begin{equation}
    \mathcal{L}_\mathrm{res}(\hat{\mathbf{X}}) := \frac{\|\hat{\mathbf{X}}-\mathrm{SCF}(\hat{\mathbf{X}})\|_F}{B}, \qquad \mathrm{SCF}(\hat{\mathbf{P}}) := \mathbf{P}\big(\mathbf{H}(\hat{\mathbf{P}})\big), \qquad \mathrm{SCF}(\hat{\mathbf{H}}) := \mathbf{H}\big(\mathbf{P}(\hat{\mathbf{H}})\big),
\end{equation}
where $\mathbf{H}(\cdot)$ denotes one Fock build, and $\mathbf{P}(\cdot)$ forms the occupied density matrix from the lowest $O$ eigenvectors of \cref{eq:generalized_eigenvalue_problem} (\cref{eq:P_from_C}).
For density models, this residual is the self-consistency error $\Delta_\mathrm{SC}\mathbf{P}$ of \cref{sec:experiments}.
Following \citet{zhangSelfConsistencyTraining2024}, we evaluate the residual in the non-orthogonal (NON) basis and backpropagate through the Fock build.
This residual weights all matrix entries equally and inherits the overlap-dependent spectral distortion of the NON metric~(\cref{sec:methods}, \appref{app:loewdin-sensitivity}).
The targets are also scaled asymmetrically because $\mathcal{L}_\mathrm{res}(\hat{\mathbf{H}})$ carries energy units while $\mathcal{L}_\mathrm{res}(\hat{\mathbf{P}})$ is dimensionless, and differentiating through the eigensolver to form $\mathbf{P}(\cdot)$ can destabilize training near orbital degeneracies.

\paragraph{Occupied--virtual gradient objective.}
Instead of evaluating matrix differences, we supervise \model s with the occupied--virtual (OV) gradient of the electronic energy from solver-aligned initialization learning (SAIL)~\citep{eberhardTransferableScfAcceleration2026}.
Under a budget of one Fock build, the multi-step trajectory of SAIL reduces to a single step
\begin{equation}
    \mathcal{L}_\mathrm{OV}(\hat{\mathbf{X}}) := \Big(\frac{1}{OV}\sum_{i=1}^{O}\sum_{a=1}^{V} G_{ia}^2\Big)^{1/2}, \qquad \mathbf{G} := \widetilde{\mathbf{C}}_O^{\top}\, \widetilde{\mathbf{H}}(\hat{\mathbf{P}})\, \widetilde{\mathbf{C}}_V.
\end{equation}
Here $O$ and $V$ denote the occupied and virtual orbital counts ($O + V = B$), and $\widetilde{\mathbf{H}}(\hat{\mathbf{P}}) = \mathbf{S}^{-1/2}\mathbf{H}(\hat{\mathbf{P}})\mathbf{S}^{-1/2}$ is the Fock matrix in the L\"owdin orthonormal basis.
For density models, $\widetilde{\mathbf{C}}_O$ and $\widetilde{\mathbf{C}}_V$ collect the eigenvectors of the predicted L\"owdin density $\hat{\widetilde{\mathbf{P}}}$ with the $O$ largest and $V$ smallest eigenvalues.
For Hamiltonian models, they collect the eigenvectors of $\hat{\widetilde{\mathbf{H}}}$, with $\hat{\mathbf{P}} = \mathbf{P}(\hat{\mathbf{H}})$.
The matrix $\mathbf{G} \in \mathbb{R}^{O \times V}$ measures the gradient of the electronic energy with respect to occupied--virtual rotations.
It carries units of energy, vanishes exactly at self-consistency, and remains invariant under rotations within the occupied and virtual subspaces.
The OV gradient provides a unified, physical objective for both density and Hamiltonian \model s without requiring an ad hoc matrix norm.

\paragraph{Connection to the self-consistency rejection rule.}
The OV gradient directly minimizes the commutator rejection statistic used at inference~(\cref{sec:experiments}).
In the orbital basis $\widetilde{\mathbf{C}} = [\widetilde{\mathbf{C}}_O, \widetilde{\mathbf{C}}_V]$, any idempotent density matrix satisfies $\widetilde{\mathbf{P}} = \operatorname{diag}(\mathbf{I}_O, \mathbf{0}_V)$.
The diagonal blocks of the commutator between the L\"owdin Fock matrix and the density matrix vanish identically, leaving only the off-diagonal blocks $\mathbf{G}$ and $-\mathbf{G}^\top$.
The Frobenius norm of this commutator gives the DIIS residual of the self-consistency rejection rule
\begin{equation}
    r = \sqrt{8}\,\big\|\widetilde{\mathbf{H}}\widetilde{\mathbf{P}}-\widetilde{\mathbf{P}}\widetilde{\mathbf{H}}\big\|_F = 4\,\|\mathbf{G}\|_F = 4\sqrt{OV}\,\mathcal{L}_\mathrm{OV}(\hat{\mathbf{X}}).
\end{equation}
For \ours{} and all Hamiltonian models, the prediction is idempotent by construction, so $\mathcal{L}_\mathrm{OV}$ directly minimizes the rejection statistic itself.
This matches the drop in Transition1x rejection rates in \cref{tab:ablation_transition1x_sct_test}, where rejection drops from $13.5\%$ under the matrix residual to $2.6\%$ under $\mathcal{L}_\mathrm{OV}$ for \ours{}\textsubscript{rot}, and from $1.2\%$ to $0.2\%$ for \ours{}\textsubscript{proj}.
For unconstrained $\mathbf{P}_\mathrm{base}$, predictions have fractional eigenvalues off the Grassmannian, producing commutator contributions that $\mathcal{L}_\mathrm{OV}$ does not penalize and leaving rejection rates elevated at $25.0\%$.

\paragraph{Training protocol.}
We warm-start all models from the QMugs parent checkpoints of \cref{tab:ablation_qmugs_ood160_r100} and train on the $164{,}187$ unlabeled frames of the Transition1x training split~(\cref{tab:transition1x-dataset}) under a cosine learning-rate schedule without weight decay or replay of pretraining data.
We select the peak learning rate of each model by the validation residual alone~(\appref{app:sct-lr-selection}, \cref{tab:transition1x_sct_val_selection}), so that no training or selection step uses reference labels.
Each epoch costs one Fock build and its backward pass per geometry, avoiding the costly iterations required for converged references.

\newpage
\keepwithfloat[0.75]
\section{Loss vs Observables}
\label{app:loss-vs-observables}
\vspace{-0.2cm}

\begin{table}[h]
  \centering
  \caption{\textbf{QM40 probe observables of the production runs.}
  Every model whose epoch, learning rate, and weight decay are set to its production
  value, aggregated over its entropy-seed replicates as the mean over seeds.
  The loss column evaluates each family's own training objective on the fixed
  probe molecules (reg.\,50, $\times 10^{4}$) and is comparable only within a block.
  The cata.\ column sums the reg.\,50 and hard\,25 catastrophe counts.
  $E$ is the one-Fock-build electronic-energy MAE in mEh, and a catastrophe is
  $E > 1\,000$~mEh.
  Bold and underlined entries are the lowest and second-lowest values in each observable column, respectively.
  Red marks an error above $3$/$10$~mEh for regular/hard $E$, $1$/$4$~D for the dipole,
  $0.2$/$1\,e$ for $L_1$, or cata.\ $>1$.}
  \label{tab:prod_observables}
  \footnotesize
  \setlength{\tabcolsep}{3pt}
  \begin{tabular*}{\textwidth}{@{\extracolsep{\fill}}lccc*{8}{r}@{}}
    \toprule
    & & & & &
    \multicolumn{2}{c}{$E$ [mEh]} &
    \multicolumn{2}{c}{dipole [D]} &
    \multicolumn{2}{c}{$L_1$ [$e$]} &
    \\
    \cmidrule(lr){6-7}\cmidrule(lr){8-9}\cmidrule(lr){10-11}
    & $n$ & lr & wd & loss & reg. & hard & reg. & hard & reg. & hard & cata. \\
    \midrule
    \multicolumn{12}{l}{\textit{NON losses}} \\
    \multicolumn{12}{l}{\quad \textit{Density}} \\
    \qquad abs. $P$ & 3 & 3e-3 & 1e-2 & $14.3$ & $\superbad{259}$ & $\superbad{1\,655}$ & $0.76$ & $\bad{31.9}$ & $\bad{0.50}$ & $\bad{4.13}$ & $\superbad{11.7}$ \\
    \qquad $\Delta P_{\mathrm{MINAO}}$ & 3 & 3e-3 & 1e-2 & $13.7$ & $\superbad{76}$ & $\superbad{263}$ & $0.51$ & $\bad{4.50}$ & $\bad{0.38}$ & $\bad{1.4}$ & $\bad{1.3}$ \\
    \addlinespace
    \multicolumn{12}{l}{\quad \textit{Hamiltonian}} \\
    \qquad abs. $H$ & 1 & 1e-4 & 1e-3 & $4.0$ & $\superbad{8\,626}$ & $\superbad{19\,188}$ & $\superbad{319}$ & $\superbad{441}$ & $\superbad{24.7}$ & $\superbad{44.4}$ & $\superbad{64}$ \\
    \qquad $\Delta H_{\mathrm{SAP}}$ & 1 & 3e-5 & 1e-3 & $4.0$ & $\superbad{712}$ & $\superbad{2\,989}$ & $\superbad{50.8}$ & $\superbad{120}$ & $\superbad{3.57}$ & $\superbad{11.1}$ & $\superbad{24}$ \\
    \qquad $\Delta (H_{\mathrm{SAP}}+cS)$ & 4 & 1e-5 & 1e-3 & $3.9$ & $\superbad{1\,396}$ & $\superbad{3\,532}$ & $\superbad{69.5}$ & $\superbad{125}$ & $\superbad{5.44}$ & $\superbad{11.2}$ & $\superbad{35}$ \\
    \addlinespace
    \multicolumn{12}{l}{\textit{WALoss}} \\
    \multicolumn{12}{l}{\quad \textit{Hamiltonian}} \\
    \qquad $\Delta H_{\mathrm{SAP}}$ & 1 & 1e-4 & 1e-3 & $12.1$ & $\superbad{2\,032}$ & $\superbad{4\,338}$ & $\bad{1.46}$ & $\bad{25.0}$ & $\superbad{4.64}$ & $\bad{9.12}$ & $\superbad{68}$ \\
    \qquad $\Delta (H_{\mathrm{SAP}}+cS)$ & 3 & 1e-4 & 1e-3 & $5.5$ & $\superbad{614}$ & $\superbad{3\,333}$ & $\bad{7.07}$ & $\superbad{51.7}$ & $\bad{1.76}$ & $\bad{7.54}$ & $\superbad{19.7}$ \\
    \addlinespace
    \multicolumn{12}{l}{\textit{\oursOrthonormal{}}} \\
    \multicolumn{12}{l}{\quad \textit{Density}} \\
    \qquad $\Delta P_{\mathrm{MINAO}}$ & 3 & 3e-3 & 3e-3 & $6.1$ & $\superbad{35.0}$ & $\bad{41.1}$ & $\underline{0.09}$ & $\textbf{0.60}$ & $\underline{0.08}$ & $\textbf{0.24}$ & $0$ \\
    \qquad \ours{}$_{\mathrm{proj}}$ & 3 & 3e-3 & 1e-2 & $1.1$ & $0.87$ & $9.24$ & $\textbf{0.07}$ & $1.89$ & $\textbf{0.05}$ & $\underline{0.46}$ & $0$ \\
    \qquad \ours{}$_{\mathrm{rot}}$ & 3 & 1e-2 & 1e-2 & $1.5$ & $1.68$ & $\superbad{121}$ & $\textbf{0.07}$ & $\superbad{51.4}$ & $\textbf{0.05}$ & $\superbad{3.57}$ & $\bad{2}$\\
    \addlinespace
    \multicolumn{12}{l}{\quad \textit{Hamiltonian}} \\
    \qquad $\Delta H_{\mathrm{SAP}}$ & 3 & 3e-5 & 3e-3 & $3.7$ & $2.74$ & $\bad{19.8}$ & $0.20$ & $\bad{4.43}$ & $\bad{0.22}$ & $\bad{1.25}$ & $0$ \\
    \qquad $\Delta (H_{\mathrm{SAP}}+cS)$ & 3 & 3e-5 & 3e-3 & $3.7$ & $2.22$ & $\bad{15.5}$ & $0.23$ & $\bad{4.26}$ & $0.19$ & $\bad{1.11}$ & $0$ \\
    \addlinespace
    \multicolumn{12}{l}{\textit{\ourLoss{}} (\textit{Hamiltonian})} \\
    \qquad 1st stage & 1 & 3e-4 & 1e-2 & $14.2$ & $\bad{3.83}$ & $\bad{69.7}$ & $0.46$ & $1.87$ & $\bad{0.38}$ & $\superbad{2.30}$ & $0$ \\
    \qquad 2nd stage & 3 & 3e-4 & 3e-3 & $2.4$ & $1.22$ & $8.38$ & $0.12$ & $3.63$ & $0.11$ & $0.83$ & $0$ \\
    \qquad + gap shift 50 mHa & 1 & 3e-4 & 3e-3 & $2.4$ & $1.22$ & $8.24$ & $0.11$ & $3.70$ & $0.11$ & $0.84$ & $0$ \\
    \qquad + gap shift 100 mHa & 1 & 3e-4 & 3e-3 & $2.6$ & $1.25$ & $8.60$ & $0.11$ & $3.49$ & $0.11$ & $0.83$ & $0$ \\
    \qquad + gap shift 200 mHa & 1 & 3e-4 & 3e-3 & $3.0$ & $1.44$ & $7.78$ & $0.12$ & $3.52$ & $0.12$ & $0.78$ & $0$ \\
    \qquad + gap shift 800 mHa & 1 & 3e-4 & 3e-3 & $7.1$ & $2.83$ & $9.42$ & $0.13$ & $2.08$ & $0.13$ & $0.56$ & $0$ \\
    \qquad 3rd stage $k=1$ & 3 & 1e-3 & 3e-3 & $1.6$ & $\textbf{0.42}$ & $7.69$ & $\underline{0.09}$ & $1.58$ & $0.09$ & $0.72$ & $0$ \\
    \qquad + gap shift 100 mHa & 1 & 3e-4 & 3e-3 & $3.1$ & $0.48$ & $\textbf{6.34}$ & $\underline{0.09}$ & $2.54$ & $0.09$ & $0.73$ & $0$ \\
    \qquad 3rd stage $k=2$ & 1 & 1e-3 & 3e-3 & $1.1$ & $\underline{0.45}$ & $\underline{6.50}$ & $0.10$ & $\underline{1.39}$ & $0.09$ & $0.67$ & $0$ \\
    \qquad 3rd stage $k=4$ & 1 & 1e-3 & 3e-3 & $0.8$ & $0.51$ & $8.57$ & $\underline{0.09}$ & $2.50$ & $0.10$ & $0.78$ & $0$ \\
    \bottomrule
  \end{tabular*}
\end{table}

\clearpage
\subsection{Frobenius Loss versus Observable Errors}
\label{app:frobenius-vs-observables}

\Cref{tab:prod_observables} compares training losses by downstream observable accuracy.
Here we evaluate whether the matrix-level training loss directly tracks observable errors across molecules, and whether evaluating that loss in the L\"owdin basis strengthens the relationship.
\Cref{fig:frobenius-energy-scatter,fig:frobenius-dipole-scatter,fig:frobenius-density-l1-scatter,fig:frobenius-force-scatter} plot observable errors against Frobenius losses in both bases across QM40 molecules.
\Cref{tab:frobenius_panel_stats} reports the corresponding Pearson correlation $r$ of $\log_{10}$ loss with $\log_{10}$ observable error, alongside mutual information $I$.
The L\"owdin loss predicts observable errors more reliably than the atomic-orbital loss in 22 of 24 panel pairs for $r$ and in all 24 for $I$.
This improvement is most pronounced for models trained with $\mathcal{L}_\mathrm{ON}$, where the atomic-orbital loss decorrelates from observable errors.

\begin{table}[h]
\centering
\footnotesize
\setlength{\tabcolsep}{5pt}
\renewcommand{\arraystretch}{1.05}
\caption{\textbf{Correlation and mutual information between Frobenius loss and observable errors across QM40 plate panels.}
We report the Pearson correlation $r$ between $\log_{10}$ Frobenius loss and $\log_{10}$ observable error, alongside mutual information $I$ in bits estimated with $k$-nearest neighbors ($k=5$).
Statistics are evaluated for each model pooled across seeds, corresponding to the scatter points in each plate panel.
NON denotes the raw congruence $\|\Delta\mathbf{X}\|_F$, and ON denotes the L\"owdin congruence $\|\mathbf{S}^{+1/2}\Delta\mathbf{X}\mathbf{S}^{+1/2}\|_F$ for densities or $\|\mathbf{S}^{-1/2}\Delta\mathbf{X}\mathbf{S}^{-1/2}\|_F$ for Hamiltonians.
Bold text marks the superior value in each pair, where larger values indicate stronger predictive alignment.
The evaluation uses 1000 QM40 molecules per seed without exclusions, with force rows restricted to the 200 molecules per seed that possess reference forces.}
\label{tab:frobenius_panel_stats}
\begin{tabular*}{\linewidth}{@{\extracolsep{\fill}}lcrrrr}
\toprule
 &  & \multicolumn{2}{c}{$r$} & \multicolumn{2}{c}{$I$ (base 2)} \\
\cmidrule(lr){3-4}\cmidrule(lr){5-6}
Model & Seeds & NON & ON & NON & ON \\
\midrule
\multicolumn{6}{l}{$|\Delta E|$ [m$E_h$]} \\
$\mathbf{P}_\mathrm{base}$ (NON) & 3 & $0.595$ & $\mathbf{0.669}$ & $0.38$ & $\mathbf{0.56}$ \\
$\mathbf{P}_\mathrm{base}$ (ON) & 3 & $0.778$ & $\mathbf{0.909}$ & $0.87$ & $\mathbf{1.58}$ \\
$\mathbf{H}_\mathrm{base}$ (NON) & 1 & $0.924$ & $\mathbf{0.932}$ & $1.17$ & $\mathbf{1.32}$ \\
$\mathbf{H}_\mathrm{base}$ (ON) & 3 & $0.555$ & $\mathbf{0.802}$ & $0.30$ & $\mathbf{1.06}$ \\
$\mathbf{H}_\mathrm{base} + c_t \mathbf{S}$ (NON) & 4 & $0.911$ & $\mathbf{0.927}$ & $1.10$ & $\mathbf{1.24}$ \\
$\mathbf{H}_\mathrm{base} + c_t \mathbf{S}$ (ON) & 3 & $0.556$ & $\mathbf{0.760}$ & $0.29$ & $\mathbf{0.84}$ \\
\midrule
\multicolumn{6}{l}{$\|\Delta\boldsymbol\mu\|_2$ [D]} \\
$\mathbf{P}_\mathrm{base}$ (NON) & 3 & $0.695$ & $\mathbf{0.732}$ & $0.51$ & $\mathbf{0.60}$ \\
$\mathbf{P}_\mathrm{base}$ (ON) & 3 & $0.544$ & $\mathbf{0.616}$ & $0.34$ & $\mathbf{0.41}$ \\
$\mathbf{H}_\mathrm{base}$ (NON) & 1 & $\mathbf{0.883}$ & $0.880$ & $1.09$ & $\mathbf{1.30}$ \\
$\mathbf{H}_\mathrm{base}$ (ON) & 3 & $0.529$ & $\mathbf{0.602}$ & $0.25$ & $\mathbf{0.46}$ \\
$\mathbf{H}_\mathrm{base} + c_t \mathbf{S}$ (NON) & 4 & $\mathbf{0.869}$ & $0.867$ & $1.04$ & $\mathbf{1.10}$ \\
$\mathbf{H}_\mathrm{base} + c_t \mathbf{S}$ (ON) & 3 & $0.524$ & $\mathbf{0.567}$ & $0.29$ & $\mathbf{0.35}$ \\
\midrule
\multicolumn{6}{l}{$\|\Delta\rho\|_{L^1}$ [$e$]} \\
$\mathbf{P}_\mathrm{base}$ (NON) & 3 & $0.784$ & $\mathbf{0.857}$ & $0.78$ & $\mathbf{1.06}$ \\
$\mathbf{P}_\mathrm{base}$ (ON) & 3 & $0.737$ & $\mathbf{0.837}$ & $0.67$ & $\mathbf{0.97}$ \\
$\mathbf{H}_\mathrm{base}$ (NON) & 1 & $0.907$ & $\mathbf{0.913}$ & $1.18$ & $\mathbf{1.44}$ \\
$\mathbf{H}_\mathrm{base}$ (ON) & 3 & $0.577$ & $\mathbf{0.727}$ & $0.38$ & $\mathbf{0.88}$ \\
$\mathbf{H}_\mathrm{base} + c_t \mathbf{S}$ (NON) & 4 & $0.905$ & $\mathbf{0.912}$ & $1.14$ & $\mathbf{1.31}$ \\
$\mathbf{H}_\mathrm{base} + c_t \mathbf{S}$ (ON) & 3 & $0.580$ & $\mathbf{0.675}$ & $0.37$ & $\mathbf{0.67}$ \\
\midrule
\multicolumn{6}{l}{$\langle\|\Delta\mathbf{F}_a\|_2\rangle_a$ [m$E_h$/\AA]} \\
$\mathbf{P}_\mathrm{base}$ (NON) & 3 & $0.772$ & $\mathbf{0.800}$ & $0.66$ & $\mathbf{0.72}$ \\
$\mathbf{P}_\mathrm{base}$ (ON) & 3 & $0.618$ & $\mathbf{0.717}$ & $0.51$ & $\mathbf{0.70}$ \\
$\mathbf{H}_\mathrm{base}$ (NON) & 1 & $0.905$ & $\mathbf{0.911}$ & $1.05$ & $\mathbf{1.36}$ \\
$\mathbf{H}_\mathrm{base}$ (ON) & 3 & $0.599$ & $\mathbf{0.791}$ & $0.45$ & $\mathbf{0.93}$ \\
$\mathbf{H}_\mathrm{base} + c_t \mathbf{S}$ (NON) & 4 & $0.891$ & $\mathbf{0.910}$ & $1.19$ & $\mathbf{1.38}$ \\
$\mathbf{H}_\mathrm{base} + c_t \mathbf{S}$ (ON) & 3 & $0.604$ & $\mathbf{0.757}$ & $0.38$ & $\mathbf{0.73}$ \\
\bottomrule
\end{tabular*}
\end{table}

\clearpage

\begin{figure}[!h]
  \centering
  \includegraphics[width=\linewidth]{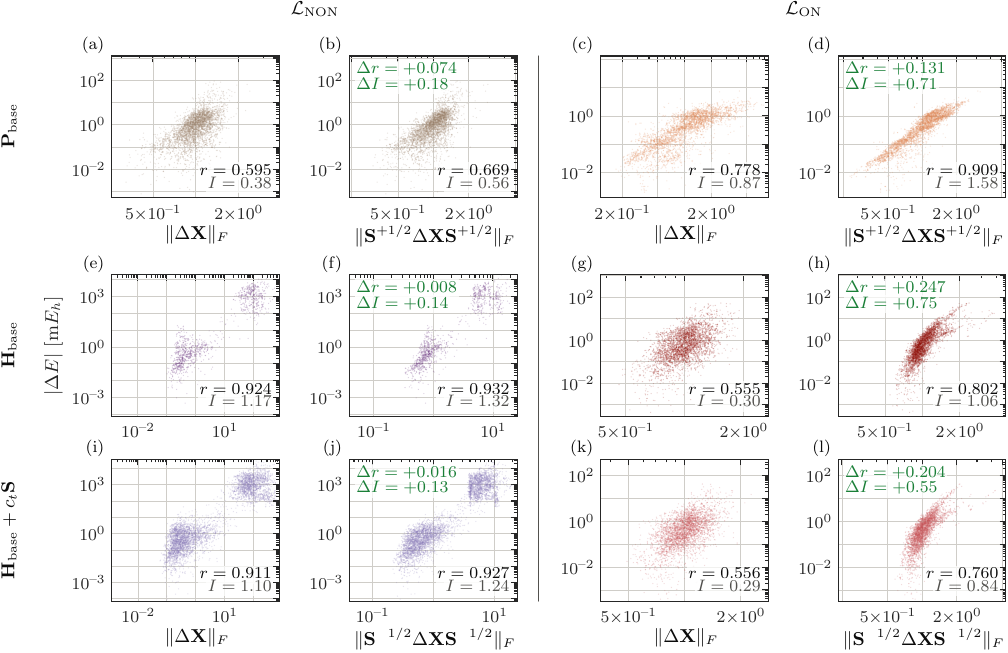}
  \caption{\textbf{Energy error $E$ against Frobenius loss.}
  Each point represents one QM40 test molecule and seed on logarithmic axes.
  Rows correspond to model families $P_\mathrm{base}$, $H_\mathrm{base}$, and $H_\mathrm{base}+c_t\mathbf{S}$.
  Column blocks indicate the training objective ($\mathcal{L}_\mathrm{NON}$ followed by $\mathcal{L}_\mathrm{ON}$).
  Within each block, the left panel evaluates the raw loss $\|\Delta\mathbf{X}\|_F$ and the right panel evaluates the L\"owdin loss (\cref{eq:lowdin_transform}).
  Individual panels report $r$ and $I$, with L\"owdin panels displaying the gain over the raw counterpart.
  \Cref{tab:frobenius_panel_stats} compiles summary statistics across all four observables.}
  \label{fig:frobenius-energy-scatter}
\end{figure}

\begin{figure}[!h]
  \centering
  \includegraphics[width=\linewidth]{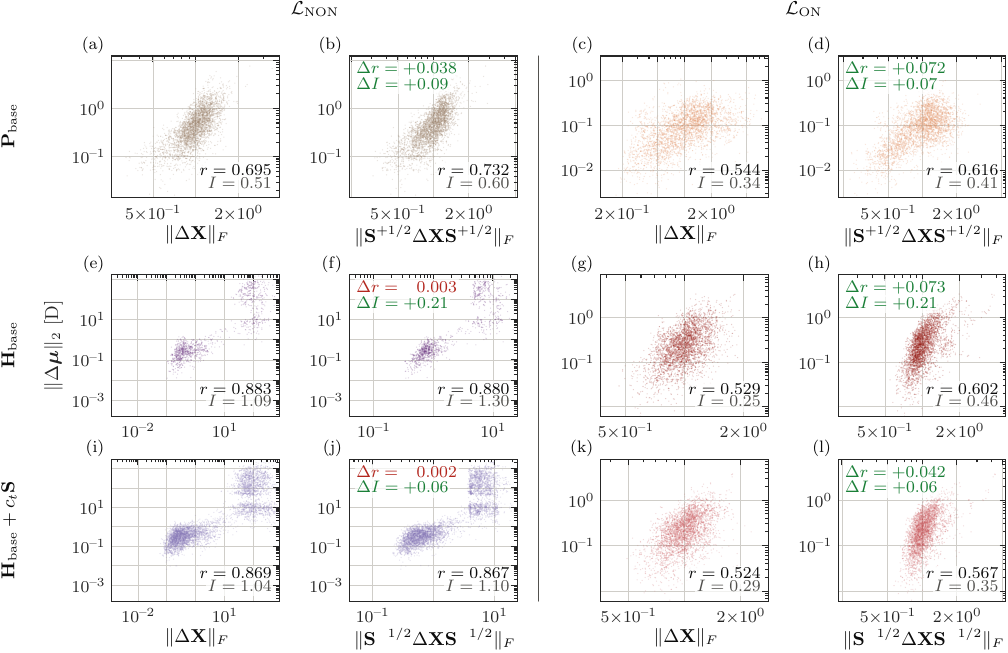}
  \caption{\textbf{Dipole error against Frobenius loss.}
  Panels share the models, layout, pooling, and annotations of \cref{fig:frobenius-energy-scatter}.
  The vertical axis displays the dipole error $\|\Delta\boldsymbol\mu\|$ in Debye.}
  \label{fig:frobenius-dipole-scatter}
\end{figure}

\clearpage

\begin{figure}[!h]
  \centering
  \includegraphics[width=\linewidth]{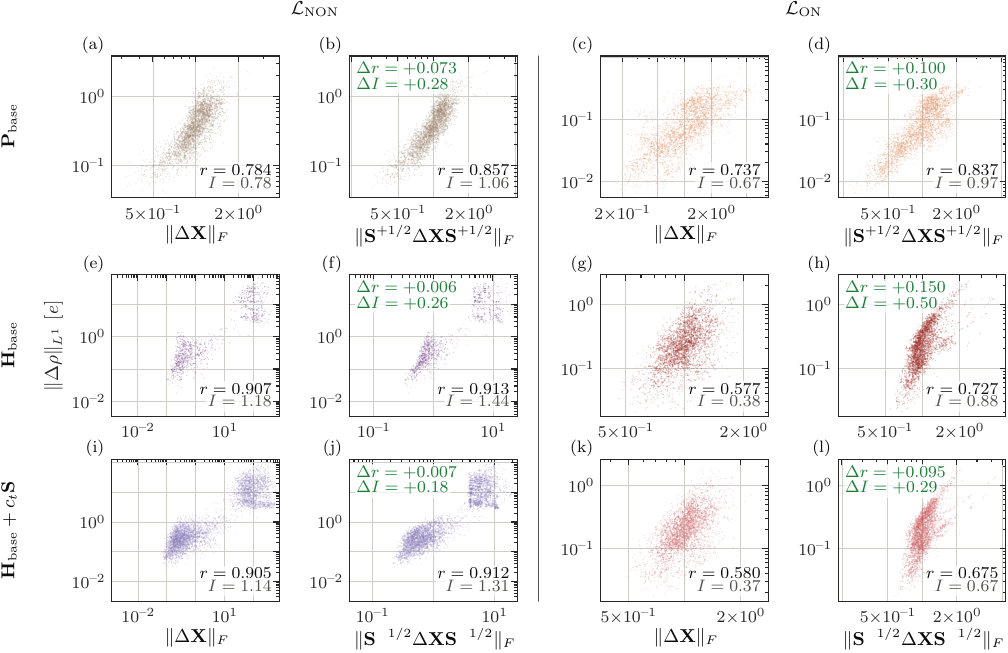}
  \caption{\textbf{Density $L^1$ error against Frobenius loss.}
  Panels share the models, layout, pooling, and annotations of \cref{fig:frobenius-energy-scatter}.
  The vertical axis displays the real-space density error $\|\Delta\rho\|_{L^1}$ in electrons.}
  \label{fig:frobenius-density-l1-scatter}
\end{figure}

\begin{figure}[!h]
  \centering
  \includegraphics[width=\linewidth]{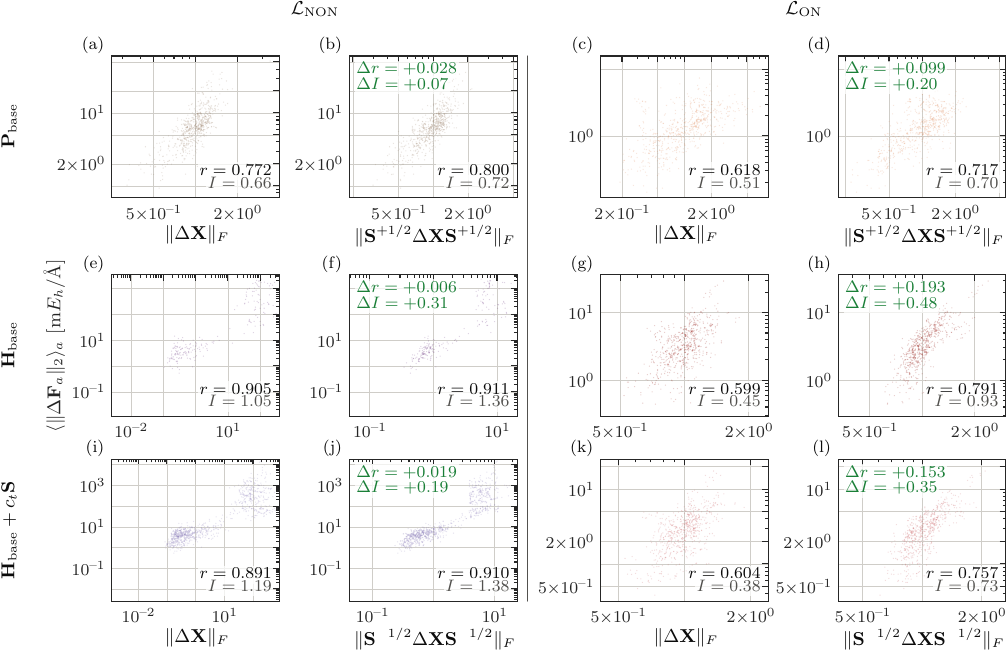}
  \caption{\textbf{Force error against Frobenius loss.}
  Panels share the models, layout, pooling, and annotations of \cref{fig:frobenius-energy-scatter}.
  The vertical axis displays the per-atom mean force error $\langle\|\Delta\mathbf{F}_a\|\rangle$.
  Because only 200 molecules per seed carry reference forces, these panels contain fewer samples than the preceding figures.}
  \label{fig:frobenius-force-scatter}
\end{figure}

\newpage
\section{Model Evaluation Data}
\label{app:model-ablation-data}

\Cref{tab:ablation_qm9_test2000} evaluates observable errors for the models of \cref{tab:prod_observables} on the QM9 test split, providing the in-distribution baseline for size extrapolation.
\Cref{fig:qm40-nelec-density-ablation,fig:qmugs-nelec-density-ablation} resolve density observable errors over electron count $N_\mathrm{elec}$, separating errors present at training size from errors induced by extrapolation.
\Cref{fig:qm40-nelec-hamiltonian-ablation,fig:qmugs-nelec-hamiltonian-ablation} report the corresponding resolution for Hamiltonian models.
Each figure pair shows QM40 before QMugs, extending extrapolation to approximately twice the electron counts of QM40.
\Cref{tab:ablation_qm40_r100,tab:ablation_qmugs_r50} aggregate observable errors for these models over the full QM40 and QMugs test sets.

\keepwithfloat
\subsection{Training on QM9}
\label{app:training-on-qm9}
\vspace{-0.2cm}

\begin{table}[h]
\centering
\footnotesize
\setlength{\tabcolsep}{3pt}
\renewcommand{\arraystretch}{1.05}
\caption{\textbf{Observable errors on the QM9 test split for QM9-trained models (\cref{tab:train-dataset}).}
Each cell reports the mean over seeds of the per-molecule mean absolute error, with the seed range in square brackets.
Values in round brackets average all molecules of the lane before outlier exclusion, identical across models.
Per column, the best bracketed mean and the best single seed are underlined.
Predictions with DIIS residual $r_\mathrm{DIIS}/\sqrt{N_\mathrm{elec}} > 30$~mHa are excluded as outliers, where $r_\mathrm{DIIS} = \sqrt{8}\,\|\mathbf{S}^{-1/2}(\mathbf{F}\mathbf{P}\mathbf{S} - \mathbf{S}\mathbf{P}\mathbf{F})\mathbf{S}^{-1/2}\|_F$ for scored density $\mathbf{P}$ and Fock matrix $\mathbf{F} = \mathbf{F}[\mathbf{P}]$.
Unbracketed values average kept molecules only, and Excluded reports the lost molecule percentage.
Models losing all molecules display a dash before the bracketed value.
$\hat E_\mathrm{KS}$ is the KS-DFT energy evaluated at the predicted density (\cref{sec:experiments}), and $E^*$ is the converged reference energy.
All observables evaluate 2000 molecules before outlier exclusion.
ERIC reports SCF iterations relative to the MINAO guess, averaged over the full lane with unconverged runs capped at 2.
LR and WD denote peak learning rate and weight decay from the training configuration.
$^{\dagger}$ Single seed without range.}
\label{tab:ablation_qm9_test2000}
\resizebox{\linewidth}{!}{%
\begin{tabular}{lrrrrrrrrr}
\toprule
Model & LR & WD & Excluded & Self-consistent & Dipole & $L_1$ & Forces & HOMO--LUMO & ERIC \\
 &  &  & {\scriptsize (\%)} & energy~{\scriptsize (mHa)} & {\scriptsize (D)} & {\scriptsize (e)} & {\scriptsize (mHa/\AA)} & {\scriptsize (mHa)} & {\scriptsize (ratio)} \\
 &  &  &  & {\scriptsize $|\hat E_\mathrm{KS} - E^*|$} & {\scriptsize $\|\Delta\boldsymbol\mu\|_2$} & {\scriptsize $\|\Delta\rho\|_{L^1}$} & {\scriptsize $\langle\|\Delta\mathbf{F}_a\|_2\rangle_a$} & {\scriptsize $|\Delta\varepsilon_\mathrm{gap}|$} &  \\
\midrule
\multicolumn{10}{l}{\textit{Density-space models}} \\
$\mathbf{P}_\mathrm{abs.}$ (NON) & $3{\times}10^{-3}$ & $10^{-2}$ & \begin{tabular}[t]{@{}r@{}}\phantom{0}62.9\phantom{000}\\{\scriptsize[62, 66]}\end{tabular} & \begin{tabular}[t]{@{}r@{}}\phantom{0}9.65\phantom{000}\,\phantom{0}(12.1)\phantom{0000}\\{\scriptsize[9.1, 10]}\\{\scriptsize(12, 13)}\end{tabular} & \begin{tabular}[t]{@{}r@{}}\phantom{0}0.0419\phantom{0}\,(0.0553)\phantom{0}\\{\scriptsize[0.040, 0.043]}\\{\scriptsize(0.055, 0.056)}\end{tabular} & \begin{tabular}[t]{@{}r@{}}\phantom{0}0.0333\phantom{0}\,(0.0428)\phantom{0}\\{\scriptsize[0.032, 0.034]}\\{\scriptsize(0.043, 0.043)}\end{tabular} & \begin{tabular}[t]{@{}r@{}}\phantom{0}0.974\phantom{0}\,\phantom{0}(1.26)\phantom{00}\\{\scriptsize[0.94, 1.0]}\\{\scriptsize(1.3, 1.3)}\end{tabular} & \begin{tabular}[t]{@{}r@{}}\phantom{0}0.578\phantom{0}\,\phantom{0}(0.774)\phantom{0}\\{\scriptsize[0.56, 0.62]}\\{\scriptsize(0.77, 0.78)}\end{tabular} & \begin{tabular}[t]{@{}r@{}}\phantom{0}0.745\,(0.754)\\{\scriptsize[0.74, 0.75]}\\{\scriptsize(0.75, 0.75)}\end{tabular} \\
$\mathbf{P}_\mathrm{abs.}$ (NON), no rescale & $3{\times}10^{-3}$ & $10^{-2}$ & \begin{tabular}[t]{@{}r@{}}\phantom{0}61.8\phantom{000}\\{\scriptsize[60, 64]}\end{tabular} & \begin{tabular}[t]{@{}r@{}}10.4\phantom{0000}\,\phantom{0}(12.2)\phantom{0000}\\{\scriptsize[10, 11]}\\{\scriptsize(12, 13)}\end{tabular} & \begin{tabular}[t]{@{}r@{}}\phantom{0}0.0421\phantom{0}\,(0.0545)\phantom{0}\\{\scriptsize[0.042, 0.042]}\\{\scriptsize(0.054, 0.055)}\end{tabular} & \begin{tabular}[t]{@{}r@{}}\phantom{0}0.0337\phantom{0}\,(0.0427)\phantom{0}\\{\scriptsize[0.034, 0.034]}\\{\scriptsize(0.042, 0.043)}\end{tabular} & \begin{tabular}[t]{@{}r@{}}\phantom{0}1.00\phantom{00}\,\phantom{0}(1.27)\phantom{00}\\{\scriptsize[1.00, 1.0]}\\{\scriptsize(1.3, 1.3)}\end{tabular} & \begin{tabular}[t]{@{}r@{}}\phantom{0}0.570\phantom{0}\,\phantom{0}(0.759)\phantom{0}\\{\scriptsize[0.56, 0.59]}\\{\scriptsize(0.75, 0.78)}\end{tabular} & \begin{tabular}[t]{@{}r@{}}\phantom{0}0.745\,(0.754)\\{\scriptsize[0.74, 0.75]}\\{\scriptsize(0.75, 0.75)}\end{tabular} \\
$\mathbf{P}_\mathrm{base}$ (NON) & $3{\times}10^{-3}$ & $10^{-2}$ & \begin{tabular}[t]{@{}r@{}}\phantom{0}21.2\phantom{000}\\{\scriptsize[19, 25]}\end{tabular} & \begin{tabular}[t]{@{}r@{}}\phantom{0}8.47\phantom{000}\,\phantom{00}(9.32)\phantom{000}\\{\scriptsize[8.4, 8.5]}\\{\scriptsize(9.2, 9.6)}\end{tabular} & \begin{tabular}[t]{@{}r@{}}\phantom{0}0.0512\phantom{0}\,(0.0563)\phantom{0}\\{\scriptsize[0.051, 0.052]}\\{\scriptsize(0.055, 0.057)}\end{tabular} & \begin{tabular}[t]{@{}r@{}}\phantom{0}0.0399\phantom{0}\,(0.0439)\phantom{0}\\{\scriptsize[0.039, 0.040]}\\{\scriptsize(0.043, 0.045)}\end{tabular} & \begin{tabular}[t]{@{}r@{}}\phantom{0}1.19\phantom{00}\,\phantom{0}(1.31)\phantom{00}\\{\scriptsize[1.2, 1.2]}\\{\scriptsize(1.3, 1.3)}\end{tabular} & \begin{tabular}[t]{@{}r@{}}\phantom{0}0.741\phantom{0}\,\phantom{0}(0.813)\phantom{0}\\{\scriptsize[0.73, 0.77]}\\{\scriptsize(0.79, 0.86)}\end{tabular} & \begin{tabular}[t]{@{}r@{}}\phantom{0}0.751\,(0.755)\\{\scriptsize[0.75, 0.75]}\\{\scriptsize(0.75, 0.76)}\end{tabular} \\
$\mathbf{P}_\mathrm{base}$ (ON) & $3{\times}10^{-3}$ & $3{\times}10^{-3}$ & \begin{tabular}[t]{@{}r@{}}\phantom{00}0\phantom{.}\phantom{0000}\\{\scriptsize[0, 0]}\end{tabular} & \begin{tabular}[t]{@{}r@{}}\phantom{0}1.22\phantom{000}\,\phantom{00}(1.22)\phantom{000}\\{\scriptsize[1.2, 1.3]}\\{\scriptsize(1.2, 1.3)}\end{tabular} & \begin{tabular}[t]{@{}r@{}}\phantom{0}0.00634\,(0.00634)\\{\scriptsize[0.0062, 0.0064]}\\{\scriptsize(0.0062, 0.0064)}\end{tabular} & \begin{tabular}[t]{@{}r@{}}\phantom{0}0.00460\,(0.00460)\\{\scriptsize[0.0046, 0.0046]}\\{\scriptsize(0.0046, 0.0046)}\end{tabular} & \begin{tabular}[t]{@{}r@{}}\phantom{0}0.139\phantom{0}\,\phantom{0}(0.139)\phantom{0}\\{\scriptsize[0.14, 0.14]}\\{\scriptsize(0.14, 0.14)}\end{tabular} & \begin{tabular}[t]{@{}r@{}}\phantom{0}0.0798\,\phantom{0}(0.0798)\\{\scriptsize[0.079, 0.082]}\\{\scriptsize(0.079, 0.082)}\end{tabular} & \begin{tabular}[t]{@{}r@{}}\phantom{0}0.660\,(0.660)\\{\scriptsize[0.66, 0.66]}\\{\scriptsize(0.66, 0.66)}\end{tabular} \\
GROOT\textsubscript{rot} & $10^{-2}$ & $10^{-2}$ & \begin{tabular}[t]{@{}r@{}}\phantom{00}0\phantom{.}\phantom{0000}\\{\scriptsize[0, 0]}\end{tabular} & \begin{tabular}[t]{@{}r@{}}\phantom{0}0.0499\phantom{0}\,\phantom{00}(0.0499)\phantom{0}\\{\scriptsize[0.048, 0.052]}\\{\scriptsize(0.048, 0.052)}\end{tabular} & \begin{tabular}[t]{@{}r@{}}\phantom{0}0.00385\,(0.00385)\\{\scriptsize[0.0037, 0.0041]}\\{\scriptsize(0.0037, 0.0041)}\end{tabular} & \begin{tabular}[t]{@{}r@{}}\phantom{0}0.00368\,(0.00368)\\{\scriptsize[0.0036, 0.0039]}\\{\scriptsize(0.0036, 0.0039)}\end{tabular} & \begin{tabular}[t]{@{}r@{}}\phantom{0}0.0798\,\phantom{0}(0.0798)\\{\scriptsize[0.077, 0.084]}\\{\scriptsize(0.077, 0.084)}\end{tabular} & \begin{tabular}[t]{@{}r@{}}\phantom{0}0.0559\,\phantom{0}(0.0559)\\{\scriptsize[0.055, 0.056]}\\{\scriptsize(0.055, 0.056)}\end{tabular} & \begin{tabular}[t]{@{}r@{}}\phantom{0}0.623\,(0.623)\\{\scriptsize[0.62, 0.63]}\\{\scriptsize(0.62, 0.63)}\end{tabular} \\
GROOT\textsubscript{proj} & $3{\times}10^{-3}$ & $10^{-2}$ & \begin{tabular}[t]{@{}r@{}}\phantom{00}0\phantom{.}\phantom{0000}\\{\scriptsize[0, 0]}\end{tabular} & \begin{tabular}[t]{@{}r@{}}\phantom{0}0.0254\phantom{0}\,\phantom{00}(0.0254)\phantom{0}\\{\scriptsize[0.025, 0.026]}\\{\scriptsize(0.025, 0.026)}\end{tabular} & \begin{tabular}[t]{@{}r@{}}\phantom{0}0.00307\,(\underline{0.00307})\\{\scriptsize[0.0031, 0.0031]}\\{\scriptsize(\underline{0.0031}, 0.0031)}\end{tabular} & \begin{tabular}[t]{@{}r@{}}\phantom{0}0.00284\,(\underline{0.00284})\\{\scriptsize[0.0028, 0.0028]}\\{\scriptsize(\underline{0.0028}, 0.0028)}\end{tabular} & \begin{tabular}[t]{@{}r@{}}\phantom{0}0.0622\,\phantom{0}(\underline{0.0622})\\{\scriptsize[0.062, 0.062]}\\{\scriptsize(\underline{0.062}, 0.062)}\end{tabular} & \begin{tabular}[t]{@{}r@{}}\phantom{0}0.0442\,\phantom{0}(\underline{0.0442})\\{\scriptsize[0.044, 0.044]}\\{\scriptsize(\underline{0.044}, 0.044)}\end{tabular} & \begin{tabular}[t]{@{}r@{}}\phantom{0}0.605\,(\underline{0.605})\\{\scriptsize[0.60, 0.61]}\\{\scriptsize(\underline{0.60}, 0.61)}\end{tabular} \\
\midrule
\multicolumn{10}{l}{\textit{Hamiltonian-space models}} \\
$\mathbf{H}_\mathrm{abs.}$ (NON){}$^{\dagger}$ & $10^{-4}$ & $10^{-3}$ & \phantom{00}3.85\phantom{00} & \phantom{0}0.995\phantom{00}\,\phantom{0}(30.3)\phantom{0000} & \phantom{0}0.0208\phantom{0}\,(0.484)\phantom{00} & \phantom{0}0.0168\phantom{0}\,(0.0832)\phantom{0} & \phantom{0}0.381\phantom{0}\,\phantom{0}(3.18)\phantom{00} & \phantom{0}0.263\phantom{0}\,\phantom{0}(2.32)\phantom{00} & \phantom{0}0.703\,(0.714) \\
$\mathbf{H}_\mathrm{base}$ (NON){}$^{\dagger}$ & $3{\times}10^{-5}$ & $10^{-3}$ & \phantom{00}1.15\phantom{00} & \phantom{0}0.635\phantom{00}\,\phantom{00}(3.92)\phantom{000} & \phantom{0}0.0226\phantom{0}\,(0.0337)\phantom{0} & \phantom{0}0.0190\phantom{0}\,(0.0249)\phantom{0} & \phantom{0}0.427\phantom{0}\,\phantom{0}(0.632)\phantom{0} & \phantom{0}0.331\phantom{0}\,\phantom{0}(0.540)\phantom{0} & \phantom{0}0.709\,(0.711) \\
$\mathbf{H}_\mathrm{base}$ (NON), QM9 best{}$^{\dagger}$ & $10^{-3}$ & $10^{-3}$ & \phantom{00}0.0500 & \phantom{0}0.107\phantom{00}\,\phantom{00}(0.112)\phantom{00} & \phantom{0}0.00894\,(0.00897) & \phantom{0}0.00826\,(0.00829) & \phantom{0}0.179\phantom{0}\,\phantom{0}(0.180)\phantom{0} & \phantom{0}0.127\phantom{0}\,\phantom{0}(0.128)\phantom{0} & \phantom{0}0.667\,(0.667) \\
$\mathbf{H}_\mathrm{base}$ (NON), WALoss{}$^{\dagger}$ & $10^{-4}$ & $10^{-3}$ & 100\phantom{.}\phantom{0000} & --\phantom{.}\phantom{00000}\,(852)\phantom{.}\phantom{00000} & --\phantom{.}\phantom{00000}\,(0.461)\phantom{00} & --\phantom{.}\phantom{00000}\,(2.14)\phantom{000} & --\phantom{.}\phantom{0000}\,(44.8)\phantom{000} & --\phantom{.}\phantom{0000}\,(26.4)\phantom{000} & --\phantom{.}\phantom{000}\,(1.06)\phantom{0} \\
$\mathbf{H}_\mathrm{base}$ (ON) & $3{\times}10^{-5}$ & $3{\times}10^{-3}$ & \begin{tabular}[t]{@{}r@{}}\phantom{00}0\phantom{.}\phantom{0000}\\{\scriptsize[0, 0]}\end{tabular} & \begin{tabular}[t]{@{}r@{}}\phantom{0}0.146\phantom{00}\,\phantom{00}(0.146)\phantom{00}\\{\scriptsize[0.14, 0.15]}\\{\scriptsize(0.14, 0.15)}\end{tabular} & \begin{tabular}[t]{@{}r@{}}\phantom{0}0.0157\phantom{0}\,(0.0157)\phantom{0}\\{\scriptsize[0.015, 0.016]}\\{\scriptsize(0.015, 0.016)}\end{tabular} & \begin{tabular}[t]{@{}r@{}}\phantom{0}0.0154\phantom{0}\,(0.0154)\phantom{0}\\{\scriptsize[0.015, 0.016]}\\{\scriptsize(0.015, 0.016)}\end{tabular} & \begin{tabular}[t]{@{}r@{}}\phantom{0}0.326\phantom{0}\,\phantom{0}(0.326)\phantom{0}\\{\scriptsize[0.32, 0.33]}\\{\scriptsize(0.32, 0.33)}\end{tabular} & \begin{tabular}[t]{@{}r@{}}\phantom{0}0.238\phantom{0}\,\phantom{0}(0.238)\phantom{0}\\{\scriptsize[0.23, 0.24]}\\{\scriptsize(0.23, 0.24)}\end{tabular} & \begin{tabular}[t]{@{}r@{}}\phantom{0}0.690\,(0.690)\\{\scriptsize[0.69, 0.69]}\\{\scriptsize(0.69, 0.69)}\end{tabular} \\
$\mathbf{H}_\mathrm{base} + c_t \mathbf{S}$ (NON) & $10^{-5}$ & $10^{-3}$ & \begin{tabular}[t]{@{}r@{}}\phantom{00}5.76\phantom{00}\\{\scriptsize[2.4, 10]}\end{tabular} & \begin{tabular}[t]{@{}r@{}}\phantom{0}1.05\phantom{000}\,\phantom{0}(52.0)\phantom{0000}\\{\scriptsize[0.86, 1.4]}\\{\scriptsize(22, 88)}\end{tabular} & \begin{tabular}[t]{@{}r@{}}\phantom{0}0.0286\phantom{0}\,(0.213)\phantom{00}\\{\scriptsize[0.028, 0.031]}\\{\scriptsize(0.074, 0.46)}\end{tabular} & \begin{tabular}[t]{@{}r@{}}\phantom{0}0.0242\phantom{0}\,(0.109)\phantom{00}\\{\scriptsize[0.023, 0.026]}\\{\scriptsize(0.057, 0.18)}\end{tabular} & \begin{tabular}[t]{@{}r@{}}\phantom{0}0.543\phantom{0}\,\phantom{0}(3.21)\phantom{00}\\{\scriptsize[0.52, 0.59]}\\{\scriptsize(1.7, 5.6)}\end{tabular} & \begin{tabular}[t]{@{}r@{}}\phantom{0}0.385\phantom{0}\,\phantom{0}(2.73)\phantom{00}\\{\scriptsize[0.37, 0.40]}\\{\scriptsize(1.2, 4.6)}\end{tabular} & \begin{tabular}[t]{@{}r@{}}\phantom{0}0.725\,(0.742)\\{\scriptsize[0.72, 0.73]}\\{\scriptsize(0.73, 0.76)}\end{tabular} \\
$\mathbf{H}_\mathrm{base} + c_t \mathbf{S}$ (NON), QM9 best{}$^{\dagger}$ & $10^{-3}$ & $10^{-3}$ & \phantom{00}0.350\phantom{0} & \phantom{0}0.0558\phantom{0}\,\phantom{00}(5.45)\phantom{000} & \phantom{0}0.00666\,(0.0258)\phantom{0} & \phantom{0}0.00599\,(0.0162)\phantom{0} & \phantom{0}0.133\phantom{0}\,\phantom{0}(0.376)\phantom{0} & \phantom{0}0.106\phantom{0}\,\phantom{0}(0.448)\phantom{0} & \phantom{0}0.651\,(0.653) \\
$\mathbf{H}_\mathrm{base} + c_t \mathbf{S}$ (NON), WALoss & $10^{-4}$ & $10^{-3}$ & \begin{tabular}[t]{@{}r@{}}100\phantom{.}\phantom{0000}\\{\scriptsize[100, 100]}\end{tabular} & \begin{tabular}[t]{@{}r@{}}--\phantom{.}\phantom{00000}\,(164)\phantom{.}\phantom{00000}\\{\scriptsize(130, 203)}\end{tabular} & \begin{tabular}[t]{@{}r@{}}--\phantom{.}\phantom{00000}\,(0.152)\phantom{00}\\{\scriptsize(0.12, 0.19)}\end{tabular} & \begin{tabular}[t]{@{}r@{}}--\phantom{.}\phantom{00000}\,(0.487)\phantom{00}\\{\scriptsize(0.39, 0.61)}\end{tabular} & \begin{tabular}[t]{@{}r@{}}--\phantom{.}\phantom{0000}\,\phantom{0}(8.54)\phantom{00}\\{\scriptsize(6.6, 11)}\end{tabular} & \begin{tabular}[t]{@{}r@{}}--\phantom{.}\phantom{0000}\,\phantom{0}(9.11)\phantom{00}\\{\scriptsize(7.4, 11)}\end{tabular} & \begin{tabular}[t]{@{}r@{}}--\phantom{.}\phantom{000}\,(0.930)\\{\scriptsize(0.91, 0.95)}\end{tabular} \\
$\mathbf{H}_\mathrm{base} + c_t \mathbf{S}$ (ON) & $3{\times}10^{-5}$ & $3{\times}10^{-3}$ & \begin{tabular}[t]{@{}r@{}}\phantom{00}0\phantom{.}\phantom{0000}\\{\scriptsize[0, 0]}\end{tabular} & \begin{tabular}[t]{@{}r@{}}\phantom{0}0.132\phantom{00}\,\phantom{00}(0.132)\phantom{00}\\{\scriptsize[0.13, 0.13]}\\{\scriptsize(0.13, 0.13)}\end{tabular} & \begin{tabular}[t]{@{}r@{}}\phantom{0}0.0145\phantom{0}\,(0.0145)\phantom{0}\\{\scriptsize[0.014, 0.015]}\\{\scriptsize(0.014, 0.015)}\end{tabular} & \begin{tabular}[t]{@{}r@{}}\phantom{0}0.0136\phantom{0}\,(0.0136)\phantom{0}\\{\scriptsize[0.013, 0.014]}\\{\scriptsize(0.013, 0.014)}\end{tabular} & \begin{tabular}[t]{@{}r@{}}\phantom{0}0.291\phantom{0}\,\phantom{0}(0.291)\phantom{0}\\{\scriptsize[0.29, 0.29]}\\{\scriptsize(0.29, 0.29)}\end{tabular} & \begin{tabular}[t]{@{}r@{}}\phantom{0}0.212\phantom{0}\,\phantom{0}(0.212)\phantom{0}\\{\scriptsize[0.20, 0.22]}\\{\scriptsize(0.20, 0.22)}\end{tabular} & \begin{tabular}[t]{@{}r@{}}\phantom{0}0.684\,(0.684)\\{\scriptsize[0.68, 0.68]}\\{\scriptsize(0.68, 0.68)}\end{tabular} \\
ROCKET 2nd stage & $3{\times}10^{-4}$ & $3{\times}10^{-3}$ & \begin{tabular}[t]{@{}r@{}}\phantom{00}0\phantom{.}\phantom{0000}\\{\scriptsize[0, 0]}\end{tabular} & \begin{tabular}[t]{@{}r@{}}\phantom{0}0.0395\phantom{0}\,\phantom{00}(0.0395)\phantom{0}\\{\scriptsize[0.039, 0.040]}\\{\scriptsize(0.039, 0.040)}\end{tabular} & \begin{tabular}[t]{@{}r@{}}\phantom{0}0.00612\,(0.00612)\\{\scriptsize[0.0059, 0.0063]}\\{\scriptsize(0.0059, 0.0063)}\end{tabular} & \begin{tabular}[t]{@{}r@{}}\phantom{0}0.00605\,(0.00605)\\{\scriptsize[0.0060, 0.0061]}\\{\scriptsize(0.0060, 0.0061)}\end{tabular} & \begin{tabular}[t]{@{}r@{}}\phantom{0}0.128\phantom{0}\,\phantom{0}(0.128)\phantom{0}\\{\scriptsize[0.13, 0.13]}\\{\scriptsize(0.13, 0.13)}\end{tabular} & \begin{tabular}[t]{@{}r@{}}\phantom{0}0.0957\,\phantom{0}(0.0957)\\{\scriptsize[0.088, 0.10]}\\{\scriptsize(0.088, 0.10)}\end{tabular} & \begin{tabular}[t]{@{}r@{}}\phantom{0}0.653\,(0.653)\\{\scriptsize[0.65, 0.65]}\\{\scriptsize(0.65, 0.65)}\end{tabular} \\
ROCKET 3rd stage & $10^{-3}$ & $3{\times}10^{-3}$ & \begin{tabular}[t]{@{}r@{}}\phantom{00}0\phantom{.}\phantom{0000}\\{\scriptsize[0, 0]}\end{tabular} & \begin{tabular}[t]{@{}r@{}}\phantom{0}0.00958\,\phantom{00}(\underline{0.00958})\\{\scriptsize[0.0092, 0.0099]}\\{\scriptsize(\underline{0.0092}, 0.0099)}\end{tabular} & \begin{tabular}[t]{@{}r@{}}\phantom{0}0.00402\,(0.00402)\\{\scriptsize[0.0040, 0.0041]}\\{\scriptsize(0.0040, 0.0041)}\end{tabular} & \begin{tabular}[t]{@{}r@{}}\phantom{0}0.00428\,(0.00428)\\{\scriptsize[0.0041, 0.0044]}\\{\scriptsize(0.0041, 0.0044)}\end{tabular} & \begin{tabular}[t]{@{}r@{}}\phantom{0}0.0898\,\phantom{0}(0.0898)\\{\scriptsize[0.088, 0.091]}\\{\scriptsize(0.088, 0.091)}\end{tabular} & \begin{tabular}[t]{@{}r@{}}\phantom{0}0.0690\,\phantom{0}(0.0690)\\{\scriptsize[0.067, 0.071]}\\{\scriptsize(0.067, 0.071)}\end{tabular} & \begin{tabular}[t]{@{}r@{}}\phantom{0}0.618\,(0.618)\\{\scriptsize[0.62, 0.62]}\\{\scriptsize(0.62, 0.62)}\end{tabular} \\
\bottomrule
\end{tabular}}
\end{table}

\clearpage

\subsubsection{Density-Prediction Plots}

\begin{figure}[h]
  \centering
  \includegraphics[width=\linewidth]{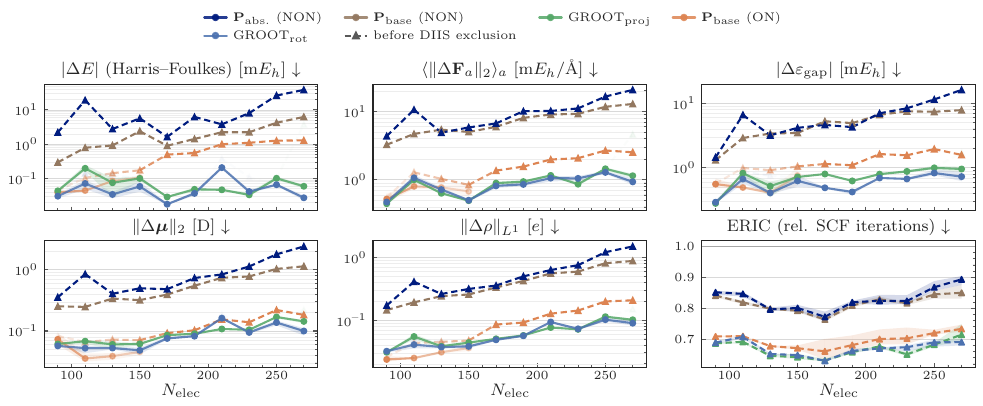}
  \caption{\textbf{Density-readout models on QM40, resolved over electron count.}
  All models use the QM9-trained checkpoint of \cref{tab:train-dataset} evaluated on the
  out-of-distribution QM40 set of \cref{tab:eval-dataset}, binned by the molecular electron
  count $N_\mathrm{elec}$ and given a one-Fock-build budget.
  We compare the absolute density readout, its unrescaled variant, the
  $\Delta\mathbf{P}_\mathrm{MINAO}$ readout in the AO basis, the same $\Delta$-readout in the
  L\"owdin basis (\oursOrthonormal{}), and the two manifold-restricted variants \ours{}$_\mathrm{proj}$
  and \ours{}$_\mathrm{rot}$ (\cref{sec:methods}).
  Lines are means over the entropy-seed replicates stated in the legend and the shaded band
  spans the seed spread, so single-seed models carry no band; the vertical axes of all error
  panels are logarithmic.
  ERIC is the effective relative iteration count~\citep{eberhardTransferableScfAcceleration2026},
  where values below one mean fewer SCF iterations than the reference initial guess.}
  \label{fig:qm40-nelec-density-ablation}
\end{figure}

\begin{figure}[!h]
  \centering
  \includegraphics[width=\linewidth]{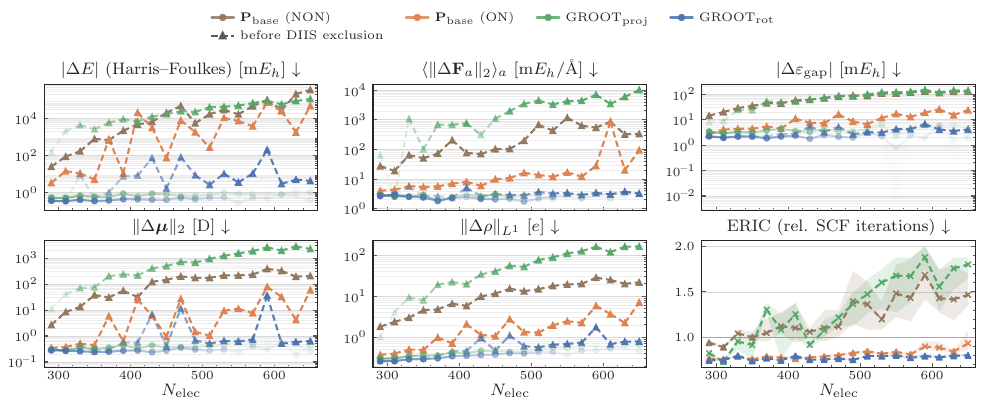}
  \caption{\textbf{Density-readout models on QMugs, resolved over electron count.}
  Same QM9-trained checkpoints, evaluation protocol, binning and aggregation as
  \cref{fig:qm40-nelec-density-ablation}, on the larger QMugs molecules
  (\cref{fig:qmugs-electron-histogram}), which reach $N_\mathrm{elec}\approx650$ against
  $\approx270$ on QM40.
  \ours{}$_\mathrm{rot}$ is the only model whose error panels stay flat over the whole range and
  it leads every error panel, by up to three orders of magnitude on the energies; together with the L\"owdin
  $\Delta$-readout it is also the only model that stays below the reference guess in ERIC, whereas
  \ours{}$_\mathrm{proj}$ rises above it from $N_\mathrm{elec}\approx320$ on and climbs towards
  the cap on the largest molecules.
  The evaluation set is the 950-molecule \texttt{qmugs\_ebins200\_cheap50} subset of \cref{tab:ablation_qmugs_r50}.}
  \label{fig:qmugs-nelec-density-ablation}
\end{figure}

\clearpage

\subsubsection{Hamiltonian-Prediction Plots}

\begin{figure}[h]
  \centering
  \includegraphics[width=\linewidth]{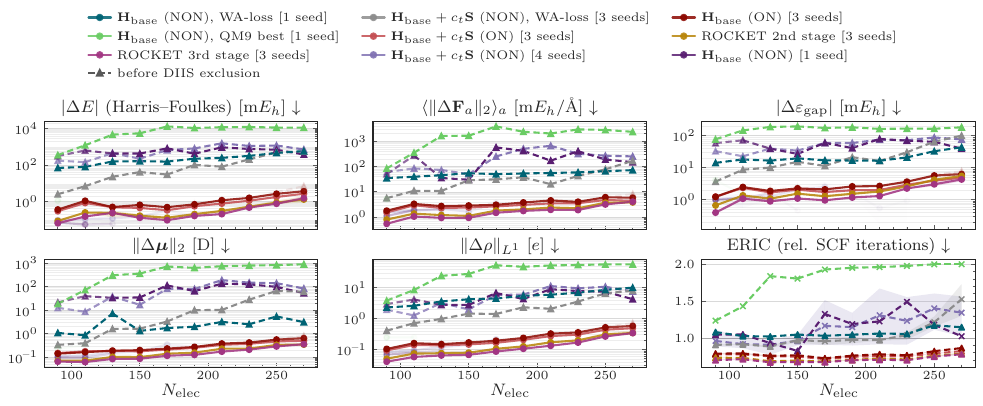}
  \caption{\textbf{Hamiltonian-readout models on QM40, resolved over electron count.}
  Same evaluation protocol, binning and aggregation as
  \cref{fig:qm40-nelec-density-ablation}, for the $\Delta\mathbf{H}_\mathrm{SAP}$ readouts.
  The models span the representation (AO versus L\"owdin basis, \cref{eq:lowdin_transform}),
  the loss (WALoss versus the \ourLoss{} objective of \cref{eq:rocket_stage_1}) and the gauge
  treatment (none, a training-set fitted constant shift, and the EMA-centered restraint of
  \cref{eq:rocket_stage_2}), with the occupied-plus-LUMO masked variant as the final model.
  The L\"owdin-basis \ourLoss{} models separate from the AO-basis models by two to three orders of
  magnitude on every energy panel, and are the only models that stay below the reference guess
  in ERIC across the whole range.}
  \label{fig:qm40-nelec-hamiltonian-ablation}
\end{figure}

\begin{figure}[!h]
  \centering
  \includegraphics[width=\linewidth]{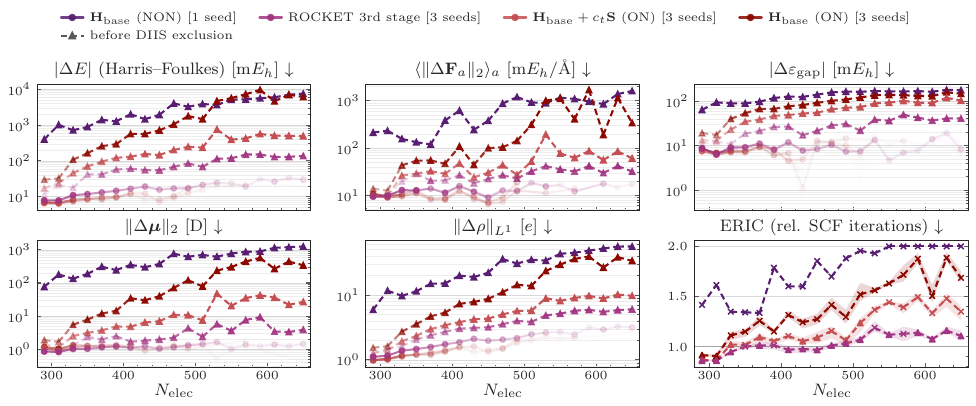}
  \caption{\textbf{Hamiltonian-readout models on QMugs, resolved over electron count.}
  Same protocol as \cref{fig:qmugs-nelec-density-ablation}, for the
  $\Delta\mathbf{H}_\mathrm{SAP}$ readouts.
  The AO-basis model sits one to two orders of magnitude above the L\"owdin-basis \ourLoss{} models
  on every panel and is pinned at the ERIC cap across the range.
  Among the \ourLoss{} models the gauge-EMA, occupied-plus-LUMO masked variant leads every panel,
  but unlike on QM40 (\cref{fig:qm40-nelec-hamiltonian-ablation}) every
  Hamiltonian model exceeds the reference guess in ERIC beyond $N_\mathrm{elec}\approx500$.}
  \label{fig:qmugs-nelec-hamiltonian-ablation}
\end{figure}

\clearpage

\keepwithfloat
\subsubsection{QM40 Extrapolation Data Tables}
\vspace{-0.2cm}

\begin{table}[h]
\centering
\footnotesize
\setlength{\tabcolsep}{3pt}
\renewcommand{\arraystretch}{1.05}
\caption{\textbf{Observable errors under size extrapolation to QM40 for QM9-trained models (\cref{tab:eval-dataset}).}
Each cell reports the mean over seeds of the per-molecule mean absolute error, with the seed range in square brackets.
Values in round brackets average all molecules of the lane before outlier exclusion, identical across models.
Per column, the best bracketed mean and the best single seed are underlined.
Predictions with DIIS residual $r_\mathrm{DIIS}/\sqrt{N_\mathrm{elec}} > 30$~mHa are excluded as outliers, where $r_\mathrm{DIIS} = \sqrt{8}\,\|\mathbf{S}^{-1/2}(\mathbf{F}\mathbf{P}\mathbf{S} - \mathbf{S}\mathbf{P}\mathbf{F})\mathbf{S}^{-1/2}\|_F$ for scored density $\mathbf{P}$ and Fock matrix $\mathbf{F} = \mathbf{F}[\mathbf{P}]$.
Unbracketed values average kept molecules only, and Excluded reports the lost molecule percentage.
Models losing all molecules display a dash before each bracketed value.
$\hat E_\mathrm{KS}$ is the KS-DFT energy evaluated at the predicted density, $\hat E$ adds the first-order self-consistency correction (\cref{sec:experiments}), and $E^*$ is the converged reference energy.
Before outlier exclusion, 1000 molecules enter Excluded, $\hat E_\mathrm{KS}$, $\hat E$, Dipole, $L_1$, and HOMO--LUMO, while 200 molecules enter Forces and ERIC.
ERIC reports SCF iterations relative to the MINAO guess, averaged over the full lane with unconverged runs capped at 2.
LR and WD denote peak learning rate and weight decay from the training configuration.
$^{\dagger}$ Single seed without range.}
\label{tab:ablation_qm40_r100}
\resizebox{\linewidth}{!}{%
\begin{tabular}{lrrrrrrrrrr}
\toprule
Model & LR & WD & Excluded & Self-consistent & First-order corrected & Dipole & $L_1$ & Forces & HOMO--LUMO & ERIC \\
 &  &  & {\scriptsize (\%)} & energy~{\scriptsize (mHa)} & energy~{\scriptsize (mHa)} & {\scriptsize (D)} & {\scriptsize (e)} & {\scriptsize (mHa/\AA)} & {\scriptsize (mHa)} & {\scriptsize (ratio)} \\
 &  &  &  & {\scriptsize $|\hat E_\mathrm{KS} - E^*|$} & {\scriptsize $|\hat E - E^*|$} & {\scriptsize $\|\Delta\boldsymbol\mu\|_2$} & {\scriptsize $\|\Delta\rho\|_{L^1}$} & {\scriptsize $\langle\|\Delta\mathbf{F}_a\|_2\rangle_a$} & {\scriptsize $|\Delta\varepsilon_\mathrm{gap}|$} &  \\
\midrule
\multicolumn{11}{l}{\textit{Density-space models}} \\
$\mathbf{P}_\mathrm{abs.}$ (NON) & $3{\times}10^{-3}$ & $10^{-2}$ & \begin{tabular}[t]{@{}r@{}}100\phantom{.}\phantom{0000}\\{\scriptsize[100, 100]}\end{tabular} & \begin{tabular}[t]{@{}r@{}}--\phantom{.}\phantom{000}\,\phantom{0}(338)\phantom{.}\phantom{000}\\{\scriptsize(245, 431)}\end{tabular} & \begin{tabular}[t]{@{}r@{}}--\phantom{.}\phantom{0000}\,\phantom{00}(11.5)\phantom{00}\\{\scriptsize(8.0, 16)}\end{tabular} & \begin{tabular}[t]{@{}r@{}}--\phantom{.}\phantom{0000}\,\phantom{00}(0.952)\phantom{0}\\{\scriptsize(0.76, 1.1)}\end{tabular} & \begin{tabular}[t]{@{}r@{}}--\phantom{.}\phantom{0000}\,\phantom{0}(0.616)\phantom{0}\\{\scriptsize(0.55, 0.67)}\end{tabular} & \begin{tabular}[t]{@{}r@{}}--\phantom{.}\phantom{000}\,\phantom{00}(10.1)\phantom{00}\\{\scriptsize(8.5, 12)}\end{tabular} & \begin{tabular}[t]{@{}r@{}}--\phantom{.}\phantom{000}\,\phantom{00}(6.72)\phantom{0}\\{\scriptsize(5.8, 7.7)}\end{tabular} & \begin{tabular}[t]{@{}r@{}}--\phantom{.}\phantom{000}\,(0.830)\\{\scriptsize(0.82, 0.84)}\end{tabular} \\
$\mathbf{P}_\mathrm{abs.}$ (NON), no rescale{}$^{\dagger}$ & $3{\times}10^{-3}$ & $10^{-2}$ & 100\phantom{.}\phantom{0000} & --\phantom{.}\phantom{000}\,\phantom{0}(175)\phantom{.}\phantom{000} & --\phantom{.}\phantom{0000}\,\phantom{000}(4.07)\phantom{0} & --\phantom{.}\phantom{0000}\,\phantom{00}(0.594)\phantom{0} & --\phantom{.}\phantom{0000}\,\phantom{0}(0.477)\phantom{0} & --\phantom{.}\phantom{000}\,\phantom{000}(8.45)\phantom{0} & --\phantom{.}\phantom{000}\,\phantom{00}(5.36)\phantom{0} & --\phantom{.}\phantom{000}\,(0.821) \\
$\mathbf{P}_\mathrm{base}$ (NON) & $3{\times}10^{-3}$ & $10^{-2}$ & \begin{tabular}[t]{@{}r@{}}100\phantom{.}\phantom{0000}\\{\scriptsize[100, 100]}\end{tabular} & \begin{tabular}[t]{@{}r@{}}--\phantom{.}\phantom{000}\,\phantom{00}(75.7)\phantom{00}\\{\scriptsize(61, 98)}\end{tabular} & \begin{tabular}[t]{@{}r@{}}--\phantom{.}\phantom{0000}\,\phantom{000}(2.17)\phantom{0}\\{\scriptsize(1.3, 3.8)}\end{tabular} & \begin{tabular}[t]{@{}r@{}}--\phantom{.}\phantom{0000}\,\phantom{00}(0.581)\phantom{0}\\{\scriptsize(0.50, 0.72)}\end{tabular} & \begin{tabular}[t]{@{}r@{}}--\phantom{.}\phantom{0000}\,\phantom{0}(0.446)\phantom{0}\\{\scriptsize(0.41, 0.52)}\end{tabular} & \begin{tabular}[t]{@{}r@{}}--\phantom{.}\phantom{000}\,\phantom{000}(7.51)\phantom{0}\\{\scriptsize(6.7, 9.0)}\end{tabular} & \begin{tabular}[t]{@{}r@{}}--\phantom{.}\phantom{000}\,\phantom{00}(5.06)\phantom{0}\\{\scriptsize(4.9, 5.3)}\end{tabular} & \begin{tabular}[t]{@{}r@{}}--\phantom{.}\phantom{000}\,(0.816)\\{\scriptsize(0.81, 0.82)}\end{tabular} \\
$\mathbf{P}_\mathrm{base}$ (NON), Aufbau-purified & $3{\times}10^{-3}$ & $10^{-2}$ & \begin{tabular}[t]{@{}r@{}}\phantom{0}96.2\phantom{000}\\{\scriptsize[96, 96]}\end{tabular} & \begin{tabular}[t]{@{}r@{}}\phantom{0}2.90\phantom{0}\,\phantom{00}(52.5)\phantom{00}\\{\scriptsize[2.9, 3.0]}\\{\scriptsize(47, 61)}\end{tabular} & \begin{tabular}[t]{@{}r@{}}\phantom{0}0.0374\,\phantom{000}(0.981)\\{\scriptsize[0.036, 0.040]}\\{\scriptsize(0.63, 1.5)}\end{tabular} & \begin{tabular}[t]{@{}r@{}}\phantom{0}0.0305\,\phantom{00}(0.255)\phantom{0}\\{\scriptsize[0.030, 0.032]}\\{\scriptsize(0.19, 0.38)}\end{tabular} & \begin{tabular}[t]{@{}r@{}}\phantom{0}0.0342\,\phantom{0}(0.217)\phantom{0}\\{\scriptsize[0.033, 0.037]}\\{\scriptsize(0.20, 0.24)}\end{tabular} & \begin{tabular}[t]{@{}r@{}}\phantom{0}0.578\,\phantom{000}(2.53)\phantom{0}\\{\scriptsize[0.56, 0.59]}\\{\scriptsize(2.3, 2.8)}\end{tabular} & \begin{tabular}[t]{@{}r@{}}\phantom{0}0.473\,\phantom{00}(1.89)\phantom{0}\\{\scriptsize[0.38, 0.56]}\\{\scriptsize(1.8, 1.9)}\end{tabular} & \begin{tabular}[t]{@{}r@{}}\phantom{0}0.722\,(0.758)\\{\scriptsize[0.71, 0.73]}\\{\scriptsize(0.75, 0.76)}\end{tabular} \\
$\mathbf{P}_\mathrm{base}$ (ON) & $3{\times}10^{-3}$ & $3{\times}10^{-3}$ & \begin{tabular}[t]{@{}r@{}}\phantom{0}75.3\phantom{000}\\{\scriptsize[72, 80]}\end{tabular} & \begin{tabular}[t]{@{}r@{}}\phantom{0}6.92\phantom{0}\,\phantom{00}(45.5)\phantom{00}\\{\scriptsize[6.0, 7.6]}\\{\scriptsize(18, 92)}\end{tabular} & \begin{tabular}[t]{@{}r@{}}\phantom{0}0.0581\,\phantom{000}(0.617)\\{\scriptsize[0.050, 0.064]}\\{\scriptsize(0.42, 0.88)}\end{tabular} & \begin{tabular}[t]{@{}r@{}}\phantom{0}0.0518\,\phantom{00}(0.118)\phantom{0}\\{\scriptsize[0.045, 0.061]}\\{\scriptsize(0.12, 0.12)}\end{tabular} & \begin{tabular}[t]{@{}r@{}}\phantom{0}0.0267\,\phantom{0}(0.102)\phantom{0}\\{\scriptsize[0.025, 0.028]}\\{\scriptsize(0.083, 0.13)}\end{tabular} & \begin{tabular}[t]{@{}r@{}}\phantom{0}0.663\,\phantom{000}(1.59)\phantom{0}\\{\scriptsize[0.61, 0.70]}\\{\scriptsize(1.3, 1.7)}\end{tabular} & \begin{tabular}[t]{@{}r@{}}\phantom{0}0.524\,\phantom{00}(1.25)\phantom{0}\\{\scriptsize[0.51, 0.55]}\\{\scriptsize(1.1, 1.6)}\end{tabular} & \begin{tabular}[t]{@{}r@{}}\phantom{0}0.694\,(0.697)\\{\scriptsize[0.69, 0.70]}\\{\scriptsize(0.69, 0.71)}\end{tabular} \\
$\mathbf{P}_\mathrm{base}$ (ON), Aufbau-purified & $3{\times}10^{-3}$ & $3{\times}10^{-3}$ & \begin{tabular}[t]{@{}r@{}}\phantom{0}66.0\phantom{000}\\{\scriptsize[63, 70]}\end{tabular} & \begin{tabular}[t]{@{}r@{}}\phantom{0}2.87\phantom{0}\,\phantom{00}(24.5)\phantom{00}\\{\scriptsize[2.8, 2.9]}\\{\scriptsize(19, 33)}\end{tabular} & \begin{tabular}[t]{@{}r@{}}\phantom{0}0.0472\,\phantom{000}(0.214)\\{\scriptsize[0.041, 0.057]}\\{\scriptsize(0.18, 0.24)}\end{tabular} & \begin{tabular}[t]{@{}r@{}}\phantom{0}0.0559\,\phantom{00}(0.113)\phantom{0}\\{\scriptsize[0.053, 0.061]}\\{\scriptsize(0.11, 0.12)}\end{tabular} & \begin{tabular}[t]{@{}r@{}}\phantom{0}0.0364\,\phantom{0}(0.113)\phantom{0}\\{\scriptsize[0.035, 0.039]}\\{\scriptsize(0.091, 0.14)}\end{tabular} & \begin{tabular}[t]{@{}r@{}}\phantom{0}0.720\,\phantom{000}(1.36)\phantom{0}\\{\scriptsize[0.58, 0.81]}\\{\scriptsize(1.2, 1.6)}\end{tabular} & \begin{tabular}[t]{@{}r@{}}\phantom{0}0.455\,\phantom{00}(1.09)\phantom{0}\\{\scriptsize[0.42, 0.50]}\\{\scriptsize(1.0, 1.2)}\end{tabular} & \begin{tabular}[t]{@{}r@{}}\phantom{0}0.691\,(0.702)\\{\scriptsize[0.69, 0.69]}\\{\scriptsize(0.69, 0.72)}\end{tabular} \\
GROOT\textsubscript{rot} & $10^{-2}$ & $10^{-2}$ & \begin{tabular}[t]{@{}r@{}}\phantom{00}1.97\phantom{00}\\{\scriptsize[1.8, 2.1]}\end{tabular} & \begin{tabular}[t]{@{}r@{}}\phantom{0}1.97\phantom{0}\,\phantom{000}(2.09)\phantom{0}\\{\scriptsize[1.9, 2.1]}\\{\scriptsize(2.0, 2.2)}\end{tabular} & \begin{tabular}[t]{@{}r@{}}\phantom{0}0.0586\,\phantom{000}(\underline{0.104})\\{\scriptsize[0.054, 0.063]}\\{\scriptsize(0.098, 0.11)}\end{tabular} & \begin{tabular}[t]{@{}r@{}}\phantom{0}0.0866\,\phantom{00}(\underline{0.0941})\\{\scriptsize[0.083, 0.088]}\\{\scriptsize(\underline{0.091}, 0.096)}\end{tabular} & \begin{tabular}[t]{@{}r@{}}\phantom{0}0.0612\,\phantom{0}(\underline{0.0652})\\{\scriptsize[0.059, 0.063]}\\{\scriptsize(0.063, 0.067)}\end{tabular} & \begin{tabular}[t]{@{}r@{}}\phantom{0}0.875\,\phantom{000}(\underline{0.889})\\{\scriptsize[0.87, 0.89]}\\{\scriptsize(\underline{0.87}, 0.91)}\end{tabular} & \begin{tabular}[t]{@{}r@{}}\phantom{0}0.585\,\phantom{00}(\underline{0.670})\\{\scriptsize[0.58, 0.60]}\\{\scriptsize(\underline{0.66}, 0.69)}\end{tabular} & \begin{tabular}[t]{@{}r@{}}\phantom{0}0.671\,(0.672)\\{\scriptsize[0.67, 0.67]}\\{\scriptsize(0.67, 0.67)}\end{tabular} \\
GROOT\textsubscript{proj} & $3{\times}10^{-3}$ & $10^{-2}$ & \begin{tabular}[t]{@{}r@{}}\phantom{00}0.633\phantom{0}\\{\scriptsize[0.30, 1.0]}\end{tabular} & \begin{tabular}[t]{@{}r@{}}\phantom{0}1.14\phantom{0}\,\phantom{000}(2.88)\phantom{0}\\{\scriptsize[1.1, 1.2]}\\{\scriptsize(1.3, 5.6)}\end{tabular} & \begin{tabular}[t]{@{}r@{}}\phantom{0}0.0730\,\phantom{000}(0.453)\\{\scriptsize[0.063, 0.082]}\\{\scriptsize(\underline{0.075}, 1.2)}\end{tabular} & \begin{tabular}[t]{@{}r@{}}\phantom{0}0.0951\,\phantom{00}(0.104)\phantom{0}\\{\scriptsize[0.092, 0.097]}\\{\scriptsize(0.094, 0.12)}\end{tabular} & \begin{tabular}[t]{@{}r@{}}\phantom{0}0.0636\,\phantom{0}(0.0692)\\{\scriptsize[0.062, 0.065]}\\{\scriptsize(\underline{0.063}, 0.078)}\end{tabular} & \begin{tabular}[t]{@{}r@{}}\phantom{0}0.902\,\phantom{000}(1.25)\phantom{0}\\{\scriptsize[0.88, 0.92]}\\{\scriptsize(0.88, 1.9)}\end{tabular} & \begin{tabular}[t]{@{}r@{}}\phantom{0}0.742\,\phantom{00}(0.831)\\{\scriptsize[0.70, 0.78]}\\{\scriptsize(0.74, 1.0)}\end{tabular} & \begin{tabular}[t]{@{}r@{}}\phantom{0}0.666\,(\underline{0.668})\\{\scriptsize[0.66, 0.67]}\\{\scriptsize(\underline{0.66}, 0.67)}\end{tabular} \\
\midrule
\multicolumn{11}{l}{\textit{Hamiltonian-space models}} \\
$\mathbf{H}_\mathrm{base}$ (NON){}$^{\dagger}$ & $3{\times}10^{-5}$ & $10^{-3}$ & \phantom{0}92.4\phantom{000} & \phantom{0}4.18\phantom{0}\,(1034)\phantom{.}\phantom{000} & \phantom{0}0.0607\,\phantom{0}(584)\phantom{.}\phantom{000} & \phantom{0}0.100\phantom{0}\,\phantom{0}(71.9)\phantom{000} & \phantom{0}0.0748\,\phantom{0}(5.23)\phantom{00} & \phantom{0}1.50\phantom{0}\,\phantom{0}(231)\phantom{.}\phantom{000} & \phantom{0}1.07\phantom{0}\,\phantom{0}(55.7)\phantom{00} & \phantom{0}0.779\,(1.13)\phantom{0} \\
$\mathbf{H}_\mathrm{base}$ (NON), QM9 best{}$^{\dagger}$ & $10^{-3}$ & $10^{-3}$ & \phantom{0}99.0\phantom{000} & \phantom{0}4.91\phantom{0}\,(9227)\phantom{.}\phantom{000} & \phantom{0}1.04\phantom{00}\,(8390)\phantom{.}\phantom{000} & \phantom{0}0.251\phantom{0}\,(537)\phantom{.}\phantom{0000} & \phantom{0}0.246\phantom{0}\,(38.8)\phantom{000} & \phantom{0}5.24\phantom{0}\,(2020)\phantom{.}\phantom{000} & \phantom{0}3.56\phantom{0}\,(166)\phantom{.}\phantom{000} & \phantom{0}0.881\,(1.81)\phantom{0} \\
$\mathbf{H}_\mathrm{base}$ (NON), WALoss{}$^{\dagger}$ & $10^{-4}$ & $10^{-3}$ & 100\phantom{.}\phantom{0000} & --\phantom{.}\phantom{000}\,(2404)\phantom{.}\phantom{000} & --\phantom{.}\phantom{0000}\,\phantom{0}(246)\phantom{.}\phantom{000} & --\phantom{.}\phantom{0000}\,\phantom{00}(2.85)\phantom{00} & --\phantom{.}\phantom{0000}\,\phantom{0}(5.43)\phantom{00} & --\phantom{.}\phantom{000}\,\phantom{00}(53.5)\phantom{00} & --\phantom{.}\phantom{000}\,\phantom{0}(21.8)\phantom{00} & --\phantom{.}\phantom{000}\,(1.07)\phantom{0} \\
$\mathbf{H}_\mathrm{base}$ (ON) & $3{\times}10^{-5}$ & $3{\times}10^{-3}$ & \begin{tabular}[t]{@{}r@{}}\phantom{00}2.40\phantom{00}\\{\scriptsize[1.9, 2.8]}\end{tabular} & \begin{tabular}[t]{@{}r@{}}\phantom{0}3.35\phantom{0}\,\phantom{000}(3.82)\phantom{0}\\{\scriptsize[3.2, 3.5]}\\{\scriptsize(3.6, 4.0)}\end{tabular} & \begin{tabular}[t]{@{}r@{}}\phantom{0}1.22\phantom{00}\,\phantom{000}(1.73)\phantom{0}\\{\scriptsize[1.1, 1.3]}\\{\scriptsize(1.5, 1.8)}\end{tabular} & \begin{tabular}[t]{@{}r@{}}\phantom{0}0.316\phantom{0}\,\phantom{00}(0.346)\phantom{0}\\{\scriptsize[0.30, 0.33]}\\{\scriptsize(0.33, 0.37)}\end{tabular} & \begin{tabular}[t]{@{}r@{}}\phantom{0}0.275\phantom{0}\,\phantom{0}(0.300)\phantom{0}\\{\scriptsize[0.27, 0.28]}\\{\scriptsize(0.29, 0.31)}\end{tabular} & \begin{tabular}[t]{@{}r@{}}\phantom{0}3.82\phantom{0}\,\phantom{000}(4.09)\phantom{0}\\{\scriptsize[3.7, 3.9]}\\{\scriptsize(3.9, 4.2)}\end{tabular} & \begin{tabular}[t]{@{}r@{}}\phantom{0}3.06\phantom{0}\,\phantom{00}(3.44)\phantom{0}\\{\scriptsize[2.9, 3.2]}\\{\scriptsize(3.3, 3.6)}\end{tabular} & \begin{tabular}[t]{@{}r@{}}\phantom{0}0.781\,(0.784)\\{\scriptsize[0.78, 0.78]}\\{\scriptsize(0.78, 0.78)}\end{tabular} \\
$\mathbf{H}_\mathrm{base} + c_t \mathbf{S}$ (NON) & $10^{-5}$ & $10^{-3}$ & \begin{tabular}[t]{@{}r@{}}\phantom{0}87.7\phantom{000}\\{\scriptsize[79, 99]}\end{tabular} & \begin{tabular}[t]{@{}r@{}}\phantom{0}5.21\phantom{0}\,(1513)\phantom{.}\phantom{000}\\{\scriptsize[4.9, 5.7]}\\{\scriptsize(641, 2551)}\end{tabular} & \begin{tabular}[t]{@{}r@{}}\phantom{0}0.123\phantom{0}\,\phantom{0}(707)\phantom{.}\phantom{000}\\{\scriptsize[0.072, 0.18]}\\{\scriptsize(257, 1141)}\end{tabular} & \begin{tabular}[t]{@{}r@{}}\phantom{0}0.145\phantom{0}\,\phantom{0}(80.0)\phantom{000}\\{\scriptsize[0.082, 0.20]}\\{\scriptsize(32, 128)}\end{tabular} & \begin{tabular}[t]{@{}r@{}}\phantom{0}0.110\phantom{0}\,\phantom{0}(5.96)\phantom{00}\\{\scriptsize[0.072, 0.15]}\\{\scriptsize(2.7, 9.4)}\end{tabular} & \begin{tabular}[t]{@{}r@{}}\phantom{0}1.83\phantom{0}\,\phantom{0}(254)\phantom{.}\phantom{000}\\{\scriptsize[1.3, 2.4]}\\{\scriptsize(94, 515)}\end{tabular} & \begin{tabular}[t]{@{}r@{}}\phantom{0}1.30\phantom{0}\,\phantom{0}(56.7)\phantom{00}\\{\scriptsize[0.80, 1.5]}\\{\scriptsize(30, 84)}\end{tabular} & \begin{tabular}[t]{@{}r@{}}\phantom{0}0.779\,(1.14)\phantom{0}\\{\scriptsize[0.76, 0.81]}\\{\scriptsize(0.98, 1.3)}\end{tabular} \\
$\mathbf{H}_\mathrm{base} + c_t \mathbf{S}$ (NON), WALoss & $10^{-4}$ & $10^{-3}$ & \begin{tabular}[t]{@{}r@{}}100\phantom{.}\phantom{0000}\\{\scriptsize[100, 100]}\end{tabular} & \begin{tabular}[t]{@{}r@{}}--\phantom{.}\phantom{000}\,\phantom{0}(974)\phantom{.}\phantom{000}\\{\scriptsize(829, 1108)}\end{tabular} & \begin{tabular}[t]{@{}r@{}}--\phantom{.}\phantom{0000}\,\phantom{0}(154)\phantom{.}\phantom{000}\\{\scriptsize(102, 181)}\end{tabular} & \begin{tabular}[t]{@{}r@{}}--\phantom{.}\phantom{0000}\,\phantom{0}(18.2)\phantom{000}\\{\scriptsize(9.8, 23)}\end{tabular} & \begin{tabular}[t]{@{}r@{}}--\phantom{.}\phantom{0000}\,\phantom{0}(2.76)\phantom{00}\\{\scriptsize(2.4, 3.1)}\end{tabular} & \begin{tabular}[t]{@{}r@{}}--\phantom{.}\phantom{000}\,\phantom{00}(43.7)\phantom{00}\\{\scriptsize(37, 52)}\end{tabular} & \begin{tabular}[t]{@{}r@{}}--\phantom{.}\phantom{000}\,\phantom{0}(28.6)\phantom{00}\\{\scriptsize(23, 34)}\end{tabular} & \begin{tabular}[t]{@{}r@{}}--\phantom{.}\phantom{000}\,(1.04)\phantom{0}\\{\scriptsize(1.0, 1.0)}\end{tabular} \\
$\mathbf{H}_\mathrm{base} + c_t \mathbf{S}$ (ON) & $3{\times}10^{-5}$ & $3{\times}10^{-3}$ & \begin{tabular}[t]{@{}r@{}}\phantom{00}0.967\phantom{0}\\{\scriptsize[0.60, 1.5]}\end{tabular} & \begin{tabular}[t]{@{}r@{}}\phantom{0}2.91\phantom{0}\,\phantom{000}(3.07)\phantom{0}\\{\scriptsize[2.9, 2.9]}\\{\scriptsize(3.0, 3.2)}\end{tabular} & \begin{tabular}[t]{@{}r@{}}\phantom{0}0.900\phantom{0}\,\phantom{000}(1.09)\phantom{0}\\{\scriptsize[0.84, 0.94]}\\{\scriptsize(0.93, 1.3)}\end{tabular} & \begin{tabular}[t]{@{}r@{}}\phantom{0}0.283\phantom{0}\,\phantom{00}(0.294)\phantom{0}\\{\scriptsize[0.27, 0.29]}\\{\scriptsize(0.28, 0.31)}\end{tabular} & \begin{tabular}[t]{@{}r@{}}\phantom{0}0.238\phantom{0}\,\phantom{0}(0.247)\phantom{0}\\{\scriptsize[0.23, 0.25]}\\{\scriptsize(0.24, 0.25)}\end{tabular} & \begin{tabular}[t]{@{}r@{}}\phantom{0}3.13\phantom{0}\,\phantom{000}(3.26)\phantom{0}\\{\scriptsize[3.0, 3.2]}\\{\scriptsize(3.1, 3.4)}\end{tabular} & \begin{tabular}[t]{@{}r@{}}\phantom{0}2.49\phantom{0}\,\phantom{00}(2.70)\phantom{0}\\{\scriptsize[2.4, 2.6]}\\{\scriptsize(2.5, 2.9)}\end{tabular} & \begin{tabular}[t]{@{}r@{}}\phantom{0}0.763\,(0.765)\\{\scriptsize[0.76, 0.76]}\\{\scriptsize(0.76, 0.77)}\end{tabular} \\
ROCKET 2nd stage & $3{\times}10^{-4}$ & $3{\times}10^{-3}$ & \begin{tabular}[t]{@{}r@{}}\phantom{00}0.0667\\{\scriptsize[0, 0.10]}\end{tabular} & \begin{tabular}[t]{@{}r@{}}\phantom{0}1.70\phantom{0}\,\phantom{000}(1.71)\phantom{0}\\{\scriptsize[1.6, 1.8]}\\{\scriptsize(1.6, 1.8)}\end{tabular} & \begin{tabular}[t]{@{}r@{}}\phantom{0}0.395\phantom{0}\,\phantom{000}(0.408)\\{\scriptsize[0.31, 0.51]}\\{\scriptsize(0.33, 0.53)}\end{tabular} & \begin{tabular}[t]{@{}r@{}}\phantom{0}0.179\phantom{0}\,\phantom{00}(0.179)\phantom{0}\\{\scriptsize[0.18, 0.18]}\\{\scriptsize(0.18, 0.18)}\end{tabular} & \begin{tabular}[t]{@{}r@{}}\phantom{0}0.153\phantom{0}\,\phantom{0}(0.154)\phantom{0}\\{\scriptsize[0.14, 0.17]}\\{\scriptsize(0.14, 0.17)}\end{tabular} & \begin{tabular}[t]{@{}r@{}}\phantom{0}2.07\phantom{0}\,\phantom{000}(2.07)\phantom{0}\\{\scriptsize[1.9, 2.3]}\\{\scriptsize(1.9, 2.3)}\end{tabular} & \begin{tabular}[t]{@{}r@{}}\phantom{0}2.17\phantom{0}\,\phantom{00}(2.19)\phantom{0}\\{\scriptsize[2.0, 2.4]}\\{\scriptsize(2.0, 2.4)}\end{tabular} & \begin{tabular}[t]{@{}r@{}}\phantom{0}0.722\,(0.722)\\{\scriptsize[0.72, 0.73]}\\{\scriptsize(0.72, 0.73)}\end{tabular} \\
ROCKET 3rd stage & $10^{-3}$ & $3{\times}10^{-3}$ & \begin{tabular}[t]{@{}r@{}}\phantom{00}0.0333\\{\scriptsize[0, 0.10]}\end{tabular} & \begin{tabular}[t]{@{}r@{}}\phantom{0}0.665\,\phantom{000}(0.671)\\{\scriptsize[0.62, 0.72]}\\{\scriptsize(\underline{0.62}, 0.73)}\end{tabular} & \begin{tabular}[t]{@{}r@{}}\phantom{0}0.369\phantom{0}\,\phantom{000}(0.377)\\{\scriptsize[0.32, 0.43]}\\{\scriptsize(0.32, 0.46)}\end{tabular} & \begin{tabular}[t]{@{}r@{}}\phantom{0}0.157\phantom{0}\,\phantom{00}(0.157)\phantom{0}\\{\scriptsize[0.15, 0.17]}\\{\scriptsize(0.15, 0.17)}\end{tabular} & \begin{tabular}[t]{@{}r@{}}\phantom{0}0.133\phantom{0}\,\phantom{0}(0.133)\phantom{0}\\{\scriptsize[0.13, 0.14]}\\{\scriptsize(0.13, 0.14)}\end{tabular} & \begin{tabular}[t]{@{}r@{}}\phantom{0}1.81\phantom{0}\,\phantom{000}(1.81)\phantom{0}\\{\scriptsize[1.7, 1.9]}\\{\scriptsize(1.7, 1.9)}\end{tabular} & \begin{tabular}[t]{@{}r@{}}\phantom{0}1.66\phantom{0}\,\phantom{00}(1.68)\phantom{0}\\{\scriptsize[1.6, 1.8]}\\{\scriptsize(1.6, 1.9)}\end{tabular} & \begin{tabular}[t]{@{}r@{}}\phantom{0}0.709\,(0.709)\\{\scriptsize[0.71, 0.71]}\\{\scriptsize(0.71, 0.71)}\end{tabular} \\
ROCKET 3rd stage, $k{=}2${}$^{\dagger}$ & $10^{-3}$ & $3{\times}10^{-3}$ & \phantom{00}0\phantom{.}\phantom{0000} & \phantom{0}0.667\,\phantom{000}(\underline{0.667}) & \phantom{0}0.298\phantom{0}\,\phantom{000}(0.298) & \phantom{0}0.148\phantom{0}\,\phantom{00}(0.148)\phantom{0} & \phantom{0}0.130\phantom{0}\,\phantom{0}(0.130)\phantom{0} & \phantom{0}1.75\phantom{0}\,\phantom{000}(1.75)\phantom{0} & \phantom{0}1.59\phantom{0}\,\phantom{00}(1.59)\phantom{0} & \phantom{0}0.706\,(0.706) \\
\bottomrule
\end{tabular}}
\end{table}

\clearpage

\keepwithfloat
\subsubsection{QMugs Extrapolation Data Tables}
\vspace{-0.2cm}

\begin{table}[h]
\centering
\footnotesize
\setlength{\tabcolsep}{3pt}
\renewcommand{\arraystretch}{1.05}
\caption{\textbf{Observable errors under size extrapolation to QMugs for QM9-trained models (\cref{tab:qmugs-dataset}).}
Each cell reports the mean over seeds of the per-molecule mean absolute error, with the seed range in square brackets.
Values in round brackets average all molecules of the lane before outlier exclusion, identical across models.
Per column, the best bracketed mean and the best single seed are underlined.
Predictions with DIIS residual $r_\mathrm{DIIS} = \sqrt{8}\,\|\mathbf{S}^{-1/2}(\mathbf{F}\mathbf{P}\mathbf{S} - \mathbf{S}\mathbf{P}\mathbf{F})\mathbf{S}^{-1/2}\|_F$ exceeding $30\sqrt{N_\mathrm{elec}}$~mHa are excluded as outliers, judged per checkpoint and molecule.
Unbracketed values average kept molecules only, and Excluded reports the lost molecule percentage.
Models losing all molecules display a dash before each bracketed value.
$\hat E_\mathrm{KS}$ is the KS-DFT energy evaluated at the predicted density (\cref{sec:experiments}), and $E^*$ is the converged reference energy.
Before outlier exclusion, 950 molecules enter Excluded, $\hat E_\mathrm{KS}$, Dipole, $L_1$, and HOMO--LUMO, while 190 molecules enter Forces and ERIC.
ERIC reports SCF iterations relative to the MINAO guess, averaged over the full lane with unconverged runs capped at 2.
LR and WD denote peak learning rate and weight decay from the training configuration.
$^{\dagger}$ Single seed without range.}
\label{tab:ablation_qmugs_r50}
\resizebox{\linewidth}{!}{%
\begin{tabular}{lrrrrrrrrr}
\toprule
Model & LR & WD & Excluded & Self-consistent & Dipole & $L_1$ & Forces & HOMO--LUMO & ERIC \\
 &  &  & {\scriptsize (\%)} & energy~{\scriptsize (mHa)} & {\scriptsize (D)} & {\scriptsize (e)} & {\scriptsize (mHa/\AA)} & {\scriptsize (mHa)} & {\scriptsize (ratio)} \\
 &  &  &  & {\scriptsize $|\hat E_\mathrm{KS} - E^*|$} & {\scriptsize $\|\Delta\boldsymbol\mu\|_2$} & {\scriptsize $\|\Delta\rho\|_{L^1}$} & {\scriptsize $\langle\|\Delta\mathbf{F}_a\|_2\rangle_a$} & {\scriptsize $|\Delta\varepsilon_\mathrm{gap}|$} &  \\
\midrule
\multicolumn{10}{l}{\textit{Density-space models}} \\
$\mathbf{P}_\mathrm{base}$ (NON) & $3{\times}10^{-3}$ & $10^{-2}$ & \begin{tabular}[t]{@{}r@{}}100\phantom{.}\phantom{0}\\{\scriptsize[100, 100]}\end{tabular} & \begin{tabular}[t]{@{}r@{}}--\phantom{.}\phantom{00}\,(46920)\phantom{.}\phantom{0}\\{\scriptsize(1214, 137230)}\end{tabular} & \begin{tabular}[t]{@{}r@{}}--\phantom{.}\phantom{000}\,(145)\phantom{.}\phantom{00}\\{\scriptsize(8.3, 369)}\end{tabular} & \begin{tabular}[t]{@{}r@{}}--\phantom{.}\phantom{000}\,(12.9)\phantom{00}\\{\scriptsize(5.5, 25)}\end{tabular} & \begin{tabular}[t]{@{}r@{}}--\phantom{.}\phantom{00}\,\phantom{0}(314)\phantom{.}\phantom{00}\\{\scriptsize(69, 756)}\end{tabular} & \begin{tabular}[t]{@{}r@{}}--\phantom{.}\phantom{00}\,\phantom{0}(76.3)\phantom{0}\\{\scriptsize(59, 104)}\end{tabular} & \begin{tabular}[t]{@{}r@{}}--\phantom{.}\phantom{000}\,(1.23)\phantom{0}\\{\scriptsize(1.1, 1.4)}\end{tabular} \\
$\mathbf{P}_\mathrm{base}$ (ON) & $3{\times}10^{-3}$ & $3{\times}10^{-3}$ & \begin{tabular}[t]{@{}r@{}}100\phantom{.}\phantom{0}\\{\scriptsize[100, 100]}\end{tabular} & \begin{tabular}[t]{@{}r@{}}--\phantom{.}\phantom{00}\,(13720)\phantom{.}\phantom{0}\\{\scriptsize(887, 31790)}\end{tabular} & \begin{tabular}[t]{@{}r@{}}--\phantom{.}\phantom{000}\,\phantom{0}(14.2)\phantom{0}\\{\scriptsize(\underline{1.0}, 31)}\end{tabular} & \begin{tabular}[t]{@{}r@{}}--\phantom{.}\phantom{000}\,\phantom{0}(2.03)\phantom{0}\\{\scriptsize(1.3, 3.2)}\end{tabular} & \begin{tabular}[t]{@{}r@{}}--\phantom{.}\phantom{00}\,\phantom{00}(60.1)\phantom{0}\\{\scriptsize(8.7, 154)}\end{tabular} & \begin{tabular}[t]{@{}r@{}}--\phantom{.}\phantom{00}\,\phantom{0}(10.8)\phantom{0}\\{\scriptsize(8.5, 14)}\end{tabular} & \begin{tabular}[t]{@{}r@{}}--\phantom{.}\phantom{000}\,(0.817)\\{\scriptsize(0.81, 0.83)}\end{tabular} \\
GROOT\textsubscript{rot} & $10^{-2}$ & $10^{-2}$ & \begin{tabular}[t]{@{}r@{}}\phantom{0}61.5\\{\scriptsize[53, 66]}\end{tabular} & \begin{tabular}[t]{@{}r@{}}10.4\phantom{0}\,\phantom{000}(\underline{36.9})\\{\scriptsize[10, 11]}\\{\scriptsize(\underline{28}, 44)}\end{tabular} & \begin{tabular}[t]{@{}r@{}}\phantom{0}0.279\,\phantom{00}(3.29)\\{\scriptsize[0.27, 0.30]}\\{\scriptsize(3.3, 3.3)}\end{tabular} & \begin{tabular}[t]{@{}r@{}}\phantom{0}0.329\,\phantom{0}(\underline{0.647})\\{\scriptsize[0.32, 0.34]}\\{\scriptsize(\underline{0.62}, 0.69)}\end{tabular} & \begin{tabular}[t]{@{}r@{}}\phantom{0}2.41\,\phantom{000}(\underline{3.27})\\{\scriptsize[2.4, 2.4]}\\{\scriptsize(\underline{3.2}, 3.4)}\end{tabular} & \begin{tabular}[t]{@{}r@{}}\phantom{0}2.17\,\phantom{00}(\underline{3.48})\\{\scriptsize[2.1, 2.3]}\\{\scriptsize(\underline{3.5}, 3.5)}\end{tabular} & \begin{tabular}[t]{@{}r@{}}\phantom{0}0.743\,(\underline{0.776})\\{\scriptsize[0.74, 0.74]}\\{\scriptsize(\underline{0.77}, 0.79)}\end{tabular} \\
GROOT\textsubscript{proj} & $3{\times}10^{-3}$ & $10^{-2}$ & \begin{tabular}[t]{@{}r@{}}\phantom{0}68.8\\{\scriptsize[59, 74]}\end{tabular} & \begin{tabular}[t]{@{}r@{}}\phantom{0}8.28\,(26318)\phantom{.}\phantom{0}\\{\scriptsize[8.0, 8.6]}\\{\scriptsize(8755, 36267)}\end{tabular} & \begin{tabular}[t]{@{}r@{}}\phantom{0}0.360\,(957)\phantom{.}\phantom{00}\\{\scriptsize[0.34, 0.37]}\\{\scriptsize(422, 1317)}\end{tabular} & \begin{tabular}[t]{@{}r@{}}\phantom{0}0.392\,(66.7)\phantom{00}\\{\scriptsize[0.39, 0.40]}\\{\scriptsize(31, 87)}\end{tabular} & \begin{tabular}[t]{@{}r@{}}\phantom{0}2.65\,(2794)\phantom{.}\phantom{00}\\{\scriptsize[2.6, 2.7]}\\{\scriptsize(1006, 3982)}\end{tabular} & \begin{tabular}[t]{@{}r@{}}\phantom{0}3.55\,\phantom{0}(79.7)\phantom{0}\\{\scriptsize[3.4, 3.6]}\\{\scriptsize(57, 92)}\end{tabular} & \begin{tabular}[t]{@{}r@{}}\phantom{0}0.743\,(1.32)\phantom{0}\\{\scriptsize[0.74, 0.75]}\\{\scriptsize(1.1, 1.4)}\end{tabular} \\
\midrule
\multicolumn{10}{l}{\textit{Hamiltonian-space models}} \\
$\mathbf{H}_\mathrm{base}$ (NON){}$^{\dagger}$ & $3{\times}10^{-5}$ & $10^{-3}$ & 100\phantom{.}\phantom{0} & --\phantom{.}\phantom{00}\,\phantom{0}(4417)\phantom{.}\phantom{0} & --\phantom{.}\phantom{000}\,(585)\phantom{.}\phantom{00} & --\phantom{.}\phantom{000}\,(29.9)\phantom{00} & --\phantom{.}\phantom{00}\,\phantom{0}(709)\phantom{.}\phantom{00} & --\phantom{.}\phantom{00}\,(137)\phantom{.}\phantom{00} & --\phantom{.}\phantom{000}\,(1.76)\phantom{0} \\
$\mathbf{H}_\mathrm{base}$ (ON) & $3{\times}10^{-5}$ & $3{\times}10^{-3}$ & \begin{tabular}[t]{@{}r@{}}\phantom{0}87.3\\{\scriptsize[86, 88]}\end{tabular} & \begin{tabular}[t]{@{}r@{}}14.7\phantom{0}\,\phantom{00}(730)\phantom{.}\phantom{0}\\{\scriptsize[14, 15]}\\{\scriptsize(529, 1000)}\end{tabular} & \begin{tabular}[t]{@{}r@{}}\phantom{0}1.16\phantom{0}\,(160)\phantom{.}\phantom{00}\\{\scriptsize[1.1, 1.2]}\\{\scriptsize(114, 228)}\end{tabular} & \begin{tabular}[t]{@{}r@{}}\phantom{0}1.18\phantom{0}\,(16.3)\phantom{00}\\{\scriptsize[1.1, 1.2]}\\{\scriptsize(13, 21)}\end{tabular} & \begin{tabular}[t]{@{}r@{}}10.3\phantom{0}\,\phantom{0}(366)\phantom{.}\phantom{00}\\{\scriptsize[10, 10]}\\{\scriptsize(236, 551)}\end{tabular} & \begin{tabular}[t]{@{}r@{}}\phantom{0}8.02\,\phantom{0}(95.9)\phantom{0}\\{\scriptsize[7.8, 8.2]}\\{\scriptsize(91, 103)}\end{tabular} & \begin{tabular}[t]{@{}r@{}}\phantom{0}0.883\,(1.39)\phantom{0}\\{\scriptsize[0.88, 0.89]}\\{\scriptsize(1.4, 1.4)}\end{tabular} \\
$\mathbf{H}_\mathrm{base} + c_t \mathbf{S}$ (ON) & $3{\times}10^{-5}$ & $3{\times}10^{-3}$ & \begin{tabular}[t]{@{}r@{}}\phantom{0}82.8\\{\scriptsize[81, 84]}\end{tabular} & \begin{tabular}[t]{@{}r@{}}14.6\phantom{0}\,\phantom{00}(164)\phantom{.}\phantom{0}\\{\scriptsize[14, 15]}\\{\scriptsize(128, 193)}\end{tabular} & \begin{tabular}[t]{@{}r@{}}\phantom{0}1.12\phantom{0}\,\phantom{0}(15.5)\phantom{0}\\{\scriptsize[1.1, 1.1]}\\{\scriptsize(9.9, 21)}\end{tabular} & \begin{tabular}[t]{@{}r@{}}\phantom{0}1.23\phantom{0}\,\phantom{0}(5.63)\phantom{0}\\{\scriptsize[1.2, 1.3]}\\{\scriptsize(4.8, 6.4)}\end{tabular} & \begin{tabular}[t]{@{}r@{}}\phantom{0}9.69\,\phantom{00}(51.8)\phantom{0}\\{\scriptsize[9.4, 10]}\\{\scriptsize(39, 69)}\end{tabular} & \begin{tabular}[t]{@{}r@{}}\phantom{0}7.62\,\phantom{0}(64.5)\phantom{0}\\{\scriptsize[7.5, 7.8]}\\{\scriptsize(55, 72)}\end{tabular} & \begin{tabular}[t]{@{}r@{}}\phantom{0}0.869\,(1.18)\phantom{0}\\{\scriptsize[0.86, 0.88]}\\{\scriptsize(1.1, 1.2)}\end{tabular} \\
ROCKET 2nd stage & $3{\times}10^{-4}$ & $3{\times}10^{-3}$ & \begin{tabular}[t]{@{}r@{}}\phantom{0}69.5\\{\scriptsize[61, 79]}\end{tabular} & \begin{tabular}[t]{@{}r@{}}16.9\phantom{0}\,\phantom{00}(103)\phantom{.}\phantom{0}\\{\scriptsize[15, 18]}\\{\scriptsize(58, 176)}\end{tabular} & \begin{tabular}[t]{@{}r@{}}\phantom{0}1.04\phantom{0}\,\phantom{00}(3.43)\\{\scriptsize[0.99, 1.1]}\\{\scriptsize(2.3, 4.5)}\end{tabular} & \begin{tabular}[t]{@{}r@{}}\phantom{0}1.49\phantom{0}\,\phantom{0}(3.65)\phantom{0}\\{\scriptsize[1.4, 1.6]}\\{\scriptsize(3.0, 4.2)}\end{tabular} & \begin{tabular}[t]{@{}r@{}}10.9\phantom{0}\,\phantom{00}(26.5)\phantom{0}\\{\scriptsize[11, 11]}\\{\scriptsize(22, 31)}\end{tabular} & \begin{tabular}[t]{@{}r@{}}\phantom{0}9.30\,\phantom{0}(33.7)\phantom{0}\\{\scriptsize[8.2, 10.0]}\\{\scriptsize(26, 40)}\end{tabular} & \begin{tabular}[t]{@{}r@{}}\phantom{0}0.879\,(1.03)\phantom{0}\\{\scriptsize[0.87, 0.89]}\\{\scriptsize(0.98, 1.1)}\end{tabular} \\
ROCKET 3rd stage & $10^{-3}$ & $3{\times}10^{-3}$ & \begin{tabular}[t]{@{}r@{}}\phantom{0}64.9\\{\scriptsize[57, 69]}\end{tabular} & \begin{tabular}[t]{@{}r@{}}15.0\phantom{0}\,\phantom{00}(152)\phantom{.}\phantom{0}\\{\scriptsize[14, 16]}\\{\scriptsize(63, 207)}\end{tabular} & \begin{tabular}[t]{@{}r@{}}\phantom{0}1.08\phantom{0}\,\phantom{00}(\underline{3.22})\\{\scriptsize[1.0, 1.1]}\\{\scriptsize(2.4, 4.2)}\end{tabular} & \begin{tabular}[t]{@{}r@{}}\phantom{0}1.71\phantom{0}\,\phantom{0}(3.76)\phantom{0}\\{\scriptsize[1.6, 1.8]}\\{\scriptsize(3.4, 4.0)}\end{tabular} & \begin{tabular}[t]{@{}r@{}}12.1\phantom{0}\,\phantom{00}(27.0)\phantom{0}\\{\scriptsize[12, 12]}\\{\scriptsize(25, 29)}\end{tabular} & \begin{tabular}[t]{@{}r@{}}\phantom{0}8.96\,\phantom{0}(28.1)\phantom{0}\\{\scriptsize[8.8, 9.1]}\\{\scriptsize(24, 33)}\end{tabular} & \begin{tabular}[t]{@{}r@{}}\phantom{0}0.891\,(1.04)\phantom{0}\\{\scriptsize[0.89, 0.89]}\\{\scriptsize(1.0, 1.1)}\end{tabular} \\
ROCKET 3rd stage, $k{=}2${}$^{\dagger}$ & $10^{-3}$ & $3{\times}10^{-3}$ & \phantom{0}64.0 & 15.3\phantom{0}\,\phantom{000}(85.9) & \phantom{0}1.06\phantom{0}\,\phantom{00}(3.28) & \phantom{0}1.75\phantom{0}\,\phantom{0}(3.76)\phantom{0} & 12.2\phantom{0}\,\phantom{00}(27.1)\phantom{0} & \phantom{0}9.10\,\phantom{0}(28.4)\phantom{0} & \phantom{0}0.890\,(1.02)\phantom{0} \\
\bottomrule
\end{tabular}}
\end{table}

\clearpage

\subsection{Training on QMugs}

\begin{figure}[h]
  \centering
  \includegraphics[width=\linewidth]{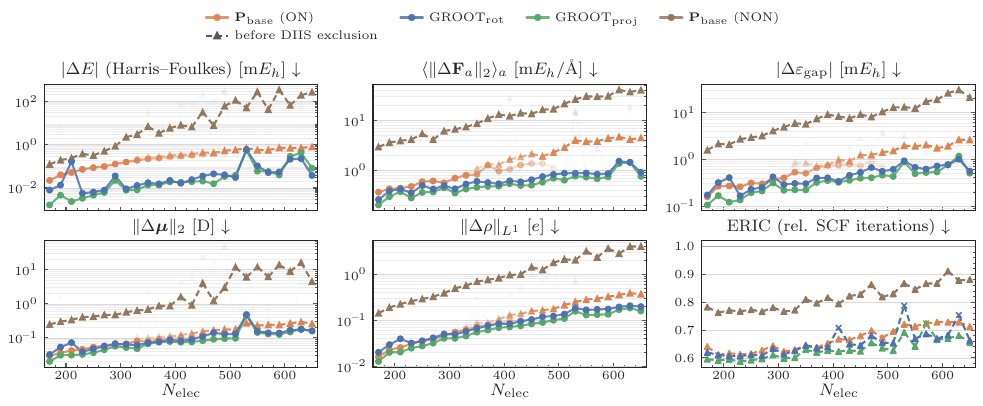}
  \caption{\textbf{Density-readout models on the QMugs out-of-distribution (OOD) test set, resolved over electron count.}
  QMugs-trained 24-epoch checkpoints evaluated under the dataset and protocol of \cref{tab:ablation_qmugs_ood160_r100}, using 100 molecules per 20-electron bin over $N_\mathrm{elec}\in[160,660)$, with forces and ERIC evaluated on the 500-molecule force subset.
  The models evaluate the $\Delta\mathbf{P}_\mathrm{MINAO}$ readout under $\mathcal{L}_\mathrm{NON}$ and
  under \oursOrthonormal{}, and the two manifold-restricted variants \ours{}$_\mathrm{rot}$ and
  \ours{}$_\mathrm{proj}$ (\cref{sec:methods}).
  Markers are the per-bin mean absolute error of each model's single seed and every error panel uses a
  logarithmic vertical axis; \cref{tab:ablation_qmugs_ood160_r100} reports the corresponding
  dataset-level means.
  ERIC is the effective relative iteration count~\citep{eberhardTransferableScfAcceleration2026},
  where values below one mean fewer SCF iterations than the reference initial guess.}
  \label{fig:qmugs-ood160-nelec-density-ablation}
\end{figure}

\begin{figure}[h]
  \centering
  \includegraphics[width=\linewidth]{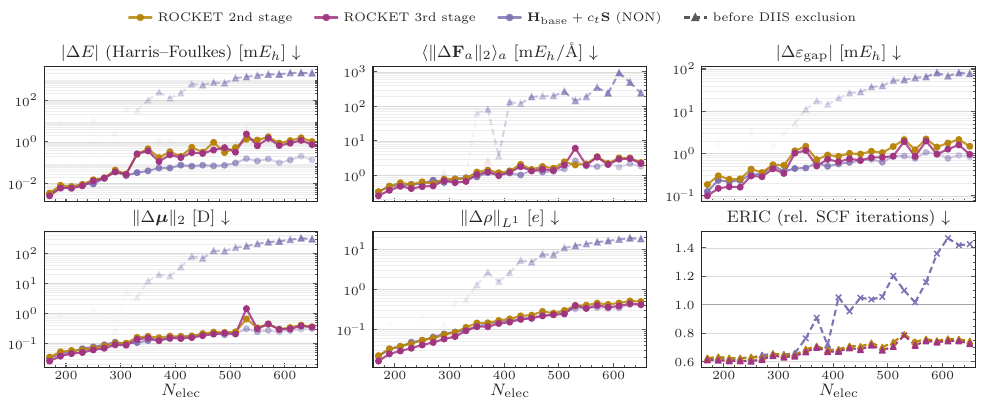}
  \caption{\textbf{Hamiltonian-readout models on the QMugs OOD test set, resolved over electron count.}
  Same checkpoints, dataset, protocol, binning, and aggregation as \cref{fig:qmugs-ood160-nelec-density-ablation} for the $\Delta\mathbf{H}_\mathrm{SAP}$ readouts, comprising the AO-basis readout with a training-set fitted constant gauge under $\mathcal{L}_\mathrm{NON}$ and the two \ourLoss{} stages of \cref{eq:rocket_stage_2,eq:rocket_stage_3}.
  \Cref{tab:ablation_qmugs_ood160_r100} reports the corresponding dataset-level means.}
  \label{fig:qmugs-ood160-nelec-hamiltonian-ablation}
\end{figure}

\keepwithfloat[0.52]
\subsubsection{QMugs Size-Extrapolation Data Tables}
\vspace{-0.2cm}

\begin{table}[h]
\centering
\footnotesize
\setlength{\tabcolsep}{3pt}
\renewcommand{\arraystretch}{1.05}
\caption{\textbf{Observable errors under size extrapolation to QMugs for QMugs-trained models (\cref{tab:qmugs-dataset}).}
Each cell reports the per-molecule mean absolute error of a single seed.
Values in round brackets average all molecules of the lane before outlier exclusion, identical across models.
Per column, the best bracketed value within each training group is underlined.
Predictions with DIIS residual $r_\mathrm{DIIS} = \sqrt{8}\,\|\mathbf{S}^{-1/2}(\mathbf{F}\mathbf{P}\mathbf{S} - \mathbf{S}\mathbf{P}\mathbf{F})\mathbf{S}^{-1/2}\|_F$ exceeding $30\sqrt{N_\mathrm{elec}}$~mHa are excluded as outliers, judged per checkpoint and molecule.
Unbracketed values average kept molecules only, and Excluded reports the lost molecule percentage.
$\hat E_\mathrm{KS}$ is the KS-DFT energy evaluated at the predicted density, $\hat E$ adds the first-order self-consistency correction (\cref{sec:experiments}), and $E^*$ is the converged reference energy.
Before outlier exclusion, 2500 molecules enter Excluded, $\hat E_\mathrm{KS}$, $\hat E$, Dipole, $L_1$, and HOMO--LUMO, while 500 molecules enter Forces and ERIC.
ERIC reports SCF iterations relative to the MINAO guess, averaged over the full lane with unconverged runs capped at 2.
LR and WD denote peak learning rate and weight decay from the training configuration.}
\label{tab:ablation_qmugs_ood160_r100}
\resizebox{\linewidth}{!}{%
\begin{tabular}{lrrrrrrrrrr}
\toprule
Model & LR & WD & Excluded & Self-consistent & First-order corrected & Dipole & $L_1$ & Forces & HOMO--LUMO & ERIC \\
 &  &  & {\scriptsize (\%)} & energy~{\scriptsize (mHa)} & energy~{\scriptsize (mHa)} & {\scriptsize (D)} & {\scriptsize (e)} & {\scriptsize (mHa/\AA)} & {\scriptsize (mHa)} & {\scriptsize (ratio)} \\
 &  &  &  & {\scriptsize $|\hat E_\mathrm{KS} - E^*|$} & {\scriptsize $|\hat E - E^*|$} & {\scriptsize $\|\Delta\boldsymbol\mu\|_2$} & {\scriptsize $\|\Delta\rho\|_{L^1}$} & {\scriptsize $\langle\|\Delta\mathbf{F}_a\|_2\rangle_a$} & {\scriptsize $|\Delta\varepsilon_\mathrm{gap}|$} &  \\
\midrule
\multicolumn{11}{l}{\textit{Supervised, 24 epochs}} \\
$\mathbf{P}_\mathrm{base}$ (NON) & $3{\times}10^{-3}$ & $10^{-2}$ & ${>}99.9$\phantom{00} & \phantom{0}4.01\,(245)\phantom{.}\phantom{00} & 0.0142\,\phantom{0}(59.0)\phantom{00} & 0.127\phantom{0}\,\phantom{0}(3.92)\phantom{0} & 0.0343\,(1.43)\phantom{00} & --\phantom{.}\phantom{000}\,\phantom{0}(16.2)\phantom{00} & 0.123\,\phantom{0}(9.80)\phantom{0} & --\phantom{.}\phantom{000}\,(0.818) \\
$\mathbf{P}_\mathrm{base}$ (ON) & $3{\times}10^{-3}$ & $3{\times}10^{-3}$ & \phantom{0}53.2\phantom{00} & 10.1\phantom{0}\,\phantom{0}(31.4)\phantom{0} & 0.140\phantom{0}\,\phantom{00}(0.365) & 0.0638\,\phantom{0}(\underline{0.143}) & 0.0495\,(0.155)\phantom{0} & \phantom{0}0.696\,\phantom{00}(2.01)\phantom{0} & 0.406\,\phantom{0}(1.10)\phantom{0} & \phantom{0}0.628\,(0.668) \\
GROOT\textsubscript{rot} & $10^{-3}$ & $10^{-2}$ & \phantom{00}1.12\phantom{0} & \phantom{0}3.22\,\phantom{00}(9.29) & 0.0712\,\phantom{0}(36.8)\phantom{00} & 0.114\phantom{0}\,\phantom{0}(4.69)\phantom{0} & 0.101\phantom{0}\,(0.313)\phantom{0} & \phantom{0}0.678\,\phantom{00}(3.11)\phantom{0} & 0.481\,\phantom{0}(1.18)\phantom{0} & \phantom{0}0.643\,(0.654) \\
GROOT\textsubscript{proj} & $10^{-3}$ & $10^{-2}$ & \phantom{00}0.360 & \phantom{0}1.73\,\phantom{00}(1.80) & 0.0699\,\phantom{00}(\underline{0.209}) & 0.0980\,\phantom{0}(0.165) & 0.0818\,(\underline{0.0884}) & \phantom{0}0.573\,\phantom{00}(\underline{0.879}) & 0.391\,\phantom{0}(\underline{0.491}) & \phantom{0}0.626\,(\underline{0.631}) \\
$\mathbf{H}_\mathrm{base} + c_t \mathbf{S}$ (NON) & $3{\times}10^{-4}$ & $10^{-3}$ & \phantom{0}29.6\phantom{00} & \phantom{0}5.33\,(963)\phantom{.}\phantom{00} & 0.0527\,(655)\phantom{.}\phantom{000} & 0.136\phantom{0}\,(96.9)\phantom{00} & 0.143\phantom{0}\,(6.20)\phantom{00} & \phantom{0}1.02\phantom{0}\,(156)\phantom{.}\phantom{000} & 0.499\,(30.2)\phantom{00} & \phantom{0}0.667\,(0.937) \\
$\mathbf{H}_\mathrm{base} + c_t \mathbf{S}$ (ON) & $3{\times}10^{-3}$ & $3{\times}10^{-3}$ & \phantom{00}2.96\phantom{0} & \phantom{0}4.85\,\phantom{0}(10.7)\phantom{0} & 1.57\phantom{00}\,\phantom{00}(8.25)\phantom{0} & 0.307\phantom{0}\,\phantom{0}(0.491) & 0.340\phantom{0}\,(0.424)\phantom{0} & \phantom{0}2.45\phantom{0}\,\phantom{00}(3.35)\phantom{0} & 1.24\phantom{0}\,\phantom{0}(2.40)\phantom{0} & \phantom{0}0.719\,(0.734) \\
ROCKET 2nd stage & $3{\times}10^{-3}$ & $3{\times}10^{-3}$ & \phantom{00}0.600 & \phantom{0}3.42\,\phantom{00}(3.70) & 0.529\phantom{0}\,\phantom{00}(0.941) & 0.212\phantom{0}\,\phantom{0}(0.256) & 0.223\phantom{0}\,(0.237)\phantom{0} & \phantom{0}1.57\phantom{0}\,\phantom{00}(1.79)\phantom{0} & 1.04\phantom{0}\,\phantom{0}(1.12)\phantom{0} & \phantom{0}0.692\,(0.695) \\
ROCKET 3rd stage & $3{\times}10^{-3}$ & $3{\times}10^{-3}$ & \phantom{00}0.200 & \phantom{0}1.10\,\phantom{00}(\underline{1.18}) & 0.445\phantom{0}\,\phantom{00}(0.557) & 0.226\phantom{0}\,\phantom{0}(0.229) & 0.189\phantom{0}\,(0.193)\phantom{0} & \phantom{0}1.56\phantom{0}\,\phantom{00}(1.61)\phantom{0} & 0.782\,\phantom{0}(0.793) & \phantom{0}0.678\,(0.679) \\
\midrule
\multicolumn{11}{l}{\textit{Supervised, continued to 72 epochs}} \\
GROOT\textsubscript{rot} & $3.3{\times}10^{-4}$ & $10^{-2}$ & \phantom{00}0.960 & \phantom{0}2.77\,\phantom{00}(8.81) & 0.0796\,\phantom{0}(36.7)\phantom{00} & 0.116\phantom{0}\,\phantom{0}(4.67)\phantom{0} & 0.108\phantom{0}\,(0.317)\phantom{0} & \phantom{0}0.697\,\phantom{00}(3.18)\phantom{0} & 0.494\,\phantom{0}(1.20)\phantom{0} & \phantom{0}0.645\,(0.658) \\
GROOT\textsubscript{proj} & $3.3{\times}10^{-4}$ & $10^{-2}$ & \phantom{00}0.200 & \phantom{0}1.52\,\phantom{00}(\underline{1.59}) & 0.0715\,\phantom{00}(\underline{0.217}) & 0.0933\,\phantom{0}(\underline{0.160}) & 0.0783\,(\underline{0.0848}) & \phantom{0}0.547\,\phantom{00}(\underline{0.853}) & 0.390\,\phantom{0}(\underline{0.487}) & \phantom{0}0.623\,(\underline{0.628}) \\
\bottomrule
\end{tabular}}
\end{table}

\clearpage

\keepwithfloat
\subsubsection{3BPA Dihedral-Scan Data Tables}
\vspace{-0.2cm}

\begin{table}[h]
\centering
\footnotesize
\setlength{\tabcolsep}{3pt}
\renewcommand{\arraystretch}{1.05}
\caption{\textbf{Observable errors under conformational transfer across three 3BPA fixed-dihedral scans for QMugs-trained models (\cref{tab:3bpa-dataset}).}
Because 3BPA samples a single molecule across a dense conformational grid rather than distinct chemical formulas, this test isolates conformational generalization from compositional variation.
Each scan constitutes a distinct block of 2347 frames ($\beta = 120^\circ$) or 2350 frames ($\beta = 150^\circ, 180^\circ$), evaluated separately without pooling.
Each cell reports the per-molecule mean absolute error of a single seed.
Values in round brackets average all frames before outlier exclusion, identical across models.
Per column and block, the best bracketed value is underlined.
$\hat E_\mathrm{KS}$ is the KS-DFT energy evaluated at the predicted density, $\hat E$ adds the first-order self-consistency correction (\cref{sec:experiments}), and $E^*$ is the converged reference energy.
Energy columns subtract the per-frame signed error median separately for each model and block.
Predictions with DIIS residual $r_\mathrm{DIIS} = \sqrt{8}\,\|\mathbf{S}^{-1/2}(\mathbf{F}\mathbf{P}\mathbf{S} - \mathbf{S}\mathbf{P}\mathbf{F})\mathbf{S}^{-1/2}\|_F$ exceeding $30\sqrt{N_\mathrm{elec}}$~mHa are excluded as outliers, judged per checkpoint and molecule.
Unbracketed values average kept molecules only, and Excluded reports the lost molecule percentage.
Models losing all molecules display a dash before each bracketed value.
LR and WD denote peak learning rate and weight decay from the training configuration.
Baseline potentials PaiNN, NequIP, and MACE train on QMugs energies alone and predict energy directly, with forces obtained by negative gradients.}
\label{tab:ablation_3bpa_dih}
\resizebox{\linewidth}{!}{%
\begin{tabular}{lrrrrrrrrr}
\toprule
Model & LR & WD & Excluded & First-order corrected & Self-consistent & Dipole & $L_1$ & Forces & HOMO--LUMO \\
 &  &  & {\scriptsize (\%)} & energy~{\scriptsize (mHa)} & energy~{\scriptsize (mHa)} & {\scriptsize (D)} & {\scriptsize (e)} & {\scriptsize (mHa/\AA)} & {\scriptsize (mHa)} \\
 &  &  &  & {\scriptsize $|\hat E - E^*|$} & {\scriptsize $|\hat E_\mathrm{KS} - E^*|$} & {\scriptsize $\|\Delta\boldsymbol\mu\|_2$} & {\scriptsize $\|\Delta\rho\|_{L^1}$} & {\scriptsize $\langle\|\Delta\mathbf{F}_a\|_2\rangle_a$} & {\scriptsize $|\Delta\varepsilon_\mathrm{gap}|$} \\
\midrule
\multicolumn{10}{l}{\textbf{$\beta = 120^\circ$}} \\
$\mathbf{P}_\mathrm{base}$ (NON) & $3{\times}10^{-3}$ & $10^{-2}$ & 100\phantom{.}\phantom{0000} & --\phantom{.}\phantom{00000}\,(0.204)\phantom{00} & --\phantom{.}\phantom{0000}\,(13.5)\phantom{000} & --\phantom{.}\phantom{0000}\,\phantom{0}(0.187)\phantom{0} & --\phantom{.}\phantom{0000}\,\phantom{0}(0.148)\phantom{0} & --\phantom{.}\phantom{00}\,\phantom{0}(4.12) & --\phantom{.}\phantom{000}\,\phantom{0}(3.32)\phantom{0} \\
$\mathbf{P}_\mathrm{base}$ (ON) & $3{\times}10^{-3}$ & $3{\times}10^{-3}$ & \phantom{00}0.0426 & \phantom{0}0.00634\,(\underline{0.00643}) & \phantom{0}4.34\phantom{00}\,\phantom{0}(4.35)\phantom{00} & \phantom{0}0.108\phantom{0}\,\phantom{0}(\underline{0.108})\phantom{0} & \phantom{0}0.0572\,\phantom{0}(\underline{0.0572}) & \phantom{0}1.27\,\phantom{0}(\underline{1.27}) & \phantom{0}0.995\,\phantom{0}(0.995) \\
$\mathbf{H}_\mathrm{base} + c_t \mathbf{S}$ (NON) & $3{\times}10^{-4}$ & $10^{-3}$ & \phantom{00}0.213\phantom{0} & \phantom{0}0.104\phantom{00}\,(0.604)\phantom{00} & \phantom{0}0.157\phantom{0}\,\phantom{0}(1.26)\phantom{00} & \phantom{0}0.235\phantom{0}\,\phantom{0}(0.285)\phantom{0} & \phantom{0}0.183\phantom{0}\,\phantom{0}(0.188)\phantom{0} & \phantom{0}4.35\,\phantom{0}(4.54) & \phantom{0}1.19\phantom{0}\,\phantom{0}(1.29)\phantom{0} \\
GROOT\textsubscript{rot} & $10^{-3}$ & $10^{-2}$ & \phantom{00}0\phantom{.}\phantom{0000} & \phantom{0}0.0159\phantom{0}\,(0.0159)\phantom{0} & \phantom{0}0.0278\,\phantom{0}(0.0278) & \phantom{0}0.139\phantom{0}\,\phantom{0}(0.139)\phantom{0} & \phantom{0}0.107\phantom{0}\,\phantom{0}(0.107)\phantom{0} & \phantom{0}2.33\,\phantom{0}(2.33) & \phantom{0}0.635\,\phantom{0}(0.635) \\
GROOT\textsubscript{proj} & $10^{-3}$ & $10^{-2}$ & \phantom{00}0\phantom{.}\phantom{0000} & \phantom{0}0.0316\phantom{0}\,(0.0316)\phantom{0} & \phantom{0}0.0209\,\phantom{0}(\underline{0.0209}) & \phantom{0}0.112\phantom{0}\,\phantom{0}(0.112)\phantom{0} & \phantom{0}0.0968\,\phantom{0}(0.0968) & \phantom{0}1.95\,\phantom{0}(1.95) & \phantom{0}0.621\,\phantom{0}(\underline{0.621}) \\
ROCKET 2nd stage & $3{\times}10^{-3}$ & $3{\times}10^{-3}$ & \phantom{00}0\phantom{.}\phantom{0000} & \phantom{0}0.125\phantom{00}\,(0.125)\phantom{00} & \phantom{0}0.0732\,\phantom{0}(0.0732) & \phantom{0}0.252\phantom{0}\,\phantom{0}(0.252)\phantom{0} & \phantom{0}0.229\phantom{0}\,\phantom{0}(0.229)\phantom{0} & \phantom{0}4.91\,\phantom{0}(4.91) & \phantom{0}1.42\phantom{0}\,\phantom{0}(1.42)\phantom{0} \\
ROCKET 3rd stage & $3{\times}10^{-3}$ & $3{\times}10^{-3}$ & \phantom{00}0\phantom{.}\phantom{0000} & \phantom{0}0.107\phantom{00}\,(0.107)\phantom{00} & \phantom{0}0.0595\,\phantom{0}(0.0595) & \phantom{0}0.222\phantom{0}\,\phantom{0}(0.222)\phantom{0} & \phantom{0}0.220\phantom{0}\,\phantom{0}(0.220)\phantom{0} & \phantom{0}4.65\,\phantom{0}(4.65) & \phantom{0}1.28\phantom{0}\,\phantom{0}(1.28)\phantom{0} \\
PaiNN & $5{\times}10^{-4}$ & $10^{-2}$ &  & \phantom{\phantom{0}1.14\phantom{000}}\,(1.14)\phantom{000} &  &  &  & \phantom{26.7\phantom{0}}\,(26.7)\phantom{0} &  \\
NequIP & $3{\times}10^{-3}$ & $5{\times}10^{-7}$ &  & \phantom{\phantom{0}1.16\phantom{000}}\,(1.16)\phantom{000} &  &  &  & \phantom{25.6\phantom{0}}\,(25.6)\phantom{0} &  \\
MACE & $3{\times}10^{-3}$ & $5{\times}10^{-7}$ &  & \phantom{\phantom{0}1.17\phantom{000}}\,(1.17)\phantom{000} &  &  &  & \phantom{33.1\phantom{0}}\,(33.1)\phantom{0} &  \\
\midrule
\multicolumn{10}{l}{\textbf{$\beta = 150^\circ$}} \\
$\mathbf{P}_\mathrm{base}$ (NON) & $3{\times}10^{-3}$ & $10^{-2}$ & 100\phantom{.}\phantom{0000} & --\phantom{.}\phantom{00000}\,(0.225)\phantom{00} & --\phantom{.}\phantom{0000}\,\phantom{0}(7.60)\phantom{00} & --\phantom{.}\phantom{0000}\,\phantom{0}(0.193)\phantom{0} & --\phantom{.}\phantom{0000}\,\phantom{0}(0.131)\phantom{0} & --\phantom{.}\phantom{00}\,\phantom{0}(3.83) & --\phantom{.}\phantom{000}\,\phantom{0}(2.92)\phantom{0} \\
$\mathbf{P}_\mathrm{base}$ (ON) & $3{\times}10^{-3}$ & $3{\times}10^{-3}$ & \phantom{00}0\phantom{.}\phantom{0000} & \phantom{0}0.00833\,(\underline{0.00833}) & \phantom{0}3.55\phantom{00}\,\phantom{0}(3.55)\phantom{00} & \phantom{0}0.0921\,\phantom{0}(\underline{0.0921}) & \phantom{0}0.0548\,\phantom{0}(\underline{0.0548}) & \phantom{0}1.18\,\phantom{0}(\underline{1.18}) & \phantom{0}0.970\,\phantom{0}(0.970) \\
$\mathbf{H}_\mathrm{base} + c_t \mathbf{S}$ (NON) & $3{\times}10^{-4}$ & $10^{-3}$ & \phantom{00}0\phantom{.}\phantom{0000} & \phantom{0}0.136\phantom{00}\,(0.136)\phantom{00} & \phantom{0}0.213\phantom{0}\,\phantom{0}(0.213)\phantom{0} & \phantom{0}0.213\phantom{0}\,\phantom{0}(0.213)\phantom{0} & \phantom{0}0.184\phantom{0}\,\phantom{0}(0.184)\phantom{0} & \phantom{0}4.30\,\phantom{0}(4.30) & \phantom{0}1.39\phantom{0}\,\phantom{0}(1.39)\phantom{0} \\
GROOT\textsubscript{rot} & $10^{-3}$ & $10^{-2}$ & \phantom{00}0\phantom{.}\phantom{0000} & \phantom{0}0.0200\phantom{0}\,(0.0200)\phantom{0} & \phantom{0}0.0395\,\phantom{0}(0.0395) & \phantom{0}0.158\phantom{0}\,\phantom{0}(0.158)\phantom{0} & \phantom{0}0.105\phantom{0}\,\phantom{0}(0.105)\phantom{0} & \phantom{0}2.43\,\phantom{0}(2.43) & \phantom{0}1.03\phantom{0}\,\phantom{0}(1.03)\phantom{0} \\
GROOT\textsubscript{proj} & $10^{-3}$ & $10^{-2}$ & \phantom{00}0\phantom{.}\phantom{0000} & \phantom{0}0.0208\phantom{0}\,(0.0208)\phantom{0} & \phantom{0}0.0223\,\phantom{0}(\underline{0.0223}) & \phantom{0}0.107\phantom{0}\,\phantom{0}(0.107)\phantom{0} & \phantom{0}0.0939\,\phantom{0}(0.0939) & \phantom{0}1.89\,\phantom{0}(1.89) & \phantom{0}0.855\,\phantom{0}(\underline{0.855}) \\
ROCKET 2nd stage & $3{\times}10^{-3}$ & $3{\times}10^{-3}$ & \phantom{00}0\phantom{.}\phantom{0000} & \phantom{0}0.162\phantom{00}\,(0.162)\phantom{00} & \phantom{0}0.0973\,\phantom{0}(0.0973) & \phantom{0}0.236\phantom{0}\,\phantom{0}(0.236)\phantom{0} & \phantom{0}0.236\phantom{0}\,\phantom{0}(0.236)\phantom{0} & \phantom{0}5.06\,\phantom{0}(5.06) & \phantom{0}1.64\phantom{0}\,\phantom{0}(1.64)\phantom{0} \\
ROCKET 3rd stage & $3{\times}10^{-3}$ & $3{\times}10^{-3}$ & \phantom{00}0\phantom{.}\phantom{0000} & \phantom{0}0.140\phantom{00}\,(0.140)\phantom{00} & \phantom{0}0.0820\,\phantom{0}(0.0820) & \phantom{0}0.215\phantom{0}\,\phantom{0}(0.215)\phantom{0} & \phantom{0}0.227\phantom{0}\,\phantom{0}(0.227)\phantom{0} & \phantom{0}4.79\,\phantom{0}(4.79) & \phantom{0}1.46\phantom{0}\,\phantom{0}(1.46)\phantom{0} \\
PaiNN & $5{\times}10^{-4}$ & $10^{-2}$ &  & \phantom{\phantom{0}1.37\phantom{000}}\,(1.37)\phantom{000} &  &  &  & \phantom{24.1\phantom{0}}\,(24.1)\phantom{0} &  \\
NequIP & $3{\times}10^{-3}$ & $5{\times}10^{-7}$ &  & \phantom{\phantom{0}1.63\phantom{000}}\,(1.63)\phantom{000} &  &  &  & \phantom{26.5\phantom{0}}\,(26.5)\phantom{0} &  \\
MACE & $3{\times}10^{-3}$ & $5{\times}10^{-7}$ &  & \phantom{\phantom{0}1.62\phantom{000}}\,(1.62)\phantom{000} &  &  &  & \phantom{33.5\phantom{0}}\,(33.5)\phantom{0} &  \\
\midrule
\multicolumn{10}{l}{\textbf{$\beta = 180^\circ$}} \\
$\mathbf{P}_\mathrm{base}$ (NON) & $3{\times}10^{-3}$ & $10^{-2}$ & 100\phantom{.}\phantom{0000} & --\phantom{.}\phantom{00000}\,(0.243)\phantom{00} & --\phantom{.}\phantom{0000}\,\phantom{0}(8.96)\phantom{00} & --\phantom{.}\phantom{0000}\,\phantom{0}(0.207)\phantom{0} & --\phantom{.}\phantom{0000}\,\phantom{0}(0.137)\phantom{0} & --\phantom{.}\phantom{00}\,\phantom{0}(4.06) & --\phantom{.}\phantom{000}\,\phantom{0}(3.17)\phantom{0} \\
$\mathbf{P}_\mathrm{base}$ (ON) & $3{\times}10^{-3}$ & $3{\times}10^{-3}$ & \phantom{00}0\phantom{.}\phantom{0000} & \phantom{0}0.00802\,(\underline{0.00802}) & \phantom{0}4.76\phantom{00}\,\phantom{0}(4.76)\phantom{00} & \phantom{0}0.0868\,\phantom{0}(0.0868) & \phantom{0}0.0538\,\phantom{0}(\underline{0.0538}) & \phantom{0}1.14\,\phantom{0}(\underline{1.14}) & \phantom{0}0.970\,\phantom{0}(0.970) \\
$\mathbf{H}_\mathrm{base} + c_t \mathbf{S}$ (NON) & $3{\times}10^{-4}$ & $10^{-3}$ & \phantom{00}0\phantom{.}\phantom{0000} & \phantom{0}0.151\phantom{00}\,(0.151)\phantom{00} & \phantom{0}0.196\phantom{0}\,\phantom{0}(0.196)\phantom{0} & \phantom{0}0.188\phantom{0}\,\phantom{0}(0.188)\phantom{0} & \phantom{0}0.181\phantom{0}\,\phantom{0}(0.181)\phantom{0} & \phantom{0}4.18\,\phantom{0}(4.18) & \phantom{0}1.37\phantom{0}\,\phantom{0}(1.37)\phantom{0} \\
GROOT\textsubscript{rot} & $10^{-3}$ & $10^{-2}$ & \phantom{00}0\phantom{.}\phantom{0000} & \phantom{0}0.0157\phantom{0}\,(0.0157)\phantom{0} & \phantom{0}0.0481\,\phantom{0}(0.0481) & \phantom{0}0.128\phantom{0}\,\phantom{0}(0.128)\phantom{0} & \phantom{0}0.0965\,\phantom{0}(0.0965) & \phantom{0}2.27\,\phantom{0}(2.27) & \phantom{0}1.05\phantom{0}\,\phantom{0}(1.05)\phantom{0} \\
GROOT\textsubscript{proj} & $10^{-3}$ & $10^{-2}$ & \phantom{00}0\phantom{.}\phantom{0000} & \phantom{0}0.0106\phantom{0}\,(0.0106)\phantom{0} & \phantom{0}0.0250\,\phantom{0}(\underline{0.0250}) & \phantom{0}0.0825\,\phantom{0}(\underline{0.0825}) & \phantom{0}0.0851\,\phantom{0}(0.0851) & \phantom{0}1.66\,\phantom{0}(1.66) & \phantom{0}0.838\,\phantom{0}(\underline{0.838}) \\
ROCKET 2nd stage & $3{\times}10^{-3}$ & $3{\times}10^{-3}$ & \phantom{00}0\phantom{.}\phantom{0000} & \phantom{0}0.154\phantom{00}\,(0.154)\phantom{00} & \phantom{0}0.0940\,\phantom{0}(0.0940) & \phantom{0}0.195\phantom{0}\,\phantom{0}(0.195)\phantom{0} & \phantom{0}0.230\phantom{0}\,\phantom{0}(0.230)\phantom{0} & \phantom{0}4.86\,\phantom{0}(4.86) & \phantom{0}1.59\phantom{0}\,\phantom{0}(1.59)\phantom{0} \\
ROCKET 3rd stage & $3{\times}10^{-3}$ & $3{\times}10^{-3}$ & \phantom{00}0\phantom{.}\phantom{0000} & \phantom{0}0.153\phantom{00}\,(0.153)\phantom{00} & \phantom{0}0.0887\,\phantom{0}(0.0887) & \phantom{0}0.189\phantom{0}\,\phantom{0}(0.189)\phantom{0} & \phantom{0}0.224\phantom{0}\,\phantom{0}(0.224)\phantom{0} & \phantom{0}4.70\,\phantom{0}(4.70) & \phantom{0}1.45\phantom{0}\,\phantom{0}(1.45)\phantom{0} \\
PaiNN & $5{\times}10^{-4}$ & $10^{-2}$ &  & \phantom{\phantom{0}1.19\phantom{000}}\,(1.19)\phantom{000} &  &  &  & \phantom{24.3\phantom{0}}\,(24.3)\phantom{0} &  \\
NequIP & $3{\times}10^{-3}$ & $5{\times}10^{-7}$ &  & \phantom{\phantom{0}1.88\phantom{000}}\,(1.88)\phantom{000} &  &  &  & \phantom{25.7\phantom{0}}\,(25.7)\phantom{0} &  \\
MACE & $3{\times}10^{-3}$ & $5{\times}10^{-7}$ &  & \phantom{\phantom{0}1.66\phantom{000}}\,(1.66)\phantom{000} &  &  &  & \phantom{33.2\phantom{0}}\,(33.2)\phantom{0} &  \\
\bottomrule
\end{tabular}}
\end{table}

\clearpage

\keepwithfloat
\subsection{Self-Consistency-Fine-Tuning on Transition1x}
\vspace{-0.2cm}

\begin{table}[h]
\centering
\footnotesize
\setlength{\tabcolsep}{3pt}
\renewcommand{\arraystretch}{1.05}
\caption{\textbf{Observable errors on the Transition1x test split across self-consistency training epochs (\cref{tab:transition1x-dataset}).}
These reactive geometries represent chemistry that the QMugs-trained parent checkpoints of \cref{tab:ablation_qmugs_ood160_r100} never encountered during pre-training.
The first block evaluates the 24-epoch parent checkpoints before self-consistency training.
Subsequent blocks evaluate models warm-started from each parent after 2, 4, and 6 epochs of occupied-virtual gradient training under a cosine schedule.
The final block evaluates models trained for 6 epochs with the residual of \citet{zhangSelfConsistencyTraining2024}.
No run uses weight decay or experience replay.
LR denotes the peak learning rate selected on the validation split by the self-consistency residual alone (\cref{tab:transition1x_sct_val_selection}).
Each cell reports the per-molecule mean absolute error of a single seed.
Values in round brackets average all 1715 frames before outlier exclusion, and the best bracketed value per column and block is underlined.
$\hat E_\mathrm{KS}$ is the KS-DFT energy at the predicted density, $\hat E$ adds the first-order self-consistency correction (\cref{sec:experiments}), and $E^*$ is the converged reference energy.
Predictions with DIIS residual $r_\mathrm{DIIS} = \sqrt{8}\,\|\mathbf{S}^{-1/2}(\mathbf{F}\mathbf{P}\mathbf{S} - \mathbf{S}\mathbf{P}\mathbf{F})\mathbf{S}^{-1/2}\|_F$ exceeding $30\sqrt{N_\mathrm{elec}}$~mHa are excluded as outliers, judged per checkpoint and molecule.
Unbracketed values average kept molecules only, and Excluded reports the lost molecule percentage.
Models losing all molecules display a dash before each unbracketed value.
ERIC reports SCF iterations relative to the MINAO guess, averaged over all frames with unconverged runs capped at 2.}
\label{tab:ablation_transition1x_sct_test}
\resizebox{\linewidth}{!}{%
\begin{tabular}{lrrrrrrrrr}
\toprule
Model & LR & Excluded & First-order corrected & Self-consistent & Dipole & $L_1$ & Forces & HOMO--LUMO & ERIC \\
 &  & {\scriptsize (\%)} & energy~{\scriptsize (mHa)} & energy~{\scriptsize (mHa)} & {\scriptsize (D)} & {\scriptsize (e)} & {\scriptsize (mHa/\AA)} & {\scriptsize (mHa)} & {\scriptsize (ratio)} \\
 &  &  & {\scriptsize $|\hat E - E^*|$} & {\scriptsize $|\hat E_\mathrm{KS} - E^*|$} & {\scriptsize $\|\Delta\boldsymbol\mu\|_2$} & {\scriptsize $\|\Delta\rho\|_{L^1}$} & {\scriptsize $\langle\|\Delta\mathbf{F}_a\|_2\rangle_a$} & {\scriptsize $|\Delta\varepsilon_\mathrm{gap}|$} &  \\
\midrule
\multicolumn{10}{l}{\textbf{Before self-consistency training (the QMugs-trained parents)}} \\
$\mathbf{P}_\mathrm{base}$ (NON) & $3{\times}10^{-3}$ & \phantom{0}95.9\phantom{00} & \phantom{0}0.0363\,(33.4)\phantom{000} & 13.3\phantom{00}\,(142)\phantom{.}\phantom{000} & \phantom{0}0.0680\,(2.09)\phantom{00} & \phantom{0}0.0396\,(0.667)\phantom{0} & \phantom{0}2.80\,(47.0)\phantom{0} & \phantom{0}0.784\,(17.8)\phantom{00} & \phantom{0}0.769\,(0.980) \\
$\mathbf{P}_\mathrm{base}$ (ON) & $3{\times}10^{-3}$ & \phantom{0}84.7\phantom{00} & \phantom{0}0.0386\,\phantom{0}(\underline{3.58})\phantom{00} & 44.1\phantom{00}\,(116)\phantom{.}\phantom{000} & \phantom{0}0.0416\,(\underline{0.805})\phantom{0} & \phantom{0}0.0323\,(\underline{0.269})\phantom{0} & \phantom{0}2.64\,(\underline{21.8})\phantom{0} & \phantom{0}0.455\,\phantom{0}(\underline{7.99})\phantom{0} & \phantom{0}0.741\,(\underline{0.896}) \\
GROOT\textsubscript{rot} & $10^{-3}$ & \phantom{0}72.4\phantom{00} & \phantom{0}0.202\phantom{0}\,(63.7)\phantom{000} & \phantom{0}0.574\,\phantom{0}(33.7)\phantom{00} & \phantom{0}0.146\phantom{0}\,(4.14)\phantom{00} & \phantom{0}0.0713\,(1.01)\phantom{00} & \phantom{0}4.37\,(66.4)\phantom{0} & \phantom{0}1.49\phantom{0}\,(23.9)\phantom{00} & \phantom{0}0.794\,(0.999) \\
GROOT\textsubscript{proj} & $10^{-3}$ & \phantom{0}64.5\phantom{00} & \phantom{0}0.259\phantom{0}\,(43.9)\phantom{000} & \phantom{0}0.511\,\phantom{0}(21.3)\phantom{00} & \phantom{0}0.153\phantom{0}\,(3.29)\phantom{00} & \phantom{0}0.0716\,(0.830)\phantom{0} & \phantom{0}4.87\,(56.9)\phantom{0} & \phantom{0}1.75\phantom{0}\,(20.7)\phantom{00} & \phantom{0}0.791\,(0.978) \\
$\mathbf{H}_\mathrm{base} + c_t \mathbf{S}$ (ON) & $3{\times}10^{-3}$ & \phantom{0}61.7\phantom{00} & \phantom{0}0.893\phantom{0}\,(46.2)\phantom{000} & \phantom{0}0.814\,\phantom{0}(22.6)\phantom{00} & \phantom{0}0.261\phantom{0}\,(3.05)\phantom{00} & \phantom{0}0.158\phantom{0}\,(0.877)\phantom{0} & \phantom{0}8.22\,(51.8)\phantom{0} & \phantom{0}2.84\phantom{0}\,(22.4)\phantom{00} & \phantom{0}0.853\,(0.998) \\
ROCKET 2nd stage & $3{\times}10^{-3}$ & \phantom{0}59.7\phantom{00} & \phantom{0}0.876\phantom{0}\,(35.7)\phantom{000} & \phantom{0}0.831\,\phantom{0}(17.2)\phantom{00} & \phantom{0}0.255\phantom{0}\,(2.78)\phantom{00} & \phantom{0}0.155\phantom{0}\,(0.789)\phantom{0} & \phantom{0}8.17\,(47.2)\phantom{0} & \phantom{0}3.02\phantom{0}\,(21.5)\phantom{00} & \phantom{0}0.850\,(0.987) \\
ROCKET 3rd stage & $3{\times}10^{-3}$ & \phantom{0}58.9\phantom{00} & \phantom{0}0.913\phantom{0}\,(33.5)\phantom{000} & \phantom{0}0.783\,\phantom{0}(\underline{16.3})\phantom{00} & \phantom{0}0.273\phantom{0}\,(2.72)\phantom{00} & \phantom{0}0.156\phantom{0}\,(0.770)\phantom{0} & \phantom{0}8.38\,(45.8)\phantom{0} & \phantom{0}2.93\phantom{0}\,(20.3)\phantom{00} & \phantom{0}0.851\,(0.985) \\
\midrule
\multicolumn{10}{l}{\textbf{After 2 epochs of self-consistency training (occupied--virtual gradient)}} \\
$\mathbf{P}_\mathrm{base}$ (NON) & $10^{-3}$ & 100\phantom{.}\phantom{000} & --\phantom{.}\phantom{0000}\,(10.7)\phantom{000} & --\phantom{.}\phantom{000}\,(393)\phantom{.}\phantom{000} & --\phantom{.}\phantom{0000}\,(1.55)\phantom{00} & --\phantom{.}\phantom{0000}\,(0.523)\phantom{0} & --\phantom{.}\phantom{00}\,(39.0)\phantom{0} & --\phantom{.}\phantom{000}\,(14.5)\phantom{00} & --\phantom{.}\phantom{000}\,(0.986) \\
$\mathbf{P}_\mathrm{base}$ (ON) & $3{\times}10^{-4}$ & \phantom{0}52.3\phantom{00} & \phantom{0}0.239\phantom{0}\,\phantom{0}(1.26)\phantom{00} & 17.3\phantom{00}\,\phantom{0}(33.0)\phantom{00} & \phantom{0}0.153\phantom{0}\,(0.587)\phantom{0} & \phantom{0}0.0565\,(0.149)\phantom{0} & \phantom{0}5.94\,(13.4)\phantom{0} & \phantom{0}2.22\phantom{0}\,\phantom{0}(5.73)\phantom{0} & \phantom{0}0.789\,(0.853) \\
GROOT\textsubscript{rot} & $10^{-3}$ & \phantom{00}5.60\phantom{0} & \phantom{0}0.491\phantom{0}\,\phantom{0}(2.66)\phantom{00} & \phantom{0}1.39\phantom{0}\,\phantom{00}(3.21)\phantom{0} & \phantom{0}0.320\phantom{0}\,(0.670)\phantom{0} & \phantom{0}0.0891\,(0.152)\phantom{0} & \phantom{0}6.28\,(10.5)\phantom{0} & \phantom{0}2.34\phantom{0}\,\phantom{0}(2.91)\phantom{0} & \phantom{0}0.809\,(0.825) \\
GROOT\textsubscript{proj} & $10^{-3}$ & \phantom{00}0.466 & \phantom{0}0.237\phantom{0}\,\phantom{0}(\underline{0.608})\phantom{0} & \phantom{0}0.450\,\phantom{00}(0.542) & \phantom{0}0.239\phantom{0}\,(0.280)\phantom{0} & \phantom{0}0.0676\,(0.0767) & \phantom{0}4.78\,\phantom{0}(5.23) & \phantom{0}1.63\phantom{0}\,\phantom{0}(1.72)\phantom{0} & \phantom{0}0.781\,(0.783) \\
$\mathbf{H}_\mathrm{base} + c_t \mathbf{S}$ (ON) & $3{\times}10^{-3}$ & \phantom{00}0.350 & \phantom{0}0.254\phantom{0}\,\phantom{0}(0.945)\phantom{0} & \phantom{0}0.358\,\phantom{00}(0.505) & \phantom{0}0.238\phantom{0}\,(0.283)\phantom{0} & \phantom{0}0.0702\,(0.0806) & \phantom{0}4.91\,\phantom{0}(5.39) & \phantom{0}1.65\phantom{0}\,\phantom{0}(1.76)\phantom{0} & \phantom{0}0.787\,(0.788) \\
ROCKET 2nd stage & $3{\times}10^{-3}$ & \phantom{00}0.350 & \phantom{0}0.220\phantom{0}\,\phantom{0}(1.21)\phantom{00} & \phantom{0}0.319\,\phantom{00}(0.500) & \phantom{0}0.219\phantom{0}\,(0.264)\phantom{0} & \phantom{0}0.0663\,(0.0769) & \phantom{0}4.55\,\phantom{0}(5.04) & \phantom{0}1.52\phantom{0}\,\phantom{0}(1.62)\phantom{0} & \phantom{0}0.783\,(0.784) \\
ROCKET 3rd stage & $3{\times}10^{-3}$ & \phantom{00}0.292 & \phantom{0}0.180\phantom{0}\,\phantom{0}(0.899)\phantom{0} & \phantom{0}0.293\,\phantom{00}(\underline{0.439}) & \phantom{0}0.194\phantom{0}\,(\underline{0.233})\phantom{0} & \phantom{0}0.0607\,(\underline{0.0698}) & \phantom{0}4.13\,\phantom{0}(\underline{4.54}) & \phantom{0}1.43\phantom{0}\,\phantom{0}(\underline{1.51})\phantom{0} & \phantom{0}0.777\,(\underline{0.777}) \\
\midrule
\multicolumn{10}{l}{\textbf{After 4 epochs of self-consistency training (occupied--virtual gradient)}} \\
$\mathbf{P}_\mathrm{base}$ (NON) & $10^{-3}$ & 100\phantom{.}\phantom{000} & --\phantom{.}\phantom{0000}\,\phantom{0}(7.21)\phantom{00} & --\phantom{.}\phantom{000}\,(315)\phantom{.}\phantom{000} & --\phantom{.}\phantom{0000}\,(1.34)\phantom{00} & --\phantom{.}\phantom{0000}\,(0.409)\phantom{0} & --\phantom{.}\phantom{00}\,(33.2)\phantom{0} & --\phantom{.}\phantom{000}\,(12.5)\phantom{00} & --\phantom{.}\phantom{000}\,(0.961) \\
$\mathbf{P}_\mathrm{base}$ (ON) & $3{\times}10^{-4}$ & \phantom{0}36.8\phantom{00} & \phantom{0}0.232\phantom{0}\,\phantom{0}(0.803)\phantom{0} & 16.0\phantom{00}\,\phantom{0}(26.9)\phantom{00} & \phantom{0}0.166\phantom{0}\,(0.490)\phantom{0} & \phantom{0}0.0545\,(0.120)\phantom{0} & \phantom{0}5.63\,(10.9)\phantom{0} & \phantom{0}2.52\phantom{0}\,\phantom{0}(4.66)\phantom{0} & \phantom{0}0.784\,(0.833) \\
GROOT\textsubscript{rot} & $10^{-3}$ & \phantom{00}3.21\phantom{0} & \phantom{0}0.418\phantom{0}\,\phantom{0}(2.12)\phantom{00} & \phantom{0}1.12\phantom{0}\,\phantom{00}(2.50)\phantom{0} & \phantom{0}0.303\phantom{0}\,(0.577)\phantom{0} & \phantom{0}0.0784\,(0.129)\phantom{0} & \phantom{0}5.52\,\phantom{0}(8.84) & \phantom{0}1.88\phantom{0}\,\phantom{0}(2.32)\phantom{0} & \phantom{0}0.791\,(0.802) \\
GROOT\textsubscript{proj} & $10^{-3}$ & \phantom{00}0.233 & \phantom{0}0.170\phantom{0}\,\phantom{0}(\underline{0.630})\phantom{0} & \phantom{0}0.313\,\phantom{00}(0.400) & \phantom{0}0.191\phantom{0}\,(0.230)\phantom{0} & \phantom{0}0.0541\,(0.0626) & \phantom{0}3.79\,\phantom{0}(4.20) & \phantom{0}1.29\phantom{0}\,\phantom{0}(1.35)\phantom{0} & \phantom{0}0.760\,(0.762) \\
$\mathbf{H}_\mathrm{base} + c_t \mathbf{S}$ (ON) & $3{\times}10^{-3}$ & \phantom{00}0.350 & \phantom{0}0.166\phantom{0}\,\phantom{0}(0.687)\phantom{0} & \phantom{0}0.223\,\phantom{00}(\underline{0.321}) & \phantom{0}0.184\phantom{0}\,(0.238)\phantom{0} & \phantom{0}0.0544\,(0.0651) & \phantom{0}3.74\,\phantom{0}(4.30) & \phantom{0}1.29\phantom{0}\,\phantom{0}(1.35)\phantom{0} & \phantom{0}0.764\,(0.765) \\
ROCKET 2nd stage & $3{\times}10^{-3}$ & \phantom{00}0.175 & \phantom{0}0.156\phantom{0}\,\phantom{0}(0.955)\phantom{0} & \phantom{0}0.207\,\phantom{00}(0.342) & \phantom{0}0.174\phantom{0}\,(\underline{0.211})\phantom{0} & \phantom{0}0.0524\,(0.0608) & \phantom{0}3.56\,\phantom{0}(3.94) & \phantom{0}1.26\phantom{0}\,\phantom{0}(1.30)\phantom{0} & \phantom{0}0.760\,(0.760) \\
ROCKET 3rd stage & $3{\times}10^{-3}$ & \phantom{00}0.175 & \phantom{0}0.161\phantom{0}\,\phantom{0}(0.821)\phantom{0} & \phantom{0}0.202\,\phantom{00}(0.323) & \phantom{0}0.177\phantom{0}\,(0.212)\phantom{0} & \phantom{0}0.0513\,(\underline{0.0595}) & \phantom{0}3.52\,\phantom{0}(\underline{3.88}) & \phantom{0}1.21\phantom{0}\,\phantom{0}(\underline{1.26})\phantom{0} & \phantom{0}0.756\,(\underline{0.758}) \\
\midrule
\multicolumn{10}{l}{\textbf{After 6 epochs of self-consistency training (occupied--virtual gradient)}} \\
$\mathbf{P}_\mathrm{base}$ (NON) & $10^{-3}$ & 100\phantom{.}\phantom{000} & --\phantom{.}\phantom{0000}\,\phantom{0}(5.52)\phantom{00} & --\phantom{.}\phantom{000}\,(258)\phantom{.}\phantom{000} & --\phantom{.}\phantom{0000}\,(1.19)\phantom{00} & --\phantom{.}\phantom{0000}\,(0.347)\phantom{0} & --\phantom{.}\phantom{00}\,(29.5)\phantom{0} & --\phantom{.}\phantom{000}\,(11.1)\phantom{00} & --\phantom{.}\phantom{000}\,(0.942) \\
$\mathbf{P}_\mathrm{base}$ (ON) & $3{\times}10^{-4}$ & \phantom{0}25.0\phantom{00} & \phantom{0}0.216\phantom{0}\,\phantom{0}(\underline{0.520})\phantom{0} & 14.5\phantom{00}\,\phantom{0}(21.1)\phantom{00} & \phantom{0}0.180\phantom{0}\,(0.401)\phantom{0} & \phantom{0}0.0532\,(0.0964) & \phantom{0}5.28\,\phantom{0}(8.80) & \phantom{0}2.49\phantom{0}\,\phantom{0}(3.71)\phantom{0} & \phantom{0}0.780\,(0.815) \\
GROOT\textsubscript{rot} & $10^{-3}$ & \phantom{00}2.57\phantom{0} & \phantom{0}0.393\phantom{0}\,\phantom{0}(2.00)\phantom{00} & \phantom{0}0.936\,\phantom{00}(2.17)\phantom{0} & \phantom{0}0.284\phantom{0}\,(0.531)\phantom{0} & \phantom{0}0.0724\,(0.117)\phantom{0} & \phantom{0}5.13\,\phantom{0}(8.13) & \phantom{0}1.66\phantom{0}\,\phantom{0}(2.01)\phantom{0} & \phantom{0}0.782\,(0.792) \\
GROOT\textsubscript{proj} & $10^{-3}$ & \phantom{00}0.233 & \phantom{0}0.140\phantom{0}\,\phantom{0}(0.655)\phantom{0} & \phantom{0}0.253\,\phantom{00}(0.340) & \phantom{0}0.172\phantom{0}\,(0.212)\phantom{0} & \phantom{0}0.0481\,(0.0565) & \phantom{0}3.39\,\phantom{0}(3.82) & \phantom{0}1.12\phantom{0}\,\phantom{0}(1.18)\phantom{0} & \phantom{0}0.748\,(0.750) \\
$\mathbf{H}_\mathrm{base} + c_t \mathbf{S}$ (ON) & $3{\times}10^{-3}$ & \phantom{00}0.292 & \phantom{0}0.147\phantom{0}\,\phantom{0}(0.793)\phantom{0} & \phantom{0}0.177\,\phantom{00}(0.277) & \phantom{0}0.173\phantom{0}\,(0.220)\phantom{0} & \phantom{0}0.0493\,(0.0592) & \phantom{0}3.41\,\phantom{0}(3.89) & \phantom{0}1.16\phantom{0}\,\phantom{0}(1.23)\phantom{0} & \phantom{0}0.754\,(0.756) \\
ROCKET 2nd stage & $3{\times}10^{-3}$ & \phantom{00}0.233 & \phantom{0}0.107\phantom{0}\,\phantom{0}(0.674)\phantom{0} & \phantom{0}0.152\,\phantom{00}(\underline{0.244}) & \phantom{0}0.146\phantom{0}\,(\underline{0.185})\phantom{0} & \phantom{0}0.0445\,(\underline{0.0531}) & \phantom{0}3.07\,\phantom{0}(\underline{3.45}) & \phantom{0}1.00\phantom{0}\,\phantom{0}(\underline{1.07})\phantom{0} & \phantom{0}0.748\,(0.749) \\
ROCKET 3rd stage & $3{\times}10^{-3}$ & \phantom{00}0.233 & \phantom{0}0.134\phantom{0}\,\phantom{0}(0.741)\phantom{0} & \phantom{0}0.159\,\phantom{00}(0.247) & \phantom{0}0.155\phantom{0}\,(0.194)\phantom{0} & \phantom{0}0.0456\,(0.0537) & \phantom{0}3.13\,\phantom{0}(3.53) & \phantom{0}1.05\phantom{0}\,\phantom{0}(1.08)\phantom{0} & \phantom{0}0.746\,(\underline{0.747}) \\
\midrule
\multicolumn{10}{l}{\textbf{After 6 epochs of NON matrix residual self-consistency training} (residual of \citet{zhangSelfConsistencyTraining2024})} \\
$\mathbf{P}_\mathrm{base}$ (NON) & $10^{-3}$ & \phantom{0}47.1\phantom{00} & \phantom{0}0.0201\,\phantom{0}(\underline{0.0724}) & 10.8\phantom{00}\,\phantom{0}(13.6)\phantom{00} & \phantom{0}0.0453\,(0.113)\phantom{0} & \phantom{0}0.0220\,(0.0401) & \phantom{0}2.27\,\phantom{0}(3.57) & \phantom{0}0.499\,\phantom{0}(0.941) & \phantom{0}0.735\,(0.763) \\
$\mathbf{P}_\mathrm{base}$ (ON) & $3{\times}10^{-4}$ & \phantom{0}30.4\phantom{00} & \phantom{0}0.0151\,\phantom{0}(0.0777) & \phantom{0}5.55\phantom{0}\,\phantom{00}(8.53)\phantom{0} & \phantom{0}0.0384\,(\underline{0.0992}) & \phantom{0}0.0165\,(\underline{0.0326}) & \phantom{0}1.44\,\phantom{0}(\underline{2.75}) & \phantom{0}0.413\,\phantom{0}(\underline{0.861}) & \phantom{0}0.719\,(0.748) \\
GROOT\textsubscript{rot} & $10^{-3}$ & \phantom{0}13.5\phantom{00} & \phantom{0}0.0411\,\phantom{0}(2.01)\phantom{00} & \phantom{0}0.474\,\phantom{00}(1.66)\phantom{0} & \phantom{0}0.0844\,(0.408)\phantom{0} & \phantom{0}0.0330\,(0.0922) & \phantom{0}2.12\,\phantom{0}(6.08) & \phantom{0}0.729\,\phantom{0}(1.39)\phantom{0} & \phantom{0}0.735\,(0.763) \\
GROOT\textsubscript{proj} & $10^{-3}$ & \phantom{00}1.17\phantom{0} & \phantom{0}0.118\phantom{0}\,\phantom{0}(1.07)\phantom{00} & \phantom{0}0.268\,\phantom{00}(0.419) & \phantom{0}0.151\phantom{0}\,(0.214)\phantom{0} & \phantom{0}0.0455\,(0.0582) & \phantom{0}3.20\,\phantom{0}(3.84) & \phantom{0}0.966\,\phantom{0}(1.10)\phantom{0} & \phantom{0}0.744\,(0.747) \\
$\mathbf{H}_\mathrm{base} + c_t \mathbf{S}$ (ON) & $10^{-3}$ & \phantom{00}0.816 & \phantom{0}0.238\phantom{0}\,\phantom{0}(1.65)\phantom{00} & \phantom{0}0.221\,\phantom{00}(0.439) & \phantom{0}0.205\phantom{0}\,(0.290)\phantom{0} & \phantom{0}0.0620\,(0.0787) & \phantom{0}4.29\,\phantom{0}(5.25) & \phantom{0}1.42\phantom{0}\,\phantom{0}(1.59)\phantom{0} & \phantom{0}0.767\,(0.770) \\
ROCKET 2nd stage & $3{\times}10^{-3}$ & \phantom{00}0.233 & \phantom{0}0.182\phantom{0}\,\phantom{0}(1.05)\phantom{00} & \phantom{0}0.173\,\phantom{00}(0.295) & \phantom{0}0.179\phantom{0}\,(0.216)\phantom{0} & \phantom{0}0.0528\,(0.0609) & \phantom{0}3.57\,\phantom{0}(3.95) & \phantom{0}1.20\phantom{0}\,\phantom{0}(1.24)\phantom{0} & \phantom{0}0.755\,(0.757) \\
ROCKET 3rd stage & $3{\times}10^{-3}$ & \phantom{00}0.117 & \phantom{0}0.159\phantom{0}\,\phantom{0}(0.774)\phantom{0} & \phantom{0}0.120\,\phantom{00}(\underline{0.201}) & \phantom{0}0.169\phantom{0}\,(0.197)\phantom{0} & \phantom{0}0.0484\,(0.0544) & \phantom{0}3.30\,\phantom{0}(3.56) & \phantom{0}1.06\phantom{0}\,\phantom{0}(1.07)\phantom{0} & \phantom{0}0.744\,(\underline{0.745}) \\
\bottomrule
\end{tabular}}
\end{table}

\newpage
\section{Runtime}
\label{app:runtime}

\begin{figure}[h]
  \centering
  \includegraphics[width=\linewidth]{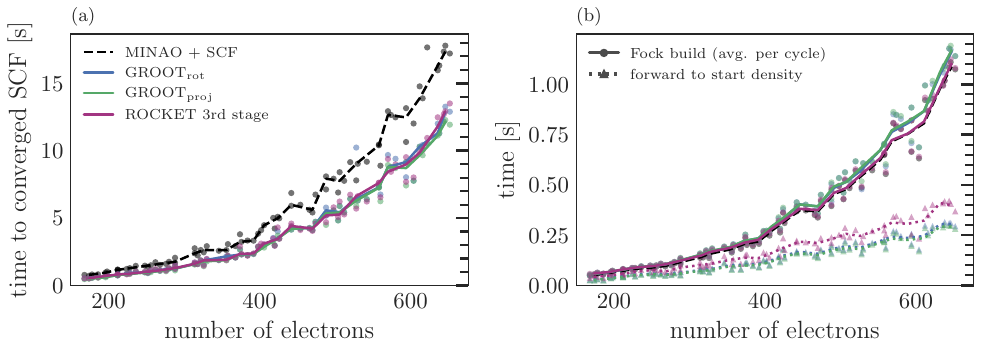}
  \caption{\textbf{Wall time to a converged SCF from ML and MINAO starts on out-of-distribution QMugs molecules.}
  \textbf{(a)} Time from the prepared molecule to a converged SCF against the electron count, for the MINAO baseline (dashed) and for SCFs started from the predictions of \ours{}\textsubscript{rot}, \ours{}\textsubscript{proj} and \ourLoss{} (third stage).
  The timed region covers the initial guess, the model input, the forward pass and the SCF; only the per-molecule system setup is excluded.
    \textbf{(b)} Solid lines: mean cost of one SCF cycle, i.e.\ SCF time divided by cycle count, dominated by the Fock build and including DIIS and diagonalization.
    Dotted lines: time from the forward pass to the starting density, including the copy to the CPU and, for \ourLoss{}, the density matrix after diagonalization of the predicted Hamiltonian; the \ours{} models start from their predicted density directly.
    Points are single molecules; lines join per-bin medians in (a) and per-bin means in (b) over 20-electron bins, without fitted curves.
    The set holds three random QMugs molecules per 20-electron bin in $[160,660)$, 75 in total, all beyond the 150-electron training range, computed with the reference settings of \cref{tab:dft-reference}.
    Each molecule runs on one NVIDIA H200.
    One discarded warm-up round absorbs compilation and the grid and density-fitting setup; the four pipelines then alternate over 10 timed rounds, and each molecule contributes its minimum.
    All runs converge, and the ML-started SCFs reach the baseline total energy to within $3\times10^{-11}\,E_h$.
    Relative to MINAO, the median time to convergence is 0.70 for both \ours{} models and 0.72 for \ourLoss{}.}
  \label{fig:runtime}
\end{figure}

\stopcontents[appendix]

\end{document}